\let\originalleft\left
\let\originalright\right
\renewcommand{\left}{\mathopen{}\mathclose\bgroup\originalleft}
\renewcommand{\right}{\aftergroup\egroup\originalright}
\theoremstyle{plain}
\newtheorem{theorem}{Theorem}
\newtheorem{proposition}[theorem]{Proposition}
\newtheorem{lemma}[theorem]{Lemma}
\newtheorem{corollary}[theorem]{Corollary}
\theoremstyle{definition}
\newtheorem{definition}[theorem]{Definition}
\setlist[enumerate]{itemsep=0.2ex, topsep=0.5\topsep}
\setlist[description]{itemsep=0.2ex, topsep=0.5\topsep}
\setlist[itemize]{itemsep=0.2ex, topsep=0.5\topsep}
\def\thmt@refnamewithcomma #1#2#3,#4,#5\@nil{%
\@xa\def\csname\thmt@envname #1utorefname\endcsname{#3}%
\ifcsname #2refname\endcsname
\csname #2refname\expandafter\endcsname\expandafter{\thmt@envname}{#3}{#4}%
\fi
}
\newcommand{\Nb}{\mathbb{N}}
\newcommand{\Rb}{\mathbb{R}}
\newcommand{\kwl}[1]{$#1$\textrm{-}\textsf{WL}\xspace}
\newcommand{\klwl}[1]{$#1$\textrm{-}\textsf{LWL}\xspace}
\newcommand{\dkwl}[1]{$\delta$\textrm{-}$#1$\textrm{-}\textsf{WL}\xspace}
\newcommand{\dklwl}[1]{$\delta$\textrm{-}$#1$\textrm{-}\textsf{LWL}\xspace}
\newcommand{\kgt}[1]{$#1$\textrm{-}\texttt{GT}\xspace}
\newcommand{\kfgt}[1]{ET\xspace}
\newcommand{\wlone}{$1$\textrm{-}\textsf{WL}}
\newcommand{\deltakwl}{$\delta$\textrm{-}$k$\textrm{-}\textsf{WL}}
\newcommand{\localkwl}{$\delta$\textrm{-}$k$\textrm{-}\textsf{LWL}}
\newcommand{\hb}{\mathbf{h}}
\newcommand{\REL}{\mathsf{RELABEL}}
\newcommand{\new}[1]{\emph{#1}}
\renewcommand{\vec}[1]{\mathbf{#1}}
\newcommand{\oms}{\{\!\!\{}
\newcommand{\cms}{\}\!\!\}}
\definecolor{dark2green}{rgb}{0.1, 0.65, 0.3}
\definecolor{dark2orange}{rgb}{0.9, 0.4, 0.}
\definecolor{dark2purple}{rgb}{0.4, 0.4, 0.8}
\newcommand{\embed}[1]{\text{#1}}
\colorlet{GreenCellColor}{PineGreen!40}
\colorlet{FigureColorTransformer}{PineGreen!40}
\newcommand{\first}[1]{\textbf{\textcolor{PineGreen!80}{#1}}}
\newcommand{\second}[1]{\textbf{\textcolor{Dandelion}{#1}}}
\newcommand{\third}[1]{\textbf{\textcolor{RoyalBlue}{#1}}}
\icmltitlerunning{Aligning Transformers with Weisfeiler--Leman}
\begin{document}

\twocolumn[
\icmltitle{Aligning Transformers with Weisfeiler--Leman}

\icmlsetsymbol{equal}{*}

\begin{icmlauthorlist}
\icmlauthor{Luis Müller}{rwth}
\icmlauthor{Christopher Morris}{rwth}
\end{icmlauthorlist}

\icmlaffiliation{rwth}{Department of Computer Science, RWTH Aachen University, Germany}

\icmlcorrespondingauthor{Luis Müller}{luis.mueller@cs.rwth-aachen.de}

\icmlkeywords{Machine Learning, ICML, Transformers, Weisfeiler-Leman, Graph}

\vskip 0.3in
]

\printAffiliationsAndNotice{}  %

\begin{abstract}
Graph neural network architectures aligned with the $k$-dimensional Weisfeiler--Leman (\kwl{k}) hierarchy offer theoretically well-understood expressive power. However, these architectures often fail to deliver state-of-the-art predictive performance on real-world graphs, limiting their practical utility. While recent works aligning graph transformer architectures with the \kwl{k} hierarchy have shown promising empirical results, employing transformers for higher orders of $k$ remains challenging due to a prohibitive runtime and memory complexity of self-attention as well as impractical architectural assumptions, such as an infeasible number of attention heads. Here, we advance the alignment of transformers with the \kwl{k} hierarchy, showing stronger expressivity results for each $k$, making them more feasible in practice. In addition, we develop a theoretical framework that allows the study of established positional encodings such as Laplacian PEs and SPE. We evaluate our transformers on the large-scale PCQM4Mv2 dataset, showing competitive predictive performance with the state-of-the-art and demonstrating strong downstream performance when fine-tuning them on small-scale molecular datasets.
\end{abstract}

\section{Introduction}\label{sec:intro}
Message-passage graph neural networks (GNNs) are the de-facto standard in graph learning~\citep{Gil+2017,Kip+2017,Sca+2009,Xu+2018b}. However, due to their purely local mode of aggregating information, they suffer from limited expressivity in distinguishing non-isomorphic graphs in terms of the $1$-dimensional Weisfeiler--Leman algorithm (\wlone)~\citep{Mor+2019, Xu+2018b}. Hence, recent works~\citep{Azi+2020,Mar+2019,Morris2020b,Mor+2022,Mor+2022b} introduced \textit{higher-order} GNNs, aligned with the $k$-dimensional Weisfeiler--Leman (\kwl{k}) hierarchy for graph isomorphism testing~\cite{Cai+1992}, resulting in more expressivity with an increase in the order $k > 1$. The \kwl{k} hierarchy draws from a rich history in graph theory \citep{Bab1979,Cai+1992,Gro2017,Wei+1968}, offering a deep theoretical understanding of \kwl{k}-aligned GNNs. In contrast, graph transformers (GTs)~\citep{Glickman+2023,He+2022,ma2023GraphInductiveBiases,Mue+2023,rampavsek2022recipe,Ying2021} recently demonstrated state-of-the-art empirical performance. However, they draw their expressive power mostly from positional or structural encodings (PEs in the following), making it challenging to understand these models in terms of an expressivity hierarchy such as the \kwl{k}. In addition, their empirical success relies on modifying the attention mechanism \citep{Glickman+2023,ma2023GraphInductiveBiases,Ying2021} or using additional message-passing GNNs \citep{He+2022,rampavsek2022recipe}.

To still benefit from the theoretical power offered by the \kwl{k}, previous works~\citep{Kim+2021, Kim+2022} aligned transformers with $k$-IGNs, showing that transformer layers can approximate invariant linear layers \citep{Mar+2019c}. Crucially, \citet{Kim+2022} designed \new{pure transformers}, requiring no architectural modifications of the standard transformer and instead draw their expressive power from an appropriate \new{tokenization}, i.e., the encoding of a graph as a set of input tokens. Pure transformers provide several benefits over graph transformers that use message-passing or modified attention, such as being directly applicable to innovations for transformer architectures most notably Performers \citep{Choromanski+2021} and Flash Attention \citep{Dao+2022} to reduce the runtime or memory demands of transformers, as well as the Perceiver \citep{Jaegle+2021} enabling multi-modal learning. Unfortunately, the framework of \citet{Kim+2022} does not allow for a feasible transformer with provable expressivity strictly greater than \wlone{} due to an $O(n^6)$ runtime complexity and the requirement of $203$ attention heads resulting from their alignment with IGNs. This poses the question of whether a more feasible hierarchy of pure transformers exists.

\begin{figure*}[t]
	\begin{center}
		\resizebox{.95\textwidth}{!}{
			
			\tikzset{
				treenode/.style = {shape=rectangle, rounded corners,
					draw, align=center,
					minimum width=50pt,
				}
			}
				\begin{tikzpicture}[scale=1.4,font=\footnotesize,>=stealth', thick,sibling distance=15mm, level distance=30pt,minimum size=18pt, sibling distance=50pt]
						
					\node(aaa) at (-5.0,0) [treenode,fill=FigureColorTransformer, minimum width=60pt] {\wlone{}};
					\node(a) at (-3.0,0) [treenode,fill=FigureColorTransformer, minimum width=60pt] {\kwl{(2,1)}};
					\node(aa) at (-3.0,1) [treenode,fill=Dandelion!50, minimum width=60pt] {\kgt{(2, 1)}};
					
					\node(aaaa) at (-5.0,1) [treenode,fill=Dandelion!50, minimum width=60pt] {\kgt{1}};

                    \node(b) at (-1.0,0) [treenode,fill=FigureColorTransformer, minimum width=60pt] {\kwl{(3,1)}};
					\node(bb) at (-1.0,1) [treenode,fill=Dandelion!50, minimum width=60pt] {\kgt{(3,1)}};
					
					\node(c) at (1.0,0) [treenode,fill=FigureColorTransformer, minimum width=60pt] {\kwl{3}};
					\node(cc) at (1.0,1) [treenode,fill=Dandelion!50, minimum width=60pt] {\kgt{3}};
					
					\node(d) at (4.0,0) [treenode,fill=FigureColorTransformer, minimum width=60pt] {\kwl{k}};
					\node(dd) at (4.0,1) [treenode,fill=Dandelion!50, minimum width=60pt] {\kgt{k}};

					\node(e) at (2.5,0) [] {$\cdots$};
							
					\draw[->] (a) -- (b) node[midway,label={[shift={(0.0,-.39)}]\small $\sqsupset$}]{};
					\draw[->] (b) -- (c) node[midway,label={[shift={(0.0,-.39)}]\small $\sqsupset$}]{};
					\draw[->] (c) -- (e) node[midway,label={[shift={(0.0,-.39)}]\small $\sqsupset$}]{};
					\draw[->] (e) -- (d) node[midway,label={[shift={(0.0,-.39)}]\small $\sqsupset$}]{};
					
					\draw[->] (a) -- (aa) node[midway,label={[shift={(-.25,-0.65)}]\small $\sqsubseteq$}]{};
					\draw[->] (b) -- (bb) node[midway,label={[shift={(-.25,-0.65)}]\small $\sqsubseteq$}]{};
					\draw[->] (c) -- (cc) node[midway,label={[shift={(-.25,-0.65)}]\small $\sqsubset$}]{};
					\draw[->] (d) -- (dd) node[midway,label={[shift={(-.25,-0.65)}]\small $\sqsubset$}]{};
					
					\draw[<-] (a) -- (aaa) node[midway,label={[shift={(0.0,-.39)}]\small $\sqsupset$}]{};
					
					\draw[->] (aaa) -- (aaaa) node[midway,label={[shift={(-.25,-0.65)}]\small $\sqsubseteq$}]{};
			
					\draw[->, thick] (-5.7,1.5) -- (4.7,1.5) node [midway,fill=white] {\rotatebox{0}{\textbf{\footnotesize Approximate more functions}}};
				\end{tikzpicture}
    }
	\end{center} 
	\caption{Overview of our theoretical results, aligning transformers with the established \kwl{k} hierarchy. Forward arrows point to more powerful algorithms or neural architectures. $A \sqsubset B$ ($A \sqsubseteq B$, $A \equiv B$)---algorithm $A$ is strictly more powerful than (as least as powerful as, equally powerful as) $B$. The relations between the boxes in the lower row stem from~\citet{Cai+1992} and \citet{Mor+2022b}.\label{fig:overview}}
\end{figure*}
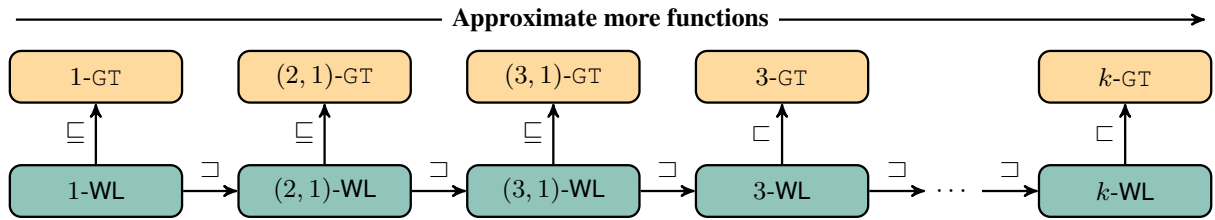

\paragraph{Present work} Here, we offer theoretical improvements for pure transformers. 
Our guiding question is
\begin{center}
\begin{adjustwidth}{20pt}{20pt}
\begin{center}
\textit{Can we design a hierarchy of increasingly expressive pure transformers that are also feasible in practice?}
\end{center}
\end{adjustwidth}
\end{center}
In this work, we will make significant progress to answering this question in the affirmative.
First, in \Cref{sec:theory}, we improve the expressivity of pure transformers for every order $k > 0$, while keeping the same runtime complexity as \citet{Kim+2022} by aligning transformers closely with the Weisfeiler--Leman hierarchy. Secondly, we demonstrate the benefits of this close alignment by showing that our results directly yield transformers aligned with the \kwl{(k,s)} hierarchy, recently proposed to increase the scalability of higher-order variants of the Weisfeiler--Leman algorithm~\citep{Mor+2022b}. These transformers then form a hierarchy of increasingly expressive pure transformers. We show in \Cref{sec:implementation}, that our transformers can be naturally implemented with established node-level PEs such as the Laplacian PEs in \citet{Kre+2021}.
Lastly, we show in \Cref{sec:experiments} that our transformers are feasible in practice and have more expressivity than the \wlone{}. In particular, we obtain very close to state-of-the-art results on the large-scale \textsc{PCQM4Mv2} dataset \citep{Hu2021} and further show that our transformers are highly competitive with GNNs when fine-tuned on small-scale molecular benchmarks \citep{hu2020ogb}. See \Cref{fig:overview} for an overview of our theoretical results and \Cref{tab:comparison_to_Kim2022} for a comparison of our pure transformers with the transformers in \citet{Kim+2021} and \citet{Kim+2022}.

\paragraph{Related work} Many graph learning architectures with higher-order Weisfeiler--Leman expressive power exist, most notably $\delta$-$k$-GNNs \citep{Morris2020b}, SpeqNets \citep{Mor+2022b}, $k$-IGNs \citep{Mar+2019c, Mar+2019b}, PPGN \citep{Azi+2020,Mar+2019}, and the more recent PPGN++ \citep{Puny+2023}. Moreover, \citet{yaronlipman2020global} devise a low-rank attention module possessing the same power as the folklore \kwl{2}. \citet{Bod+2021b} propose CIN with an expressive power of at least \kwl{3}. However, while theoretically intriguing, higher-order GNNs often fail to deliver state-of-the-art performance on real-world problems~\citep{Azi+2020,Morris2020b, Mor+2022b}, making theoretically grounded architectures less relevant in practice.

Graph transformers with higher-order expressive power are Graphormer-GD \citep{Zha+2023} as well as the higher-order graph transformers in \citet{Kim+2021} and \citet{Kim+2022}.
However, Graphormer-GD is less expressive than the \kwl{3}~\citep{Zha+2023}.  Further, \citet{Kim+2021} and \citet{Kim+2022} align transformers with $k$-IGNs, and, thus, obtain the theoretical expressive power of the corresponding \kwl{k}. However, they do not empirically evaluate their transformers for $k > 2$. For $k = 2$, \citet{Kim+2022} propose \new{TokenGT}, a pure transformer with a $n + m$ tokens per graph, where $n$ is the number of nodes and $m$ is the number of edges. This transformer has \kwl{2} expressivity, which, however, is the same as \wlone{} expressivity~\citep{Mor+2022}.
From a theoretical perspective, the transformers in \citet{Kim+2022} still require impractical assumptions such as $\textit{bell}(2k)$ attention
heads, where $\textit{bell}(2k)$ is the $2k$-th bell number \citep{Mar+2019},\footnote{For example,  $\textit{bell}(2 \cdot 3) = 203$, $\textit{bell}(2 \cdot 4) = 4\,140$, $\textit{bell}(2 \cdot 5) = 115\,975$. In comparison, GPT-3, a large transformer with 175B parameters, has only 96 attention heads \citep{brownGPT3+2020}.} resulting in 203 attention heads for a transformer with a provable expressive power strictly stronger than the \wlone{}. In addition, \citet{Kim+2022} introduces special encodings called node- and type identifiers that are theoretically necessary but, as argued in Appendix A.5 in \citep{Kim+2022}, not ideal in practice. For an overview of the Weisfeiler--Leman hierarchy in graph learning; see \citet{Mor+2022}.

\section{Background}
We consider \new{node-labeled graphs} $G \coloneqq (V(G), E(G), \ell)$ with $n$ nodes and without self-loops and without isolated nodes, where $V(G)$ is the set of \new{nodes}, $E(G)$ is the set of \new{edges}, and $\ell \colon V(G) \rightarrow \Nb$ assigns an initial \new{color} or \new{label} to each node. For convenience of notation, we always assume an arbitrary but fixed ordering over the nodes such that each node corresponds to a number in $[n]$.
Further $\vec{A}(G) \in \{0, 1\}^{n \times n}$ denotes the \new{adjacency matrix} where $\vec{A}(G)_{ij} = 1$ if, and only, if nodes $i$ and $j$ share an edge. We also construct a \new{node feature matrix} $\vec{F} \in \mathbb{R}^{n \times d}$ that is \new{consistent} with $\ell$, i.e., for nodes $i$ and $j$ in $V(G)$, $\vec{F}_i = \vec{F}_j$ if, and only, if $\ell(i) = \ell(j)$. Note that, for a finite subset of $\mathbb{N}$, we can always construct $\vec{F}$, e.g., with a one-hot encoding of the initial colors.
We call pairs of nodes $(i, j) \in V(G)^2$ \new{node pairs} or $2$-tuples. For a matrix $\vec{X} \in \mathbb{R}^{n \times d}$, whose $i$-th row represents the embedding of node $v \in V(G)$ in a graph $G$, we also write $\vec{X}_v \in \mathbb{R}^d$ or $\vec{X}(v) \in \mathbb{R}^d$ to denote the row corresponding to node $v$.
Further, we define a \new{learnable embedding} as a function, mapping a subset $S \subset \mathbb{N}$ to a $d$-dimensional Euclidean space, parameterized by a neural network, e.g., a neural network applied to a one-hot encoding of the elements in $S$. Moreover, $||\cdot||_\text{F}$ denotes the Frobenius norm.
Finally, we refer to some transformer architecture's concrete set of parameters, including its tokenization, as a \new{parameterization}. See \Cref{app:notation} for a complete description of our notation. Further, see~\cref{kwl_intro} for a formal description of the \kwl{k} and its variants.

\section{Expressive power of transformers on graphs}\label{sec:theory} 
Here, we consider the (standard) \new{transformer} \citep{Vaswani2017}, a stack of alternating blocks of \new{multi-head attention} and fully-connected \new{feed-forward networks}; see \Cref{sec:background_transformer} for a definition.

To align the above transformer with the \kwl{k}, we will provide appropriate input tokens $\vec{X}^{(0, k)}$ to the standard transformer for each $k \geq 1$ and subsequently show that then the $t$-th layer of the transformer can simulate the $t$-th iteration of some $k$-order Weisfeiler--Leman algorithm.
To gradually introduce our theoretical framework, we begin with tokenization for $k = 1$, obtaining a transformer with the expressive power of at least the \wlone{} and afterward generalize the tokenization to higher orders $k$.

\subsection{Transformers with $1$-WL expressive power}\label{sec:1wl_pure}
A commonly used baseline in prior work \citep{Mue+2023, rampavsek2022recipe} is to employ a standard transformer with Laplacian PEs \citep{Kre+2021}. Here, we start by characterizing such a transformer in terms of \wlone{} expressivity and introduce our theoretical framework.
Concretely, we propose a tokenization for a standard transformer to be at least as expressive as the \wlone{}. We first formalize our tokenization and then derive how \wlone{} expressivity follows. Let $G = (V(G), E(G), \ell)$ be a graph with $n$ nodes and feature matrix $\vec{F} \in \mathbb{R}^{n \times d}$, consistent with $\ell$. Then, we initialize $n$ token embeddings $\vec{X}^{(0, 1)} \in \mathbb{R}^{n \times d}$ as
\begin{equation}\label{eq:1_wl_token_embeddings}
    \vec{X}^{(0, 1)} \coloneqq \vec{F} + \vec{P},%
\end{equation}
where we call $\vec{P} \in \mathbb{R}^{n \times d}$ \new{structural embeddings}, encoding structural information for each token. For node $v$, we define
\begin{equation}\label{eq:1_wl_structural_embeddings}
    \vec{P}(v) \coloneqq \textsf{FFN}(\embed{deg}(v) + \embed{PE}(v)),
\end{equation}
where $\embed{deg} \colon V(G) \rightarrow \mathbb{R}^{d}$ is a learnable embedding of the node degree, $\embed{PE}\colon V(G) \rightarrow \mathbb{R}^{d}$ is a node-level PE such as the Laplacian PE \citep{Kre+2021} or SPE \citep{Huang+2023}, and $\textsf{FFN} \colon \mathbb{R}^d \rightarrow \mathbb{R}^d$ is a multi-layer perceptron. For the PE, we require that it enables us to distinguish whether two nodes share an edge in $G$, which we formally define as follows.
\begin{definition}[Adjacency-identifying]\label{adjacency_identifying}
Let $G = (V(G), E(G), \ell)$ be a graph with $n$ nodes. Let $\vec{X}^{(0, 1)} \in \mathbb{R}^{n \times d}$ denote the initial token embeddings according to \Cref{eq:1_wl_token_embeddings} with structural embeddings $\vec{P} \in \mathbb{R}^{n \times d}$. Let further
\begin{equation*}
   \Tilde{\mathbf{P}} \coloneqq \frac{1}{\sqrt{d_k}} \mathbf{P}\mathbf{W}^Q\big(\mathbf{P}\mathbf{W}^K\big)^T,
\end{equation*}
where $\vec{W}^Q,\vec{W}^K \in \mathbb{R}^{d \times d}$. 
Then $\vec{P}$ is \new{adjacency-identifying} if there exists $\vec{W}^Q,\vec{W}^K$ such that for any row $i$ and column $j$, 
\begin{equation*}
    \Tilde{\mathbf{P}}_{ij} = \max_{k} \Tilde{\mathbf{P}}_{ik},
\end{equation*}
if, and only, if $\vec{A}(G)_{ij} = 1$, where each row can have between one and $(n -1)$ maxima (for connected graphs). Further, an approximation $\vec{Q} \in \mathbb{R}^{n \times d}$ to $\vec{P}$ is \textit{sufficiently} adjacency-identifying if
\begin{equation*}
    \big|\big| \vec{P} - \vec{Q} \big|\big|_\text{F} < \epsilon, 
\end{equation*}
for any $\epsilon > 0$.
\end{definition}
As the name suggests, the property of \textit{(sufficiently) adjacency-identifying} allows us to identify the underlying adjacency matrix during the attention computation. In~\Cref{sec:implementation}, we introduce a slight generalization of the commonly used Laplacian PEs \citep{Kre+2021,rampavsek2022recipe}, which we refer to as LPE, and show that structural embeddings are sufficiently adjacency-identifying when using LPE or SPE \citep{Huang+2023} as node-level PEs. 
We call a standard transformer using the above tokenization with sufficiently adjacency-identifying node-level PEs the \kgt{1}.

Let us now understand why the above token embeddings with degree encodings and adjacency-identifying PEs are sufficient to obtain \wlone{} expressivity. To this end, we revisit a \wlone{} expressive GNN from \citet{Gro+2021}, which updates node representations $\vec{F}^{(t)}$ at layer $t$ as %
\begin{equation}\label{eq:grohe_gnn}
    \vec{F}^{(t)} \coloneqq \mathsf{FFN} \big(  \vec{F}^{(t-1)} + 2 \vec{A}(G) \vec{F}^{(t-1)} \big),
\end{equation}

where $\mathsf{FFN}$ is again a feed-forward neural network and $\vec{F}^{(t)}_i$ contains the representation of node $i \in V(G)$ at layer $t$. Contrast the above to the update of a \kgt{1} layer with a single head, given as
\begin{equation*}
\resizebox{\columnwidth}{!}{
    $\vec{X}^{(t, 1)} \coloneqq \mathsf{FFN} \big(  \vec{X}^{(t-1, 1)} + \mathsf{softmax}\big(\Tilde{\vec{X}}^{(t-1, 1)}\big) \vec{X}^{(t-1, 1)}\vec{W}^V \big),$}
\end{equation*}
where
\begin{equation*}
   \Tilde{\vec{X}}^{(t-1, 1)} \coloneqq \frac{1}{\sqrt{d_k}} \mathbf{X}^{ (t-1, 1)}\mathbf{W}^Q\big(\mathbf{X}^{(t-1, 1)}\mathbf{W}^K\big)^T
\end{equation*}
denotes the unnormalized attention matrix at layer $t$.
If we now set $\vec{W}^V = 2\vec{I}$, where $\vec{I}$ is the identity matrix, we obtain
\begin{equation*}
    \vec{X}^{(t,1)} \coloneqq \mathsf{FFN} \big(  \vec{X}^{(t-1,1)} + 2\cdot \mathsf{softmax}\big(\Tilde{\vec{X}}^{(t-1, 1)}\big) \vec{X}^{(t-1,1)} \big).
\end{equation*}
At this point, if we could reconstruct the adjacency matrix with $\mathsf{softmax}(\Tilde{\vec{X}}^{(t-1, 1)})$, we could simulate the \wlone{} expressive GNN of~\cref{eq:grohe_gnn} with a single attention head. However, the attention matrix is \new{right-stochastic}, meaning its rows sum to $1$. Unfortunately, this is not expressive enough to reconstruct the adjacency matrix, which generally is not right-stochastic. Thus, we aim to reconstruct the \new{row-normalized adjacency matrix} $\Tilde{\vec{A}}(G) \coloneqq\vec{D}^{-1}\vec{A}(G)$, where $\vec{D} \in \mathbb{R}^{n \times n}$ is the diagonal degree matrix, such that $\vec{D}_{ii}$ is the degree of node $i$. Indeed, $\Tilde{\vec{A}}(G)$ is right-stochastic. Further, element-wise multiplication of row $i$ of $\Tilde{\vec{A}}(G)$ with the degree of $i$ recovers $\vec{A}(G)$. As we show, adjacency-identifying PEs are, in fact, sufficient for $\mathsf{softmax}(\Tilde{\vec{X}}^{(t-1, 1)})$ to approximate $\Tilde{\vec{A}}(G)$ arbitrarily close. We show that then, we can use the degree embeddings $\embed{deg}(i)$ to de-normalize the $i$-th row of $\Tilde{\vec{A}}(G)$ and obtain
\begin{equation*}
    \vec{X}^{(t,1)} = \mathsf{FFN} \big(  \vec{X}^{(t-1, 1)} + 2 \vec{A}(G) \vec{X}^{(t-1, 1)} \big).
\end{equation*}
Showing that a transformer can simulate the GNN in \citet{Gro+2021} implies the connection to the \wlone{}. We formally prove the above in the following theorem, showing that the \kgt{1} can simulate the \wlone{}; see \Cref{app:proof_1gt} for proof details.
\begin{theorem}\label{theorem:1_wl}
Let  $G = (V(G), E(G), \ell)$ be a labeled graph with $n$ nodes and $\vec{F} \in \mathbb{R}^{n \times d}$ be a node feature matrix consistent with $\ell$. Further, let $C^1_t \colon V(G) \rightarrow \mathbb{N}$ denote the coloring function of the \wlone{} at iteration $t$.
Then, for all iterations $t \geq 0$, there exists a parametrization of the \kgt{1} such that
\begin{equation*}
    C^1_{t}(v) = C^1_{t}(w) \Longleftrightarrow \vec{X}^{(t, 1)}(v) = \vec{X}^{(t, 1)}(w),
\end{equation*}
for all nodes $v, w \in V(G)$.
\end{theorem}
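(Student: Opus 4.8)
The plan is to construct, for each target iteration $t$, a (graph-dependent) parametrization of the \kgt{1} whose $t$-th layer reproduces, as a node coloring, the $t$-th iterate $\vec{F}^{(t)}$ of the \wlone{}-expressive GNN of \citet{Gro+2021} in \Cref{eq:grohe_gnn}. The equivalence then follows from the known fact that this GNN satisfies $\vec{F}^{(t)}(v) = \vec{F}^{(t)}(w) \iff C^1_t(v) = C^1_t(w)$. I argue by induction on $t$, maintaining the invariant that $\vec{X}^{(t, 1)}$ splits along disjoint coordinate blocks into a \emph{feature block} equal to $\vec{F}^{(t)}$ and a \emph{structural block} that holds, untouched, an adjacency-identifying copy of $\vec{P}$ together with an encoding of the node degrees, where the last layer of the chosen parametrization zeroes the structural block so that $\vec{X}^{(t, 1)}$ reduces to $\vec{F}^{(t)}$.

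For the base case $t = 0$ I take the $\mathsf{FFN}$ in \Cref{eq:1_wl_structural_embeddings} to be identically zero, so $\vec{X}^{(0, 1)} = \vec{F}$, which is consistent with $\ell = C^1_0$. For $t \ge 1$ I instead choose $\vec{P}$ to write the adjacency-identifying data and a degree encoding into the structural block and nothing into the feature block, so at layer $0$ the feature block is $\vec{F}$, consistent with $C^1_0$. The core of the inductive step is that one \kgt{1} layer advances the feature block from $\vec{F}^{(s-1)}$ to $\vec{F}^{(s)}$: choosing $\vec{W}^Q, \vec{W}^K$ to read only the structural block and invoking that $\vec{P}$ is (sufficiently) adjacency-identifying (\Cref{adjacency_identifying}), the unnormalized attention logits have their row-$i$ maxima exactly on the neighbours of $i$, so after rescaling them the $\mathsf{softmax}$ of this matrix approximates the row-normalized adjacency $\vec{D}^{-1}\vec{A}(G)$ to within any prescribed accuracy (the attention matrix is right-stochastic, so $\vec{D}^{-1}\vec{A}(G)$, not $\vec{A}(G)$, is the correct target). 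Taking $\vec{W}^V$ to extract twice the feature block, the attention output approximates $2\vec{D}^{-1}\vec{A}(G)\vec{F}^{(s-1)}$, and the residual connection restores $\vec{X}^{(s-1, 1)}$.

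It remains to let the layer's $\mathsf{FFN}$ perform three tasks at once: rescale the aggregated part of each row $i$ by the degree of $i$, read off from the degree encoding in that row's structural block (possible since on a fixed graph the degrees lie in the finite set $\{1, \dots, n-1\}$), thereby recovering $2\vec{A}(G)\vec{F}^{(s-1)}$ up to the above error and then applying the $\mathsf{FFN}$ of \Cref{eq:grohe_gnn}; pass the structural block through unchanged when $s < t$ and zero it when $s = t$; and ``snap'' the feature block back onto exact values --- legitimate because on the fixed graph $\vec{F}^{(s)}$ attains at most $n$ distinct values, so once the accumulated error drops below the minimal gap between them a suitable ReLU $\mathsf{FFN}$ can map each approximate cluster onto its exact value, removing all approximation error. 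Iterating yields $\vec{X}^{(t, 1)} = \vec{F}^{(t)}$, and the desired equivalence with $C^1_t$ is precisely the \wlone{} expressivity of the GNN of \citet{Gro+2021}.

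I expect the main obstacle to be reconciling the \emph{approximate} nature of softmax attention --- which can never equal $\vec{D}^{-1}\vec{A}(G)$ exactly --- with the \emph{exact} coloring equivalence the theorem demands, while at the same time preventing the positional encoding inside $\vec{P}$ from over-separating \wlone{}-equivalent nodes, which would break the implication $C^1_t(v) = C^1_t(w) \Rightarrow \vec{X}^{(t, 1)}(v) = \vec{X}^{(t, 1)}(w)$. I resolve the first by the per-layer snapping $\mathsf{FFN}$, available because the graph and the set of reachable colorings are finite (the degree de-normalization is absorbed into the same step), and the second by quarantining all positional-encoding content in the structural block, which is only ever read through $\vec{W}^Q, \vec{W}^K$ to detect adjacency and is discarded by the final layer, so it never enters the representation the statement compares.
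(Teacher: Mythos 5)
Your argument follows the paper's own proof in its essentials: both simulate the GNN of Grohe in \Cref{eq:grohe_gnn} layer by layer, with the attention head approximating the row-normalized adjacency $\vec{D}^{-1}\vec{A}(G)$ from the adjacency-identifying structural embeddings (\Cref{lemma:approx_normalized_adjacency}) and the FFN then de-normalizing by degree and applying Grohe's update, and both proceed by induction on $t$ under a coordinate-block invariant on $\vec{X}^{(t,1)}$. You add two refinements the paper's proof glosses over --- a per-layer snapping FFN that maps the approximate softmax output onto the finitely many exact target values (legitimate on a fixed finite graph), and zeroing the structural block at the final layer so that node-specific PE content cannot break the forward implication $C^1_t(v) = C^1_t(w) \Rightarrow \vec{X}^{(t,1)}(v) = \vec{X}^{(t,1)}(w)$. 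Both are genuine clarifications of the argument.

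There is, however, a gap in the block bookkeeping. Your stated invariant splits $\vec{X}^{(s,1)}$ into a \emph{feature block} holding $\vec{F}^{(s)}$ and a \emph{structural block} holding $\vec{P}$ and the degree encoding --- and nothing else. This leaves no place for the attention output to land after $\vec{W}^O$. If $2\vec{D}^{-1}\vec{A}(G)\vec{F}^{(s-1)}$ is written onto the feature block, then after the residual connection the row-wise FFN sees only the \emph{sum} $\vec{F}^{(s-1)}(v) + 2\big(\vec{D}^{-1}\vec{A}(G)\vec{F}^{(s-1)}\big)(v)$ together with the degree of $v$; these two quantities do not determine $\vec{F}^{(s-1)}(v) + 2\big(\vec{A}(G)\vec{F}^{(s-1)}\big)(v)$, because you cannot rescale only the aggregated summand once it has been mixed with the residual term. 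Writing the attention output onto the structural block instead would destroy the quantities you need to keep untouched. The paper sidesteps this by keeping a third block, initially all zeros, in its invariant (\Cref{eq:1wl_proof_X_update}), into which the attention head writes (\Cref{eq:1wl_proof_head1}); the FFN then reads $\vec{F}^{(s-1)}$ and the degree-normalized aggregate from disjoint coordinates, de-normalizes, adds, applies Grohe's FFN, and re-zeros the scratch block. Your proof needs an analogous scratch block (or some other disjoint coordinates reserved for the head output); without it the degree-de-normalization step is underdetermined.
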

Having set the stage for theoretically aligned transformers, we now generalize the above to higher orders $k > 1$.

\subsection{Transformers with $k$-WL expressive power}\label{sec:kwl_pure}
Here, we propose tokenization for a standard transformer, operating on $n^k$ tokens, to be strictly more expressive as the \kwl{k}, for an arbitrary but fixed $k > 1$. Subsequently, to make the architecture more practical, we reduce the number of tokens while remaining strictly more expressive than the \wlone{}.

Again, we consider a labeled graph $G = (V(G), E(G), \ell)$ with $n$ nodes and feature matrix $\vec{F} \in \mathbb{R}^{n \times d}$, consistent with $\ell$. Intuitively, to surpass the limitations of the \wlone, the \kwl{k} colors ordered subgraphs instead of a single node. More precisely, the \kwl{k} colors the tuples from $V(G)^k$ for $k \geq 2$ instead of the nodes. Of central importance to our tokenization is consistency with the initial coloring of the \kwl{k} and recovering the adjacency information between $k$-tuples imposed by the \kwl{k}, both of which we will describe hereafter; see \Cref{app:preliminaries} for a formal definition of the \kwl{k}.

The initial color of a $k$-tuple $\vec{v} \coloneqq (v_1, \dots, v_k) \in V(G)^k$ under the \kwl{k} depends on its \new{atomic type} and the labels $\ell(v_1), \dots, \ell(v_k)$. Intuitively, the atomic type describes the structural dependencies within elements in a tuple. We can represent the atomic type of $\vec{v}$ by a $k \times k$ matrix $\vec{K}$ over $\{ 1,2,3\}$. That is, the entry $\vec{K}_{ij}$ is 1 if $(v_i,v_j) \in E(G)$, 2 if $v_i = v_j$, and 3 otherwise; see \Cref{app:notation} for a formal definition of the atomic type. Hence, to encode the atomic type as a real vector, we can learn an embedding from the set of $k \times k$ matrices to $\mathbb{R}^e$, where $e > 0$ is the embedding dimension of the atomic type. For a tuple $\vec{v}$, we denote this embedding with $\embed{atp}(\vec{v})$. Apart from the initial colors for tuples, the \kwl{k} also imposes a notion of adjacency between tuples. Concretely, we define
\begin{equation*}
	\phi_j(\vec{v},w)\coloneqq (v_1, \dots, v_{j-1}, w, v_{j+1}, \dots, v_k),
\end{equation*}
i.e., $\phi_j(\vec{v},w)$ replaces the $j$-th component of the tuple $\vec{v}$ with the vertex $w$. We say that two tuples are \new{adjacent} or \new{$j$-neighbors} if they are different in the $j$-th component (or equal, in the case of self-loops). 

Now, to construct token embeddings consistent with the initial colors under the \kwl{k}, we initialize $n^k$ token embeddings $\vec{X}^{(0, k)} \in \mathbb{R}^{n^k \times d}$, one for each $k$-tuple. For order $k$ and embedding dimension $d$ of the tuple-level tokens, we first compute node-level tokens $\vec{X}^{(0,1)}$ in \Cref{eq:1_wl_token_embeddings} and, in particular, the structural embeddings $\vec{P}(v)$ for each node $v$, with an embedding dimension of $d$ and then concatenate node-level embeddings along the embedding dimension to construct tuple-level embeddings which are then projected down to fit the embedding dimension $d$. Specifically, we define the token embedding of a $k$-tuple $\vec{v} = (v_1, \dots, v_k)$ as
\begin{equation}\label{eq:kwl_token_embeddings}
    \vec{X}^{(0, k)}(\vec{v}) \coloneqq \big[
        \vec{X}^{(0,1)}(v_i)
    \big]_{i=1}^k \mathbf{W} + \embed{atp}(\vec{v}),
\end{equation}
where $\mathbf{W} \in \mathbb{R}^{d \cdot k \times d}$ is a projection matrix.
Intuitively, the above construction ensures that the token embeddings respect the initial node colors and the atomic type.
Analogously to our notion of adjacency-identifying, we use the structural embeddings $\vec{P}(v_i)$ %
in each node embedding $\vec{X}^{(0,1)}(v_i)$ to identify the $j$-neighborhood adjacency between tuples $\vec{v}$ and $\vec{w}$ in the $j$-th attention head. To reconstruct the $j$-neighborhood adjacency, we show that it is sufficient to identify the nodes in each tuple-level token. Hence, we define the following requirements for the structural embeddings $\vec{P}$.

\begin{definition}[Node-identifying]\label{def:node_identifying}
Let $G = (V(G), E(G), \ell)$ be a graph with $n$ nodes. Let $\vec{X}^{(0, k)} \in \mathbb{R}^{n^k \times d}$ denote the initial token embeddings according to \Cref{eq:kwl_token_embeddings} with structural embeddings $\vec{P} \in \mathbb{R}^{n \times d}$.
Let further
\begin{equation*}
   \Tilde{\mathbf{P}} \coloneqq \frac{1}{\sqrt{d_k}} \mathbf{P}\mathbf{W}^Q\big(\mathbf{P}\mathbf{W}^K\big)^T,
\end{equation*}
where $\vec{W}^Q,\vec{W}^K \in \mathbb{R}^{d \times d}$. 
Then $\vec{P}$ is \new{node-identifying} if there exists $\vec{W}^Q,\vec{W}^K$ such that for any row $i$ and column $j$, 
\begin{equation*}
    \Tilde{\mathbf{P}}_{ij} = \max_{k} \Tilde{\mathbf{P}}_{ik},
\end{equation*}
if, and only, if $i = j$. Further, an approximation $\vec{Q} \in \mathbb{R}^{n \times d}$ of $\vec{P}$ is sufficiently node-identifying if
\begin{equation*}
    \big|\big| \vec{P} - \vec{Q} \big|\big|_\text{F} < \epsilon, 
\end{equation*}
for any $\epsilon > 0$.
\end{definition}
As we will show, the above requirement allows the attention to distinguish whether two tuples share the same nodes, intuitively, by counting the number of node-to-node matches between two tuples, which is sufficient to determine whether the tuples are $j$-neighbors. For structural embeddings $\vec{P}$ to be node-identifying, it suffices, for example, that $\vec{P}$ has an orthogonal sub-matrix. 

In~\Cref{sec:implementation}, we show that structural embeddings are also sufficiently node-identifying when using LPE or SPE \citep{Huang+2023} as node-level PEs.
It should also be mentioned that our definition of node-identifying structural embeddings generalizes the node identifiers in \citet{Kim+2022}, i.e., their node identifiers are node-identifying. Still, structural embeddings exist that are (sufficiently) node-identifying and do not qualify as node identifiers in \citet{Kim+2022}.

We call a standard transformer using the above tokenization with sufficiently node- and adjacency-identifying structural embeddings the \kgt{k}. We then show the connection of the \kgt{k} to the \kwl{k}, resulting in the following theorem; see \Cref{app:proof_kgt} for proof details.
\begin{theorem}\label{theorem:k_wl}
Let $G=(V(G),E(G),\ell)$ be a labeled graph with $n$ nodes and $k \geq 2$ and $\vec{F} \in \mathbb{R}^{n \times d}$ be a node feature matrix consistent with $\ell$. Let $C^k_t$ denote the coloring function of the \kwl{k} at iteration $t$.
Then for all iterations $t \geq 0$, there exists a parametrization of the \kgt{k} such that
\begin{equation*}
    C^{k}_t(\vec{v}) = C^{k}_t(\vec{w}) \Longleftrightarrow \vec{X}^{(t, k)}(\vec{v}) = \vec{X}^{(t, k)}(\vec{w})
\end{equation*}
for all $k$-tuples $\vec{v}$ and $\vec{w} \in V(G)^k$.
\end{theorem}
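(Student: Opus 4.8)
The plan is to prove the statement by induction on the iteration count $t$, constructing for each $t$ a parametrization of the \kgt{k} under which the token embedding satisfies $\vec{X}^{(t,k)} = \iota_t \circ C^k_t$ for some injective map $\iota_t$ on the set of colors; the claimed equivalence is then immediate in both directions. As in the $k=1$ case, the substance of the argument is to show that a single transformer layer equipped with $k$ attention heads reproduces one refinement round of the \kwl{k}, with head $j$ responsible for aggregating over the $j$-neighborhood $\{\phi_j(\vec{v},w) : w \in V(G)\}$ of each tuple $\vec{v}$.

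For the base case $t=0$, I would choose the projection $\vec{W}$ of \Cref{eq:kwl_token_embeddings}, the atomic-type embedding $\embed{atp}$ (taken injective on $\{1,2,3\}^{k\times k}$), and the node-level maps so that $\vec{X}^{(0,k)}(\vec{v})$ re-encodes exactly the pair $\bigl(\text{atomic type of }\vec{v},\ (\ell(v_1),\dots,\ell(v_k))\bigr) = C^k_0(\vec{v})$. Since $\vec{F}$ is consistent with $\ell$ and $\embed{atp}$ is injective, this reduces to arranging the concatenation-and-projection of the node tokens $\vec{X}^{(0,1)}(v_i)$ so that the label information survives, while simultaneously keeping the per-node structural embeddings $\vec{P}(v_i)$ available in $k$ dedicated coordinate blocks of the tuple token for use in later layers.

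For the inductive step, assume $\vec{X}^{(t-1,k)} = \iota_{t-1}\circ C^k_{t-1}$ with $\iota_{t-1}$ injective. In attention head $j$ I would invoke the node-identifying property (\Cref{def:node_identifying}): because each tuple token stores $\vec{P}(v_1),\dots,\vec{P}(v_k)$ in separate blocks, there exist query/key matrices making the unnormalized score between $\vec{v}$ and $\vec{u}$ equal to $\sum_{i\neq j}\langle \vec{P}(v_i)\vec{W}^Q,\vec{P}(u_i)\vec{W}^K\rangle$, which by node-identifiability is maximized exactly when $v_i = u_i$ for all $i\neq j$, i.e.\ exactly on the $j$-neighbors of $\vec{v}$. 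Scaling these matrices drives the $\mathsf{softmax}$ toward the uniform distribution on this argmax set, which always has exactly $n$ elements, so---unlike the $k=1$ case---no degree-based de-normalization is needed, and head $j$ outputs $\tfrac1n\sum_{w\in V(G)}\vec{X}^{(t-1,k)}(\phi_j(\vec{v},w))\,\vec{W}^V_j$. Choosing $\iota_{t-1}$ to send the (finitely many) colors to distinct standard basis vectors makes this average the normalized histogram of the aggregated multiset, so the residual token together with the $k$ head outputs determines $C^k_{t-1}(\vec{v})$ and the $k$ neighborhood multisets $\oms C^k_{t-1}(\phi_j(\vec{v},w)) : w\in V(G)\cms$, hence $C^k_t(\vec{v})$; a final \FFN{} then realizes an injective $\iota_t$ of this quantity. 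This mirrors how the $k=1$ case simulates the GNN of \citet{Gro+2021}, specialized to tuple tokens.

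The main obstacle is that $\mathsf{softmax}$ only approximately selects the $j$-neighbors, so head $j$'s output deviates from the ideal mean, and this error compounds across the $t$ layers and interacts with the $\epsilon$-slack in ``sufficiently node-identifying''. I would control it by observing that only finitely many colors ever occur (the \kwl{k} partition of $V(G)^k$ stabilizes), which yields a uniform separation margin $\gamma > 0$ between the target embeddings; since $\mathsf{softmax}$, the bilinear scores, and the feed-forward maps are Lipschitz on the relevant compact domain, taking the attention temperature large enough and $\epsilon$ small enough keeps each $\vec{X}^{(t,k)}(\vec{v})$ within $\gamma/3$ of $\iota_t(C^k_t(\vec{v}))$ at every layer, which preserves the equivalence. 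A secondary point is to reuse \Cref{theorem:1_wl} and \Cref{app:proof_1gt} to guarantee that the per-node structural embeddings feeding the tuple tokens are simultaneously (sufficiently) node-identifying and compatible with the initial coloring.
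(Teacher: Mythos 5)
Your proposal is correct and reaches the same conclusion, but it takes a genuinely leaner route than the paper's proof, which is worth noting.

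The paper actually proves a generalization of \Cref{theorem:k_wl} that simultaneously covers the \kwl{k}, the \dkwl{k}, and the \dklwl{k}. To do this it allocates $2k$ heads---one head for the adjacent $j$-neighbors and one for the non-adjacent $j$-neighbors of each coordinate $j$---and must therefore (i) invoke the adjacency-identifying property to split the $j$-neighborhood by adjacency and (ii) re-scale each head's output by the degree or co-degree $d_{ij}$ respectively $(n-d_{ij})$, since the softmax's normalizer then varies per tuple. You instead target the statement literally as given ($k$-WL only), use $k$ heads that each aggregate uniformly over the full $j$-neighborhood, and observe correctly that this argmax set always has exactly $n$ elements, so the softmax normalizer is a constant $1/n$ and no degree-dependent de-normalization is needed at all. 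Consequently you only need the node-identifying property, which matches the remark the paper adds to the generalized version ("if $C^k_t$ is the coloring function of the \kwl{k}, it suffices that the structural embeddings ... are sufficiently node-identifying"). A second difference is the injective encoding of colors: the paper simulates $k$-WL via a higher-order GNN whose color representations use the $m$-ary scalar encoding borrowed from \citet{Gro+2021} (\Cref{prop:kwl_gnn}), whereas you encode the at most $n^k$ colors per iteration as standard basis vectors, which is dimensionally heavier but conceptually simpler and entirely adequate since $n$ is fixed. What the paper's heavier machinery buys is that \Cref{theorem:delta_k_wl} and \Cref{theorem:k_s_wl} fall out immediately from the same proof, whereas under your construction those would need a separate argument (re-introducing the $2k$ heads and adjacency-identifiability). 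Two small clean-up points: the references you lean on for node-identifiability of the per-node embeddings are \Cref{theorem:node_and_adj_ident} (or \Cref{theorem:spe_node_and_adj_ident}) and \Cref{lemma:concatenated_kwl_token_embeddings}, not \Cref{theorem:1_wl}; and the error-control paragraph should note that the final \FFN{} in each layer can be chosen to map the approximate pre-image of each target color exactly onto the corresponding basis vector (using the finiteness of the color set), so that exact equality on both sides of the iff is preserved rather than merely approximated.
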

Note that similar to \Cref{theorem:1_wl}, the above theorem gives a lower bound on the expressivity of the \kgt{k}. We now show that the \kgt{k} is strictly more powerful than the \kwl{k} by showing that the \kgt{k} can also simulate the $\delta$-\kwl{k}, a \kwl{k} variant that, for each $j$-neighbor $\phi_j(\vec{v}, w)$, additionally considers whether $(v_j, w) \in E(G)$ \citep{Morris2020b}. That is, the $\delta$-\kwl{k} distinguishes for each $j$-neighbor whether the replaced node and the replacing node share an edge in $G$. \citet{Morris2020b} showed that the $\delta$-\kwl{k} is strictly more expressive than the \kwl{k}. We use this to show that the \kgt{k} is strictly more expressive than the \kwl{k}, implying the following result; see again \Cref{app:proof_kgt} for proof details.
\begin{theorem}\label{theorem:delta_k_wl}
For $k>1$, the \kgt{k} is strictly more expressive than the \kwl{k}.
\end{theorem}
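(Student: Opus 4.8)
The plan is to reduce \Cref{theorem:delta_k_wl} to two facts that are already available: \Cref{theorem:k_wl}, which shows the \kgt{k} is at least as expressive as the \kwl{k}, and the result of \citet{Morris2020b} that the $\delta$-\kwl{k} is strictly more expressive than the \kwl{k}. Hence it suffices to prove that the \kgt{k} can \emph{simulate} the $\delta$-\kwl{k}, i.e., that there is a parametrization under which the token embeddings $\vec X^{(t,k)}$ refine and are refined by the $\delta$-\kwl{k} coloring at every iteration $t$. Given such a parametrization, the \kgt{k} distinguishes every pair of graphs that the $\delta$-\kwl{k} distinguishes; picking the pair from \citet{Morris2020b} that the $\delta$-\kwl{k} separates but the \kwl{k} does not, together with \Cref{theorem:k_wl}, yields that the \kgt{k} is strictly more expressive than the \kwl{k}.

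To simulate one iteration of the $\delta$-\kwl{k}, recall that its update at a $k$-tuple $\vec v = (v_1,\dots,v_k)$ is determined by $C^{k,\delta}_{t-1}(\vec v)$ together with, for every $j \in [k]$, the multiset $\oms (C^{k,\delta}_{t-1}(\phi_j(\vec v, w)), \mathbb{1}[(v_j,w) \in E(G)]) : w \in V(G)\cms$; equivalently, by $C^{k,\delta}_{t-1}(\vec v)$ and, for each $j$, the pair consisting of the multiset of colors of \emph{all} $j$-neighbors and the multiset of colors of the \emph{local} $j$-neighbors (those $w$ with $(v_j,w) \in E(G)$), the global ones being obtained by a multiset difference. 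As in the proof of \Cref{theorem:k_wl}, head $j$ of a \kgt{k} layer, using node-identifying structural embeddings (\Cref{def:node_identifying}) on the coordinates $i \neq j$ and a constant score on coordinate $j$, concentrates the softmax uniformly on the $n$ tuples $\phi_j(\vec v, w)$ and, after de-normalizing by $n$, produces the sum $\sum_{w} \vec X^{(t-1,k)}(\phi_j(\vec v, w))$ of a sum-injective color encoding, from which the full $j$-neighbor multiset is recovered. We add, for each $j$, a second head whose unnormalized attention logit between $\vec v$ and $\vec w$ is the sum of the node-identifying scores on coordinates $i \neq j$ and the adjacency-identifying score (\Cref{adjacency_identifying}) on coordinate $j$. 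Because $\vec w$ ranges freely over $V(G)^k$ and this logit decomposes coordinatewise, its argmax over $\vec w$ is the product of the per-coordinate argmax sets, i.e., exactly the tuples with $w_i = v_i$ for all $i \neq j$ and $(v_j, w_j) \in E(G)$ — precisely the local $j$-neighbors of $\vec v$. Taking the $\epsilon$ in \Cref{adjacency_identifying} and \Cref{def:node_identifying} small relative to the (graph-dependent but strictly positive) logit gap and then scaling the query/key projections by a large constant makes the softmax concentrate uniformly on exactly these tuples; de-normalizing by $\deg(v_j)$ (carried in the tuple token via the degree embeddings in \Cref{eq:1_wl_token_embeddings,eq:kwl_token_embeddings}) yields the sum-injective encoding of the local $j$-neighbor multiset. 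The layer's $\mathsf{FFN}$, whose value projections combine the residual with the full- and local-$j$-neighbor sums (the latter two with opposite signs so that the global sum is also exposed), then computes an injective encoding of $C^{k,\delta}_t(\vec v)$; induction on $t$, exactly as for \Cref{theorem:k_wl}, gives $C^{k,\delta}_t(\vec v) = C^{k,\delta}_t(\vec w) \Longleftrightarrow \vec X^{(t,k)}(\vec v) = \vec X^{(t,k)}(\vec w)$.

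The main obstacle is the construction of the local-neighbor attention head: combining an adjacency-identifying condition on coordinate $j$ with node-identifying conditions on the remaining $k-1$ coordinates into a single attention matrix whose softmax, under the perturbations permitted by \Cref{adjacency_identifying} and \Cref{def:node_identifying}, concentrates uniformly on exactly the local $j$-neighbors, and then de-normalizing it using only information present in $\vec X^{(t-1,k)}(\vec v)$. The key observation that unlocks this is that the attention logit is a coordinatewise sum while $\vec w$ is unconstrained over $V(G)^k$, so that the joint argmax is the Cartesian product of the per-coordinate argmaxes; the remaining ingredients (the sum-injective multiset encoding, de-normalization by $n$ resp.\ $\deg(v_j)$, and the inductive Weisfeiler--Leman bookkeeping) are routine given the machinery already developed for \Cref{theorem:k_wl}. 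Full details are deferred to \Cref{app:proof_kgt}.
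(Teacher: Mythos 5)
Your argument is correct and matches the paper's approach: the paper also proves this theorem by showing that the \kgt{k} can simulate the $\delta$-\kwl{k} (via a strengthening of \Cref{theorem:k_wl} that uses $2k$ attention heads combining a node-identifying score on coordinates $\neq j$ with an adjacency-identifying score on coordinate $j$, so that the argmax over columns decomposes as the Cartesian product of per-coordinate argmaxes — exactly your key observation) and then invokes the separation from \citet{Morris2020b}. The only cosmetic difference is that you compute the pair (all $j$-neighbors, adjacent $j$-neighbors) and recover the non-adjacent multiset by subtraction in the $\mathsf{FFN}$, whereas the paper's two groups of $k$ heads compute (adjacent, non-adjacent) directly by flipping the sign of the adjacency-identifying score.
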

Note that the \kgt{k} uses $n^k$ tokens. However, for larger $k$ and large graphs, the number of tokens might present an obstacle in practice. Luckily, our close alignment of the \kgt{k} with the \kwl{k} hierarchy allows us to directly benefit from a recent result in \citet{Mor+2022b}, reducing the number of tokens while maintaining an expressivity guarantee. Specifically, \citet{Mor+2022b} define the set of $(k, s)$-tuples $V(G)^k_s$ as
\begin{equation*}
    V(G)^k_s \coloneqq \{ \vec{v} \in V(G)^k \mid \#\text{comp}(G[\vec{v}]) \leq s \},
\end{equation*}
where $\#\text{comp}(H)$ denotes the number of connected components in subgraph $H$, $G[\vec{v}]$ denotes the ordered subgraph of $G$, induced by the nodes in $\vec{v}$ and $s \leq k$ is a hyper-parameter. \citet{Mor+2022b} then define the \klwl{(k,s)} as a \kwl{k} variant that only considers the tuples in $V(G)^k_s$ and additional only use subset of adjacent tuples; see~\cite{Mor+2022b} and~\Cref{app:preliminaries} for a detailed description of the \kwl{(k,s)}. Fortunately, the runtime of the \klwl{(k,s)} depends only on $k$, $s$, and the sparsity of the graph, resulting in a more efficient and scalable \kwl{k} variant. We can now directly translate this modification to our token embeddings. Specifically, we call the \kgt{k} using only the token embeddings $\vec{X}^{ (0, k)}(\vec{v})$ where $\vec{v} \in V(G)^k_s$, the \kgt{(k,s)}.

\begin{table*}
    \centering
\caption{Comparison of our theoretical results with \citet{Kim+2021} and \citet{Kim+2022}. Highlighted are aspects in which our results improve over \citet{Kim+2022}. Here, $n$ denotes the number of nodes, and $m$ denotes the number of edges. We denote with $\sqsubset A$ that the transformer is strictly more expressive than algorithm $A$. Note that for pure transformers, the squared complexity can be relaxed to linear complexity when applying linear attention approximation such as the Performer \citep{Choromanski+2021}.
    }
    \resizebox{\textwidth}{!}{
    \begin{tabular}{lcccc}
    \toprule
       \textbf{Transformer} & Provable expressivity & Runtime complexity & \# Heads & Pure transformer \\ \midrule  
    Sparse Transformer \citep{Kim+2021} & $\sqsubset$ \, \citet{Gil+2017} & $\mathcal{O}(m)$ & 1 & \ding{55} \\
    TokenGT \citep{Kim+2022} & \wlone{} & $\mathcal{O}((n + m)^2)$ & 15 & \ding{51} \\
    \kgt{1} (ours) & \wlone{} & \cellcolor{GreenCellColor} $\mathcal{O}(n^2)$ & \cellcolor{GreenCellColor} 1 & \ding{51} \\
    \kgt{(2,1)} (ours) & \cellcolor{GreenCellColor} $\sqsubset$ \wlone{} & $\mathcal{O}((n + m)^2)$ & \cellcolor{GreenCellColor} 2 & \ding{51} \\ \midrule
    Higher-Order Transformer \citep{Kim+2021} & \kwl{k} & $\mathcal{O}(n^{2k})$ & 1 & \ding{55} \\
    $k$-order Transformer \citep{Kim+2022} & \kwl{k} & $\mathcal{O}(n^{2k})$ & \textit{bell}(2k) & \ding{51} \\
    \kgt{k} (ours) & \cellcolor{GreenCellColor} $\sqsubset$ \kwl{k} & $\mathcal{O}(n^{2k})$ & \cellcolor{GreenCellColor} $2k$ & \ding{51} \\
    \bottomrule
    \end{tabular}}
    \label{tab:comparison_to_Kim2022}
\end{table*}

Now, \citet[Theorem 1]{Mor+2022b} shows that, for $k > 1$, the \klwl{(k + 1,1)} is strictly stronger than the \klwl{(k,1)}. Further, note that the \klwl{(1,1)} is equal to the  \wlone{}. Using this result, we can prove the following; see \Cref{app:proof_kgt} for proof details.
\begin{theorem}\label{theorem:k_s_wl}
For all $k > 1$, the \kgt{(k,1)} is strictly more expressive than the \kgt{(k-1,1)}. Further, the \kgt{(2,1)} is strictly more expressive than the \wlone{}.
\end{theorem}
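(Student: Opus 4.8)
The plan is to transfer the separation results for the \klwl{(k,1)} hierarchy established in \citet{Mor+2022b} to the \kgt{(k,1)} hierarchy, using the already-established alignment between transformers and the local Weisfeiler--Leman variants. The argument has two halves, mirroring the two halves of the statement. For the first half, I would show (i) that the \kgt{(k,1)} is at least as expressive as the \klwl{(k,1)}, and (ii) that the \kgt{(k,1)} is no more expressive than the \klwl{(k+1,1)}---or, more directly, that the \kgt{(k-1,1)} can be simulated by something the \kgt{(k,1)} dominates while \klwl{(k,1)} cannot be simulated by \klwl{(k-1,1)}. Concretely: by the same simulation argument used for \Cref{theorem:k_wl} and \Cref{theorem:delta_k_wl}, restricted to tokens indexed by $V(G)^k_s$ with $s=1$, a parametrization of the \kgt{(k,1)} refines the coloring of the \klwl{(k,1)} (indeed, of its $\delta$-variant), so \kgt{(k,1)} $\sqsupseteq$ \klwl{(k,1)}. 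Conversely, one checks that the coloring computed by any \kgt{(k-1,1)} parametrization can be recovered by the \klwl{(k,1)}: since each $(k-1,1)$-tuple embeds into a $(k,1)$-tuple (e.g.\ by repeating a coordinate, which keeps the number of connected components unchanged), and the attention updates of the \kgt{(k-1,1)} only ever aggregate over $j$-neighbors and read off atomic types and node-identities, all of this information is available to the \klwl{(k,1)} on the corresponding $k$-tuples. Hence \kgt{(k-1,1)} $\sqsubseteq$ \klwl{(k,1)}.

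Combining these with \citet[Theorem~1]{Mor+2022b}, which gives \klwl{(k,1)} $\sqsupset$ \klwl{(k-1,1)} for $k>1$, yields the chain
\begin{equation*}
    \kgt{(k,1)} \;\sqsupseteq\; \klwl{(k,1)} \;\sqsupset\; \klwl{(k-1,1)} \;\sqsupseteq\; \kgt{(k-1,1)},
\end{equation*}
so the \kgt{(k,1)} is strictly more expressive than the \kgt{(k-1,1)}, provided we also exhibit a pair of graphs the \kgt{(k-1,1)} cannot separate: but any such pair separated by \klwl{(k,1)} but not \klwl{(k-1,1)} witnesses the strictness, since \klwl{(k-1,1)} dominates \kgt{(k-1,1)} and \kgt{(k,1)} dominates \klwl{(k,1)}. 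For the second half of the statement, I would instantiate $k=2$: the above gives \kgt{(2,1)} $\sqsupseteq$ \klwl{(2,1)} $\sqsupset$ \klwl{(1,1)}, and since \klwl{(1,1)} equals the \wlone{} (as noted just before the theorem), while the \kgt{(1,1)} $=$ \kgt{1} is equivalent to the \wlone{} by \Cref{theorem:1_wl}, we conclude \kgt{(2,1)} $\sqsupset$ \wlone{}.

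The main obstacle is the ``$\sqsubseteq$'' direction---showing the \kgt{(k-1,1)} is not \emph{more} powerful than \klwl{(k,1)}, equivalently that everything a restricted transformer computes is already WL-computable. This requires carefully inspecting each component of a \kgt{(k-1,1)} layer (the softmax attention over $j$-neighborhoods, the atomic-type embeddings, the value/projection maps, the \textsf{FFN}) and arguing that, on a fixed finite graph, two $(k-1,1)$-tuples receiving different embeddings must already receive different \klwl{(k,1)} colors---i.e.\ the transformer's output is a function of the WL color. One clean way to avoid re-proving this from scratch is to observe that the simulation in \Cref{theorem:k_wl} is tight in the sense that the transformer, under \emph{any} parametrization with the prescribed tokenization, can be no finer than the $\delta$-\klwl{(k-1,1)}, which is itself dominated by \klwl{(k,1)}; I would isolate this ``upper bound'' as a lemma or fold it into the proofs in \Cref{app:proof_kgt}. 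The remaining steps---the embedding of $(k-1,1)$-tuples into $(k,1)$-tuples preserving component count, and the $k=2$ base case via \Cref{theorem:1_wl}---are routine.
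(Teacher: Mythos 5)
Your decomposition into a lower bound (each \kgt{(k,1)} simulates the \klwl{(k,1)}, by restricting the proof of \Cref{theorem:k_wl} to tokens indexed by $V(G)^k_1$) and an upper bound on \kgt{(k-1,1)} is the right way to make the first claim rigorous, and you correctly single out the upper bound as the hard part; the paper's own appendix gives only the lower bound and then asserts the theorem ``directly follows'' after citing \citet[Theorem 1]{Mor+2022b}, so you are more careful here than the paper. Two problems with your execution, though. First, there is an index mismatch: your ``conversely'' paragraph argues that the \klwl{(k,1)} dominates the \kgt{(k-1,1)}, but your displayed chain and your closing sentence both rely on the strictly stronger claim that the \klwl{(k-1,1)} dominates the \kgt{(k-1,1)}. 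The weaker claim does not yield strictness---knowing only that the \klwl{(k,1)} dominates \kgt{(k-1,1)} and that \kgt{(k,1)} dominates the \klwl{(k,1)} is compatible with both transformers being exactly as powerful as the \klwl{(k,1)}. Second, the upper bound is not ``routine.'' The token embeddings include structural embeddings built from Laplacian eigenvectors (LPE or SPE), and an arbitrary parametrization of the transformer is free to exploit this information beyond mere node- and adjacency-identification; eigenvector data is not, in general, a function of a fixed-order Weisfeiler--Leman coloring, so the transformer's output does not automatically factor through the WL color. Your sketch (``aggregate over $j$-neighbors, read off atomic types and node-identities'') describes the \emph{specific} parametrization constructed in the lower-bound proof, not an arbitrary one.

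Your argument for the second half (\kgt{(2,1)} is strictly more expressive than the \wlone) is correct and matches the paper: the chain through the \klwl{(2,1)} and the \klwl{(1,1)}, which equals the \wlone, suffices, and no upper bound is needed since one side is a fixed algorithm rather than a parameterized architecture. The aside that \kgt{1} is ``equivalent'' to the \wlone{} by \Cref{theorem:1_wl} is both unnecessary and an over-claim, since that theorem proves only that a parametrization of \kgt{1} refines the \wlone{} coloring, i.e., a lower bound.
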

Note that the \kgt{(2,1)} only requires $\mathcal{O}(n + m)$ tokens, where $m$ is the number of edges, and consequently has a runtime complexity of $\mathcal{O}((n + m)^2)$, the same as TokenGT. Hence, the \kgt{(2,1)} improves the expressivity result of TokenGT, which is shown to have \wlone{} expressive power \citep{Kim+2022} while having the same runtime complexity.

In summary, our alignment of standard transformers with the \kwl{k} hierarchy has brought forth a variety of improved theoretical results for pure transformers, providing a strict increase in provable expressive power for every order $k > 0$ compared to previous works.
See \Cref{tab:comparison_to_Kim2022} for a direct comparison of our results with \citet{Kim+2021} and \citet{Kim+2022}. Most importantly, \Cref{theorem:k_s_wl} leads to a hierarchy of increasingly expressive pure transformers, taking one step closer to answering our guiding question in \Cref{sec:intro}. In what follows, %
we make our transformers feasible in practice.

\section{Implementation details}\label{sec:implementation}
Here, we discuss the implementation details of pure transformers. In particular, we demonstrate that Laplacian PEs \citep{Kre+2021} are sufficient to be used as node-level PEs for our transformers. Afterward, we introduce \new{order transfer}, a way to transfer model weights between different orders of the WL hierarchy; see \Cref{sec:add_implementation} for additional implementation details. 

\subsection{Node and adjacency-identifying PEs}
Here, we develop PEs based on Laplacian eigenvectors and -values that are both sufficiently node- \emph{and} adjacency-identifying, avoiding computing separate PEs for node- and adjacency identification; see \Cref{sec:add_implementation_laplacian} for details about the graph Laplacian and some background on PEs based on the spectrum of the graph Laplacian. 

Let $\lambda \coloneqq (\lambda_1, \dots, \lambda_l)^T$ denote the vector of the $l$ smallest, possibly repeated, eigenvalues of the graph Laplacian for some graph with $n$ nodes and let $\mathbf{V} \in \mathbb{R}^{n \times l}$ be a matrix such that the $i$-th column of $\mathbf{V}$ is the eigenvector corresponding to eigenvalue $\lambda_i$. We will now briefly introduce LPE, based on \citet{Kre+2021}, as well as SPE from \citet{Huang+2023} and show that these node-level PEs are node- \emph{and} adjacency-identifying.

\paragraph{LPE} Here, we define a slight generalization of the Laplacian PEs in \citet{Kre+2021} that enables sufficiently node and adjacency-identifying PEs. 
Concretely, we define LPE as
\begin{equation}\label{eq:1wl_proof_lap_def}
  \resizebox{0.9\hsize}{!}{$\embed{LPE}(\mathbf{V}, \mathbf{\lambda}) = \rho \Big( \begin{bmatrix}
      \phi(\vec{V}^T_1, \lambda + \epsilon) & \dots & \phi(\vec{V}^T_n, \lambda + \epsilon)
  \end{bmatrix}
 \Big),$}
\end{equation}
where $\epsilon \in \mathbb{R}^l$ is a learnable, zero-initialized vector, we introduce to show our result. Here, $\phi \colon \mathbb{R}^2 \rightarrow \mathbb{R}^d$ is an FFN that is applied row-wise and $\rho \colon \mathbb{R}^{l \times d} \rightarrow \mathbb{R}^{d}$ is a permutation-equivariant neural network, applied row-wise. One can recover the Laplacian PEs in \citet{Kre+2021} by setting $\epsilon = \mathbf{0}$ and implementing $\rho$ as first performing a sum over the first dimension of its input, resulting in an $d$-dimensional vector and then subsequently applying an FFN to obtain a $d$-dimensional output vector. Note that then,
\Cref{eq:1wl_proof_lap_def} forms a DeepSet \citep{ZaheerNIPS2017DeepSets} over the set of eigenvector components, paired with their ($\epsilon$-perturbed) eigenvalues.
While slightly deviating from the Laplacian PEs defined in \citet{Kre+2021}, the $\epsilon$ vector is necessary to ensure that no spectral information is lost when applying $\rho$.

We will now show that $\embed{LPE}$ is sufficiently node- and adjacency-identifying. Further, this result holds irrespective of whether the Laplacian is normalized. As a result, $\embed{LPE}$ can be used with \kgt{1}, \kgt{k}, and \kgt{(k,s)}; see \Cref{app:proof_lpe} for proof details.
\begin{theorem}\label{theorem:node_and_adj_ident}
Structural embeddings with $\embed{LPE}$ as node-level PE are sufficiently node- and adjacency-identifying.%
\end{theorem}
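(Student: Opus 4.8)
The plan is to show that $\embed{LPE}(\vec{V}, \lambda)$, as defined in \Cref{eq:1wl_proof_lap_def}, can produce structural embeddings $\vec{P}$ that simultaneously satisfy \Cref{adjacency_identifying} and \Cref{def:node_identifying} up to arbitrarily small Frobenius error, i.e., they are \emph{sufficiently} adjacency-identifying and \emph{sufficiently} node-identifying. The key observation is that the node-level PE $\embed{LPE}$ is a DeepSet over the multiset $\{(\vec{V}^T_i)_j, \lambda_j + \epsilon_j\}_j$ of eigenvector components paired with ($\epsilon$-perturbed) eigenvalues, and that this multiset, viewed jointly across all nodes, retains enough spectral information to recover both the identity of a node and its adjacency structure. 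Concretely, the rows of $\vec{V}\mathrm{diag}(g(\lambda))\vec{V}^T$, for suitable scalar functions $g$ applied to the eigenvalues, span a space containing $\vec{I}$ (taking $g \equiv 1$, using that the columns of $\vec V$ form an orthonormal set — or extending to a full eigenbasis) and $\vec{A}(G)$ (or the normalized adjacency, taking $g(\lambda) = 1 - \lambda$ in the normalized-Laplacian case, $g(\lambda) = $ appropriate affine function otherwise). The role of the learnable, zero-initialized $\epsilon$ vector is exactly to break degeneracies among repeated eigenvalues so that $\rho$ does not collapse distinct spectral coordinates, ensuring no spectral information is lost; this is what lets a universal $\rho$ realize the required functions of $\vec V$ and $\lambda$.

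First I would make precise the target: by \Cref{adjacency_identifying} it suffices to exhibit $\vec{W}^Q, \vec{W}^K$ such that $\Tilde{\vec{P}} = \tfrac{1}{\sqrt{d_k}}\vec{P}\vec{W}^Q(\vec{P}\vec{W}^K)^T$ has, in row $i$, its maxima exactly at the columns $j$ with $\vec{A}(G)_{ij} = 1$; and by \Cref{def:node_identifying} it suffices (using possibly different $\vec{W}^Q, \vec{W}^K$, since the two definitions only require existence of the projection matrices separately) to exhibit projections making the row-$i$ maximum of $\Tilde{\vec{P}}$ occur exactly at column $i$. So the real content is: the embeddings $\vec{P}$ carry enough information that linear read-outs $\vec{P}\vec{W}$ can recover (i) a node-identifying feature (e.g., an orthogonal sub-block, as the paper notes suffices) and (ii) a feature whose pairwise inner products encode adjacency. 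Then I would invoke the universal approximation property of the FFN $\phi$ and the permutation-equivariant $\rho$ (a DeepSet is a universal approximator of permutation-invariant/equivariant functions on multisets, \citet{ZaheerNIPS2017DeepSets}) to argue that $\embed{LPE}(\vec{V},\lambda)$ can approximate, within any $\epsilon > 0$ in Frobenius norm, a target $\vec{P}$ of the above form — for instance one whose first $l$ coordinates reproduce $\vec{V}$ itself (recovering $\vec V \vec V^T$-type inner products and hence, with eigenvalue weighting, $\vec A(G)$) and whose remaining coordinates encode a node identifier. The "sufficiently" qualifier in both definitions is what absorbs the approximation error, so I only need approximate, not exact, realizability.

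The key steps in order: (1) reduce to constructing an explicit target $\vec{P}^\star$ that is exactly node- and adjacency-identifying — e.g. take $\vec{P}^\star$ to have a block that equals $\vec V$ (padded/truncated to dimension $d$, which is fine since $\vec V$ has orthonormal columns, giving an orthogonal sub-matrix and hence node-identifying) and argue adjacency-identifying via the identity $\vec A(G) = \vec V \mathrm{diag}(g(\lambda)) \vec V^T$ on the relevant eigenspace for an affine $g$, realized by the choice of $\vec{W}^Q = \mathrm{diag}(g(\lambda))$-type matrix and $\vec{W}^K = \vec I$; check that the row-maxima condition holds because $\vec A(G)_{ij} \in \{0,1\}$ and the off-support entries are strictly smaller (here one must be slightly careful — the $(1-\lambda)$ weighting on the full Laplacian spectrum gives exactly $\vec A$ only after handling the degree normalization, which is why the statement explicitly allows either normalized or unnormalized Laplacian and why the degree embedding $\embed{deg}$ is available in \Cref{eq:1_wl_structural_embeddings} as a fallback). (2) Show $\embed{LPE}$ can $\epsilon$-approximate $\vec{P}^\star$: set $\epsilon^\star \in \mathbb{R}^l$ to a small generic perturbation making all perturbed eigenvalues distinct, let $\phi$ approximate the coordinate projection $(x, y) \mapsto x \cdot \mathbf{e} + (\text{encoding of } y)$, and let $\rho$ approximate the map selecting/aggregating these into the target $d$-dimensional vector — invoking DeepSet universality on the finite multiset. (3) Conclude that the resulting $\vec{P}$ satisfies $\|\vec P - \vec P^\star\|_\text{F} < \epsilon$, hence is sufficiently node- and adjacency-identifying, and note the argument used no normalization assumption. (4) Remark that since the FFN composition in \Cref{eq:1_wl_structural_embeddings} can absorb the final projection, the same $\vec P$ works for \kgt{1}, \kgt{k}, and \kgt{(k,s)}.

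The main obstacle I expect is step (1)/(2): pinning down that a \emph{single} fixed $\vec{P}$ (one set of embeddings, produced by one $\embed{LPE}$ with one $\phi, \rho, \epsilon$) can serve both identification roles, and that the approximation in step (2) genuinely goes through the DeepSet/FFN universality without circularity — in particular that the $\epsilon$-perturbation of eigenvalues really does prevent information loss under $\rho$ (the paper flags this as the reason for introducing $\epsilon$), which requires arguing that distinct Laplacian eigenspaces remain distinguishable after perturbation and that $\rho$ can then act as a near-identity on the disambiguated spectral coordinates. Getting the strict-inequality / row-maxima bookkeeping exactly right for adjacency-identifying under possibly repeated eigenvalues and either normalization convention is the fiddly but essential core; everything else is packaging.
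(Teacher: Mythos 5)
Your proposal follows essentially the same route as the paper's proof of \Cref{theorem:node_and_adj_ident_full}: split the structural embedding into sub-blocks (the paper's \Cref{lemma:node_and_adj_subspace}), use DeepSet universality with the $\epsilon$-perturbation to disambiguate repeated eigenvalues and let $\rho$ output canonically ordered spectral coordinates, realize node-identifiability from orthonormality of the eigenvector matrix, realize adjacency-identifiability from a factorization of the Laplacian, and handle the normalized Laplacian via the degree embedding inside the outer $\textsf{FFN}$. The one technical difference is that you place the eigenvalue weighting $\mathrm{diag}(g(\lambda))$ into the projection $\vec{W}^Q$, making it graph-dependent, whereas the paper bakes $\vec{\Sigma}^{\frac{1}{2}}$ into the approximated target $\vec{U}\vec{\Sigma}^{\frac{1}{2}}\vec{M}$ so that $\vec{W}^Q, \vec{W}^K$ remain fixed scalar multiples of the identity (\Cref{lemma:laplacian_factorization}), which aligns more cleanly with the approximation machinery of \Cref{lemma:adjacency_ident_approx} and \Cref{lemma:sufficiently_indicator} used downstream; both choices are legitimate because \Cref{adjacency_identifying} and \Cref{def:node_identifying} are per-graph existence statements.
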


\paragraph{SPE} We also show that SPE \citep{Huang+2023} are sufficiently node- and adjacency-identifying. \citet{Huang+2023} define the SPE encodings as
\begin{equation*}
    \resizebox{0.975\hsize}{!}{$\mathsf{SPE}(\mathbf{V}, \vec{\lambda}) = \rho\Big(\begin{bmatrix}
        \mathbf{V} \text{diag}(\phi_1(\mathbf{\lambda})) \mathbf{V}^T & \dots & \mathbf{V} \text{diag}(\phi_n(\mathbf{\lambda})) \mathbf{V}^T
    \end{bmatrix}\Big),$}
\end{equation*}
where $\phi_1, \dots, \phi_n \colon \mathbb{R}^l \rightarrow \mathbb{R}^l$ are equivariant FFNs and $\rho \colon \mathbb{R}^{n \times l} \rightarrow \mathbb{R}^{d}$ is a permutation equivariant neural network, applied row-wise. For example, $\rho$ can be implemented by first performing a sum over the first dimension of its input, resulting in an $l$-dimensional vector, and then subsequently applying an FFN to obtain a $d$-dimensional output vector.

SPE have the benefit of being \new{stable}, meaning a small perturbation in the graph structure causes only a small change in the resulting SPE encoding. As a result, stability can be related to OOD generalization \citep{Huang+2023}.
We can show that SPE are sufficiently node- and adjacency-identifying. As a result, SPE encodings can be used with \kgt{1}, \kgt{k} and \kgt{(k,s)}; see \Cref{app:proof_spe} for proof details.
\begin{theorem}\label{theorem:spe_node_and_adj_ident}
Structural embeddings with SPE as node-level PE are sufficiently node- and adjacency-identifying.
\end{theorem}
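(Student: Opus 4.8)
\textbf{Proof proposal for Theorem~\ref{theorem:spe_node_and_adj_ident}.}

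The plan is to mirror the argument used for $\embed{LPE}$ in Theorem~\ref{theorem:node_and_adj_ident}, exploiting the fact that both encodings are built from the same spectral data $(\mathbf{V},\vec\lambda)$ but that SPE packages it somewhat differently. First I would observe that the inner quantities $\mathbf{V}\,\text{diag}(\phi_i(\vec\lambda))\,\mathbf{V}^T$ are exactly the entrywise building blocks of matrix functions of the graph Laplacian: if all $\phi_i$ are the identity (or any fixed linear map), then $\mathbf{V}\,\text{diag}(\vec\lambda)\,\mathbf{V}^T$ recovers (a truncation of) the Laplacian itself, and hence the adjacency matrix $\vec{A}(G)$ up to the diagonal degree term. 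Concretely I would choose the $\phi_i$ so that the stacked tensor passed to $\rho$ contains, in known coordinates, both an approximation of $\vec{A}(G)$ and an approximation of the identity $\vec I$ (the latter obtainable since $\mathbf{V}\mathbf{V}^T$ is the orthogonal projector onto the span of the chosen eigenvectors, which for a connected graph together with enough eigenvectors separates the rows). Because $\rho$ is an arbitrary permutation-equivariant network applied row-wise, it can select and copy these coordinates into the output structural embedding so that the $i$-th row carries enough information to reconstruct both row $i$ of $\vec A(G)$ and the indicator of $i$ itself.

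Next I would feed these structural embeddings through the query/key bilinear form $\Tilde{\mathbf{P}}=\frac{1}{\sqrt{d_k}}\mathbf{P}\mathbf{W}^Q(\mathbf{P}\mathbf{W}^K)^T$ exactly as in Definitions~\ref{adjacency_identifying} and~\ref{def:node_identifying}. For adjacency-identification I would pick $\mathbf{W}^Q,\mathbf{W}^K$ that isolate the embedded-$\vec A(G)$ coordinates so that $\Tilde{\mathbf{P}}_{ij}$ is large exactly when $\vec A(G)_{ij}=1$ and strictly smaller otherwise, giving the required "argmax $=$ adjacency" characterization; for node-identification I would instead isolate the embedded-identity coordinates so that $\Tilde{\mathbf{P}}_{ij}$ is maximized exactly at $i=j$. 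Since a single pair of structural embeddings must satisfy both definitions, I would either arrange both blocks of coordinates to coexist (choosing the two weight-matrix pairs accordingly) or appeal to whatever compositionality lemma the $\embed{LPE}$ proof already used — the point is that nothing new is needed beyond having both the adjacency and the identity linearly readable from $\mathbf{P}$.

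The only genuinely SPE-specific subtlety, and what I expect to be the main obstacle, is the "sufficiently" part: SPE's $\rho$ (and the $\phi_i$) are realized by finitely parameterized networks, so one does not get $\vec A(G)$ and $\vec I$ exactly but only an $\vec\epsilon$-close surrogate $\vec Q$ with $\|\mathbf{P}-\vec Q\|_{\mathrm F}<\epsilon$. I would handle this with a standard universal-approximation argument: the target maps $(\mathbf{V},\vec\lambda)\mapsto$ (stacked adjacency/identity coordinates) are continuous on the relevant compact domain, so for every $\epsilon>0$ there exist choices of the $\phi_i$ and of $\rho$ achieving Frobenius error below $\epsilon$; then I would check that the argmax/strict-inequality conditions in Definitions~\ref{adjacency_identifying} and~\ref{def:node_identifying} are robust to small perturbations (the gaps between the "on" and "off" entries of $\Tilde{\mathbf{P}}$ are bounded away from zero, so a sufficiently small $\epsilon$ preserves them), yielding \emph{sufficiently} node- and adjacency-identifying embeddings. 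One must also confirm the argument is insensitive to whether the Laplacian is normalized and to eigenvalue multiplicities — SPE was designed precisely to be basis-invariant under such repeated eigenvalues, so this causes no additional trouble. I would then conclude, as stated, that SPE can be used with \kgt{1}, \kgt{k}, and \kgt{(k,s)}.
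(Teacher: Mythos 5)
Your route is genuinely different from the paper's. You propose to make $\rho$ directly emit, in disjoint coordinate blocks, approximations of $\vec{A}(G)$ and of $\vec{I}$, and then read them off with suitable $\vec{W}^Q,\vec{W}^K$. The paper instead constructs, via specific choices of $\phi_\ell$ (each picking out the $\ell$-th sorted eigenvalue) and $\rho$, two $n\times n$ blocks $\vec{P},\vec{Q}$ satisfying the rank factorization $\vec{P}\vec{Q}^T=\vec{L}$, and then invokes \Cref{lemma:laplacian_factorization_pq} (which says that any such factorization is node- and adjacency-identifying). Both approaches ultimately appeal to universal approximation on a compact domain; your target map is arguably simpler to state, while the paper's targets a factorization that fits naturally with the $\vec{V}\,\mathrm{diag}(\phi(\lambda))\,\vec{V}^T$ structure of SPE. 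With the two issues below patched, your plan would likely go through and would even avoid the factorization lemmas.

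Two concrete gaps. First, the step ``pick $\vec{W}^Q,\vec{W}^K$ that isolate the embedded-$\vec{A}(G)$ coordinates'' does not give what you want under its natural reading: if both $\vec{W}^Q$ and $\vec{W}^K$ project onto the $\vec{A}(G)$ block, then $\Tilde{\vec{P}}\propto\vec{A}(G)\vec{A}(G)^T$, whose $(i,j)$-entry counts common neighbors; its row maxima sit on the diagonal and on high-overlap pairs, not on $E(G)$, so this is not adjacency-identifying in the sense of \Cref{adjacency_identifying}. You need the asymmetric choice, $\vec{W}^Q$ selecting the $\vec{A}(G)$ block and $\vec{W}^K$ selecting the $\vec{I}$ block (or vice versa), which yields $\Tilde{\vec{P}}\propto\vec{A}(G)\vec{I}^T=\vec{A}(G)$; that the bilinear form is asymmetric is exactly the degree of freedom \Cref{lemma:laplacian_factorization_pq} exploits. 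Second, for the normalized Laplacian $\Tilde{\vec{L}}=\vec{I}-\vec{D}^{-1/2}\vec{A}(G)\vec{D}^{-1/2}$ the spectral data $(\vec{V},\lambda)$ do not carry the degrees (its diagonal is identically $1$ on non-isolated nodes), so $\rho$ cannot reconstruct $\vec{A}(G)$ from the slices alone; moreover $\vec{I}-\Tilde{\vec{L}}$ itself is not adjacency-identifying because its row maxima pick out the minimum-degree neighbor. The paper routes around this via the degree encoding $\embed{deg}$ that is already part of the structural embedding and \Cref{lemma:laplacian_degree_norm}; your plan flags the normalized case but does not actually perform this de-normalization. Your remark about robustness of the argmax gap under small perturbations is also slightly off-target: the definition of ``sufficiently'' identifying only asks for Frobenius approximation of an exactly identifying matrix, and the subsequent lemmas (\Cref{lemma:adjacency_ident_approx}, \Cref{lemma:sufficiently_indicator}) are what make that approximation usable inside the softmax.
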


\subsection{Order transfer}
Here, we describe order transfer, a strategy for effectively using the \kgt{k} for $k > 2$ in practice. The idea of order transfer consists of pre-training a transformer on a lower-order tokenization, such as on the \kgt{(2,1)} tokens, and subsequently fine-tuning the pre-trained weights on a higher-order transformer, such as the \kgt{2}, \kgt{3} or the \kgt{(3,1)}. We evaluate order transfer empirically in \Cref{sec:experiments_ft}. %

Order transfer could be useful when pre-training a higher-order model from scratch is infeasible. However, using a higher-order model on a smaller downstream task is feasible and beneficial to performance. Examples of such scenarios are (a) a large difference in the number of graphs in the pre-training dataset compared to the fine-tuning task, (b) a large difference in the size of the graphs in the pre-training dataset compared to the fine-tuning task, or (c) a specific fine-tuning task benefiting strongly from higher-order expressivity. As a result, order transfer is a promising application of expensive yet expressive higher-order models.

\section{Experimental evaluation}\label{sec:experiments}
Here, we conduct an experimental evaluation of our theoretical results. In particular, we consider two settings. To motivate the use of large-scale pure transformers for graph learning, we evaluate the empirical performance of our transformers under a pre-training/fine-tuning paradigm typically employed with large-scale language, vision, and graph models \citep{Devlin+2019, Dosovitskiy+2021, Ying2021}. This approach is especially promising in molecular learning, where downstream datasets often only contain a few thousand examples \citep{hu+2020+strategies, Lucio+2022}; see \Cref{sec:experiments_pt} for pre-training and \Cref{sec:experiments_ft} for fine-tuning.
Moreover, we benchmark the \kgt{(2,1)} and \kgt{(3,1)} on how well these models can leverage their expressivity in practice; see \Cref{sec:experiments_brec}.
The source code for all experiments is available at \url{https://github.com/luis-mueller/wl-transformers}.

\subsection{Pre-training}\label{sec:experiments_pt}
For pre-training, we train on \textsc{PCQM4Mv2}, one of the largest molecular regression datasets available \citep{Hu2021}. The dataset consists of roughly 3.8M molecules, and the task is to predict the HOMO-LUMO energy gap. Here, we pre-train the \kgt{(2,1)} for 2M steps with a cosine learning rate schedule with 60K warm-up steps on two A100 NVIDIA GPUs; see \Cref{tab:pcqm4mv2_params} in \Cref{app:experiments} for details on the hyper-parameters. %
For model evaluation, we use the code provided by \citet{Hu2021}, available at \url{https://github.com/snap-stanford/ogb}. Further, to demonstrate the effectiveness of our transformers on large node-level tasks, we additionally benchmark the \kgt{(2,1)} on \textsc{CS} and \textsc{Photo}, two transductive node classification datasets \citep{Shchur+2018+Pitfalls}.
On all three tasks, we compare to Graphormer \citep{Ying2021}, a strong graph transformer with modified attention, and TokenGT \citep{Kim+2022} as a pure transformer baseline. We use $12$ transformer layers, a hidden dimension of $768$, $16$ attention heads, and a GELU non-linearity \citep{Hendrycks+2016}. As node-level PEs, we compare both LPE and SPE, as defined in \Cref{sec:implementation}.

We present our results in \Cref{tab:pre_training} and observe that the \kgt{(2,1)} further closes the gap between pure transformers and graph transformers with graph inductive bias such as Graphormer \citep{Ying2021}. Considering that the \kgt{(2,1)} has still relatively low graph inductive bias and in light of recent advances in terms of quality and size of pre-training dataset \citep{Beaini+2023}, we expect this gap to further shrink with increasing data scale. Further, we find that on all three datasets, LPE are favorable or comparable to SPE.

\subsection{Fine-tuning}\label{sec:experiments_ft}
Here, we describe our fine-tuning experiments. When fine-tuning, we re-use the pre-trained weights for the transformer layers and randomly initialize new weights for task-specific feature encodings and the task head. To demonstrate that fine-tuning large pre-trained transformers for small downstream tasks is feasible even with a smaller compute budget, we ensure that all experiments can be run on a single A10 Nvidia GPU with 24GB RAM.

\paragraph{Molecular regression}
To determine whether pre-training can improve the fine-tuning performance of our transformers, we choose \textsc{Alchemy (12K)}, a small-scale molecular dataset with 12K molecules \citep{Mor+2022b}. Specifically, we evaluate three settings: (a) training the \kgt{(2,1)} from scratch, (b) fine-tuning the pre-trained \kgt{(2,1)} and (c) order transfer from \kgt{(2,1)} to \kgt{(3,1)}, where we re-use the pre-trained weights from the \kgt{(2,1)} pre-training but learn a new tokenizer with \kgt{(3,1)} tokens. We evaluate these settings both for LPE and SPE as node-level PEs.
We compare the results to three GNNs with node-level PEs based on the eigenvectors and -values of the graph Laplacian: SignNet, BasisNet, and SPE \citep{Huang+2023}; see \Cref{tab:fine_tuning_1} for results. Here, we find that even without pre-training, the \kgt{(2,1)} already performs well on \textsc{Alchemy}. Most notably, the \kgt{(2,1)} with SPE without fine-tuning already performs on par with SignNet.
Nonetheless, we observe significant improvements through pre-training \kgt{(2,1)}. Most notably, fine-tuning the pre-trained \kgt{(2,1)} with LPE or SPE beats all GNN baselines. 
Interestingly, training the \kgt{(2,1)} with SPE from scratch results in much better performance than training the \kgt{(2,1)} with LPE from scratch. However, once pre-trained, the \kgt{(2,1)} performs better with LPE than with SPE.
Moreover, we observe no improvements when performing order transfer to the \kgt{(3,1)}. However, the \kgt{(3,1)} with LPE and pre-trained weights from the \kgt{(2,1)} beats SignNet \citep{Lim+2022}, a strong GNN baseline. Further, \kgt{(3,1)} with SPE performs on par with SPE in \citet{Huang+2023}, the best of our GNN baselines.
Interestingly, order transfer improves over the \kgt{(2,1)} trained from scratch for both LPE and SPE. As a result, we hypothesize that LPE and SPE provide sufficient expressivity for this task but that pre-training is required to fully leverage their potential. Further, we hypothesize that the added \kgt{(3,1)} tokens lead to overfitting. 

We conclude that pre-training can help our transformers' downstream performance. Further, order transfer is a promising technique for fine-tuning downstream tasks, particularly those that benefit from higher-order representations. However, its benefits might be nullified in the presence of sufficiently expressive node-level PEs in combination with large-scale pre-training.

\begin{table}
    \centering
    \caption{Comparison of \kgt{(2,1)} to Graphormer and TokenGT. Results on \textsc{PCQM4Mv2} over a single random seed, as well as \textsc{CS} and \textsc{Photo} over 7 random seeds where we also report standard deviation. Results for baselines are taken from \citet{Kim+2022}.
    We highlight \first{best} and \second{second} best model on each dataset.}
    \label{tab:pre_training}
\resizebox{0.975\columnwidth}{!}{\begin{tabular}{lcccccc}\toprule
\multirow{2}{*}{\textbf{Model}} & \textsc{PCQM4Mv2} & \textsc{CS} & \textsc{Photo} \\ %
 &  \small Validation MAE $\downarrow$ & \small Accuracy $\uparrow$ & \small Accuracy $\uparrow$ \\\midrule
 Graphormer & \first{0.0864} & 0.791 \tiny $\pm$ 0.015 & 0.894 \tiny $\pm$ 0.004 \\
 TokenGT  & 0.0910 & 0.903 \tiny $\pm$ 0.004 & \first{0.949 \tiny $\pm$ 0.007} \\
 \kgt{(2,1)} + LPE   & \second{0.0870} & \first{0.924 \tiny $\pm$ 0.008} & 0.933 \tiny $\pm$ 0.013 \\
 \kgt{(2,1)} + SPE   & 0.0888 & \second{0.920 \tiny $\pm$ 0.002} & \second{0.933 \tiny $\pm$ 0.011} \\
\bottomrule
\end{tabular}}
\end{table}

\begin{table}[ht]
    \centering
    \caption{Effect of pre-training for fine-tuning performance on \textsc{Alchemy} (12K). Pre-trained weights for both \kgt{(2,1)} and \kgt{(3,1)} are taken from the \kgt{(2,1)} model in \Cref{tab:pre_training}. We report mean and standard deviation over 3 random seeds. Baseline results for SignNet, BasisNet and SPE are taken from \citet{Huang+2023}. We highlight the \first{best}, \second{second} best and \third{third} best model.}
    \label{tab:fine_tuning_1}
\resizebox{0.85\hsize}{!}{\begin{tabular}{lccccc}\toprule
\multirow{2}{*}{\textbf{Model}} & \multirow{2}{*}{Pre-trained} & \textsc{Alchemy} (12K) \\ \cmidrule{3-3}
 & & MAE $\downarrow$ \\\midrule
 SignNet & \ding{55}  & 0.113 \tiny $\pm$ 0.002  \\ 
 BasisNet & \ding{55} & 0.110 \tiny $\pm$ 0.001  \\
 SPE & \ding{55} & \third{0.108 \tiny $\pm$ 0.001}  \\ \midrule
 \multirow{2}{*}{\kgt{(2,1)} + LPE} & \ding{55} & 0.124 \tiny $\pm$ 0.001  \\
 & \ding{51} &  \first{0.101 \tiny $\pm$ 0.001} \\
 \kgt{(3,1)} + LPE & \ding{51} & 0.114 \tiny $\pm$ 0.001  \\ \midrule
 \multirow{2}{*}{\kgt{(2,1)} + SPE} & \ding{55} & 0.112 \tiny $\pm$ 0.000  \\
 & \ding{51} & \second{0.103 \tiny $\pm$ 0.002} \\ 
  \kgt{(3,1)} + SPE & \ding{51} & \third{0.108 \tiny $\pm$ 0.001}  \\
\bottomrule
\end{tabular}}
\end{table}

\begin{table*}
    \centering
    \caption{Fine-tuning results on small OGB molecular datasets compared to a fully-equipped and -tuned GIN. Results over 6 random seeds. %
    We highlight \first{best}, \second{second} best and \third{third} best model for each dataset. Ties are broken by smaller standard deviation.
    For comparison, we also report pre-training results from \citet{hu+2020+strategies}.}
    \label{tab:ogb}
\resizebox{0.85\textwidth}{!}{ 	
        \begin{tabular}{lcccccc}\toprule
\multirow{2}{*}{\textbf{Model}} & \multirow{2}{*}{Pre-trained} & \textsc{BBBP} & \textsc{BACE} & \textsc{ClinTox} & \textsc{Tox21} & \textsc{ToxCast} \\ \cmidrule{3-7}
 & & AUROC $\uparrow$ & AUROC $\uparrow$ & AUROC $\uparrow$ & AUROC $\uparrow$ & AUROC $\uparrow$ \\\midrule
 \multirow{2}{*}{GIN \citep{hu+2020+strategies}} & \ding{55}  & 0.658 {\tiny $\pm$ 0.045} & 0.701 {\tiny $\pm$ 0.054} & 0.580 {\tiny $\pm$ 0.044} & 0.740 {\tiny $\pm$ 0.008} & 0.634 {\tiny $\pm$ 0.006}  \\
 
  & \ding{51} & 0.688 {\tiny $\pm$ 0.008} & \first{0.845 {\tiny $\pm$ 0.007}} & 0.737 {\tiny $\pm$ 0.028} & \first{0.783 {\tiny $\pm$ 0.003}} & \second{0.665 {\tiny $\pm$ 0.003}} \\ \midrule
 GINE+LPE & \ding{55} & 0.670 \tiny $\pm$	0.016 & 0.763 \tiny $\pm$ 0.007 & 0.824 \tiny $\pm$	0.016 & 0.763 \tiny $\pm$	0.011 & \third{0.659 \tiny $\pm$ 0.005} \\
 
 GINE+SPE & \ding{55} & 0.613 \tiny $\pm$ 0.036 & 0.677 \tiny $\pm$ 0.047 & 0.641 \tiny $\pm$ 0.066 & 0.709 \tiny $\pm$	0.023 & 0.593 \tiny $\pm$ 0.018 \\ \midrule
 
 \kgt{(2,1)} +LPE & \ding{55} & \second{0.703\tiny $\pm$	0.025} & 0.786\tiny $\pm$	0.019 & 0.821\tiny $\pm$	0.045 & \third{0.763\tiny $\pm$	0.003} & \first{0.667\tiny $\pm$	0.007} \\

  \kgt{(2,1)} +LPE & \ding{51} & 0.679\tiny $\pm$	0.012 & \second{0.815\tiny $\pm$	0.017} & \first{0.867\tiny $\pm$	0.020 }& \second{0.780\tiny $\pm$	0.005} & 0.642\tiny $\pm$	0.007 \\

 \midrule

  \kgt{(2,1)}+SPE & \ding{55} & \third{0.694\tiny $\pm$	0.020} & 0.776\tiny $\pm$	0.030 & \third{0.845\tiny $\pm$	0.017} & 0.758\tiny $\pm$	0.006 & 0.657\tiny $\pm$	0.007 \\

  \kgt{(2,1)}+SPE & \ding{51} & \first{0.703\tiny $\pm$	0.013} & \third{0.793\tiny $\pm$	0.025} & \second{0.859\tiny $\pm$	0.021} & 0.762\tiny $\pm$	0.003 & 0.637\tiny $\pm$	0.003 \\
\bottomrule
\end{tabular}}
\end{table*}

\paragraph{Molecular classification}
Here, we evaluate whether fine-tuning a large pre-trained transformer can be competitive with a strong GNN baseline on five small-scale molecular classification tasks, namely \textsc{BBBP}, \textsc{BACE}, \textsc{ClinTox}, \textsc{Tox21}, and \textsc{ToxCast}  \citep{hu2020ogb}. The number of molecules in any of these datasets is lower than 10K, making them an ideal choice to benchmark whether large-scale pre-trained models such as the \kgt{(2,1)} can compete with a task-specific GNN.
Specifically, following \citet{Toenshoff+2023+Gap}, we aim for a fair comparison by carefully designing and hyper-parameter tuning a GINE model \citep{Xu+2019} with residual connections, batch normalization, dropout, GELU non-linearities and, most crucially, the same node-level PEs as the \kgt{(2,1)}.  
We followed \citet{hu+2020+strategies} in choosing the GINE layer over other GNN layers due to its guaranteed \wlone{} expressivity. It is worth noting that our GINE with LPE encodings significantly outperforms the GIN in \citet{hu+2020+strategies} without pre-training on all five datasets, demonstrating the quality of this baseline. Interestingly, GINE with SPE as node-level PEs underperforms against the GIN in \citet{hu+2020+strategies} on all but one datasets.
In addition, we report the best pre-trained model for each task in \citet{hu+2020+strategies}.
We fine-tune the \kgt{(2,1)} for 5 epochs and train the GINE model for 45 epochs. In addition, we also train the \kgt{(2,1)} for 45 epochs from scratch to study the impact of pre-training on the molecular classification tasks.
Note that we do not pre-train our GINE model to evaluate whether a large pre-trained transformer can beat a small GNN model trained from scratch; see \Cref{tab:ogb} for results. 

First, we find that the pre-trained \kgt{(2,1)} with SPE is better than the corresponding GINE with SPE on all five datasets.
The pre-trained \kgt{(2,1)} with LPE is better than GINE with LPE on all but one dataset. Moreover, the \kgt{(2,1)} with LPE as well as the \kgt{(2,1)} with SPE are each better or on par with the best pre-trained GIN in \citet{hu+2020+strategies} in two out of five datasets.
When studying the impact of pre-training, we observe that pre-training leads mostly to performance improvement. The notable exception is on the \textsc{ToxCast} dataset, where the pre-trained \kgt{(2,1)} underperforms even the GIN baselines without pre-training.
Hence, we conclude that the pre-training/fine-tuning paradigm is viable for applying pure transformers such as the \kgt{(2,1)} to small-scale problems.

\subsection{Expressivity tests}\label{sec:experiments_brec}
Since we only provide expressivity lower bounds, we empirically investigate the expressive power of the \kgt{k}. To this end, we evaluate the \textsc{Brec} benchmark offering fine-grained and scalable expressivity tests \citep{WangZhang2023BREC}. The benchmark is comprised of $400$ graph pairs that range from being \wlone{} to \kwl{4} indistinguishable and pose a challenge even to the most expressive models; see \Cref{tab:brec} for the expressivity results of \kgt{(2,1)} and \kgt{(3,1)} on \textsc{Brec}. We mainly compare to our GINE baseline, Graphormer \citep{Ying2021} as a graph transformer baseline, and the \kwl{3} as a potential expressivity upper-bound. In addition, we compare our GINE model, \kgt{(2,1)} and \kgt{(3,1)} for both LPE and SPE. We find that SPE encodings consistently lead to better performance. Further, we observe that increased expressivity also leads to better performance, as for both LPE and SPE, the \kgt{(2,1)} beats our GINE baseline, and the \kgt{(3,1)} beats the \kgt{(2,1)} across all tasks. Finally, we find that both the \kgt{(2,1)} and the \kgt{(3,1)} improve over Graphormer while still being outperformed by the \kwl{3}, irrespective of the choice of PE.

\begin{table}%
\caption{Results on the \textsc{BREC} benchmark over a single seed. Baseline results are taken from \citet{WangZhang2023BREC}. We highlight the best model for each PE and category (excluding the \kwl{3}, which is not a model and merely serves as a reference point). We additionally report the results of Graphormer from \citet{WangZhang2023BREC}.}
\label{tab:brec}
\centering
\resizebox{\columnwidth}{!}{ 	
\begin{tabular}{lcccccc}\toprule
\textbf{Model} & \textbf{PE} & Basic & Regular & Extension & CFI & \textit{All} \\ \midrule
GINE & \multirow{3}{*}{LPE} & 19 & 20 & 37 & \textbf{3}  & 79 \\
\kgt{(2,1)} &  & 36 & 28 & 51 & \textbf{3} & 118 \\
\kgt{(3,1)} &  & \textbf{55} & \textbf{46} & \textbf{55} & \textbf{3} & \textbf{159} \\\midrule
GINE & \multirow{3}{*}{SPE} & 56 & 48 & 93 & 20 & 217 \\
\kgt{(2,1)} &  & \textbf{60} & \textbf{50} & \textbf{98} & 18 &  226 \\
\kgt{(3,1)} &  & \textbf{60} & \textbf{50} & 97 & \textbf{21} &  \textbf{228}\\ \midrule \midrule
Graphormer  &  & 16 & 12 & 41 & 10 & 79 \\
\kwl{3} &  & 60 & 50 & 100 & 60 & 270 \\
\bottomrule
\end{tabular}} 
\end{table}

\section{Conclusion}
In this work, we propose a hierarchy of expressive pure transformers that are also feasible in practice. We improve existing pure transformers such as TokenGT \citep{Kim+2022} in several aspects, both theoretically and empirically. Theoretically, our hierarchy has stronger provable expressivity for each $k$ and is more scalable. For example, our \kgt{(2,1)} and \kgt{(3,1)} have a provable expressivity strictly above \wlone{} but are also feasible in practice.
Empirically, we verify our claims about practical feasibility and show that our transformers improve over existing pure transformers, closing the gap between pure transformers and graph transformers with strong graph inductive bias on the large-scale \textsc{PCQM4Mv2} dataset. Further, we show that fine-tuning our pre-trained transformers is a feasible and resource-efficient approach for applying pure transformers to small-scale datasets. We discover that higher-order transformers can efficiently re-use pre-trained weights from lower-order transformers during fine-tuning, indicating a promising direction for utilizing higher-order expressivity to improve results on downstream tasks. Future work could explore aligning transformers to other variants of Weisfeiler--Leman. %
Finally, pre-training pure transformers on truly large-scale datasets such as those recently proposed by \citet{Beaini+2023} could be a promising direction for graph transformers.

\newpage
\section*{Impact statement}
This paper presents work whose goal is to advance the field of machine learning. Our work has many potential societal consequences, none of which must be specifically highlighted here.

\section*{Acknowledgements}
CM and LM are partially funded by a DFG Emmy Noether grant (468502433) and RWTH Junior Principal Investigator Fellowship under Germany’s Excellence Strategy. 

\bibliography{bibliography}
\bibliographystyle{icml2024}

\newpage
\appendix
\onecolumn

\section{Additional implementation details}\label{sec:add_implementation}
Here, we outline additional implementation details that make our pure transformers more feasible in practice.

\subsection{Positional encodings based on Laplacian eigenmaps}\label{sec:add_implementation_laplacian}
Encoding Laplacian eigenmaps are a popular choice for PEs in GNNs and graph transformers \citep{Kre+2021,Lim+2022,Mue+2023,rampavsek2022recipe}. In particular, \citet{Lim+2022} show that their PEs based on Laplacian eigenmaps generalize other PEs, such as those based on random walks or heat kernels. In what follows, we will briefly review some background on Laplacian eigenvectors and -values as input to machine learning models and then show how such PEs can be constructed to be node- and adjacency-identifying.

For an $n$-order graph $G$, the \new{graph Laplacian} is defined as $\vec{L} \coloneqq \vec{D} - \vec{A}(G)$, where $\vec{D}$ denotes the degree matrix and $\vec{A}(G)$ denotes the adjacency matrix. We then consider the eigendecomposition of $\vec{L} = \vec{V}\vec{\Sigma}\vec{V}^T$, the column vectors of $\vec{V}$ correspond to eigenvectors and the diagonal matrix $\vec{\Sigma}$ contains the corresponding eigenvalues of the graph Laplacian. Since $\vec{L}$ is real and symmetric, such decomposition always exists; see, e.g., Corollary 2.5.11 in \citet{Horn+2012}. In addition, the eigenvalues of $\vec{L}$ are real and non-negative.
Some works also consider the \new{normalized graph Laplacian} $\Tilde{\vec{L}} \coloneqq D^{-\frac{1}{2}} \vec{L} D^{-\frac{1}{2}}$ \citep{Kre+2021, Lim+2022}. The statements we made for the graph Laplacian also hold for the normalized graph Laplacian, i.e., $\Tilde{\vec{L}}$ is real and symmetric, and its eigenvalues are real and non-negative. Further, the eigenvectors of real and symmetric matrices are orthonormal \citep{Lim+2022}.

We need to address the following to use Laplacian eigenvectors for machine learning. Let $\vec{v}$ be an eigenvector of a matrix $\vec{M}$ with eigenvalue $\lambda$. Then, $-\vec{v}$ is also an eigenvector of $\vec{M}$ with eigenvalue $\lambda$. As a result, we want a machine learning model to be invariant to the signs of the eigenvectors. \citet{Kre+2021} address this issue by randomly flipping the sign of each eigenvector during training for the model to learn this invariance from data. Concretely, given $k$ eigenvectors $\vec{v}_1, \dots, \vec{v}_k$, we uniformly sample $k$ independent sign values $s_1, \dots, s_k \in \{-1, 1\}$ and then plug eigenvectors $s_1\cdot\vec{v}_1, \dots, s_k \cdot \vec{v}_k$ into the model during training. %

\subsection{Implementation of LPE and SPE}
Here, we describe our concrete implementations of LPE and SPE based on our definition in \Cref{sec:implementation}.

For LPE, we implement $\rho$ by performing a sum over its input's first dimension and applying an FFN, as described in \Cref{sec:implementation}. We choose this implementation to stay as close to implementing the Laplacian PEs in \citet{Kre+2021}.

For SPE, we follow the implementation in \citet{Huang+2023}. Concretely, let
\begin{equation*}
    \vec{Q} \coloneqq \begin{bmatrix}
        \mathbf{V} \text{diag}(\phi_1(\mathbf{\lambda})) \mathbf{V}^T & \dots & \mathbf{V} \text{diag}(\phi_n(\mathbf{\lambda})) \mathbf{V}^T
    \end{bmatrix} \in \mathbb{R}^{n \times n \times l}
\end{equation*}
be the input tensor to $\rho$. \citet{Huang+2023} propose to partition $\vec{Q}$ along the second axis into $n$ matrices of size $n \times l$ and then to apply a GIN to each of those $n$ matrices in parallel where we use the adjacency matrix of the original graph. Concretely, the GIN maps each $n \times l$ matrix to a matrix of shape $n \times d$, where $d$ is the output dimension of $\rho$. Finally, the output of $\rho$ is the sum of all $n \times d$ matrices. We adopt this implementation for our experiments. 
For prohibitively large graphs, we propose the following modification to the above implementation of $\rho$. Specifically, we let $\mathbf{V}_{:m}$ denote the matrix containing as columns the eigenvectors corresponding to the $m$ smallest eigenvalues. Then, we define
\begin{equation}\label{eq:spe_low_rank}
    \vec{Q}_{:m} \coloneqq \begin{bmatrix}
        \mathbf{V} \text{diag}(\phi_1(\mathbf{\lambda})) \mathbf{V}^T_{:m} & \dots & \mathbf{V} \text{diag}(\phi_n(\mathbf{\lambda})) \mathbf{V}^T_{:m}
    \end{bmatrix} \in \mathbb{R}^{n \times m \times l}
\end{equation}
and use the same $\rho$ as for the case $m = n$.

\subsection{Atomic types from edge embeddings}
Here, we discuss a practical implementation of the atomic type embeddings $\embed{atp}$ in \Cref{sec:kwl_pure}. In particular, we use the fact that most graph learning datasets also include edge types, for example, the bond type in molecular datasets. Now, for an undirected graph $G$ and a tuple $\vec{v} = (v_1, \dots, v_k) \in V(G)^k_s$, let $\vec{E}(v_i, v_j) \in \mathbb{R}^d$ denote a learnable embedding of the edge type between nodes $v_i$ and $v_j$, where we additionally set $\vec{E}(v_i, v_j) = \vec{0}$ if, and only, if $v_i$ and $v_j$ do not share an edge in $G$. Further, since we consider graphs without self-loops, we can assign a special learnable vector $\vec{s}$ to $\vec{E}(v_i, v_j)$ for $i = j$.
Then, we can define 
\begin{equation*}
    \embed{atp}(\vec{v}) \coloneqq \big[
        \vec{E}(v_i, v_j)
    \big]_{i \geq j}^k \vec{W},
\end{equation*}
where $\vec{W} \in \mathbb{R}^{\sfrac{d(k^2 - k)}{2} \times d}$ is a learnable weight matrix projecting the concatenated edge embeddings to the target dimension $d$. To see that the above faithfully encodes the atomic type, recall the matrix $\vec{K}$ over $\{1,2,3\}$, determining the atomic type that we introduced in \Cref{sec:kwl_pure}. For undirected graphs, for all $i \geq j$, $\vec{K}_{ij} = 1$ if $\vec{E}(v_i, v_j) \neq \vec{0}$ and $\vec{E}(v_i, v_j) \neq \vec{s}$, $\vec{K}_{ij} = 2$ if $\vec{E}(v_i, v_j) = \vec{s}$ and $\vec{K}_{ij} = 3$ if $\vec{E}(v_i, v_j) = \vec{0}$.
As a result, by including additional edge information into the \kgt{k}, we simultaneously obtain atomic type embeddings $\embed{atp}$.

\begin{table}
    \centering
    \caption{Hyper-parameters for \kgt{(2,1)} pre-training on \textsc{PCQM4Mv2}.}
    \label{tab:pcqm4mv2_params}
\resizebox{.3\textwidth}{!}{
            \begin{tabular}{c|c}
    \toprule
      \textbf{Parameter}   & \textbf{Value}  \\ \midrule
        Learning rate & $2e-4$ \\
        Weight decay & 0.1 \\
        Attention dropout & 0.1 \\
        Post-attention dropout & 0.1 \\
        Batch size & 256 \\
        \# gradient steps & 2M \\
        \# warmup steps & 60K \\
        precision & \texttt{bfloat16} \\
    \bottomrule
    \end{tabular}}
\end{table}

\section{Additional experimental details}\label{app:experiments}
Here, we present the hyper-parameters used during pre-training in \Cref{tab:pcqm4mv2_params}. Further, we describe the hyper-parameter selection strategies employed for the different experiments. Across all experiments, we always select the hyper-parameters based on the best validation score and then evaluate on the test set.

For \textsc{CS} and \textsc{Photo}, we set the learning rate to $0.0001$ and tune dropout over $\{0.5, 0.1\}$, the hidden dimension over $\{512, 1024\}$, the number of attention heads over $\{1, 2, 4\}$, the number of eigenvalues used over $\{64, 96\}$. For the node classification datasets, we disable the learning rate scheduler. We train for 100 epochs. Due to the large number of nodes and edges on these datasets, we use the SPE node-level PEs according to \Cref{eq:spe_low_rank} with $m=384$.

For \textsc{Alchemy}, we only tune the learning rate. Specifically, for the \kgt{(2,1)} we sweep over $\{0.001, 0.0005, 0.0003\}$ and for the \kgt{(3,1)}, due to the additional computational demand over $\{0.0005, 0.0003\}$. For the \kgt{(2,1)} we train for $2000$ epochs and for the \kgt{(3,1)} we train for $500$ epochs, again due to the additional computational demand.

For the molecular classification datasets, we tune the transformers on learning rate and dropout. Specifically, we sweep the learning rate over $\{0.0005, 0.0001, 0.00005, 0.00001\}$ and dropout over $\{0.5, 0.1\}$. 
To ensure that the GINE baseline is representative, we invest more time for hyper-parameter tuning. Specifically, we sweep over the number of layers over $\{3, 6, 12\}$, the hidden dimension over $\{384, 768\}$. Further, we sweep the learning rate over $\{0.0005, 0.0001, 0.00005, 0.00001\}$ and dropout over $\{0.5, 0.1\}$. Finally, we sweep the choice of pooling function over $\{\text{mean}, \text{sum}\}$, used for pooling the node embeddings of the GINE into a single graph-level representation. This results in $12\times$ more hyper-parameter configurations for the GINE baseline than the transformers. 

For \textsc{BREC}, we use a batch size of $16$, $6$ layers and a hidden dimension of $768$ for both GNN and \kgt{(2,1)}. For the \kgt{(3,1)}, we use a batch size of $2$, $3$ layers and a hidden dimension of $192$.

\section{Extended notation}\label{app:notation}
The \new{neighborhood} of a vertex $v$ in $V(G)$ is denoted by $N(v) \coloneqq  \{ u \in V(G) \mid (v, u) \in E(G) \}$ and the \new{degree} of a vertex $v$ is  $|N(v)|$. Two graphs $G$ and $H$ are \new{isomorphic}, and we write $G \simeq H$ if there exists a bijection $\varphi \colon V(G) \to V(H)$ preserving the adjacency relation, i.e., $(u,v)$ is in $E(G)$ if and only if $(\varphi(u),\varphi(v))$ is in $E(H)$. Then $\varphi$ is an \new{isomorphism} between $G$ and $H$. In the case of labeled graphs, we additionally require that $l(v) = l(\varphi(v))$ for $v$ in $V(G)$, and similarly for attributed graphs. 
We further define the atomic type $\text{atp} \colon V(G)^k \to \Nb$, for $k > 0$, such that $\text{atp}(\vec{v}) = \text{atp}(\vec{w})$ for $\vec{v}$ and $\vec{w}$ in $V(G)^k$ if and only if the mapping $\varphi\colon V(G)^k \to V(G)^k$ where $v_i \mapsto w_i$ induces a partial isomorphism, i.e., we have $v_i = v_j \iff w_i = w_j$ and $(v_i,v_j) \in E(G) \iff (\varphi(v_i),\varphi(v_j)) \in E(G)$. 
Let $\vec{M} \in \mathbb{R}^{n \times p}$ and $\vec{N} \in \mathbb{R}^{n \times q}$ be two matrices then
\begin{equation*}
    \begin{bmatrix} \vec{M} & \vec{N} \end{bmatrix} \in \mathbb{R}^{n \times p+q}
\end{equation*}
denotes column-wise matrix concatenation.
Further, let $\vec{M} \in \mathbb{R}^{p \times n}$ and $\vec{N} \in \mathbb{R}^{q \times n}$ be two matrices then
\begin{equation*}
    \begin{bmatrix} \vec{M} \\ \vec{N} \end{bmatrix} \in \mathbb{R}^{p+q \times n}
\end{equation*}
denotes row-wise matrix concatenation. For a matrix $\vec{X} \in \mathbb{R}^{n \times d}$, we denote with $\vec{X}_i$ the $i$-th row vector. In the case where the rows of $\vec{X}$ correspond to nodes in a graph $G$, we use $\vec{X}_v$ to denote the row vector corresponding to the node $v \in V(G)$.

\section{Transformers}\label{sec:background_transformer}
Here, we define the (standard) \new{transformer} \citep{Vaswani2017}, a stack of alternating blocks of \new{multi-head attention} and fully-connected \new{feed-forward networks}.
In each layer, $t > 0$, given token embeddings $\mathbf{X}^{(t-1)} \in \mathbb{R}^{L \times d}$ for $L$ tokens, we compute
\begin{equation}\label{eq:standard_transformer}
    \vec{X}^{(t)} \coloneqq \mathsf{FFN} \big( \vec{X}^{(t-1)} + \big[
        h_1(\vec{X}^{(t-1)}) \, \ldots \, h_M(\vec{X}^{(t-1)})
    \big] \vec{W}^O  \big),
\end{equation}
where $[\cdot]$ denotes column-wise concatenation of matrices, $M$ is the number of heads, $h_i$ denotes the $i$-th transformer head, $\vec{W}^O \in \mathbb{R}^{Md_v \times d}$ denotes a final projection matrix applied to the concatenated heads, and $\mathsf{FFN}$ denotes a feed-forward neural network applied row-wise.
We define the $i$-th head as
\begin{equation}\label{eq:1wl_tf_head}
    h_i(\vec{X}) \coloneqq \mathsf{softmax} \big( \frac{1}{\sqrt{d_k}} \mathbf{X}\mathbf{W}^{Q,i}\big(\mathbf{X}\mathbf{W}^{K,i}\big)^T \big) \mathbf{X} \vec{W}^{V, i},
\end{equation}
where the softmax is applied row-wise and $\vec{W}^{Q, i}, \vec{W}^{K, i}  \in \Rb^{d \times d_k}$,  $\vec{W}^{V, i}  \in \Rb^{d \times d_v}$, and $d_v$ and $d$ are the head dimension and embedding dimension, respectively. We omit layer indices $t$ and optional bias terms for clarity.

\section{Weisfeiler--Leman}\label{app:preliminaries}
Here, we discuss additional background for the Weisfeiler--Leman hierarchy. We begin by describing the Weisfeiler--Leman algorithm, starting with the \wlone. The \wlone{} or color refinement is a well-studied heuristic for the graph isomorphism problem, originally proposed by~\citet{Wei+1968}.\footnote{Strictly speaking, the \wlone{} and color refinement are two different algorithms. The \wlone{} considers neighbors and non-neighbors to update the coloring, resulting in a slightly higher expressive power when distinguishing vertices in a given graph; see~\cite {Gro+2021} for details. For brevity, we consider both algorithms to be equivalent.} 
Intuitively, the algorithm determines if two graphs are non-isomorphic by iteratively coloring or labeling vertices. Formally, let $G = (V,E,\ell)$ be a labeled graph, in each iteration, $t > 0$, the \wlone{} computes a vertex coloring $C^1_t \colon V(G) \to \Nb$, depending on the coloring of the neighbors. That is, in iteration $t>0$, we set
\begin{equation*}
	C^1_t(v) \coloneqq \REL\Big(\!\big(C^1_{t-1}(v),\oms C^1_{t-1}(u) \mid u \in N(v)  \cms \big)\! \Big),
\end{equation*}
for all vertices $v$ in $V(G)$,
where $\REL$ injectively maps the above pair to a unique natural number, which has not been used in previous iterations. In iteration $0$, the coloring $C^1_{0}\coloneqq \ell$. To test if two graphs $G$ and $H$ are non-isomorphic, we run the above algorithm in ``parallel'' on both graphs. If the two graphs have a different number of vertices colored $c$ in $\Nb$ at some iteration, the \wlone{} \new{distinguishes} the graphs as non-isomorphic. 
It is easy to see that the algorithm cannot distinguish all non-isomorphic graphs~\citep{Cai+1992}. Several researchers, e.g.,~\citet{Bab1979,Cai+1992}, devised a more powerful generalization of the former, today known as the $k$-dimensional Weisfeiler--Leman algorithm (\kwl{k}), operating on $k$-tuples of vertices rather than single vertices.

\subsection{The \texorpdfstring{$k$}{k}-dimensional Weisfeiler--Leman algorithm}\label{kwl_intro}

Due to the shortcomings of the $\wlone$ or color refinement in distinguishing non-isomorphic graphs, several researchers, e.g.,~\citet{Bab1979,Cai+1992}, devised a more powerful generalization of the former, today known
as the $k$-dimensional Weisfeiler-Leman algorithm, operating on $k$-tuples of vertices rather than single vertices. 

Intuitively, to surpass the limitations of the \wlone, the \kwl{k} colors vertex-ordered $k$-tuples instead of a single vertex. More precisely, given a graph $G$, the \kwl{k} colors the tuples from $V(G)^k$ for $k \geq 2$ instead of the vertices. By defining a neighborhood between these tuples, we can define a coloring similar to the \wlone. Formally, let $G$ be a graph, and let $k \geq 2$. In each iteration, $t \geq 0$, the algorithm, similarly to the \wlone, computes a
\new{coloring} $C^k_t \colon V(G)^k \to \Nb$. In the first iteration, $t=0$, the tuples $\vec{v}$ and $\vec{w}$ in $V(G)^k$ get the same
color if they have the same atomic type, i.e.,
$C^k_{0}(\vec{v}) \coloneqq \text{atp}(\vec{v})$. Then, for each iteration, $t > 0$, $C^k_{t}$ is defined by
\begin{equation}\label{vr_ext_app}
	C^k_{t}(\vec{v}) \coloneqq \REL \big(C^k_{t-1}(\vec{v}), M_t(\vec{v}) \big),
\end{equation}
with $M_t(\vec{v})$ the multiset
\begin{equation}\label{mi}
	M_t(\vec{v}) \coloneqq  \big( \{\!\! \{  C^{k}_{t-1}(\phi_1(\vec{v},w)) \mid w \in V(G) \} \!\!\}, \dots, \{\!\! \{  C^{k}_{t-1}(\phi_k(\vec{v},w)) \mid w \in V(G) \} \!\!\} \big),
\end{equation}
and where
\begin{equation*}
	\phi_j(\vec{v},w)\coloneqq (v_1, \dots, v_{j-1}, w, v_{j+1}, \dots, v_k).
\end{equation*}
That is, $\phi_j(\vec{v},w)$ replaces the $j$-th component of the tuple $\vec{v}$ with the vertex $w$. Hence, two tuples are \new{adjacent} or \new{$j$-neighbors} if they are different in the $j$-th component (or equal, in the case of self-loops). Hence, two tuples $\vec{v}$ and $\vec{w}$ with the same color in iteration $(t-1)$ get different colors in iteration $t$ if there exists a $j$ in $[k]$ such that the number of $j$-neighbors of $\vec{v}$ and $\vec{w}$, respectively, colored with a certain color is different.

We run the \kwl{k} algorithm until convergence, i.e., until for $t$ in $\Nb$
\begin{equation*}
	C^k_{t}(\vec{v}) = C^k_{t}(\vec{w}) \iff C^k_{t+1}(\vec{v}) = C^k_{t+1}(\vec{w}),
\end{equation*}
for all $\vec{v}$ and $\vec{w}$ in $V(G)^k$, holds.

Similarly to the \wlone, to test whether two graphs $G$ and $H$ are non-isomorphic, we run the \kwl{k} in ``parallel'' on both graphs. Then, if the two graphs have a different number of vertices colored $c$, for $c$ in $\Nb$, the \kwl{k} \textit{distinguishes} the graphs as non-isomorphic. By increasing $k$, the algorithm gets more powerful in distinguishing non-isomorphic graphs, i.e., for each $k \geq 2$, there are non-isomorphic graphs distinguished by $(k+1)$\text{-}\textsf{WL} but not by \kwl{k}~\citep{Cai+1992}. In the following, we define some variants of the \kwl{k}.

\subsection{The \texorpdfstring{$\delta$-$k$}{delta-k}-dimensional Weisfeiler--Leman algorithm}
\citet{Mal2014} introduced the following variant of the \kwl{k}, the \deltakwl, which updates $k$-tuples according to
\begin{align*}
	M^{\text{adj}}_{t}(\vec{v}) \coloneqq \big( &\oms  (C^{k,\text{adj}}_{t-1}(\phi_1(\vec{v},w)), \text{adj}(v_1,w) ) \mid w \in V(G) \cms, \dots,\\
	&\oms (C^{k,\text{adj}}_{t-1}(\phi_k(\vec{v},w)),  \text{adj}(v_k,w)) \mid w \in V(G) \cms \big),
\end{align*}
 where 
\begin{equation*}
	\text{adj}(v,w) \coloneqq \begin{cases}
		1, \text{ if } (v,w) \in E(G) \\
		0, \text{ otherwise, }
	\end{cases}
\end{equation*}
resulting in the coloring function $C^{k,\text{M}}_{t} \colon V(G)^k \to \Nb$. \citet{Morris2020b} showed that this variant is slightly more powerful in distinguishing non-isomorphic graphs compared to the \kwl{k}.

\subsection{The local \texorpdfstring{$\delta$-$k$}{delta-k}-dimensional Weisfeiler--Leman algorithm}
\citet{Morris2020b} introduced a more efficient variant of the \kwl{k}, the \new{local $\delta$-$k$-dimensional Weisfeiler--Leman algorithm} (\localkwl), which updates $k$-tuples according to
\begin{equation*}\label{eqnmidd_ext}
	\begin{split}
		M^{\delta}_t(\vec{v}) =   \big( \{\!\! \{ C^{k, \delta}_{t-1}(\phi_1(\vec{v},w)) \mid w \in N(v_1) \} \!\!\}, \dots, \{\!\! \{  C^{k, \delta}_{t-1}(\phi_k(\vec{v},w)) \mid w \in N(v_k) \}  \!\!\} \big),
	\end{split}
\end{equation*}		
resulting in the coloring function $C^{k,\delta}_{t} \colon V(G)^k \to \Nb$.

\subsection{The local \texorpdfstring{$(k,s)$}{(k,s)}-dimensional Weisfeiler--Leman algorithm}

Let $G$ be a graph. Then $\text{\#comp}(G)$ denotes the number of (connected) components of $G$. Further, let $k \geq 1$ and $1 \leq s \leq k$, then
\begin{equation*}
	V(G)^k_s \coloneqq \{ \vec{v} \in V(G)^k \mid \text{\#comp}(G[\vec{v}]) \leq s  \}
\end{equation*}
is the set of \new{$(k,s)$-tuples} of nodes, i.e, $k$-tuples which induce (sub-)graphs with at most $s$ (connected) components. In contrast to the algorithms of~\cref{vr_ext_app}, the $(k,s)$-\textsf{LWL} colors tuples from  $V(G)^k_s$ instead of the entire $V(G)^k$, resulting in the coloring $C_t^{k,s} \colon V(G)^k_s \to \mathbb{N}$.
For more details; see \citet{Mor+2022b}.

\subsection{Comparing \texorpdfstring{\kwl{k}}{k-WL} variants} Let $A_1$ and $A_2$ denote \kwl{k}-like algorithms, we write $A_1 \sqsubseteq A_2$ if $A_1$ distinguishes between all non-isomorphic pairs $A_2$ does, and $A_1 \equiv A_2$ if both directions hold. The corresponding strict relation is denoted by $\sqsubset$. We can extend these relations to neural architectures as follows. Given two non-isomorphic graphs $G$ and $H$, a neural network architecture \emph{distinguishes} them if a parameter assignment exists such that $\hb_G \neq \hb_H$. 

\subsection{The Weisfeiler--Leman hierarchy and permutation-invariant function approximation}\label{connect}
The Weisfeiler--Leman hierarchy is a purely combinatorial algorithm for testing graph isomorphism. However,  the graph isomorphism function, mapping non-isomorphic graphs to different values, is the hardest to approximate permutation-invariant function. Hence, the Weisfeiler--Leman hierarchy has strong ties to GNNs' capabilities to approximate permutation-invariant or equivariant functions over graphs. For example,~\citet{Mor+2019,Xu+2018b} showed that the expressive power of any possible GNN architecture is limited by the \wlone{} in terms of distinguishing non-isomorphic graphs. \citet{Azi+2020} refined these results by showing that if an architecture is capable of simulating the \kwl{k} (on the set of $n$-order graphs) and allows the application of universal neural networks on vertex features, it will be able to approximate any permutation-equivariant function below the expressive power of the \kwl{k}; see also~\citep{Che+2019}. Hence, if one shows that one architecture distinguishes more graphs than another, it follows that the corresponding GNN can approximate more functions. These results were refined in \citep{geerts2022} for color refinement and taking into account the number of iterations of the \kwl{k}.

\subsection{A generalized adjacency matrix}
Finally, we define a generalization of the adjacency matrix from nodes to $k$-tuples. Specifically, let $G$ be a graph with $n$ nodes and let $\gamma \in \{-1, 1\}$. Then, for all $j \in [k]$, the \new{generalized adjacency matrix} $\vec{A}^{(k, j, \gamma)} \in \{0,1\}^{n^k \times n^k}$ is defined as
\begin{equation}\label{eq:generalized_adjacency_matrix}
    \vec{A}^{(k, j, \gamma)}_{il} \coloneqq \begin{cases}
    \begin{cases}
        1 & \exists w \in V(G) \colon \vec{u}_l = \phi_j(\vec{u}_i, w) \wedge \text{adj}(\vec{u}_{ij}, w) \\
        0 & \text{ else}
    \end{cases} & \gamma = 1 \\ \\
    \begin{cases}
        1 & \exists w \in V(G) \colon  \vec{u}_l = \phi_j(\vec{u}_i, w) \wedge \neg \text{adj}(\vec{u}_{ij}, w) \\
        0 & \text{ else},
    \end{cases} & \gamma = -1
    \end{cases}
\end{equation}
where $\vec{u}_i$ denotes the $i$-th $k$-tuple in a fixed but arbitrary ordering over $V(G)^k$, $\vec{u}_{ij}$ denotes the $j$-th node of $\vec{u}_{i}$ and where $\gamma$ controls whether the generalized adjacency matrix considers ($\gamma = 0$) all $j$-neighbors, ($\gamma > 1$) $j$-neighbors where the swapped nodes are adjacent in $G$ or ($\gamma < 0$) $j$-neighbors where the swapped nodes are not adjacent in $G$. With the generalized adjacency matrix as defined above we can represent the local and global $j$-neighborhood adjacency defined for the \deltakwl{} with $\vec{A}^{(k, j, 1)}$ and $\vec{A}^{(k, j, -1)}$, respectively. Further, the $j$-neighborhood adjacency of 
\kwl{k} can be described via $\vec{A}^{(k, j, 1)} + \vec{A}^{(k, j, -1)}$ and the local $j$-neighborhood adjacency of \localkwl{} can be described with $\vec{A}^{(k, j, 1)}$.

\section{Structural embeddings}\label{app:missing_proofs}
Before we give the missing proofs from \Cref{sec:implementation}, we develop a theoretical framework to analyze structural embeddings.

\subsection{Sufficiently node and adjacency-identifying}
We begin by proving the following lemma which shows an important bound for sufficiently node and adjacency-identifying matrices.
\begin{lemma}\label{lemma:adjacency_ident_approx}
Let $\vec{P}, \vec{Q} \in \mathbb{R}^{n \times d}$ be matrices such that
\begin{equation*}
    ||\vec{P} - \vec{Q}||_\text{F} < \epsilon,
\end{equation*}
for an arbitrary but fixed $\epsilon > 0$.
Let $\vec{P}$ be node or adjacency-identifying with projection matrices $\vec{W}^Q, \vec{W}^K \in \mathbb{R}^{d \times d}$ and let
\begin{align*}
    \Tilde{\vec{P}} &= \frac{1}{\sqrt{d_k}}(\vec{P} \vec{W}^Q)(\vec{P} \vec{W}^K)^T \text{, and} \\
    \Tilde{\vec{Q}} &= \frac{1}{\sqrt{d_k}}(\vec{Q} \vec{W}^Q)(\vec{Q} \vec{W}^K)^T.
\end{align*}
Then, there exists a monotonic strictly increasing function $f$ such that
\begin{equation*}
    ||\Tilde{\vec{P}} - \Tilde{\vec{Q}}||_\text{F} < f(\epsilon).
\end{equation*}
\end{lemma}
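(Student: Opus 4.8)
\emph{Proof plan.} This is a perturbation (Lipschitz-continuity) estimate for the bilinear map $\vec{X} \mapsto \tfrac{1}{\sqrt{d_k}}(\vec{X}\vec{W}^Q)(\vec{X}\vec{W}^K)^T$, and I would emphasise at the outset that the node-/adjacency-identifying property of $\vec{P}$ is \emph{not} used in the argument: it only serves to fix the particular matrices $\vec{W}^Q, \vec{W}^K$, so it suffices to prove the claim for an arbitrary fixed pair. Write $\vec{Q} = \vec{P} + \vec{E}$ with $||\vec{E}||_\text{F} < \epsilon$. Substituting into the definition of $\tilde{\vec{Q}}$, expanding the product, and cancelling the term $\tfrac{1}{\sqrt{d_k}}\vec{P}\vec{W}^Q(\vec{W}^K)^T\vec{P}^T = \tilde{\vec{P}}$ gives
\begin{equation*}
    \tilde{\vec{Q}} - \tilde{\vec{P}} = \frac{1}{\sqrt{d_k}}\Big( \vec{P}\vec{W}^Q(\vec{W}^K)^T\vec{E}^T + \vec{E}\vec{W}^Q(\vec{W}^K)^T\vec{P}^T + \vec{E}\vec{W}^Q(\vec{W}^K)^T\vec{E}^T \Big),
\end{equation*}
i.e., two summands linear in the perturbation and one quadratic.

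\emph{The estimate.} Next I would apply the triangle inequality followed by submultiplicativity of the Frobenius norm ($||\vec{A}\vec{B}||_\text{F} \le ||\vec{A}||_\text{F}\,||\vec{B}||_\text{F}$ and $||\vec{A}^T||_\text{F} = ||\vec{A}||_\text{F}$) to each of the three summands. Collecting the quantities determined by the fixed data, set $c \coloneqq \tfrac{1}{\sqrt{d_k}}||\vec{W}^Q||_\text{F}\,||\vec{W}^K||_\text{F}$ and $r \coloneqq ||\vec{P}||_\text{F}$; then each linear term is bounded by $c\,r\,||\vec{E}||_\text{F}$ and the quadratic one by $c\,||\vec{E}||_\text{F}^2$, so that, using $||\vec{E}||_\text{F} < \epsilon$,
\begin{equation*}
    ||\tilde{\vec{P}} - \tilde{\vec{Q}}||_\text{F} < c\big(2r\epsilon + \epsilon^2\big).
\end{equation*}
Defining $f(\epsilon) \coloneqq c(2r\epsilon + \epsilon^2)$ then gives $||\tilde{\vec{P}} - \tilde{\vec{Q}}||_\text{F} < f(\epsilon)$, and since $c > 0$ and $r \ge 0$ we have $f'(\epsilon) = 2c(r+\epsilon) > 0$ for $\epsilon \ge 0$, so $f$ is monotonic and strictly increasing with $f(0) = 0$, which is precisely the behaviour the downstream results exploit.

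\emph{Remaining case and expected difficulty.} The only point requiring separate treatment is the degenerate case $c = 0$ (i.e., $\vec{W}^Q = \vec{0}$ or $\vec{W}^K = \vec{0}$), where $\tilde{\vec{P}} = \tilde{\vec{Q}} = \vec{0}$ and one may simply take $f(\epsilon) = \epsilon$; since $d_k \ge 1$ is fixed and $\vec{P}, \vec{W}^Q, \vec{W}^K$ are fixed, $c$ and $r$ are genuine constants and $f$ depends on $\epsilon$ alone, as required. I do not anticipate a real obstacle here: this is a standard continuity statement for a quadratic map between finite-dimensional spaces, and essentially all of the care goes into keeping the constant $c$ (hence $f$) explicit, so that the proofs of \Cref{theorem:node_and_adj_ident,theorem:spe_node_and_adj_ident} can later invoke a concrete $f$ when propagating the $\epsilon$-approximation of the structural embeddings through the softmax.
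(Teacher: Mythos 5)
Your proof is correct, and it takes a genuinely different decomposition than the paper's. The paper telescopes the difference as
$\tilde{\vec{P}} - \tilde{\vec{Q}} = \tfrac{1}{\sqrt{d_k}}\big(\vec{P}\vec{W}^Q(\vec{P}\vec{W}^K - \vec{Q}\vec{W}^K)^T + (\vec{P}\vec{W}^Q - \vec{Q}\vec{W}^Q)(\vec{Q}\vec{W}^K)^T\big)$,
which gives two terms and a bound $f(\epsilon) = \tfrac{\epsilon}{\sqrt{d_k}}\lVert\vec{W}^Q\rVert_\text{F}\lVert\vec{W}^K\rVert_\text{F}(\lVert\vec{P}\rVert_\text{F}+\lVert\vec{Q}\rVert_\text{F})$ that is \emph{linear} in $\epsilon$ but carries a hidden dependence on $\lVert\vec{Q}\rVert_\text{F}$; you instead substitute $\vec{Q}=\vec{P}+\vec{E}$ and expand bilinearly into two linear terms and one quadratic term, giving $f(\epsilon)=c(2r\epsilon+\epsilon^2)$ with $c,r$ determined entirely by the fixed data $\vec{P},\vec{W}^Q,\vec{W}^K,d_k$. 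Your version has the advantage that $f$ is literally a function of $\epsilon$ alone (the paper's bound is cleanest if one further replaces $\lVert\vec{Q}\rVert_\text{F}$ by $\lVert\vec{P}\rVert_\text{F}+\epsilon$, which would in fact recover your expression); the paper's has the cosmetic advantage of being linear in $\epsilon$, but both are monotone increasing with $f(0^+)\to 0$, which is all that downstream usage (\Cref{lemma:sufficiently_indicator}, \Cref{lemma:approx_normalized_adjacency}) needs. One minor point: you remark that the identifying property of $\vec{P}$ plays no role; the paper does actually invoke it, to argue $\lVert\vec{P}\rVert_\text{F},\lVert\vec{W}^Q\rVert_\text{F},\lVert\vec{W}^K\rVert_\text{F}>0$ (otherwise $\tilde{\vec{P}}=\vec{0}$, which cannot be identifying) so that its $f$ is strictly increasing; you reach the same conclusion by handling the degenerate $c=0$ case separately and noting $f'(\epsilon)=2c(r+\epsilon)>0$ for $\epsilon>0$, which is equally valid and arguably more self-contained.
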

\begin{proof}
Our goal is to show that if the error between $\vec{P}$ and $\vec{Q}$ is bounded, so is the error between $\Tilde{\vec{P}}$ and $\Tilde{\vec{Q}}$. We now show that this error can be described by a monotonic strictly increasing function $f$, i.e., if $\epsilon_1 < \epsilon_2$, then $f(\epsilon_1) < f(\epsilon_2)$. We will first prove the existence of $f$.

First note that we can write,
\begin{equation*}
    \Tilde{\vec{P}} - \Tilde{\vec{Q}} = \frac{1}{\sqrt{d_k}} \cdot \Big( \vec{P} \vec{W}^Q (\vec{P} \vec{W}^K - \vec{Q} \vec{W}^K)^T + (\vec{P} \vec{W}^Q - \vec{Q} \vec{W}^Q)(\vec{Q} \vec{W}^K)^T \Big).
\end{equation*}
Note further that we are guaranteed that 
\begin{align}
    ||\vec{P}||_\text{F} &> 0 \label{eq:lemma_pq_approx_adj_ident_claim_1} \\
    ||\vec{W}^Q||_\text{F} &> 0 \label{eq:lemma_pq_approx_adj_ident_claim_2} \\
    ||\vec{W}^K||_\text{F} &> 0 \label{eq:lemma_pq_approx_adj_ident_claim_3}
\end{align}
since otherwise at least one of the above matrices is zero, in which case $\Tilde{\vec{P}} = \vec{0}$, which is not node or adjacency-identifying in general, a contradiction.
Now we can write,
\begin{align*}
   ||\Tilde{\vec{P}} - \Tilde{\vec{Q}}||_\text{F} &= \Big|\Big| \frac{1}{\sqrt{d_k}} \cdot \Big( \vec{P} \vec{W}^Q (\vec{P} \vec{W}^K - \vec{Q} \vec{W}^K)^T + (\vec{P} \vec{W}^Q - \vec{Q} \vec{W}^Q)(\vec{Q} \vec{W}^K)^T \Big) \Big|\Big|_\text{F} \\
   &\overset{\text{(a)}}{\leq} \frac{1}{\sqrt{d_k}} \cdot \Bigg( \Big|\Big| \vec{P} \vec{W}^Q (\vec{P} \vec{W}^K - \vec{Q} \vec{W}^K)^T \Big|\Big|_\text{F} + \Big|\Big| (\vec{P} \vec{W}^Q - \vec{Q} \vec{W}^Q)(\vec{Q} \vec{W}^K)^T \Big|\Big|_\text{F} \Bigg) \\
   &\overset{\text{(b)}}{\leq} \frac{1}{\sqrt{d_k}} \cdot \Bigg( \Big|\Big|\vec{P} \vec{W}^Q\Big|\Big|_\text{F} \cdot \Big|\Big|(\vec{P} \vec{W}^K - \vec{Q} \vec{W}^K)^T\Big|\Big|_\text{F} + \Big|\Big|(\vec{Q} \vec{W}^K)^T\Big|\Big|_\text{F} \cdot \Big|\Big|\vec{P} \vec{W}^Q - \vec{Q} \vec{W}^Q \Big|\Big|_\text{F} \Bigg) \\
   &\overset{\text{(c)}}{\leq} \frac{1}{\sqrt{d_k}} \cdot \Bigg( \Big|\Big|\vec{P} \vec{W}^Q\Big|\Big|_\text{F} \cdot \Big|\Big|\vec{P} \vec{W}^K - \vec{Q} \vec{W}^K\Big|\Big|_\text{F} + \Big|\Big|\vec{Q} \vec{W}^K\Big|\Big|_\text{F} \cdot \Big|\Big|\vec{P} \vec{W}^Q - \vec{Q} \vec{W}^Q \Big|\Big|_\text{F} \Bigg) \\
   &\overset{\text{(d)}}{\leq} \frac{1}{\sqrt{d_k}} \cdot \Bigg( \Big|\Big|\vec{P}\Big|\Big|_\text{F} \Big|\Big|\vec{W}^Q\Big|\Big|_\text{F} \cdot \Big|\Big|\vec{W}^K\Big|\Big|_\text{F} \Big|\Big|\vec{P} - \vec{Q}\Big|\Big|_\text{F} + \Big|\Big|\vec{Q}\Big|\Big|_\text{F} \Big|\Big|\vec{W}^Q\Big|\Big|_\text{F} \Big|\Big|\vec{W}^K\Big|\Big|_\text{F} \cdot \Big|\Big|\vec{P} - \vec{Q} \Big|\Big|_\text{F} \Bigg) \\
   &= \frac{1}{\sqrt{d_k}} \cdot \Big|\Big|\vec{W}^Q\Big|\Big|_\text{F} \Big|\Big|\vec{W}^K\Big|\Big|_\text{F} \cdot \Big|\Big|\vec{P} - \vec{Q} \Big|\Big|_\text{F} \cdot \Big( \Big|\Big|\vec{P}\Big|\Big|_\text{F} + \Big|\Big|\vec{Q}\Big|\Big|_\text{F} \Big)  \\
   &\overset{\text{(e)}}{<} \frac{\epsilon}{\sqrt{d_k}} \cdot \Big|\Big|\vec{W}^Q\Big|\Big|_\text{F} \Big|\Big|\vec{W}^K\Big|\Big|_\text{F} \cdot \Big( \Big|\Big|\vec{P}\Big|\Big|_\text{F} + \Big|\Big|\vec{Q}\Big|\Big|_\text{F} \Big).
\end{align*}
Here, we used (a) the triangle inequality, (b) the Cauchy-Schwarz inequality (c) the fact that for any matrix $\vec{X}$, $||\vec{X}||_\text{F} = ||\vec{X}^T||_\text{F}$, (d) again Cauchy-Schwarz and finally (e) the Lemma statement, namely that $||\vec{P} - \vec{Q}||_\text{F} < \epsilon$ combined with the facts in \Cref{eq:lemma_pq_approx_adj_ident_claim_1}, (\ref{eq:lemma_pq_approx_adj_ident_claim_2}) and (\ref{eq:lemma_pq_approx_adj_ident_claim_3}), guaranteeing that $||\Tilde{\vec{P}} - \Tilde{\vec{Q}}||_\text{F} > 0$. We set
\begin{equation*}
    f(\epsilon) \coloneqq \frac{\epsilon}{\sqrt{d_k}} \cdot \Big|\Big|\vec{W}^Q\Big|\Big|_\text{F} \Big|\Big|\vec{W}^K\Big|\Big|_\text{F} \cdot \Big( \Big|\Big|\vec{P}\Big|\Big|_\text{F} + \Big|\Big|\vec{Q}\Big|\Big|_\text{F} \Big),
\end{equation*}
which is clearly monotonic strictly increasing, since the norms as well as $\frac{1}{\sqrt{d_k}}$ are non-negative and \Cref{eq:lemma_pq_approx_adj_ident_claim_1}, (\ref{eq:lemma_pq_approx_adj_ident_claim_2}) and (\ref{eq:lemma_pq_approx_adj_ident_claim_3}) ensure that $f(\epsilon) > 0$.
As a result, we can now write
\begin{equation*}
    ||\Tilde{\vec{P}} - \Tilde{\vec{Q}}||_\text{F} < f(\epsilon).
\end{equation*}
This concludes the proof.
\end{proof}

We use sufficiently node or adjacency-identifying matrices for approximately recovering \new{weighted indicator matrices}, which we define next.
\begin{definition}[Weighted indicators]
Let $\vec{x} = (x_1, \dots, x_n) \in \{0,1\}^n$ be an $n$-dimensional binary vector. We call
\begin{equation*}
    \Tilde{\vec{x}} \coloneqq \frac{\vec{x}}{\sum_{i=1}^n x_i},
\end{equation*}
the weighted indicator vector of $\vec{x}$. Further, let $\vec{X} \in \{0,1\}^{n \times n}$ be a binary matrix. Let now $\Tilde{\vec{X}} \in \mathbb{R}^{n \times n}$ be a matrix such that the $i$-th row of $\Tilde{\vec{X}}$ is the weighted indicator vector of the $i$-th row of $\vec{X}$. We call $\Tilde{\vec{X}}$ the weighted indicator matrix of $\vec{X}$.
\end{definition}

The following lemma ties (sufficiently) node or adjacency-identifying matrices and weighted indicators together.
\begin{lemma}\label{lemma:sufficiently_indicator}
Let $\Tilde{\vec{P}} \in \mathbb{R}^{n \times n}$ be a matrix and let $\vec{X}$ be a binary matrix with weighted indicator matrix $\Tilde{\vec{X}}$, such that for every $i, j \in [n]$,
\begin{equation*}
    \Tilde{\vec{P}}_{ij} = \max_{k} \Tilde{\vec{P}}_{ik}
\end{equation*}
if, and only, if $\vec{X}_{ij} = 1$. Then, for a matrix $\Tilde{\vec{Q}} \in \mathbb{R}^{n \times n}$ and all $\epsilon > 0$, there exist an $\delta > 0$ and a $b > 0$ such that if
\begin{equation*}
    \big|\big| \Tilde{\vec{P}} - \Tilde{\vec{Q}} \big|\big|_\text{F} < \delta,
\end{equation*}
then,
\begin{equation*}
    \big|\big| \textsf{softmax}\Big( b \cdot \Tilde{\vec{Q}} \Big) - \Tilde{\vec{X}} \big|\big|_\text{F} < \epsilon,
\end{equation*}
where $\text{softmax}$ is applied row-wise.
\end{lemma}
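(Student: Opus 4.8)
The plan is to decouple the two limiting behaviours hidden in the statement: the temperature limit $b\to\infty$, which sharpens $\mathsf{softmax}(b\cdot\tilde{\vec P})$ into the uniform-on-argmax pattern $\tilde{\vec X}$, and the perturbation limit $\tilde{\vec Q}\to\tilde{\vec P}$, which is governed by Lipschitz continuity of the fixed-temperature softmax. First I fix $b$ large in terms of $\epsilon$ and the structure of $\tilde{\vec P}$; then, with $b$ frozen, I choose $\delta$ small in terms of $b$ and $\epsilon$.

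\textbf{Step 1 (the exact matrix).} For each row $i$, set $S_i\coloneqq\{j:\vec X_{ij}=1\}$, which by hypothesis is exactly the set of columns attaining the row maximum $m_i\coloneqq\max_k\tilde{\vec P}_{ik}$; it is nonempty, so $\tilde{\vec X}$ is well defined with $\tilde{\vec X}_{ij}=1/|S_i|$ for $j\in S_i$ and $0$ otherwise. Writing $\mathsf{softmax}(b\cdot\tilde{\vec P})_{ij}=e^{b(\tilde{\vec P}_{ij}-m_i)}/(\sum_k e^{b(\tilde{\vec P}_{ik}-m_i)})$, the numerator equals $1$ on $S_i$ and is at most $e^{-b\gamma_i}$ off $S_i$, where $\gamma_i\coloneqq m_i-\max_{j\notin S_i}\tilde{\vec P}_{ij}>0$ is the gap of row $i$ (rows with $S_i=[n]$ are already exactly uniform for every $b$ and are simply excluded). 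Hence the denominator lies in $[|S_i|,\,|S_i|+(n-|S_i|)e^{-b\gamma_i}]$, and an elementary estimate gives $|\mathsf{softmax}(b\cdot\tilde{\vec P})_{ij}-\tilde{\vec X}_{ij}|\le n\,e^{-b\gamma_i}$ for every $j$. With $\gamma\coloneqq\min\{\gamma_i:|S_i|<n\}>0$ this yields $\|\mathsf{softmax}(b\cdot\tilde{\vec P})-\tilde{\vec X}\|_\text{F}\le n^2 e^{-b\gamma}$, so any $b>\gamma^{-1}\ln(2n^2/\epsilon)$ makes this bound smaller than $\epsilon/2$.

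\textbf{Step 2 (transferring to $\tilde{\vec Q}$).} The softmax map $\mathbb{R}^n\to\mathbb{R}^n$ is $1$-Lipschitz in the Euclidean norm, since its Jacobian $\mathrm{diag}(\sigma)-\sigma\sigma^\top$ is symmetric with quadratic form $v^\top(\mathrm{diag}(\sigma)-\sigma\sigma^\top)v=\mathrm{Var}_\sigma(v)\in[0,\|v\|_2^2]$. Hence $\vec z\mapsto\mathsf{softmax}(b\vec z)$ is $b$-Lipschitz, and applying it row-wise gives $\|\mathsf{softmax}(b\cdot\tilde{\vec P})-\mathsf{softmax}(b\cdot\tilde{\vec Q})\|_\text{F}\le b\,\|\tilde{\vec P}-\tilde{\vec Q}\|_\text{F}$. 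Setting $\delta\coloneqq\epsilon/(2b)$ with $b$ as in Step 1, the assumption $\|\tilde{\vec P}-\tilde{\vec Q}\|_\text{F}<\delta$ makes the right-hand side below $\epsilon/2$; combining with Step 1 via the triangle inequality gives $\|\mathsf{softmax}(b\cdot\tilde{\vec Q})-\tilde{\vec X}\|_\text{F}<\epsilon$, as required.

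The only genuinely delicate point is the order of quantifiers: the Lipschitz constant of the temperature-$b$ softmax grows linearly in $b$, so $\delta$ cannot be fixed before $b$ is; the statement's phrasing ``there exist $\delta>0$ and $b>0$'' is exactly what permits choosing $b$ first (as a function of $\epsilon$ and the worst-case row gap $\gamma$) and then $\delta=\epsilon/(2b)$. The secondary bookkeeping issue is the degenerate case of constant rows of $\tilde{\vec P}$ (i.e.\ $S_i=[n]$), for which the softmax equals the uniform indicator for every $b$ and so contributes zero error; excluding them when forming $\gamma$ keeps the gap strictly positive.
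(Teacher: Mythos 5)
Your proof is correct and takes the same high-level route as the paper: choose the temperature $b$ first, large enough that $\mathsf{softmax}(b\cdot\Tilde{\vec P})$ lies within $\epsilon/2$ of $\Tilde{\vec X}$, then choose $\delta$ as a function of $b$ so that perturbing the argument by $\delta$ moves the output by at most $\epsilon/2$. You are, however, noticeably more rigorous on the two quantitative facts that make this quantifier order legitimate: the paper merely asserts that one should take $\delta<f(b)$ with $f$ ``shrinking faster than linear'' (e.g.\ $f(b)=1/b^2$) and appeals to an informal ``opposing forces'' picture, whereas you explicitly derive the exponential rate $\lVert\mathsf{softmax}(b\Tilde{\vec P})-\Tilde{\vec X}\rVert_\text{F}\le n^2e^{-b\gamma}$ in terms of the worst-case row gap $\gamma$, and establish the $b$-Lipschitz bound on $z\mapsto\mathsf{softmax}(bz)$ via the spectral estimate $\lVert\mathrm{diag}(\sigma)-\sigma\sigma^{\top}\rVert_2\le 1$, from which $\delta=\epsilon/(2b)$ and the triangle inequality close the argument cleanly. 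Your write-up is thus a rigorous completion of what the paper only sketches rather than a genuinely different proof.
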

\begin{proof}
We begin by reviewing how the softmax acts on a vector $\vec{z} = (z_1, \dots, z_n) \in \mathbb{R}^n$. 
Let $z_\text{max} \coloneqq \max_i z_i$ be the maximum value in $\vec{z}$. Further, let $\vec{x} = (x_1, \dots, x_n) \in \{0,1\}^n$ be a binary vector such that
\begin{equation*}
    x_i = \begin{cases}
        1 & z_i = z_\text{max} \\
        0 & \text{ else}
    \end{cases}.
\end{equation*}
Now, for a scalar $b > 0$,
\begin{equation*}
    \lim_{b \rightarrow \inf} \textsf{softmax}\Big(b \cdot \vec{z} \Big) = \dfrac{x}{\sum_{i=1}^n x_i},
\end{equation*}
i.e., as $b$ goes to infinity, the softmax converges $b \cdot \vec{z}$ to the weighted indicator vector 
\begin{equation*}
    \Tilde{\vec{x}} = \dfrac{x}{\sum_{i=1}^n x_i}.
\end{equation*}
Let us now generalize this to matrices. Specifically, let $\Tilde{\vec{P}} \in \mathbb{R}^{n \times n}$ be a matrix, let $\vec{X}$ be a binary matrix with weighted indicator matrix $\Tilde{\vec{X}}$ such that for every $i, j \in [n]$,
\begin{equation*}
    \Tilde{\vec{P}}_{ij} = \max_{k} \Tilde{\vec{P}}_{ik}
\end{equation*}
if and only if $\vec{X}_{ij} = 1$. Then, for a scalar $b > 0$,
\begin{equation*}
    \lim_{b \rightarrow \inf} \textsf{softmax}\Big(b \cdot \Tilde{\vec{P}} \Big) = \Tilde{\vec{X}},
\end{equation*}
which follows from the fact that the softmax is applied independently to each row and each row of $\Tilde{\vec{X}}$ is a weighted indicator vector. We now show the proof statement.
First, note that for any $b > 0$ we can choose a $\delta < f(b)$, where $f: \mathbb{R} \rightarrow \mathbb{R}$ is some strictly monotonically \textit{decreasing} function of $b$ that shrinks faster than linear, e.g., $f(b) = \frac{1}{b^2}$. This function is well-defined since by assumption $b > 0$.
As a result an increase in $b$ implies a non-linearly growing decrease of
\begin{equation*}
    \big|\big| \textsf{softmax}\Big( b \cdot \Tilde{\vec{P}} \Big) - \textsf{softmax}\Big( b \cdot \Tilde{\vec{Q}} \Big) \big|\big|_\text{F}
\end{equation*}
and thus,
\begin{equation*}
    \lim_{b \rightarrow \inf} \textsf{softmax}\Big(b \cdot \Tilde{\vec{Q}} \Big) = \Tilde{\vec{X}},
\end{equation*}
and we have that for all $\epsilon > 0$ there exists a $b > 0$ and a $\delta < f(b)$ such that
\begin{equation*}
    \big|\big| \textsf{softmax}\Big( b \cdot \Tilde{\vec{Q}} \Big) - \Tilde{\vec{X}} \big|\big|_\text{F} < \epsilon.
\end{equation*}
This concludes the proof.
\end{proof}

In the above proof, the approximation with $\delta$ and the scaling with $b$ act as two "opposing forces". The proof then chooses $b$ such that $\epsilon$ acts stronger than $b$ and hence with $b \rightarrow \inf$, the approximation converges to the weighted indicator matrix; see \Cref{fig:sufficiently_indicator_explainer} for a visual explanation of this concept.

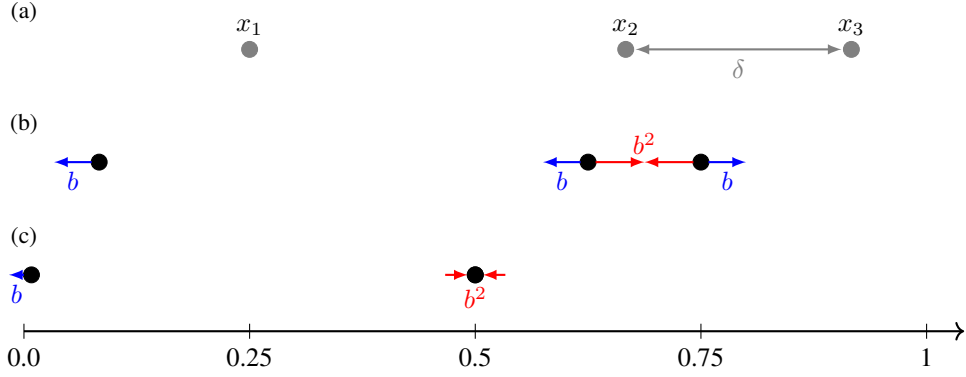
\begin{figure}
    \centering
    \begin{tikzpicture}

    \filldraw[gray] (0,0) circle (3pt) node[midway] (x1a) {};
    \filldraw[gray] (5,0) circle (3pt);
    \filldraw[gray] (8,0) circle (3pt);
    \draw[<->, thick, >=latex, shorten <=-2pt, shorten >=-2pt, gray] (5.2, 0) -- node[below] {$\delta$} (7.8, 0);

    \node at (0,0.3) {$x_1$};
    \node at (5,0.3) {$x_2$};
    \node at (8,0.3) {$x_3$};

    \node at (-3,0.5) {\small(a)};
    \node at (-3,-1) {\small(b)};
    \node at (-3,-2.5) {\small(c)};

    \filldraw[black] (4.5, -1.5) circle (3pt);
    \draw[->, thick, >=latex, blue] (4.4, -1.5) -- node[below] {$b$} (4.4-0.5, -1.5);
    \draw[->, thick, >=latex, red] (4.6, -1.5) -- (4.6+0.65, -1.5);

    \filldraw[black] (6, -1.5) circle (3pt);
    \draw[->, thick, >=latex, red] (5.9, -1.5) -- (5.9-0.65, -1.5);
    \draw[->, thick, >=latex, blue] (6.1, -1.5) -- node[below] {$b$} (6.1+0.5, -1.5);
    \node[red] at (5.25,-1.25) {$b^2$};
    
    \filldraw[black] (-2, -1.5) circle (3pt) node[midway] (x1b) {};
    \draw[->, thick, >=latex, blue] (-2.1, -1.5) -- node[below] {$b$} (-2.1-0.5, -1.5);

    \filldraw[black] (3, -3) circle (3pt);
    \draw[->, thick, >=latex, red] (3-0.4, -3) -- (3.3-0.4, -3);
    \draw[<-, thick, >=latex, red] (3.1, -3) -- (3.4, -3);

    \filldraw[black] (3, -3) circle (3pt);
    \node[red] at (3,-3.3) {$b^2$};
    
    \filldraw[black] (-2.9, -3) circle (3pt);
    \draw[->, thick, >=latex, blue] (-3, -3) -- node[below] {$b$} (-3-0.2, -3);

    \draw[->, thick] (-3,-3.75) -- (9.5,-3.75);
    \foreach \x in {0.0,0.25,0.5,0.75,1} {
        \draw (\x*12 - 3, -3.75 + 0.1) -- (\x*12 - 3, -3.75 - 0.1) node[below] {\x};
    }

    \end{tikzpicture}
    \caption{Visual explanation of the "opposing forces" in \Cref{lemma:sufficiently_indicator}. In (a) \textbf{before softmax}: We consider three numbers $x_1$, $x_2$ and $x_3$, where $x_2$ and $x_3$ are less than $\delta$ apart. In (b) \textbf{after softmax}: An increase in $b$ (blue) pushes the maximum value $x_3$ away from $x_1$ and $x_2$. However, the approximation with $\delta$ acts stronger (red). As a result, $x_1$ gets pushed closer to $0$, but $x_2$ and $x_3$ get pushed closer. In (c) \textbf{after softmax}: Further increasing $b$ makes $x_1$ converge to $0$, but the approximation with $\delta$ pushes $x_2$ and $x_3$ closer together, and the softmax maps both values approximately to $\frac{1}{2}$ (here depicted with the same dot). Hence, with a sufficiently close approximation, we can approximate the weighted indicator matrix $\Tilde{\vec{X}}$. }
    \label{fig:sufficiently_indicator_explainer}
\end{figure}

From the definitions of sufficiently node and adjacency-identifying matrices, we can prove the following statement.
\begin{lemma}\label{lemma:approx_normalized_adjacency}
Let $G$ be a graph with adjacency matrix $\vec{A}(G)$ and degree matrix $\vec{D}$. Let $\vec{Q}$ be a sufficiently adjacency-identifying matrix. Then, there exists a $b > 0$ and projection matrices $\vec{W}^Q, \vec{W}^K \in \mathbb{R}^{d \times d}$ with
\begin{equation*}
    \Tilde{\vec{Q}} \coloneqq \frac{1}{\sqrt{d_k}}(\vec{Q} \vec{W}^Q)(\vec{Q} \vec{W}^K)^T
\end{equation*}
such that
\begin{equation*}
    \big|\big| \textsf{softmax}\Big( b \cdot \Tilde{\vec{Q}} \Big) - \vec{D}^{-1}\vec{A}(G) \big|\big|_\text{F} < \epsilon,
\end{equation*}
for all $\epsilon > 0$.
\end{lemma}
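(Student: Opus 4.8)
The plan is to assemble the statement from the three lemmas already proved in this subsection. First I would recall that $\vec{D}^{-1}\vec{A}(G)$ is exactly the weighted indicator matrix $\Tilde{\vec{A}}(G)$ of the (binary) adjacency matrix $\vec{A}(G)$: the $i$-th row of $\vec{A}(G)$ is a binary vector whose nonzero entries mark the neighbors of $i$, and dividing by the row sum---which is precisely the degree $\vec{D}_{ii}$---yields the weighted indicator vector. So it suffices to show that $\textsf{softmax}(b\cdot\Tilde{\vec{Q}})$ can be made arbitrarily Frobenius-close to the weighted indicator matrix of $\vec{A}(G)$.

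Next I would invoke the definition of \emph{sufficiently adjacency-identifying} (\Cref{adjacency_identifying}): since $\vec{Q}$ is sufficiently adjacency-identifying, for every $\delta'>0$ there is an adjacency-identifying matrix $\vec{P}$ (the one whose existence is assumed, with its own projection matrices $\vec{W}^Q,\vec{W}^K$) such that $\|\vec{P}-\vec{Q}\|_\text{F}<\delta'$. Form $\Tilde{\vec{P}}$ and $\Tilde{\vec{Q}}$ using these same projection matrices. By \Cref{lemma:adjacency_ident_approx}, there is a monotone strictly increasing $f$ with $\|\Tilde{\vec{P}}-\Tilde{\vec{Q}}\|_\text{F}<f(\delta')$; since $f$ is increasing and (from its explicit form) continuous with $f(\delta')\to 0$ as $\delta'\to 0$, the quantity $\|\Tilde{\vec{P}}-\Tilde{\vec{Q}}\|_\text{F}$ can be made smaller than any prescribed $\delta>0$ by choosing $\delta'$ small enough.

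Then I would apply \Cref{lemma:sufficiently_indicator} with $\Tilde{\vec{P}}$ as the "clean" matrix (it satisfies the argmax-characterization of $\vec{A}(G)$ by the adjacency-identifying property: $\Tilde{\vec{P}}_{ij}=\max_k\Tilde{\vec{P}}_{ik}$ iff $\vec{A}(G)_{ij}=1$) and $\vec{X}=\vec{A}(G)$, so that $\Tilde{\vec{X}}=\vec{D}^{-1}\vec{A}(G)$. The lemma gives, for the target $\epsilon>0$, a scaling $b>0$ and a tolerance $\delta>0$ so that $\|\Tilde{\vec{P}}-\Tilde{\vec{Q}}\|_\text{F}<\delta$ implies $\|\textsf{softmax}(b\cdot\Tilde{\vec{Q}})-\vec{D}^{-1}\vec{A}(G)\|_\text{F}<\epsilon$. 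Combining: pick $b$ and $\delta$ from \Cref{lemma:sufficiently_indicator}, then pick $\delta'$ (hence $\vec{P}$) small enough that $f(\delta')<\delta$; the resulting $\Tilde{\vec{Q}}$, built from $\vec{W}^Q,\vec{W}^K$, satisfies the claimed bound. Since $\epsilon>0$ was arbitrary, this proves the statement.

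The main obstacle I anticipate is bookkeeping the order of quantifiers cleanly: $b$ and $\delta$ must be fixed \emph{first} (from \Cref{lemma:sufficiently_indicator}, which takes $\epsilon$ as input), and only then is the approximating $\vec{P}$ chosen finely enough that $f(\delta')<\delta$; it would be a mistake to fix $\vec{P}$ first. A secondary subtlety is that the projection matrices $\vec{W}^Q,\vec{W}^K$ are those guaranteed by $\vec{P}$ being adjacency-identifying, and one must check $\Tilde{\vec{Q}}$ in the statement is formed with exactly these, which is consistent with how \Cref{lemma:adjacency_ident_approx} is set up. Everything else is a direct chaining of the three preceding lemmas, so no new calculation is needed.
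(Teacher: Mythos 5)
Your proposal chains the same three lemmas as the paper's proof — identifying $\vec{D}^{-1}\vec{A}(G)$ as the weighted indicator matrix of $\vec{A}(G)$, using the definition of sufficiently adjacency-identifying to obtain a nearby adjacency-identifying $\vec{P}$ with its $\vec{W}^Q,\vec{W}^K$, transferring the bound via \Cref{lemma:adjacency_ident_approx}, and then invoking \Cref{lemma:sufficiently_indicator}. This is exactly the paper's argument, and your note on fixing $b,\delta$ from \Cref{lemma:sufficiently_indicator} before choosing the approximation tolerance for $\vec{P}$ is in fact the correct reading of the quantifier order.
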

\begin{proof}
Note that $\vec{D}^{-1}\vec{A}(G)$ is a weighted indicator matrix, since $\vec{A}(G)$ has binary entries and left-multiplication with $\vec{D}^{-1}$ results in dividing every element in row $i$ of $\vec{A}(G)$ by the number of $1$'s in row $i$ of $\vec{A}(G)$, or formally,
\begin{equation*}
    \big(\vec{D}^{-1}\vec{A}(G)\big)_i = \begin{bmatrix}
       \frac{\vec{A}(G)_{i1}}{\sum_{j=1}^n \vec{A}(G)_{ij}} & \dots & \frac{\vec{A}(G)_{in}}{\sum_{j=1}^n \vec{A}(G)_{ij}}
    \end{bmatrix},
\end{equation*}
where we note that
\begin{equation*}
    \vec{D}_{ii} = \sum_{j=1}^n \vec{A}(G)_{ij}.
\end{equation*}
Further, since $\vec{Q}$ is sufficiently adjacency-identifying, there exists an adjacency-identifying matrix $\vec{P}$ and projection matrices $\vec{W}^Q$ and $\vec{W}^K$, such that for
\begin{equation*}
   \Tilde{\vec{P}} = \frac{1}{\sqrt{d_k}}(\vec{P} \vec{W}^Q)(\vec{P} \vec{W}^K)^T,
\end{equation*}
it holds that
\begin{equation*}
   \Tilde{\vec{P}}_{ij} = \max_k  \Tilde{\vec{P}}_{ik},
\end{equation*}
if and only if $\vec{A}(G) = 1$. Recall the definition of sufficiently adjacency-identifying, namely that we can choose projection matrices such that
\begin{equation*}
    ||\vec{P} - \vec{Q}||_\text{F} < \epsilon_0,
\end{equation*}
for all $\epsilon_0$.
Then, according to \Cref{lemma:adjacency_ident_approx}, for matrix
\begin{equation*}
   \Tilde{\vec{Q}} = \frac{1}{\sqrt{d_k}}(\vec{Q} \vec{W}^Q)(\vec{Q} \vec{W}^K)^T,
\end{equation*}
it holds that
\begin{equation*}
    \big|\big| \Tilde{\vec{P}} - \Tilde{\vec{Q}} \big|\big|_\text{F} < f(\epsilon_0),
\end{equation*}
where $f$ is a strictly monotonically increasing function of $\epsilon_0$.
We can then apply \Cref{lemma:sufficiently_indicator} to $\Tilde{\vec{P}}$, $\Tilde{\vec{Q}}$, $\vec{A}(G)$ as the binary matrix and $\vec{D}^{-1}\vec{A}(G)$ as its weighted indicator matrix and, for every $\epsilon$, choose $\epsilon_0$ small enough such that there exists a $b > 0$ with
\begin{equation*}
    \big|\big| \textsf{softmax}\Big( b \cdot \Tilde{\vec{Q}} \Big) - \vec{D}^{-1}\vec{A}(G) \big|\big|_\text{F} < \epsilon.
\end{equation*}
This concludes the proof.
\end{proof}

Finally, we generalize the above result to arbitrary $k$ via the generalized adjacency matrix.
\begin{lemma}[Approximating the generalized adjacency matrix]\label{lemma:approx_generalized_adjacency}
Let $G$ be graph with $n$ nodes and let $k > 1$. Let further, $\vec{Q} \in \mathbb{R}^{n \times \sfrac{d}{k}}$ be sufficiently node and adjacency-identifying structural embeddings. Let $\vec{v} \coloneqq (v_1, \dots, v_k) \in V(G)^k$ be the $i$-th and let $\vec{u} \coloneqq (u_1, \dots, u_k) \in V(G)^k$ be the $l$-th $k$-tuple in a fixed but arbitrary ordering over $V(G)^k$. 
Let $\Tilde{\vec{A}}^{(k,j,\gamma)}$ denote the weighted indicator matrix of the generalized adjacency matrix in \Cref{eq:generalized_adjacency_matrix}.
Let $\vec{Z} \in \mathbb{R}^{n^k \times n^k}$ be the unnormalized attention matrix such that
\begin{equation*}
   \vec{Z}_{il} \coloneqq \frac{1}{\sqrt{d_k}} \big[ \vec{Q}(v_j) \big]_{j=1}^k \vec{W}^{Q}\big(\big[ \vec{Q}(u_j) \big]_{j=1}^k\vec{W}^{K}\big)^T, 
\end{equation*}
with projection matrices $\vec{W}^Q, \vec{W}^K \in \mathbb{R}^{d \times d}$.
Then, for every $j \in [k]$ and every $\gamma \in \{-1, 1\}$, there exist projection matrices $\vec{W}^Q, \vec{W}^K$ such that for all $i \in [n^k]$
\begin{equation*}
    \Big\lVert \mathsf{softmax} \Big( \begin{bmatrix}
    \vec{Z}_{i1} & \dots & \vec{Z}_{in^k}
\end{bmatrix} \Big) -  \Tilde{\vec{A}}^{(k,j,\gamma)}_{i} \Big\rVert_F < \varepsilon,
\end{equation*}
for any $\varepsilon > 0$.
\end{lemma}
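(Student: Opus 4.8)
The plan is to mimic the proof of \Cref{lemma:approx_normalized_adjacency}: I would exhibit an ``exact'' un-normalized attention matrix $\vec{Z}^{\star}$, built from the exact node- and adjacency-identifying structural embedding $\vec{P}$ that $\vec{Q}$ approximates (so $\lVert \vec{P}-\vec{Q}\rVert_{\mathrm F}<\epsilon_0$ for any $\epsilon_0>0$), such that row $i$ of $\vec{Z}^{\star}$ attains its maximum \emph{exactly} on the support of row $i$ of the generalized adjacency matrix $\vec{A}^{(k,j,\gamma)}$ of \Cref{eq:generalized_adjacency_matrix}. Then \Cref{lemma:adjacency_ident_approx}, applied blockwise to the concatenated embeddings, bounds $\lVert \vec{Z}^{\star}-\vec{Z}\rVert_{\mathrm F}$ by a monotone function of $\epsilon_0$, and \Cref{lemma:sufficiently_indicator} — with its scaling $b$ absorbed into $\vec{W}^Q$ — converts the row-$i$ maximizing set into the weighted indicator $\tilde{\vec{A}}^{(k,j,\gamma)}_i$, which gives the claimed estimate once $\epsilon_0$ is chosen small enough. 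Note that the softmax output is automatically row-stochastic, and the target support is nonempty (for $\gamma=-1$ it contains $\vec{v}$ itself since $G$ has no self-loops; for $\gamma=1$ it is nonempty since $G$ has no isolated nodes, so $N(v_j)\neq\emptyset$), so no de-normalization is needed here.

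The construction of the projection matrices is the heart of the argument. I would write $\vec{W}^Q,\vec{W}^K\in\mathbb{R}^{d\times d}$ in $k\times k$ block form with $(d/k)\times(d/k)$ blocks and take them block-diagonal, so that with $\vec{v}=\vec{u}_i$, $\vec{u}=\vec{u}_l$ one gets $\vec{Z}^{\star}_{il}=\tfrac{1}{\sqrt{d_k}}\sum_{m=1}^{k}\vec{P}(v_m)\vec{M}_m\vec{P}(u_m)^{T}$, where $\vec{M}_m$ is the product of the $m$-th diagonal blocks. For $m\neq j$ I set $\vec{M}_m=s\,\vec{M}_{\mathrm n}$ with $\vec{M}_{\mathrm n}$ the node-identifying bilinear form from \Cref{def:node_identifying} (row-$a$ argmax $=\{a\}$) and $s>0$ a scale fixed below; for $m=j$ I set $\vec{M}_j=\vec{M}_{\mathrm a}$ when $\gamma=1$, where $\vec{M}_{\mathrm a}$ is the adjacency-identifying form from \Cref{adjacency_identifying} (row-$a$ argmax $=N(a)$), and $\vec{M}_j$ a ``complement'' form with row-$a$ argmax $[n]\setminus N(a)$ when $\gamma=-1$.

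To see that the argmax comes out right: since there are finitely many $k$-tuples, the node-identifying terms have a uniform positive gap $\Delta:=\min_a\big(\vec{P}(a)\vec{M}_{\mathrm n}\vec{P}(a)^{T}-\max_{b\neq a}\vec{P}(a)\vec{M}_{\mathrm n}\vec{P}(b)^{T}\big)$, and the block-$j$ term is bounded in absolute value by some $B$. Choosing $s$ with $s\Delta>2B$ forces the row-$i$ maximum of $\vec{Z}^{\star}$ to be attained only at tuples $\vec{u}$ that agree with $\vec{v}$ in every coordinate $\neq j$; for those, $\vec{Z}^{\star}_{il}$ equals a constant plus $\tfrac{1}{\sqrt{d_k}}\vec{P}(v_j)\vec{M}_j\vec{P}(u_j)^{T}$, so the maximizing set is exactly $\{\phi_j(\vec{v},w): w$ passes the $j$-th coordinate test$\}$, i.e. the support of $\vec{A}^{(k,j,\gamma)}_i$. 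The approximation step and the invocation of \Cref{lemma:sufficiently_indicator} then go through essentially verbatim from \Cref{lemma:approx_normalized_adjacency}, the only extra ingredient being that the queries/keys in the lemma's $\vec{Z}$ are concatenations of $\vec{Q}(v_m)$, and concatenation is norm-compatible with the blockwise bounds.

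The main obstacle is the $\gamma=-1$ case: I need, using the \emph{same} embedding $\vec{P}$, a bilinear form $\vec{M}_j$ whose row-$a$ argmax is $[n]\setminus N(a)$ rather than $N(a)$. Simply negating $\vec{M}_{\mathrm a}$ does not work, since that relocates the row \emph{minimum}, not the maximum, and a generic adjacency-identifying form need not be constant on non-neighbours. I expect the fix to be that node-identifiability already forces $\vec{P}$ to carry enough column rank that $\vec{P}(a)\vec{M}_j\vec{P}(b)^{T}$ can match any prescribed matrix on the node pairs — in particular $L(\vec{J}-\vec{I}-\vec{A}(G))$ for large $L$, whose row-$a$ argmax is precisely $[n]\setminus N(a)$ — after which one checks this choice is still compatible with the $\epsilon_0$-perturbation bound. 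Making that rank statement precise (and its counterpart for the approximation $\vec{Q}$) is the step that will require the most care.
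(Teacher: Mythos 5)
Your approach is essentially the paper's: split $\vec{W}^Q,\vec{W}^K$ into per-coordinate blocks so that $\vec{Z}^{\star}_{il}$ becomes a sum of $k$ per-coordinate bilinear forms, put a large scale $b$ on the $o\neq j$ node-identifying blocks so that the row-$i$ maximizers are forced to agree with $\vec{u}_i$ outside coordinate $j$, and then close via \Cref{lemma:sufficiently_indicator} and \Cref{lemma:adjacency_ident_approx}. Your explicit block-diagonal choice of $\vec{W}^Q,\vec{W}^K$ is in fact cleaner than the paper's, which writes $\vec{W}^Q$ as $k$ stacked sub-matrices and then silently drops the $j\neq j'$ cross-terms when expanding $\vec{Z}^{\star}_{il}$ into a single sum over $j$; block-diagonality makes that cancellation honest.

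Your reservation about $\gamma=-1$ is well-founded, and the paper's own proof does not resolve it. The paper simply sets $\vec{W}^{Q,j}_A=-\vec{W}^{Q,*}_A$, asserts that ``$-q_{ilj}$ is maximal if and only if nodes $\vec{u}_{ij}$ and $\vec{u}_{lj}$ do not share an edge in $G$,'' and additionally uses $q_{iij}=0$. Both of these hold only if the adjacency-identifying bilinear form happens to equal $\vec{A}(G)$ (two-valued with zero diagonal); but \Cref{adjacency_identifying} constrains only the argmax set, and the forms the paper actually constructs in \Cref{lemma:laplacian_factorization_node_and_adj} and \Cref{lemma:UE_adjacency_identifying} realize $-\vec{L}=\vec{A}(G)-\vec{D}$, whose diagonal is $-d_a\neq 0$ and whose \emph{negation} has row argmax on the diagonal, not on $[n]\setminus N(a)$. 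So the gap you flagged is a genuine flaw shared by the paper's argument. Closing it requires realizing a bilinear form over $\vec{P}$ with row argmax equal to $[n]\setminus N(a)$ — e.g.\ realizing $\vec{A}(G)$ itself so that its negation works — and, as you anticipate, that needs a rank property that the bare definitions of node- and adjacency-identifying do not supply. The fix is either to prove the rank property for the concrete LPE/SPE embeddings (in \Cref{theorem:node_and_adj_ident_full}, $\vec{U}\vec{M}$ is orthonormal and hence full rank, so any $n\times n$ target including $\vec{A}(G)$ is exactly realizable) or to strengthen the hypothesis of the lemma.
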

\begin{proof}
Our proof strategy is to first construct the unnormalized attention matrix from a node- and adjacency-identifying matrix and then to invoke \Cref{lemma:sufficiently_indicator} to relax the approximation to the \textit{sufficiently} node- and adjacency-identifying $\vec{Q}$. 

First, by definition of sufficiently node and adjacency-identifying matrices, the existence of $\vec{Q}$ implies the existence of a node and adjacency-identifying matrix $\vec{P}$. We will now give projection matrices $\vec{W}^Q$ and $\vec{W}^K$ such that
\begin{equation}\label{eq:approx_generalized_adjacency_exactly_identifying_condition}
    \Big\lVert \mathsf{softmax} \Big( \begin{bmatrix}
    \vec{Z}^*_{i1} & \dots & \vec{Z}^*_{in^k}
\end{bmatrix} \Big) -  \Tilde{\vec{A}}^{(k,j,\gamma)}_{i} \Big\rVert_F < \varepsilon.
\end{equation}
for any $\varepsilon > 0$,
with 
\begin{equation*}
   \vec{Z}^*_{il} \coloneqq \frac{1}{\sqrt{d_k}} \big[ \vec{P}(v_j) \big]_{j=1}^k \vec{W}^{Q}\big(\big[ \vec{P}(u_j) \big]_{j=1}^k\vec{W}^{K}\big)^T, 
\end{equation*}
where we recall that $\vec{v} = (v_1, \dots, v_k) \in V(G)^k$ is the $i$-th and $\vec{u} = (u_1, \dots, u_k) \in V(G)^k$ is the $l$-th $k$-tuple in a fixed but arbitrary ordering over $V(G)^k$.
To show \Cref{eq:approx_generalized_adjacency_exactly_identifying_condition}, 
we expand sub-matrices in $\vec{W}^Q$ and $\vec{W}^K$ for each $j$, i.e., we write
\begin{equation*}
   \vec{W}^Q = \begin{bmatrix}
       \vec{W}^{Q,1} \\
       \vdots \\
       \vec{W}^{Q,k} \\
   \end{bmatrix} 
\end{equation*}
and
\begin{equation*}
   \vec{W}^K = \begin{bmatrix}
       \vec{W}^{K,1} \\
       \vdots \\
       \vec{W}^{K,k} \\
   \end{bmatrix},
\end{equation*}
where for all $j \in [k]$, $\vec{Q}(v_j)$ is projected by $\vec{W}^{Q,j}$ and $\vec{Q}(u_j)$ is projected by $\vec{W}^{K,j}$.
We further expand the sub-matrices of each $\vec{W}^{Q,j}$ and $\vec{W}^{K,j}$, writing
\begin{equation*}
   \vec{W}^{Q,j} = \begin{bmatrix}
       \vec{W}^{Q,j}_N \\
       \vec{W}^{Q,j}_A
   \end{bmatrix} 
\end{equation*}
and
\begin{equation*}
   \vec{W}^{K,j} = \begin{bmatrix}
       \vec{W}^{K,j}_N \\
       \vec{W}^{K,j}_A
   \end{bmatrix}.
\end{equation*}

Since $\vec{P}$ is node-identifying, there exists projection matrices $\vec{W}^{Q,*}_N$ and $\vec{W}^{K,*}_N$ such that
\begin{equation*}
    p_{ilj} \coloneqq \frac{1}{\sqrt{d_k}} \vec{P}^*\vec{W}^{Q,*}_N(\vec{P}^*\vec{W}^{K,*}_N)^T
\end{equation*}
is maximal if and only if $\vec{u}_{i,j} = \vec{u}_{l,j}$ is the same node.
Further, if $\vec{P}$ is adjacency-identifying, there exists projection matrices $\vec{W}^{Q,*}_A$ and $\vec{W}^{K,*}_A$ such that
\begin{equation*}
    q_{ilj} \coloneqq \frac{1}{\sqrt{d_k}} \vec{P}^*\vec{W}^{Q,*}_A(\vec{P}^*\vec{W}^{K,*}_A)^T,
\end{equation*}
is maximal if and only if nodes $\vec{u}_{i,j}$ and $\vec{u}_{l,j}$ share an edge in $G$. Consequently, we also know that then $-q_{ilj}$ is maximal if and only if nodes $\vec{u}_{i,j}$ and $\vec{u}_{l,j}$ do not share an edge in $G$. Note that because we assume that $G$ has no self-loops, $q_{iij} = 0$ for all $i \in [n]$ and all $j \in [k]$.
Now, let us write out the unnormalized attention score $\vec{Z}^*_{il}$ as
\begin{align*}
  \vec{Z}^*_{il} &= \frac{1}{\sqrt{d_k}} \big[ \vec{P}(v_j) \big]_{j=1}^k \vec{W}^{Q}\big(\big[ \vec{P}(u_j) \big]_{j=1}^k\vec{W}^{K}\big)^T \\
  &= \frac{1}{\sqrt{d_k}} \sum_{j=1}^k \vec{P}(v_j) \vec{W}^{Q,j} \big(\vec{P}(u_j)\vec{W}^{K,j} \big)^T \\
  &= \frac{1}{\sqrt{d_k}} \sum_{j=1}^k \vec{P}(v_j) \vec{W}^{Q,j}_N \big(\vec{P}(u_j)\vec{W}^{K,j}_N \big)^T + \frac{1}{\sqrt{d_k}}\sum_{j=1}^k \vec{P}(v_j) \vec{W}^{Q,j}_A \big(\vec{P}(u_j)\vec{W}^{K,j}_A \big)^T,
\end{align*}
where in the second line we simply write out the dot-product and in the last line we split the sum to distinguish node- and adjacency-identifying terms.

Now for all $\gamma$ and a fixed $j$, we set $\vec{W}^{Q,j}_N = \vec{0}$ and $\vec{W}^{K,j}_N = \vec{0}$ and $\vec{W}^{Q,o}_N = b \cdot \vec{W}^{Q, *}_N$ and $\vec{W}^{K,o}_N = \vec{W}^{K, *}_N$ for $o \neq j$, for some $b > 0$ that we need to be arbitary to use it later in \Cref{lemma:sufficiently_indicator}.

We now distinguish between the different choices of $\gamma$. In the first case, let $\gamma > 0$.
Then, we set $\vec{W}^{Q,j}_A = \vec{W}^{Q, *}_A$ and $\vec{W}^{K,j}_A = \vec{W}^{K, *}_A$.  
In the third case, $\gamma < 0$. Then, we set $\vec{W}^{Q,j}_A = -\vec{W}^{Q, *}_A$ and $\vec{W}^{K,j}_A = \vec{W}^{K, *}_A$.

Then, in all cases, for tuples $\vec{u}_i$ and $\vec{u}_l$,
\begin{equation}\label{eq:kwl_proof_emb_j_nb}
   \vec{Z}^*_{il} = \gamma \cdot q_{ilj} + \sum_{o \neq j} b \cdot p_{ilo}.
\end{equation}
Note that $b$ is the same for all pairs $\vec{u}_i, \vec{u}_l$ and the dot-product is positive.
We again distinguish between two cases. 
In the first case, let $\gamma > 0$. Then, the above sum attains its maximum value if and only if 
\begin{equation*}
    \forall o \neq j \colon \vec{u}_{i,o} = \vec{u}_{l,o} \wedge A_{ilj} = 1.
\end{equation*}
Now, since $\vec{u}_{i,o}$ and $\vec{u}_{l,o}$ denote nodes at the $o$-th component of $\vec{u}_i$ and $\vec{u}_l$, respectively, the above statement is equivalent to saying that $\vec{u}_l$ is a $j$-neighbor of $\vec{u}_i$ and that $\vec{u}_l$ is adjacent to $\vec{u}_i$. 
In the second case, let $\gamma < 0$. Then, the above sum attains its maximum value if and only if 
\begin{equation*}
    \forall o \neq j \colon \vec{u}_{i,o} = \vec{u}_{l,o} \wedge A_{ilj} = 0.
\end{equation*}
Again, since $\vec{u}_{i,o}$ and $\vec{u}_{l,o}$ denote nodes at the $o$-th component of $\vec{u}_i$ and $\vec{u}_l$, respectively, the above statement is equivalent to saying that $\vec{u}_l$ is a $j$-neighbor of $\vec{u}_i$ and that $\vec{u}_l$ is \textit{not} adjacent to $\vec{u}_i$. 

Now, recall the generalized adjacency matrix in \Cref{eq:generalized_adjacency_matrix} as
\begin{equation*}
    \vec{A}^{(k, j, \gamma)}_{il} \coloneqq \begin{cases}
    \begin{cases}
        1 & \exists w \in V(G) \colon \vec{u}_l = \phi_j(\vec{u}_i, w) \wedge \text{adj}(\vec{u}_{ij}, w) \\
        0 & \text{ else}
    \end{cases} & \gamma = 1 \\ \\
    \begin{cases}
        1 & \exists w \in V(G) \colon  \vec{u}_l = \phi_j(\vec{u}_i, w) \wedge \neg \text{adj}(\vec{u}_{ij}, w) \\
        0 & \text{ else}.
    \end{cases} & \gamma = -1
    \end{cases}
\end{equation*}
Then, we can say that for our construction of $\vec{Z}^*$, for all $i, l \in [n^k]$
\begin{equation*}
    \vec{Z}^*_{il} = \max_{k} \vec{Z}^*_{ik}
\end{equation*}
if and only if $\vec{A}^{k,j,\gamma}_{il} = 1$.
Consequently, we can apply \Cref{lemma:sufficiently_indicator} to $\vec{Z}^*$, $\vec{Z}$, $\vec{A}^{k,j,\gamma}$ as the binary matrix and $\Tilde{\vec{A}}^{k,j,\gamma}$ as its weighted indicator matrix and obtain that for each $\epsilon > 0$ there exists a $b > 0$ such that
\begin{equation*}
    \Big\lVert \mathsf{softmax} \Big( \begin{bmatrix}
    \vec{Z}_{i1} & \dots & \vec{Z}_{in^k}
\end{bmatrix} \Big) -  \Tilde{\vec{A}}^{k,j,\gamma}_i \Big\rVert_F < \varepsilon.
\end{equation*}
This completes the proof.
\end{proof}

\subsection{adjacency-identifying via graph Laplacian}
Here, we show a few results for how factorizations of the (normalized) graph Laplacian can be adjacency-identifying.

\begin{lemma}\label{lemma:laplacian_factorization_node_and_adj}
Let $G$ be a graph with graph Laplacian $\vec{L}$. Then, a matrix $\vec{P}$ is node- and adjacency-identifying if there exists matrices $\vec{W}^Q$ and $\vec{W}^K$ such that
\begin{equation*}
   \frac{1}{\sqrt{d_k}}(\vec{P} \vec{W}^Q)(\vec{P} \vec{W}^K)^T = \vec{L}.
\end{equation*}
\end{lemma}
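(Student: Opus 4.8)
The plan is to prove the two properties separately, each by producing an explicit pair of query/key matrices built from the hypothesized factorization; note that the definitions of \emph{node-identifying} (\Cref{def:node_identifying}) and \emph{adjacency-identifying} (\Cref{adjacency_identifying}) carry \emph{independent} existential quantifiers over $\vec{W}^Q,\vec{W}^K$, so the two cases may use different matrices. Throughout I would use that $\vec{L} = \vec{D} - \vec{A}(G)$, hence $\vec{L}_{ii} = \deg(i)$ and $\vec{L}_{ij} = -\vec{A}(G)_{ij} \in \{0,-1\}$ for $i \neq j$, and that $\deg(i) \geq 1$ for every node $i$ because $G$ has no isolated nodes.

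For node-identifying I would simply reuse the matrices $\vec{W}^Q,\vec{W}^K$ supplied by the hypothesis, so that $\Tilde{\vec{P}} = \vec{L}$. In row $i$ the diagonal entry $\vec{L}_{ii} = \deg(i) \geq 1$ is strictly larger than every off-diagonal entry (which is $0$ or $-1$); therefore the row maximum is attained only at column $i$, i.e.\ $\Tilde{\vec{P}}_{ij} = \max_k \Tilde{\vec{P}}_{ik}$ iff $i = j$, which is exactly \Cref{def:node_identifying}. For adjacency-identifying I would instead take $\vec{W}^Q{}' \coloneqq -\vec{W}^Q$ and $\vec{W}^K{}' \coloneqq \vec{W}^K$, which are again real $d \times d$ matrices and yield score matrix $\Tilde{\vec{P}}' = -\vec{L}$. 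In row $i$ of $-\vec{L}$ the diagonal entry is $-\deg(i) < 0$, while the off-diagonal entry at column $j$ equals $\vec{A}(G)_{ij} \in \{0,1\}$; since node $i$ has at least one neighbor, the row maximum equals $1$ and is attained precisely at the columns $j$ with $\vec{A}(G)_{ij}=1$ (between one and $n-1$ of them for connected graphs). Hence $\Tilde{\vec{P}}'_{ij} = \max_k \Tilde{\vec{P}}'_{ik}$ iff $\vec{A}(G)_{ij} = 1$, which is \Cref{adjacency_identifying}.

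I do not anticipate a serious obstacle: the argument is essentially a sign-bookkeeping exercise, the scalar $1/\sqrt{d_k}$ being irrelevant since multiplying a matrix by a positive constant preserves the arg-max of each row. The two places that genuinely use structure are (i) the no-isolated-nodes assumption, invoked in both directions to guarantee $\deg(i)\geq 1$ and the existence of a neighbor of $i$, and (ii) the fact that the off-diagonal entries of $\vec{L}$ are constant ($-1$) across all neighbors of a node; the latter is what makes the single row-max simultaneously flag \emph{all} neighbors. This last point is also where a modification would be needed for the normalized Laplacian, whose off-diagonal magnitudes $1/\sqrt{\deg(i)\deg(j)}$ differ between neighbors, but the lemma as stated concerns the unnormalized $\vec{L}$, so this complication does not arise here.
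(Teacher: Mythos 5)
Your proposal is correct and takes essentially the same route as the paper: reuse the hypothesized $\vec{W}^Q,\vec{W}^K$ to make $\Tilde{\vec{P}}=\vec{L}$ (whose positive diagonal and nonpositive off-diagonal give node-identifying), and negate $\vec{W}^Q$ to make $\Tilde{\vec{P}}=-\vec{L}=\vec{A}(G)-\vec{D}$ (whose nonpositive diagonal and $\{0,1\}$ off-diagonal give adjacency-identifying). Your write-up is, if anything, slightly more explicit than the paper's about where the no-isolated-nodes hypothesis is used and about the irrelevance of the $1/\sqrt{d_k}$ factor.
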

\begin{proof}
Note that the graph Laplacian is node-identifying, since all off-diagonal elements are $\leq 0$ and the diagonal is always $> 0$ since we consider graphs $G$ without self-loops and without isolated nodes.
Note further, that if there exists matrices $\vec{W}^Q$ and $\vec{W}^K$ such that
\begin{equation*}
   \frac{1}{\sqrt{d_k}}(\vec{P} \vec{W}^Q)(\vec{P} \vec{W}^K)^T = \vec{L},
\end{equation*}
then there also exist matrix $\vec{W}_*^Q = -\vec{W}^Q$ such that
\begin{equation*}
   \frac{1}{\sqrt{d_k}}(\vec{P} \vec{W}_*^Q)(\vec{P} \vec{W}^K)^T = -\vec{L}.
\end{equation*}
Now, note that the negative graph Laplacian is $-\vec{L} = \vec{A}(G) - \vec{D}$. Because we subtract the degree matrix from the adjacency matrix, the maximum element of each row of the negative graph Laplacian is $1$.
Since $\vec{D}$ is diagonal, for each row $i$ and each column $j$,
\begin{equation*}
    -\vec{L}_{ij} = 1 \Longleftrightarrow \vec{A}(G)_{ij} = 1,
\end{equation*}
for $i \neq j$. Further, in the case where $i = j$,
\begin{equation*}
    -\vec{L}_{ij}  \leq 0,
\end{equation*}
since we consider graphs $G$ without self-loops. Hence, we obtain that
\begin{equation*}
    -\vec{L}_{ij}  = \max_k (-\vec{L}_{ik}) 
\end{equation*}
if and only if $\vec{A}(G)_{ij} = 1$. This shows the statement.
\end{proof}

\begin{lemma}\label{lemma:laplacian_factorization}
Let $G$ be a graph with graph Laplacian $\vec{L}$. Then, a matrix $\vec{P}$ is node and adjacency-identifying if
\begin{equation*}
    \vec{L} = \vec{P}\vec{P}^T.
\end{equation*}
\end{lemma}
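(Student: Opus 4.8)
The plan is to reduce this statement directly to \Cref{lemma:laplacian_factorization_node_and_adj}, which already characterizes node- and adjacency-identifying matrices in terms of a scaled factorization of the graph Laplacian. Recall that \Cref{lemma:laplacian_factorization_node_and_adj} asserts that $\vec{P}$ is node- and adjacency-identifying whenever there exist $\vec{W}^Q, \vec{W}^K \in \mathbb{R}^{d \times d}$ with $\tfrac{1}{\sqrt{d_k}}(\vec{P}\vec{W}^Q)(\vec{P}\vec{W}^K)^T = \vec{L}$. Hence it suffices, under the hypothesis $\vec{L} = \vec{P}\vec{P}^T$, to exhibit a single such pair of projection matrices.

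First I would set $\vec{W}^Q \coloneqq \sqrt{d_k}\,\vec{I}$ and $\vec{W}^K \coloneqq \vec{I}$, where $\vec{I}$ denotes the $d \times d$ identity matrix. Since $\vec{P} \in \mathbb{R}^{n \times d}$, both $\vec{P}\vec{W}^Q$ and $\vec{P}\vec{W}^K$ lie in $\mathbb{R}^{n \times d}$, so $(\vec{P}\vec{W}^Q)(\vec{P}\vec{W}^K)^T \in \mathbb{R}^{n \times n}$, matching the shape of $\vec{L}$. Then a one-line computation gives $\tfrac{1}{\sqrt{d_k}}(\vec{P}\vec{W}^Q)(\vec{P}\vec{W}^K)^T = \tfrac{1}{\sqrt{d_k}}(\sqrt{d_k}\,\vec{P})\vec{P}^T = \vec{P}\vec{P}^T = \vec{L}$, where the last equality is exactly the hypothesis of the lemma.

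With this factorization in hand, I would invoke \Cref{lemma:laplacian_factorization_node_and_adj} to conclude that $\vec{P}$ is node- and adjacency-identifying, which is precisely the claim. I do not anticipate any substantive obstacle: the only care needed is bookkeeping — absorbing the fixed softmax scaling constant $1/\sqrt{d_k}$ into $\vec{W}^Q$ and ensuring the identity matrices have size $d \times d$, so that the factorization required by \Cref{lemma:laplacian_factorization_node_and_adj} applies verbatim. (If one instead works with rectangular head matrices $\vec{W}^Q,\vec{W}^K \in \mathbb{R}^{d \times d_k}$, the same argument goes through after replacing $\vec{I}$ by an appropriately scaled matrix whose product recovers $\vec{P}\vec{P}^T$, but the square case used in the definitions here already suffices.)
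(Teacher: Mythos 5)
Your proof is correct and takes essentially the same route as the paper: both set $\vec{W}^Q = \sqrt{d_k}\,\vec{I}$ and $\vec{W}^K = \vec{I}$, verify $\tfrac{1}{\sqrt{d_k}}(\vec{P}\vec{W}^Q)(\vec{P}\vec{W}^K)^T = \vec{P}\vec{P}^T = \vec{L}$, and invoke \Cref{lemma:laplacian_factorization_node_and_adj}.
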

\begin{proof}
Our goal is to show that there exist matrices $\vec{W}^Q$ and $\vec{W}^K$, such that
\begin{equation*}
   \frac{1}{\sqrt{d_k}}(\vec{P} \vec{W}^Q)(\vec{P} \vec{W}^K)^T = \vec{L},
\end{equation*}
and then to invoke \Cref{lemma:laplacian_factorization_node_and_adj}.
We set
\begin{align*}
    \vec{W}^Q &= \sqrt{d_k}\vec{I} \\
    \vec{W}^K &= \vec{I},
\end{align*}
where $\vec{I}$ is the identity matrix. Then,
\begin{equation*}
    \frac{1}{\sqrt{d_k}}(\vec{P} \vec{W}^Q)(\vec{P} \vec{W}^K)^T = \vec{P}\vec{P}^T = \vec{L}.
\end{equation*}
This shows the statement.
\end{proof}

\begin{lemma}\label{lemma:laplacian_factorization_pq}
Let $G$ be a graph with graph Laplacian $\vec{L}$. Then, a matrix $\begin{bmatrix}
    \vec{P} & \vec{Q}
\end{bmatrix}$ is node- and adjacency-identifying if
\begin{equation*}
    \vec{L} = \vec{P}\vec{Q}^T.
\end{equation*} 
\end{lemma}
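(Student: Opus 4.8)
The plan is to mirror the proof of \Cref{lemma:laplacian_factorization}: exhibit query and key projection matrices $\vec{W}^Q, \vec{W}^K$ for which
\begin{equation*}
   \frac{1}{\sqrt{d_k}}\Big(\begin{bmatrix}\vec{P} & \vec{Q}\end{bmatrix} \vec{W}^Q\Big)\Big(\begin{bmatrix}\vec{P} & \vec{Q}\end{bmatrix} \vec{W}^K\Big)^T = \vec{L},
\end{equation*}
and then invoke \Cref{lemma:laplacian_factorization_node_and_adj}. The latter already supplies all the combinatorial content: the graph Laplacian is node-identifying since its diagonal entries are strictly positive (no isolated nodes) while its off-diagonal entries are non-positive, and $-\vec{L} = \vec{A}(G) - \vec{D}$ attains each row maximum precisely at the edge positions (no self-loops), so it is adjacency-identifying after flipping the sign of the query projection.

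Write $\vec{P}, \vec{Q} \in \mathbb{R}^{n \times r}$ (the hypothesis $\vec{L} = \vec{P}\vec{Q}^T$ forces the two column counts to agree), so $\begin{bmatrix}\vec{P} & \vec{Q}\end{bmatrix} \in \mathbb{R}^{n \times 2r}$. The key observation is that right-multiplication by the block selector $\begin{bmatrix}\vec{I}_r \\ \vec{0}\end{bmatrix}$ extracts $\vec{P}$ and right-multiplication by $\begin{bmatrix}\vec{0} \\ \vec{I}_r\end{bmatrix}$ extracts $\vec{Q}$ from the concatenation. Accordingly I would set
\begin{equation*}
    \vec{W}^Q \coloneqq \sqrt{d_k}\begin{bmatrix}\vec{I}_r \\ \vec{0}\end{bmatrix}, \qquad \vec{W}^K \coloneqq \begin{bmatrix}\vec{0} \\ \vec{I}_r\end{bmatrix},
\end{equation*}
so that $\begin{bmatrix}\vec{P} & \vec{Q}\end{bmatrix}\vec{W}^Q = \sqrt{d_k}\,\vec{P}$ and $\begin{bmatrix}\vec{P} & \vec{Q}\end{bmatrix}\vec{W}^K = \vec{Q}$. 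Then the scaled product collapses to $\tfrac{1}{\sqrt{d_k}}(\sqrt{d_k}\,\vec{P})\vec{Q}^T = \vec{P}\vec{Q}^T = \vec{L}$ by hypothesis, which is exactly the premise of \Cref{lemma:laplacian_factorization_node_and_adj} applied to the input matrix $\begin{bmatrix}\vec{P} & \vec{Q}\end{bmatrix}$. That lemma then concludes that $\begin{bmatrix}\vec{P} & \vec{Q}\end{bmatrix}$ is node- and adjacency-identifying, which is the claim.

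I expect no genuine obstacle here beyond bookkeeping of the block dimensions; the one point worth double-checking is that \Cref{def:node_identifying} and \Cref{adjacency_identifying} merely require the \emph{existence} of projections $\vec{W}^Q, \vec{W}^K$ producing the prescribed row-maximum pattern in $\Tilde{\vec{P}}$, so rectangular block selectors are admissible rather than only square matrices. One could equally let $\vec{W}^Q$ select $\sqrt{d_k}\,\vec{Q}$ and $\vec{W}^K$ select $\vec{P}$, since $\vec{Q}\vec{P}^T = (\vec{P}\vec{Q}^T)^T = \vec{L}^T = \vec{L}$ by symmetry of the Laplacian; thus it does not matter which factor plays the query role.
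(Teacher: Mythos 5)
Your proof is correct and follows the same route as the paper's: choose block-selector projections so that $\begin{bmatrix}\vec{P} & \vec{Q}\end{bmatrix}\vec{W}^Q(\begin{bmatrix}\vec{P} & \vec{Q}\end{bmatrix}\vec{W}^K)^T/\sqrt{d_k}$ collapses to $\vec{P}\vec{Q}^T = \vec{L}$, then invoke \Cref{lemma:laplacian_factorization_node_and_adj}. In fact your bookkeeping is slightly cleaner than the paper's, which writes the selectors as column- rather than row-concatenations and carries a spurious duplicate $\sqrt{d_k}$ factor; you also rightly note that if the definitions are read as requiring square $\vec{W}^Q,\vec{W}^K$, one can simply zero-pad the rectangular selectors without affecting the product.
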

\begin{proof}
Our goal is to show that there exist matrices $\vec{W}^Q$ and $\vec{W}^K$, such that
\begin{equation*}
   \frac{1}{\sqrt{d_k}}(\vec{P} \vec{W}^Q)(\vec{P} \vec{W}^K)^T = \vec{L},
\end{equation*}
and then to invoke \Cref{lemma:laplacian_factorization_node_and_adj}.
We set
\begin{align*}
    \vec{W}^Q &= \begin{bmatrix}
        \sqrt{d_k}\vec{I} & \vec{0}
    \end{bmatrix} \\
    \vec{W}^K &= \begin{bmatrix}
        \vec{0} & \sqrt{d_k}\vec{I}
    \end{bmatrix},
\end{align*}
where $\vec{I}$ is the identity matrix and $\vec{0}$ is the all-zero matrix. Then,
\begin{equation*}
    \frac{1}{\sqrt{d_k}} \big(\begin{bmatrix}
        \vec{P} & \vec{Q}
\end{bmatrix}\vec{W}^Q\big)\big(\begin{bmatrix}
        \vec{P} & \vec{Q}
    \end{bmatrix}\vec{W}^K\big)^T
    = 
    \mathbf{P}\mathbf{Q}^T = \vec{L}.
\end{equation*}
This shows the statement.
\end{proof}

For the next two lemmas, we first briefly define permutation matrices as binary matrices $\vec{M}$ that are right-stochastic, i.e., its rows sum to $1$, and such that each column of $\vec{M}$ is $1$ at exactly one position and $0$ elsewhere. It is well know that for permutation matrices $\vec{M}$ it holds that $\vec{M}^T\vec{M} = \vec{M}\vec{M}^T = \vec{I}$,
where $\vec{I}$ is the identity matrix. We now state the following lemmas.

\begin{lemma}\label{lemma:UE_adjacency_identifying}
Let $G$ be a graph with graph Laplacian $\vec{L}$. Let $\vec{L} = \vec{U}\vec{\Sigma}\vec{U}^T$ be the eigendecomposition of $\vec{L}$. Then, for any permutation matrix $\vec{M}$, the matrix $\vec{U}\vec{\Sigma}^{\frac{1}{2}}\vec{M}$ is adjacency-identifying.
\end{lemma}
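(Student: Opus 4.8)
The plan is to reduce this statement to \Cref{lemma:laplacian_factorization}, which already says that any matrix $\vec{P}$ satisfying $\vec{L} = \vec{P}\vec{P}^T$ is node- and adjacency-identifying. So it suffices to verify that the candidate matrix $\vec{P} \coloneqq \vec{U}\vec{\Sigma}^{\frac{1}{2}}\vec{M}$ factorizes the Laplacian in this way.

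First I would note that $\vec{\Sigma}^{\frac{1}{2}}$ is well-defined: since $\vec{L}$ is real, symmetric, and positive semidefinite (as is the normalized Laplacian), all eigenvalues in $\vec{\Sigma}$ are real and non-negative, so the entrywise square root of the diagonal matrix $\vec{\Sigma}$ exists and is again a real diagonal matrix; in particular $(\vec{\Sigma}^{\frac{1}{2}})^T = \vec{\Sigma}^{\frac{1}{2}}$ and $\vec{\Sigma}^{\frac{1}{2}}\vec{\Sigma}^{\frac{1}{2}} = \vec{\Sigma}$. Next I would recall the defining property of a permutation matrix stated just before the lemma, namely $\vec{M}\vec{M}^T = \vec{I}$.

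Then the computation is immediate:
\begin{equation*}
    \vec{P}\vec{P}^T = \vec{U}\vec{\Sigma}^{\frac{1}{2}}\vec{M}\big(\vec{U}\vec{\Sigma}^{\frac{1}{2}}\vec{M}\big)^T = \vec{U}\vec{\Sigma}^{\frac{1}{2}}\vec{M}\vec{M}^T\vec{\Sigma}^{\frac{1}{2}}\vec{U}^T = \vec{U}\vec{\Sigma}^{\frac{1}{2}}\vec{\Sigma}^{\frac{1}{2}}\vec{U}^T = \vec{U}\vec{\Sigma}\vec{U}^T = \vec{L}.
\end{equation*}
Invoking \Cref{lemma:laplacian_factorization} with this $\vec{P}$ then yields that $\vec{U}\vec{\Sigma}^{\frac{1}{2}}\vec{M}$ is node- and adjacency-identifying, which in particular gives the claimed adjacency-identifying property.

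I do not anticipate a real obstacle here; the only points requiring a word of justification are the existence of $\vec{\Sigma}^{\frac{1}{2}}$ (positive semidefiniteness of the Laplacian) and the cancellation $\vec{M}\vec{M}^T = \vec{I}$, both of which are standard. The one thing to double-check is a dimension/shape consistency issue: if one restricts to the $l$ smallest eigenvalues rather than the full spectrum, $\vec{M}$ must be taken to be an $l \times l$ permutation matrix so that $\vec{M}\vec{M}^T = \vec{I}_l$ still holds and the factorization identity goes through unchanged; the argument is otherwise identical.
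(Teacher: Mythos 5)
Your proof is correct and follows essentially the same route as the paper: verify $\vec{P}\vec{P}^T = \vec{L}$ for $\vec{P} = \vec{U}\vec{\Sigma}^{\frac{1}{2}}\vec{M}$ using $\vec{M}\vec{M}^T = \vec{I}$, then invoke \Cref{lemma:laplacian_factorization}. One small correction to your closing aside: if you truncate to $l < n$ eigenvalues, $\vec{U}_{:l}\vec{\Sigma}_{:l}\vec{U}_{:l}^T$ is only a rank-$l$ approximation of $\vec{L}$, so the factorization identity does \emph{not} go through unchanged; the lemma as stated uses the full eigendecomposition, which is what makes the argument work.
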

\begin{proof}
We have that
\begin{align*}
    \vec{U}\vec{\Sigma}^{\frac{1}{2}}\vec{M}(\vec{U}\vec{\Sigma}^{\frac{1}{2}}\vec{M})^T &= \vec{U}\vec{\Sigma}^{\frac{1}{2}}\vec{M}\vec{M}^T\vec{\Sigma}^{\frac{1}{2}}\vec{U}^T\\
    &=  \vec{U}\vec{\Sigma}\vec{U}^T \\
    &= \vec{L}.
\end{align*}
Hence, by \Cref{lemma:laplacian_factorization}, $\vec{U}\vec{\Sigma}^{\frac{1}{2}}\vec{M}$ is adjacency-identifying.
\end{proof}

\begin{lemma}\label{lemma:DUE_adjacency_identifying}
Let $G$ be a graph with graph Laplacian $\vec{L}$ and normalized graph Laplacian $\Tilde{\vec{L}}$. Let $\vec{L} = \vec{U}\vec{\Sigma}\vec{U}^T$ be the eigendecomposition of $\vec{L}$ and let $\vec{D}$ denote the degree matrix. Then, for any permutation matrix $\vec{M}$ the matrix $\vec{D}^{\frac{1}{2}}\vec{U}\vec{\Sigma}^{\frac{1}{2}}\vec{M}$ is adjacency-identifying.
\end{lemma}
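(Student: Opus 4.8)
The plan is to exploit the freedom in the projection matrices rather than to realize the Laplacian itself, as was done in \Cref{lemma:UE_adjacency_identifying}. Write $\vec{P} \coloneqq \vec{D}^{1/2}\vec{U}\vec{\Sigma}^{1/2}\vec{M}$; this makes sense because $G$ has no isolated nodes, so $\vec{D}$ is diagonal with strictly positive entries and $\vec{D}^{1/2}$ is well defined. Using $\vec{M}\vec{M}^T = \vec{I}$ and $\vec{U}\vec{\Sigma}\vec{U}^T = \vec{L}$, for every $\vec{B}\in\mathbb{R}^{n\times n}$ one has
\[
  \vec{P}\vec{B}\vec{P}^T \;=\; \vec{D}^{1/2}\vec{U}\,\big(\vec{\Sigma}^{1/2}\vec{M}\vec{B}\vec{M}^T\vec{\Sigma}^{1/2}\big)\,\vec{U}^T\vec{D}^{1/2}.
\]
Note that, in contrast to \Cref{lemma:UE_adjacency_identifying}, we cannot arrange $\vec{P}\vec{B}\vec{P}^T = \vec{L}$ in general: the factor $\vec{\Sigma}^{1/2}$ kills the zero-eigenvalue directions of $\vec{L}$, so the inner matrix always has vanishing rows and columns at the indices $i$ with $\lambda_i = 0$. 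The key idea is that this restriction is harmless. Instead I will choose $\vec{B}$ so that $\vec{P}\vec{B}\vec{P}^T$ equals $\vec{A}(G) - \vec{Y}$ for a suitable nonnegative diagonal matrix $\vec{Y}$; such a matrix is manifestly adjacency-identifying, since its off-diagonal entries are exactly the $0/1$ adjacency entries and its diagonal is strictly negative.

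Concretely, I would take $\vec{Y}$ to be the diagonal matrix with $\vec{Y}_{ii} \coloneqq \sqrt{d_i}\sum_{j\,:\,(i,j)\in E(G)} d_j^{-1/2}$, which is strictly positive because no node is isolated, and set $\vec{M}' \coloneqq \vec{D}^{-1/2}\vec{A}(G)\vec{D}^{-1/2} - \vec{D}^{-1/2}\vec{Y}\vec{D}^{-1/2}$. This matrix is symmetric, and a one-line computation gives $\vec{D}^{1/2}\vec{M}'\vec{D}^{1/2} = \vec{A}(G) - \vec{Y}$. The crucial structural fact is that $\vec{M}'$ annihilates the zero-eigenspace of $\vec{L}$: since every neighbour of a node lies in its own connected component, for the indicator vector $\mathbf{1}_S$ of any connected component $S$ the degree-normalised row sum of $\vec{A}(G)$ over $S$ cancels exactly against the diagonal correction encoded by $\vec{Y}$, so $\vec{M}'\mathbf{1}_S = 0$, and by symmetry $\mathbf{1}_S^T\vec{M}' = 0$ as well; the component indicators span the zero-eigenspace of $\vec{L}$.

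Given this, $\vec{E}' \coloneqq \vec{U}^T\vec{M}'\vec{U}$ has vanishing rows and columns at every index with $\lambda_i = 0$, so the sandwich $\vec{\Sigma}^{1/2}(\cdot)\vec{\Sigma}^{1/2}$ can be ``inverted'' on it: set $\vec{C}_{ij} \coloneqq \vec{E}'_{ij}/\sqrt{\lambda_i\lambda_j}$ when $\lambda_i,\lambda_j>0$ and $\vec{C}_{ij}\coloneqq 0$ otherwise, so that $\vec{\Sigma}^{1/2}\vec{C}\vec{\Sigma}^{1/2} = \vec{E}'$. Taking $\vec{B}\coloneqq\vec{M}^T\vec{C}\vec{M}$, $\vec{W}^Q\coloneqq\sqrt{d_k}\,\vec{B}$ and $\vec{W}^K\coloneqq\vec{I}$, the displayed identity collapses to $\tfrac{1}{\sqrt{d_k}}(\vec{P}\vec{W}^Q)(\vec{P}\vec{W}^K)^T = \vec{D}^{1/2}\vec{M}'\vec{D}^{1/2} = \vec{A}(G) - \vec{Y}$. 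To finish, inspect a row $i$ of $\vec{A}(G) - \vec{Y}$: the diagonal entry is $-\vec{Y}_{ii}<0$, every off-diagonal entry is $\vec{A}(G)_{ij}\in\{0,1\}$, and $i$ has at least one neighbour, so the row maximum equals $1$ and is attained at column $j$ if and only if $\vec{A}(G)_{ij}=1$, with between one and $n-1$ maxima per row. This is exactly the condition of \Cref{adjacency_identifying}, so $\vec{P}$ is adjacency-identifying; since nothing in the argument used a specific $\vec{M}$, it holds for every permutation matrix $\vec{M}$.

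I expect the main obstacle to be the bookkeeping around the zero-eigenspace of $\vec{L}$: the bilinear forms available in $\vec{P}$ can only reach matrices that annihilate that subspace, so the target $\vec{M}'$ (equivalently, the diagonal correction $\vec{Y}$) must be chosen precisely so that it kills the connected-component indicators, and one must then justify that the ``pseudo-inversion'' of $\vec{\Sigma}^{1/2}(\cdot)\vec{\Sigma}^{1/2}$ on $\vec{E}'$ is well defined. Everything else — the identity $\vec{D}^{1/2}\vec{M}'\vec{D}^{1/2} = \vec{A}(G)-\vec{Y}$ and the final row-wise inspection — is routine linear algebra.
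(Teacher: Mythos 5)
Your proof is correct, but it takes a genuinely different route from the paper's. The paper's own argument is a three-line Gram computation: $\vec{P}\vec{P}^T = \vec{D}^{\frac{1}{2}}\vec{U}\vec{\Sigma}\vec{U}^T\vec{D}^{\frac{1}{2}} = \vec{D}^{\frac{1}{2}}\Tilde{\vec{L}}\vec{D}^{\frac{1}{2}} = \vec{L}$, followed by \Cref{lemma:laplacian_factorization}; note this tacitly treats $\vec{U}\vec{\Sigma}\vec{U}^T$ as the eigendecomposition of the \emph{normalized} Laplacian $\Tilde{\vec{L}}$, which is also how the lemma is invoked inside the proof of \Cref{theorem:node_and_adj_ident_full} (the hypothesis ``eigendecomposition of $\vec{L}$'' in the statement is a slip). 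You instead prove the statement literally as written, with $\vec{U}\vec{\Sigma}\vec{U}^T = \vec{L}$, where the simple route genuinely fails: there $\vec{P}\vec{P}^T = \vec{D}^{\frac{1}{2}}\vec{L}\vec{D}^{\frac{1}{2}}$, and the row maxima of $\vec{D}^{\frac{1}{2}}\vec{A}(G)\vec{D}^{\frac{1}{2}}$ sit only on the highest-degree neighbours, so neither $\pm\vec{P}\vec{P}^T$ meets \Cref{adjacency_identifying}. Your workaround—choosing $\vec{W}^Q$ so the attention logits realize $\vec{A}(G)-\vec{Y}$ with $\vec{Y}_{ii}=\sqrt{d_i}\sum_{j\sim i}d_j^{-1/2}$, picked exactly so that $\vec{M}'=\vec{D}^{-1/2}\big(\vec{A}(G)-\vec{Y}\big)\vec{D}^{-1/2}$ annihilates the component indicators spanning $\ker\vec{L}$, and then inverting $\vec{\Sigma}^{\frac{1}{2}}(\cdot)\vec{\Sigma}^{\frac{1}{2}}$ on $\vec{U}^T\vec{M}'\vec{U}$—checks out: the cancellation $\vec{M}'\mathbf{1}_S=0$, the vanishing rows and columns of $\vec{E}'$ at zero-eigenvalue indices, and the final identity $\tfrac{1}{\sqrt{d_k}}(\vec{P}\vec{W}^Q)(\vec{P}\vec{W}^K)^T=\vec{A}(G)-\vec{Y}$ are all valid, and the row maxima of $\vec{A}(G)-\vec{Y}$ are attained exactly at the adjacency positions. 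What each approach buys: the paper's (normalized) reading gives the short proof and is the version actually consumed downstream by the LPE construction, whereas your argument settles the harder literal claim by exploiting the freedom in $\vec{W}^Q$ rather than the Gram matrix; conversely, be aware that your proof by itself does not cover the normalized-Laplacian reading that \Cref{theorem:node_and_adj_ident_full} relies on—though in that case the paper's one-line computation suffices.
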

\begin{proof}
We have that
\begin{align*}
    \vec{D}^{\frac{1}{2}}\vec{U}\vec{\Sigma}^{\frac{1}{2}}\vec{M}(\vec{D}^{\frac{1}{2}}\vec{U}\vec{\Sigma}^{\frac{1}{2}}\vec{M})^T &= 
    \vec{D}^{\frac{1}{2}} \vec{U}\vec{\Sigma}^{\frac{1}{2}}\vec{M}\vec{M}^T\vec{\Sigma}^{\frac{1}{2}}\vec{U}^T\vec{D}^{\frac{1}{2}} \\    
    &= \vec{D}^{\frac{1}{2}} \vec{U}\vec{\Sigma}\vec{U}^T\vec{D}^{\frac{1}{2}} \\
    &= \vec{D}^{\frac{1}{2}}\Tilde{\vec{L}}\vec{D}^{\frac{1}{2}} \\
    &= \vec{D}^{\frac{1}{2}}(\vec{I} - \vec{D}^{-\frac{1}{2}} \vec{A} \vec{D}^{-\frac{1}{2}})\vec{D}^{\frac{1}{2}} \\
    & = \vec{D} - \vec{A} = \vec{L}.
\end{align*}
Hence, by \Cref{lemma:laplacian_factorization}, $\vec{D}^{\frac{1}{2}}\vec{U}\vec{\Sigma}^{\frac{1}{2}}$ is node or adjacency-identifying.
\end{proof}

\subsection{LPE}\label{app:proof_lpe}

Here, we show that the node-level PEs $\embed{LPE}$ as defined in \Cref{eq:1wl_proof_lap_def} are sufficiently node- and adjacency-identifying. We begin with the following useful lemma.

\begin{lemma}\label{lemma:node_and_adj_subspace}
Let $\vec{P} \in \mathbb{R}^{n \times d}$ be structural embeddings with sub-matrices $\vec{Q}_1 \in \mathbb{R}^{n \times d'}$ and $\vec{Q}_2 \in \mathbb{R}^{n \times d''}$, i.e.,
\begin{equation*}
    \vec{P} = \begin{bmatrix}
        \vec{Q}_1 & \vec{Q}_2
    \end{bmatrix},
\end{equation*}
where $d = d' + d''$.
If one of $\vec{Q}_1, \vec{Q}_2$ is (sufficiently) node-identifying, then $\vec{P}$ is (sufficiently) node-identifying. If one of $\vec{Q}_1, \vec{Q}_2$ is (sufficiently) adjacency-identifying, then $\vec{P}$ is (sufficiently) adjacency-identifying. If $\vec{Q}_1$ is (sufficiently) node-identifying and $\vec{Q}_2$ is (sufficiently) adjacency-identifying or vice versa, then $\vec{P}$ is (sufficiently) node \textbf{and} adjacency-identifying.
\end{lemma}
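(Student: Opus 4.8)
The plan is to reduce everything to a single observation: zero-padding the query/key projection matrices lets one isolate a column-block of $\vec{P}$ inside an attention score. Suppose $\vec{Q}_1$ is node-identifying, witnessed by $\vec{W}^Q_1,\vec{W}^K_1\in\mathbb{R}^{d'\times d'}$, so that $\tilde{\vec{Q}}_1 \coloneqq \frac{1}{\sqrt{d_k}}(\vec{Q}_1\vec{W}^Q_1)(\vec{Q}_1\vec{W}^K_1)^T$ has its row-wise maxima exactly on the diagonal. Let $\vec{W}^Q\in\mathbb{R}^{d\times d}$ be the matrix whose top-left $d'\times d'$ block equals $\vec{W}^Q_1$ and whose other three blocks vanish, and define $\vec{W}^K$ analogously from $\vec{W}^K_1$. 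Since $\vec{P}=[\vec{Q}_1\ \vec{Q}_2]$, block multiplication gives $\vec{P}\vec{W}^Q=[\vec{Q}_1\vec{W}^Q_1\ \vec{0}]$ and $\vec{P}\vec{W}^K=[\vec{Q}_1\vec{W}^K_1\ \vec{0}]$, whence $\frac{1}{\sqrt{d_k}}(\vec{P}\vec{W}^Q)(\vec{P}\vec{W}^K)^T=\tilde{\vec{Q}}_1$; thus $\vec{P}$ is node-identifying with these witnesses. The case in which instead $\vec{Q}_2$ carries the property is identical, now placing the witnesses in the lower-right block; and the adjacency-identifying statements are word-for-word the same, replacing the diagonal condition ``$i=j$'' by ``$\vec{A}(G)_{ij}=1$'' (the relevant notion of ``maximum in row $i$'' being unchanged).

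For the \emph{sufficiently} variants I would reduce to the exact case just proved via a Frobenius-norm estimate. If $\vec{Q}_1$ is sufficiently node-identifying, then for every $\epsilon>0$ there is a node-identifying $\vec{Q}_1^\ast$ with $\|\vec{Q}_1-\vec{Q}_1^\ast\|_\text{F}<\epsilon$; set $\vec{P}^\ast\coloneqq[\vec{Q}_1^\ast\ \vec{Q}_2]$. By the previous paragraph $\vec{P}^\ast$ is node-identifying, and $\|\vec{P}-\vec{P}^\ast\|_\text{F}=\|\vec{Q}_1-\vec{Q}_1^\ast\|_\text{F}<\epsilon$ because $\vec{P}$ and $\vec{P}^\ast$ agree on the $\vec{Q}_2$-block. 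Hence $\vec{P}$ is sufficiently node-identifying; the sufficiently adjacency-identifying statement and the $\vec{Q}_2$-version follow in the same way.

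For the last claim, assume $\vec{Q}_1$ is node-identifying with witnesses $\vec{W}^Q_1,\vec{W}^K_1$ and $\vec{Q}_2$ is adjacency-identifying with witnesses $\vec{W}^Q_2,\vec{W}^K_2$. The point is that \Cref{adjacency_identifying} and \Cref{def:node_identifying} only require the \emph{existence} of suitable projection matrices, and the two properties may be certified by \emph{different} matrices; so I would use the top-left embedding of $(\vec{W}^Q_1,\vec{W}^K_1)$ to certify that $\vec{P}$ is node-identifying and, independently, the lower-right embedding of $(\vec{W}^Q_2,\vec{W}^K_2)$ to certify that $\vec{P}$ is adjacency-identifying; together $\vec{P}$ is node \emph{and} adjacency-identifying. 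The sufficiently-case reduces as before: pick a node-identifying $\vec{Q}_1^\ast$ and an adjacency-identifying $\vec{Q}_2^\ast$ with $\|\vec{Q}_1-\vec{Q}_1^\ast\|_\text{F},\|\vec{Q}_2-\vec{Q}_2^\ast\|_\text{F}<\epsilon/\sqrt{2}$, and observe $\|\vec{P}-[\vec{Q}_1^\ast\ \vec{Q}_2^\ast]\|_\text{F}=\sqrt{\|\vec{Q}_1-\vec{Q}_1^\ast\|_\text{F}^2+\|\vec{Q}_2-\vec{Q}_2^\ast\|_\text{F}^2}<\epsilon$; the symmetric roles of $\vec{Q}_1$ and $\vec{Q}_2$ give the ``vice versa'' statement.

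I do not anticipate a genuine obstacle; the only points requiring care are (i) the dimension bookkeeping when lifting a $d'\times d'$ (respectively $d''\times d''$) projection to a $d\times d$ one by zero-padding, so that the off-block contributions to $(\vec{P}\vec{W}^Q)(\vec{P}\vec{W}^K)^T$ indeed cancel, and (ii) making explicit that the node- and adjacency-identifying conditions are permitted to be witnessed by distinct projection matrices, which is exactly what lets both properties hold simultaneously for $\vec{P}$ in the third claim.
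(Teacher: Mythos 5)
Your proposal is correct and follows the same approach as the paper: zero-pad the witness projection matrices of the identifying sub-block to size $d\times d$, so that block multiplication makes $(\vec{P}\vec{W}^Q)(\vec{P}\vec{W}^K)^T$ reduce to the sub-block's attention matrix, with the "sufficiently" variants handled by a Frobenius-norm reduction and the joint claim by observing that the two properties may be certified by different witnesses. You spell out the sufficiently case and the $\epsilon/\sqrt{2}$ splitting a bit more explicitly than the paper, but the decomposition and key observation are identical.
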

\begin{proof}
Let a sub-matrix $\vec{Q}$ be (sufficiently) node- or adjacency-identifying. Let $\vec{W}^{Q,Q}$ and $\vec{W}^{K,Q}$ be the corresponding projection matrices with which $\vec{Q}$ is (sufficiently) node- or adjacency-identifying. Then, if $\vec{Q} = \vec{Q}_1$, we define
\begin{align*}
    \vec{W}^Q &= \begin{bmatrix}
        \vec{W}^{Q,Q} & \vec{0} \\ 
        \vec{0} & \vec{0}
    \end{bmatrix} \\
    \vec{W}^K &= \begin{bmatrix}
        \vec{W}^{Q,K} & \vec{0} \\ 
        \vec{0} & \vec{0}
    \end{bmatrix} \\
\end{align*}
and if $\vec{Q} = \vec{Q}_2$, we define
\begin{align*}
    \vec{W}^Q &= \begin{bmatrix}
        \vec{0} & \vec{0} \\ 
        \vec{W}^{Q,Q} & \vec{0}
    \end{bmatrix} \\
    \vec{W}^K &= \begin{bmatrix}
        \vec{0} & \vec{0} \\ 
        \vec{W}^{Q,K} & \vec{0}
    \end{bmatrix}.
\end{align*}
In both cases, we have that
\begin{align*}
    \vec{P}\vec{W}^Q &= \begin{bmatrix}
        \vec{Q}\vec{W}^{Q,Q} & \vec{0}
    \end{bmatrix} \\
    \vec{P}\vec{W}^K &= \begin{bmatrix}
        \vec{Q}\vec{W}^{Q,K} & \vec{0}
    \end{bmatrix} \\
\end{align*}
and consequently,
\begin{equation*}
    \vec{P}\vec{W}^Q(\vec{P}\vec{W}^K)^T = \vec{Q}\vec{W}^{Q,Q} (\vec{Q}\vec{W}^{Q,K})^T.
\end{equation*}
Hence if $\vec{Q}_1$ is (sufficiently) node-identifying, then $\vec{P}$ is (sufficiently) node-identifying. If $\vec{Q}_2$ is (sufficiently) node-identifying, then $\vec{P}$ is (sufficiently) node-identifying.
If $\vec{Q}_1$ is (sufficiently) adjacency-identifying, then $\vec{P}$ is (sufficiently) adjacency-identifying. If $\vec{Q}_2$ is (sufficiently) adjacency-identifying, then $\vec{P}$ is (sufficiently) adjacency-identifying. If $\vec{Q}_1$ is (sufficiently) node-identifying and $\vec{Q}_2$ is (sufficiently) adjacency-identifying or vice versa, then $\vec{P}$ is (sufficiently) node \textbf{and} adjacency-identifying. This concludes the proof.
\end{proof}

We will now prove a slightly more general statement than \Cref{theorem:node_and_adj_ident}. %
\begin{theorem}[Slightly more general than \Cref{theorem:node_and_adj_ident} in main text] \label{theorem:node_and_adj_ident_full}
Structural embeddings with $\embed{LPE}$ according to \Cref{eq:1wl_proof_lap_def} as node-level PEs with embedding dimension $d$ are sufficiently node- and adjacency-identifying, irrespective of whether the underlying Laplacian is normalized or not. Further, for graphs with $n$ nodes, the statement holds for $d \geq (2n + 1)$.
\end{theorem}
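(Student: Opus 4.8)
The plan is to show that the matrix $\embed{LPE}(\mathbf{V},\lambda)$ contains, as a sub-matrix, an (approximation of a) factorization of the graph Laplacian, so that \Cref{lemma:laplacian_factorization,lemma:node_and_adj_subspace} apply. Concretely, recall from \Cref{lemma:UE_adjacency_identifying} and \Cref{lemma:DUE_adjacency_identifying} that if $\vec{L} = \vec{U}\vec{\Sigma}\vec{U}^T$ is the eigendecomposition of the (possibly normalized) Laplacian, then $\vec{U}\vec{\Sigma}^{1/2}\vec{M}$ (resp.\ $\vec{D}^{1/2}\vec{U}\vec{\Sigma}^{1/2}\vec{M}$ in the normalized case) is node- and adjacency-identifying for any permutation matrix $\vec{M}$. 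The key observation is that the columns of this target matrix are exactly functions of the pairs $(\vec{V}_{i\cdot}, \lambda)$: the $i$-th \emph{row} of $\vec{U}\vec{\Sigma}^{1/2}$ is obtained from the $i$-th row of $\vec{V}$ by scaling the $j$-th entry by $\sqrt{\lambda_j}$. Hence, I would first argue that there is a choice of the row-wise FFN $\phi$ in \Cref{eq:1wl_proof_lap_def} (taking $\epsilon=\mathbf{0}$, which is the zero-initialized value) whose output, restricted to a suitable block of $d$ coordinates, approximates the map $(\vec{V}^T_i, \lambda) \mapsto (\sqrt{\lambda_1}\vec{V}_{1i}, \dots, \sqrt{\lambda_l}\vec{V}_{li}, 0, \dots, 0)$ arbitrarily well in $\ell_2$; this is just a universal-approximation statement for the continuous function $(x,y)\mapsto \sqrt{y}\,x$ on a compact domain, applied coordinate-wise. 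The outer permutation-equivariant map $\rho$ is then taken to act as (an approximation of) the identity on that block, padding or discarding the remaining coordinates as needed, which is possible since $\rho$ subsumes a row-wise FFN.

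Next I would assemble the dimension bookkeeping. The target factor $\vec{U}\vec{\Sigma}^{1/2}$ lives in $\mathbb{R}^{n\times l}$, and since $l$ eigenvalues can in principle be all $n$ of them, we need at most $n$ coordinates to host it; in the normalized case the extra left-multiplication by $\vec{D}^{1/2}$ is absorbed into $\phi$ as well, which costs nothing since $\vec{D}^{1/2}$ acts diagonally on rows and the degree is a fixed per-node scalar that can be supplied as an additional input feature or, more simply, folded into $\rho$. To make the sub-matrix argument of \Cref{lemma:node_and_adj_subspace} go through cleanly, I would designate the first (say) $n$ coordinates of the $d$-dimensional output to carry the Laplacian factor and leave the rest free, giving the bound $d \ge 2n+1$ once we also reserve room for the degree encoding $\embed{deg}$ and the node-feature contribution $\vec{F}$ in \Cref{eq:1_wl_token_embeddings} (the structural embedding is $\vec{P}(v) = \mathsf{FFN}(\embed{deg}(v) + \embed{PE}(v))$, and the outermost $\mathsf{FFN}$ must not destroy the identifying block — again handled by letting it approximate the identity on that block). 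Once the structural embeddings contain a block that is $\epsilon$-close in Frobenius norm to a node- and adjacency-identifying matrix, the definition of \emph{sufficiently} node- and adjacency-identifying (\Cref{adjacency_identifying,def:node_identifying}) is satisfied by construction, and \Cref{lemma:node_and_adj_subspace} lifts this from the block to all of $\vec{P}$.

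The main obstacle, I expect, is controlling the approximation so that it holds \emph{simultaneously} as "sufficiently node- \emph{and} adjacency-identifying" and uniformly over the relevant class of graphs, rather than just pointwise for a fixed Laplacian: the universal-approximation step gives, for each target accuracy $\epsilon$ and each compact set of $(\vec{V},\lambda)$ pairs, \emph{some} FFN $\phi$ — but we must ensure the same architecture works for all $n$-node graphs and that the eigenvector/eigenvalue domain is genuinely compact (eigenvalues of the normalized Laplacian lie in $[0,2]$, eigenvector entries in $[-1,1]$, so this is fine; for the unnormalized Laplacian one bounds eigenvalues by $2\Delta \le 2(n-1)$). A secondary subtlety is the sign ambiguity of eigenvectors discussed in \Cref{sec:add_implementation_laplacian}: since $\vec{U}\vec{\Sigma}^{1/2}(\vec{U}\vec{\Sigma}^{1/2})^T = \vec{L}$ is invariant under flipping the sign of any column of $\vec{U}$, the factorization target is itself sign-invariant, so this causes no problem — but I would state this explicitly. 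Finally, one must be slightly careful that discarding unused LPE output coordinates via $\rho$, and the $\epsilon$-perturbation vector being zero-initialized, are compatible; since $\epsilon=\mathbf 0$ recovers exactly the DeepSet form and the argument never needs $\epsilon\neq\mathbf 0$, this is immediate. Summing the per-step Frobenius errors and invoking \Cref{lemma:node_and_adj_subspace} then yields the claim, with the explicit threshold $d \ge 2n+1$.
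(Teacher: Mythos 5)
Your overall plan matches the paper's — reduce to an approximate factorization of the Laplacian via \Cref{lemma:UE_adjacency_identifying,lemma:DUE_adjacency_identifying,lemma:laplacian_factorization,lemma:node_and_adj_subspace} and split the $d$ coordinates into blocks carrying the degree, node-identifying, and adjacency-identifying parts — but there is a genuine gap in how you handle the DeepSet step, and it is precisely the point the $\epsilon$ perturbation was introduced for.

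First, a mis-reading of the architecture: in \Cref{eq:1wl_proof_lap_def} $\phi\colon\mathbb{R}^2\to\mathbb{R}^d$ is applied to each eigenpair $(\vec{V}_{ij},\lambda_j+\epsilon_j)$ \emph{individually}, and $\rho$ is a permutation-invariant reduction $\mathbb{R}^{l\times d}\to\mathbb{R}^d$ over the $l$ eigenpairs. So $\phi$ never sees the whole row $\vec{V}^T_i$ at once, and $\rho$ cannot ``act as the identity on that block'' — it must return the same answer under any permutation of the $l$ rows. Second, and more importantly, your claim that ``the argument never needs $\epsilon\neq\mathbf 0$'' is false and is exactly where the proof would break. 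The output of $\rho$ at node $v_i$ is a permutation-invariant function of the multiset $\{(\vec{V}_{ij},\lambda_j+\epsilon_j)\}_{j=1}^l$, yet the target row $\vec{U}_i\vec{M}$ (resp.\ $\vec{U}_i\vec{\Sigma}^{1/2}\vec{M}$) is an \emph{ordered} vector, with the \emph{same} column permutation $\vec{M}$ used for every $i$. The only way a DeepSet can recover that consistent ordering is by sorting on the eigenvalue tag, and when eigenvalues are repeated the tags are equal, so the multiset genuinely does not determine which eigenvector component goes in which position; a permutation-invariant $\rho$ must then return the same output for both orderings, while the target $\vec{U}_i\vec{M}$ does not. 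Repeated eigenvalues are common (e.g.\ any graph with nontrivial automorphisms, or low-rank Laplacians), so this is not an edge case. The paper resolves it by taking $\epsilon_j=j\cdot\delta$ with $\delta$ smaller than the minimal nonzero eigenvalue gap, making all perturbed eigenvalues distinct so that the sort-by-tag function $f$ is well-defined and permutation-invariant, which a DeepSet can then universally approximate; the $\delta\to 0$ limit is absorbed into the ``sufficiently'' approximation. Your proof would need this tie-breaking mechanism, or an alternative device that works with $\epsilon=\mathbf 0$, before the universal-approximation step goes through.
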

\begin{proof}
We begin by stating that the domain of $\embed{LPE}$ is compact since eigenvectors are unit-norm and eigenvalues are bounded by twice the maximum node degree for graphs without self-loops \citep{AndersonMorley+1985}. Further, the domain of the structural embeddings is compact, since $\embed{LPE}$ is compact and the node degrees are finite. Hence, the domain of $\embed{deg}$ is compact.

Let $G$ be a graph with graph Laplacian $\vec{L}$. Let $\vec{L} = \vec{U}\vec{\Sigma}\vec{U}^T$ be the eigendecomposition of $\vec{L}$, where the $i$-th column of $\vec{U}$ contains the $i$-th eigenvector, denoted $\vec{v}_i$, and the $i$-th diagonal entry of $\vec{\Sigma}$ contains the $i$-th eigenvalue, denoted $\lambda_i$.

Recall that structural embeddings with $\embed{LPE}$ as node-level PEs are defined as 
\begin{equation*}
    \vec{P}(v) = \textsf{FFN}(\embed{deg}(v) + \embed{LPE}(v)),
\end{equation*}
for all $v \in V(G)$.
Since the domains of both $\embed{deg}$ and $\embed{LPE}$ are compact, there exist parameters for both of these embeddings such that without loss of generality we may assume that for $v \in V(G)$,
\begin{equation*}
    \embed{deg}(v) + \embed{LPE}(v) = \begin{bmatrix}
        \embed{deg}(v) & \embed{LPE}(v)
    \end{bmatrix} \in \mathbb{R}^{d}
\end{equation*}
and that $\embed{deg}$ and $\embed{LPE}$ are instead embedded into some smaller $p$-dimensional and $s$-dimensional sub-spaces, respectively, where it holds that $d = p + s$.

To show node \textbf{and} adjacency identifiability, we divide up the $s$-dimensional embedding space of $\embed{LPE}$ into two $\sfrac{s}{2}$-dimensional sub-spaces $\embed{node}$ and $\embed{adj}$, i.e., we write
\begin{equation*}
   \embed{LPE}(v) = \begin{bmatrix}
       \embed{node}(v) & \embed{adj}(v)
   \end{bmatrix} \in \mathbb{R}^{n \times s},
\end{equation*}
where
\begin{align*}
    \embed{node}(v) &= \rho_\text{node} \Big( \sum_{j=1}^k
      \phi_\text{node}(\vec{v}_{ji}, \lambda_j + \epsilon_j)
 \Big) \\
    \embed{adj}(v) &= \rho_\text{adj} \Big( \sum_{j=1}^k
      \phi_\text{adj}(\vec{v}_{ji}, \lambda_j + \epsilon_j)
 \Big),
\end{align*}
and where we have chosen $\rho$ to be a sum over the first dimension of its input, followed by FFN $\rho_\text{node}$ and $\rho_\text{adj}$, for $\embed{node}$ and $\embed{adjacency}$, respectively.
Note that we have written out \Cref{eq:1wl_proof_lap_def} for a single node $v$.
For convenience, we also fix an arbitrary ordering over the nodes in $V(G)$ and define
\begin{align*}
    \vec{Q}^D &= \begin{bmatrix}
        \embed{deg}(v_1) \\
        \vdots \\
        \embed{deg}(v_n)
    \end{bmatrix} \\
    \vec{Q}^N &= \begin{bmatrix}
        \embed{node}(v_1) \\
        \vdots \\
        \embed{node}(v_n)
    \end{bmatrix} \\
    \vec{Q}^A &= \begin{bmatrix}
        \embed{adj}(v_1) \\
        \vdots \\
        \embed{adj}(v_n)
    \end{bmatrix},
\end{align*}
where $v_i$ is the $i$-th node in our ordering.
Further, note that $\embed{node}$ and $\embed{adj}$ are DeepSets \citep{ZaheerNIPS2017DeepSets} over the set
\begin{equation*}
    M_i \coloneqq \{ (\vec{v}_{ji}, \lambda_j + \epsilon_j) \}_{j=1}^k,
\end{equation*}
where $\vec{v}_{j}$ is again the $j$-th eigenvector with corresponding eigenvalue $\lambda_j$ and $\epsilon_j$ is a learnable scalar.
With DeepSets we can universally approximate permutation invariant functions. We will use this fact in the following, where we show that there exists a parameterization of $\vec{Q}^N$ and $\vec{Q}^A$ such that $\vec{Q}^N$ is sufficiently node-identifying and $\vec{Q}^A$ is sufficiently adjacency-identifying. Then, it follows from \Cref{lemma:node_and_adj_subspace} that $\vec{P}$ is sufficiently node and adjacency-identifying. We will further use $\vec{Q}^D$ later in the proof. 

We begin by showing that $\vec{Q}^N$ is sufficiently node-identifying. Observe that the eigenvector matrix $\vec{U}$ is already node-identifying since $\vec{U}$ forms an orthonormal basis and hence $\vec{U}\vec{U}^T$ is the $n$-dimensional identity matrix $\vec{I}$. Clearly for $\vec{I}$ it holds that
\begin{equation*}
    \vec{I}_{ij} = \max_k \vec{I}_{ik},
\end{equation*}
if and only if $i = j$.
Moreover, let $\vec{M}$ be any permutation matrix, i.e., each column of $\vec{M}$ is $1$ at exactly one position and $0$ else and the rows of $\vec{M}$ sum to $1$. Then,
\begin{equation*}
    \vec{U}\vec{M}\vec{M}^T\vec{U}^T = \vec{U}\vec{U}^T = \vec{I}
\end{equation*}
is also node-identifying.
We will now approximate $\vec{U}\vec{M}$ for some $\vec{M}$ with $\embed{node}$ arbitrarily close. Specifically, for the $i$-th node $v_i$ in our node ordering, we choose $\rho_\text{node}$ and $\phi_\text{node}$ such that
\begin{equation*}
    \big|\big| \embed{node}(v_i) - \vec{U}_i\vec{M} \big|\big|_\text{F} < \epsilon,
\end{equation*}
for all $\epsilon > 0$ and all $i$. 
Since DeepSets can universally approximate permutation invariant functions, it remains to show that there exists a permutation invariant function $f$ such that $f(M_i) = \vec{U}_i\vec{M}$ for all $i$ and for some $\vec{M}$. To this end, note that for a graph $G$, there are only at most $n$ unique eigenvalues of the corresponding (normalized) graph Laplacian. Hence, we can choose $\epsilon_j$ such that $\lambda_j + \epsilon_j$ is unique for each unique $j$. In particular, let
\begin{equation*}
    \epsilon_j = j \cdot \delta,
\end{equation*}
where we choose $\delta < \min_{l,o} |\lambda_l - \lambda_o| > 0$, i.e., the smallest non-zero difference between two eigenvalues.
We now define $f$ as
\begin{equation*}
    f(\{ (\vec{v}_{ji}, \lambda_j + \epsilon_j) \}_{j=1}^k) = \begin{bmatrix}
        \vec{v}_{1i} & \dots & \vec{v}_{ki}
    \end{bmatrix} = \vec{U}_i\vec{M},
\end{equation*}
where the order of the components is according to the sorted $\lambda_j + \epsilon_j$ in ascending order. This order is reflected in some permutation matrix $\vec{M}$.
Hence, $f$ is permutation invariant and can be approximated by a DeepSet arbitrarily close. 
As a result, we have
\begin{equation*}
    \big|\big| \embed{node}(v_i) - \vec{U}_i\vec{M} \big|\big|_\text{F} < \epsilon,
\end{equation*}
for all $i$ and an arbitrarily small $\epsilon > 0$.
In matrix form, we have that
\begin{equation*}
    \big|\big| \vec{U}\vec{M} - \vec{Q}^N \big|\big|_\text{F} < \epsilon
\end{equation*}
and since $\vec{U}\vec{M}$ is node-identifying, we can invoke \Cref{lemma:adjacency_ident_approx}, to conclude that $\vec{Q}^N$ is sufficiently node-identifying.
As a result, $\vec{P}$ has a sufficiently node-identifying sub-space and is thus, also sufficiently node-identifying according to \Cref{lemma:node_and_adj_subspace}.

We continue by showing that $\vec{Q}^A$ is sufficiently adjacency-identifying. Note that according to \Cref{lemma:UE_adjacency_identifying}, $\vec{U}\vec{\Sigma}^{\frac{1}{2}}$ is adjacency-identifying.
We will now approximate $\vec{U}\vec{\Sigma}^{\frac{1}{2}}$ with $\embed{adj}$ arbitrarily close. Specifically, for the $i$-th node $v_i$ in our node ordering, we choose $\rho_\text{adj}$ and $\phi_\text{adj}$ such that
\begin{equation*}
    \big|\big| \embed{adj}(v_i) -  \vec{U}\vec{\Sigma}^{\frac{1}{2}}_i \big|\big|_\text{F} < \epsilon,
\end{equation*}
for all $\epsilon > 0$ and all $i$. 
To this end, we first note that right-multiplication of $\vec{U}$ by $\vec{\Sigma}^{\frac{1}{2}}$ is equal to multiplying the $i$-th column of $\vec{U}$, i.e., the eigenvector $\vec{v}_i$, with the $j$-th diagonal element of $\vec{\Sigma}^{\frac{1}{2}}$, i.e., $\sqrt{\lambda_j}$. Hence, for the $j$-node $v_j$ it holds
\begin{equation*}
    \vec{U}\vec{\Sigma}^{\frac{1}{2}}_i = \begin{bmatrix}
       \vec{v}_{1j} \cdot \sqrt{\lambda_1} & \dots & \vec{v}_{nj} \cdot \sqrt{\lambda_n}
    \end{bmatrix} \in \mathbb{R}^n,
\end{equation*}
where $\vec{v}_{ij}$ denotes the $j$-th component of $\vec{v}_i$.
Since DeepSets can universally approximate permutation invariant functions, it remains to show that there exists a permutation invariant function $f$ such that $f(M_i) = \vec{U}\vec{\Sigma}^{\frac{1}{2}}_i\vec{M}$ for all $i$ and for some $\vec{M}$. To this end, note that for a graph $G$, there are only at most $n$ unique eigenvalues of the corresponding (normalized) graph Laplacian. Hence, we can choose $\epsilon_j$ such that $\lambda_j + \epsilon_j$ is unique for each unique $j$. In particular, let
\begin{equation*}
    \epsilon_j = j \cdot \delta,
\end{equation*}
where we choose $\delta < \min_{l,o} |\lambda_l - \lambda_o| > 0$, i.e., the smallest non-zero difference between two eigenvalues.
We now define $f$ as
\begin{equation*}
    f(\{ (\vec{v}_{ji}, \lambda_j + j \cdot \delta) \}_{j=1}^k) = \begin{bmatrix}
        \vec{v}_{1i} \cdot \sqrt{\lambda_1 + \epsilon_1 + \delta} & \dots & \vec{v}_{ki} \cdot \sqrt{\lambda_k + k \cdot \delta}
    \end{bmatrix},
\end{equation*}
where the order of the components is according to the sorted $\lambda_j + \epsilon_j$ in ascending order. This order is reflected in some permutation matrix $\vec{M}$, which we will use next.
Now, since we can choose $\delta$ arbitrarily low, we can choose it such that
\begin{equation*}
   \big|\big| f(\{ (\vec{v}_{ji}, \lambda_j + j \cdot \delta) \}_{j=1}^k) -  \vec{U}\vec{\Sigma}^{\frac{1}{2}}_i\vec{M} ||_\text{F} < \epsilon,
\end{equation*}
for any $\epsilon > 0$. 
Further, $f$ is permutation invariant and can be approximated by a DeepSet arbitrarily close. 
As a result, we have
\begin{equation*}
    \big|\big| \embed{adj}(v_i) - \vec{U}\vec{\Sigma}^{\frac{1}{2}}_i\vec{M} \big|\big|_\text{F} < \epsilon,
\end{equation*}
for all $i$ and an arbitrarily small $\epsilon > 0$.
In matrix form, we have that
\begin{equation*}
    \big|\big| \vec{U}\vec{\Sigma}^{\frac{1}{2}}\vec{M} - \vec{Q}^A \big|\big|_\text{F} < \epsilon.
\end{equation*}

Now, we need to distinguish between the non-normalized and normalized Laplacian. 
In the first case, we consider the graph Laplacian underlying the eigendecomposition to be non-normalized, i.e., $\vec{L} = \vec{D} - \vec{A}(G)$.
First, we know from \Cref{lemma:UE_adjacency_identifying} that $\vec{U}\vec{\Sigma}^{\frac{1}{2}}\vec{M}$ is adjacency-identifying. Further, we know from \Cref{lemma:adjacency_ident_approx} that then $\vec{Q}^A$ is sufficiently adjacency-identifying, since $\vec{Q}^A$ can approximate $\vec{U}\vec{\Sigma}^{\frac{1}{2}}\vec{M}$ arbitrarily close. As a result, $\vec{P}$ has a sufficiently adjacency-identifying sub-space and is, thus, also sufficiently adjacency-identifying.

In the second case, we consider the graph Laplacian underlying the eigendecomposition to be normalized, i.e., $\Tilde{\vec{L}} = \vec{I} - \vec{D}^{-\frac{1}{2}}\vec{A}(G)\vec{D}^{-\frac{1}{2}}$. Here, recall our construction for $\vec{P}$, namely
\begin{equation*}
    \vec{P}(v) = \textsf{FFN}\Big(\begin{bmatrix}
        \embed{deg}(v) & \embed{node}(v) & \embed{adj}(v)
    \end{bmatrix}\Big)
\end{equation*}
or in matrix form
\begin{equation*}
    \vec{P} = \textsf{FFN}\Big(\begin{bmatrix}
        \vec{Q}^D & \vec{Q}^N & \vec{Q}^A
    \end{bmatrix}\Big).
\end{equation*}
We will now use $\textsf{FFN}$ to approximate the following function $f$, defined as
\begin{equation*}
    f\Big(\begin{bmatrix}
        \vec{Q}^D & \vec{Q}^N & \vec{Q}^A
    \end{bmatrix}\Big) = \begin{bmatrix}
        \vec{Q}^D & \vec{Q}^N & \vec{D}^{\frac{1}{2}}\vec{Q}^A
    \end{bmatrix}.
\end{equation*}
Note that a FFN can approximate such a function arbitrarily close since our domain is compact and left-multiplication by $\vec{D}^{\frac{1}{2}}$ is equivalent to multiplying the $i$-th row of $\vec{Q}^A$ with $\sqrt{d_i}$, where $d_i$ is the degree of node $i$. Further, the $i$-th row of $\vec{Q}^D$ is an embedding $\embed{deg}(v_i)$ of $d_i$. We can choose this embedding to be
\begin{equation*}
    \embed{deg}(v_i) = \begin{bmatrix}
        \sqrt{d_i} & 0 & \dots & 0
    \end{bmatrix} \in \mathbb{R}^{p},
\end{equation*}
where we write $\sqrt{d_i}$ into the first component and pad the remaining vector with zeros to fit the target dimension $p$ of $\embed{deg}$. Hence, with the FFN we can approximate a sub-space containing $\vec{D}^{\frac{1}{2}}\vec{Q}^A$ arbitrarily close. Further, since we already showed that $\vec{Q}^A$ can approximate $\vec{U}\vec{\Sigma}^{\frac{1}{2}}\vec{M}$ arbitrarily close, we can thus approximate $\vec{D}^{\frac{1}{2}}\vec{U}\vec{\Sigma}^{\frac{1}{2}}\vec{M}$ arbitrarily close.
First, we know from \Cref{lemma:DUE_adjacency_identifying} that $\vec{D}^{\frac{1}{2}}\vec{U}\vec{\Sigma}^{\frac{1}{2}}\vec{M}$ is adjacency-identifying. Further, we know from \Cref{lemma:adjacency_ident_approx} that then $\vec{D}^{\frac{1}{2}}\vec{Q}^A$ is sufficiently adjacency-identifying, since $\vec{D}^{\frac{1}{2}}\vec{Q}^A$ can approximate $\vec{D}^{\frac{1}{2}}\vec{U}\vec{\Sigma}^{\frac{1}{2}}\vec{Q}^A$ arbitrarily close. As a result, $\vec{P}$ has a sufficiently adjacency-identifying sub-space and is, thus, also sufficiently adjacency-identifying according to \Cref{lemma:node_and_adj_subspace}.

Finally, for $\vec{Q}^D$ we need $p \geq 1$, for $\vec{Q}^N$ and $\vec{Q}^A$ we need $\frac{s}{2} \geq n$. As a result, the above statements hold for $d \geq (2n + 1)$. This concludes the proof.
\end{proof}

\subsection{SPE}\label{app:proof_spe}
Here, we show that SPE are sufficiently node- and adjacency-identifying. We begin with useful lemma.
\begin{lemma}\label{lemma:laplacian_degree_norm}
Let $G$ be a graph with graph Laplacian $\vec{L}$ and normalized graph Laplacian $\Tilde{\vec{L}}$. Then, if for a node-level PE $\vec{Q}$ with compact domain there exists matrices $\vec{W}^Q$ and $\vec{W}^K$ such that
\begin{equation*}
   \frac{1}{\sqrt{d_k}}(\vec{Q} \vec{W}^Q)(\vec{Q} \vec{W}^K)^T = \Tilde{\vec{L}},
\end{equation*}
there exists a parameterization of the structural embedding $\vec{P}$ with $\vec{Q}$ as node-level PE such that $\vec{P}$ is sufficiently node- and adjacency-identifying.
\end{lemma}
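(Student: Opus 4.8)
The plan is to reduce the claim to \Cref{lemma:laplacian_factorization_node_and_adj}, which states that any matrix admitting a scaled factorization equal to the \emph{non-normalized} graph Laplacian $\vec{L}$ is node- and adjacency-identifying; the only gap is that the hypothesis supplies a factorization of the \emph{normalized} Laplacian $\Tilde{\vec{L}}$, and $\Tilde{\vec{L}}$ on its own is node-identifying but not adjacency-identifying (the nonzero off-diagonal entries in a row of $-\Tilde{\vec{L}}$ are $1/\sqrt{d_id_j}$, hence not all equal). Since $G$ has no isolated nodes, $\vec{D}$ is invertible and $\vec{L} = \vec{D}^{\frac{1}{2}}\Tilde{\vec{L}}\vec{D}^{\frac{1}{2}}$; using that $\vec{D}^{\frac{1}{2}}$ is symmetric together with the assumed identity $\tfrac{1}{\sqrt{d_k}}(\vec{Q}\vec{W}^Q)(\vec{Q}\vec{W}^K)^T = \Tilde{\vec{L}}$ gives
\[
  \tfrac{1}{\sqrt{d_k}}\big(\vec{D}^{\frac{1}{2}}\vec{Q}\,\vec{W}^Q\big)\big(\vec{D}^{\frac{1}{2}}\vec{Q}\,\vec{W}^K\big)^T = \vec{D}^{\frac{1}{2}}\Tilde{\vec{L}}\vec{D}^{\frac{1}{2}} = \vec{L}.
\]
Hence \Cref{lemma:laplacian_factorization_node_and_adj} applies to $\vec{D}^{\frac{1}{2}}\vec{Q}$, showing it is node- and adjacency-identifying.

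It then remains to realize a matrix containing $\vec{D}^{\frac{1}{2}}\vec{Q}$ (up to arbitrarily small Frobenius error) as a structural embedding $\vec{P}$ with node-level PE $\vec{Q}$, i.e.\ as $\vec{P}(v) = \textsf{FFN}(\embed{deg}(v) + \vec{Q}(v))$. Following the bookkeeping already used in the proof of \Cref{theorem:node_and_adj_ident_full}, I would note that the input domain of the structural embedding is compact (the domain of $\vec{Q}$ is compact by hypothesis and node degrees are bounded), place $\embed{deg}$ and $\vec{Q}$ into disjoint coordinate blocks so the pre-\textsf{FFN} matrix is $\begin{bmatrix}\vec{Q}^D & \vec{Q}\end{bmatrix}$ (with $\vec{Q}^D$ the matrix of degree embeddings), and choose $\embed{deg}(v_i) = \begin{bmatrix}\sqrt{d_i} & 0 & \cdots & 0\end{bmatrix}$. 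The row-wise map $\begin{bmatrix}\embed{deg}(v_i) & \vec{Q}(v_i)\end{bmatrix} \mapsto \begin{bmatrix}\embed{deg}(v_i) & \sqrt{d_i}\,\vec{Q}(v_i)\end{bmatrix}$ is continuous on this compact domain, hence approximable arbitrarily well by an \textsf{FFN}, so $\vec{P}$ approximates $\begin{bmatrix}\vec{Q}^D & \vec{D}^{\frac{1}{2}}\vec{Q}\end{bmatrix}$ to within any $\epsilon>0$. By \Cref{lemma:node_and_adj_subspace} the matrix $\begin{bmatrix}\vec{Q}^D & \vec{D}^{\frac{1}{2}}\vec{Q}\end{bmatrix}$ inherits both identifiability properties from its sub-matrix $\vec{D}^{\frac{1}{2}}\vec{Q}$, and therefore — by the definition of \emph{sufficiently} node- and adjacency-identifying, optionally passing through \Cref{lemma:adjacency_ident_approx} to transfer the bound to the attention matrix — the structural embedding $\vec{P}$ is sufficiently node- and adjacency-identifying.

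The only genuinely content-bearing step is the conjugation by $\vec{D}^{\frac{1}{2}}$ that turns $\Tilde{\vec{L}}$ into $\vec{L}$; everything else is the same compactness / universal-approximation argument already carried out for LPE, so I do not anticipate a real obstacle there, beyond being careful that the degree embedding can feed $\sqrt{d_i}$ to the \textsf{FFN} and that the resulting matrix still contains a node- and adjacency-identifying block.
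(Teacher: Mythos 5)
Your proof is correct and follows essentially the same route as the paper's: extract the degree $\sqrt{d_v}$ from $\embed{deg}$, use the FFN (on the compact domain) to approximate the row-wise map $\big[\embed{deg}(v)\ \ \vec{Q}(v)\big]\mapsto\big[\embed{deg}(v)\ \ \sqrt{d_v}\,\vec{Q}(v)\big]$ so that a sub-block of $\vec{P}$ approximates $\vec{D}^{1/2}\vec{Q}$, observe that conjugation by $\vec{D}^{1/2}$ converts the assumed factorization of $\Tilde{\vec{L}}$ into one of $\vec{L}$, and conclude via \Cref{lemma:node_and_adj_subspace}. One small point where you are actually more careful than the paper: you invoke \Cref{lemma:laplacian_factorization_node_and_adj} (scaled factorization through $\vec{W}^Q,\vec{W}^K$), which is the precise statement needed, whereas the paper's final sentence cites \Cref{lemma:laplacian_factorization} (which assumes $\vec{L}=\vec{P}\vec{P}^T$ with no projections); your citation is the correct one for the form of the identity obtained.
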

\begin{proof}
Recall that structural embeddings with $\vec{P}$ as node-level PEs are defined as 
\begin{equation*}
    \vec{P}(v) = \textsf{FFN}(\embed{deg}(v) + \vec{Q}(v)),
\end{equation*}
for all $v \in V(G)$.
Since the domains of both $\embed{deg}$ and $\vec{Q}$ are compact, there exists parameters for both of these embeddings such that without loss of generality we may assume that for $v \in V(G)$,
\begin{equation*}
    \embed{deg}(v) + \vec{Q}(v) = \begin{bmatrix}
        \embed{deg}(v) & \vec{Q}(v)
    \end{bmatrix} \in \mathbb{R}^{d}
\end{equation*}
and that $\embed{deg}$ and $\vec{Q}$ are instead embedded into some smaller $p$-dimensional and $s$-dimensional sub-spaces, respectively, where it holds that $d = p + s$.

Next, we choose the embedding $\embed{deg}(v)$ to be
\begin{equation*}
    \embed{deg}(v) = \begin{bmatrix}
        \sqrt{d_v} & 0 & \dots & 0
    \end{bmatrix} \in \mathbb{R}^{p},
\end{equation*}
where $d_v$ is the degree of node $v$ and we write $\sqrt{d_v}$ into the first component and pad the remaining vector with zeros to fit the target dimension $p$ of $\embed{deg}$.
We will now use $\textsf{FFN}$ to approximate the following function $f$, defined as
\begin{equation*}
    f\Big(\begin{bmatrix}
        \embed{deg}(v) & \vec{Q}(v)
    \end{bmatrix}\Big) = \begin{bmatrix}
        \embed{deg}(v) & \sqrt{d_v}\vec{Q}(v)
    \end{bmatrix}.
\end{equation*}
Note that a FFN can approximate such a function arbitrarily close since our domain is compact.
Hence, we have that
\begin{equation*}
    \big|\big| \vec{P}(v) - \begin{bmatrix}
        \embed{deg}(v) & \sqrt{d_v}\vec{Q}(v)
    \end{bmatrix} \big|\big|_\text{F} < \epsilon_1,
\end{equation*}
for all $\epsilon_1 > 0$. Further, in matrix form this becomes
\begin{equation*}
    \Bigg|\Bigg| \vec{P} - \begin{bmatrix}
        \begin{bmatrix}
            \embed{deg}(v_1) \\
            \vdots \\
            \embed{deg}(v_n)
        \end{bmatrix}
         & \vec{D}^\frac{1}{2}\vec{Q}
    \end{bmatrix} \Bigg|\Bigg|_\text{F} < \epsilon_2,
\end{equation*}
for all $\epsilon_2 > 0$, since left multiplication of a matrix by $\vec{D}^\frac{1}{2}$ corresponds to an element-wise multiplication of the $i$-th row of $\vec{Q}$ with $\sqrt{d_{v_i}}$, the square-root of the degree of the $i$-th node $v_i$ in an arbitrary but fixed node ordering.
Now, since we have that there exists matrices $\vec{W}^Q$ and $\vec{W}^K$ such that
\begin{equation*}
   \frac{1}{\sqrt{d_k}}(\vec{Q} \vec{W}^Q)(\vec{Q} \vec{W}^K)^T = \Tilde{\vec{L}},
\end{equation*}
then, for the same matrices $\vec{W}^Q$ and $\vec{W}^K$ we know that
\begin{equation*}
   \frac{1}{\sqrt{d_k}}(\vec{D}^\frac{1}{2}\vec{Q} \vec{W}^Q)(\vec{D}^\frac{1}{2}\vec{Q} \vec{W}^K)^T = \vec{D}^\frac{1}{2}\Tilde{\vec{L}}\vec{D}^\frac{1}{2} = \vec{L}
\end{equation*}
Finally, since $\vec{P}$ has a sub-matrix that can approximate $\vec{D}^\frac{1}{2}\vec{Q}$ arbitrarily close and since, by \Cref{lemma:laplacian_factorization}, $\vec{D}^\frac{1}{2}\vec{Q}$ is node- and adjacency-identifying, $\vec{P}$ is sufficiently node- and adjacency-identifying. This shows the statement.
\end{proof}

We now show \Cref{theorem:spe_node_and_adj_ident}.
\begin{theorem}[\Cref{theorem:spe_node_and_adj_ident} in the main paper]
Structural embeddings with SPE as node-level PE are sufficiently node- and adjacency-identifying.
\end{theorem}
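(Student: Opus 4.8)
The plan is to reduce the statement to \Cref{lemma:laplacian_degree_norm}: it suffices to exhibit a parameterization of $\mathsf{SPE}$, viewed as a node-level PE with compact domain, whose output matrix $\vec{Q}$ admits matrices $\vec{W}^Q,\vec{W}^K$ with $\tfrac{1}{\sqrt{d_k}}(\vec{Q}\vec{W}^Q)(\vec{Q}\vec{W}^K)^T = \Tilde{\vec{L}}$, the normalized graph Laplacian. Given this, \Cref{lemma:laplacian_degree_norm} does the remaining work: the FFN inside the structural embedding left-multiplies $\vec{Q}$ by $\vec{D}^{1/2}$ (using the degree embedding to store $\sqrt{d_v}$), which turns the identity $\tfrac{1}{\sqrt{d_k}}(\vec{Q}\vec{W}^Q)(\vec{Q}\vec{W}^K)^T = \Tilde{\vec{L}}$ into $\vec{D}^{1/2}\Tilde{\vec{L}}\vec{D}^{1/2} = \vec{L}$, and $\vec{L}$ is node- and adjacency-identifying; hence the structural embedding is \emph{sufficiently} node- and adjacency-identifying.

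First I would record that the domain of $\mathsf{SPE}$ is compact: eigenvectors have unit norm and the eigenvalues of the normalized Laplacian lie in $[0,2]$ \citep{AndersonMorley+1985}, so, fixing the number of nodes $n$ as in \Cref{theorem:node_and_adj_ident_full}, the set of admissible $(\vec{V},\vec\lambda)$ is bounded; together with finite node degrees this gives the compact domain needed by \Cref{lemma:laplacian_degree_norm}.

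Next comes the key structural observation: each ``channel'' of $\mathsf{SPE}$ before the outer network $\rho$ is $\vec{V}\,\mathrm{diag}(\phi_i(\vec\lambda))\,\vec{V}^T$, i.e.\ the matrix function of $\Tilde{\vec{L}}$ obtained by applying $\phi_i$ to its eigenvalues. Taking all $n$ eigenpairs (so $l=n$) and choosing one $\phi_i$ to be the identity map (exactly representable by an FFN, as the eigenvalues are nonnegative), that channel equals $\Tilde{\vec{L}}$ exactly. I would then instantiate the permutation-equivariant row-wise $\rho$ so that the $\mathsf{SPE}$ output forwards this one channel verbatim, possibly zero-padded to the embedding dimension $d\ge n$; thus $\vec{Q}\coloneqq \mathsf{SPE}(\vec{V},\vec\lambda)=\Tilde{\vec{L}}$ up to padding. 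Finally, setting $\vec{W}^K=\vec{I}$ and $\vec{W}^Q=\sqrt{d_k}\,\Tilde{\vec{L}}^{+}$ with $\Tilde{\vec{L}}^{+}$ the Moore--Penrose pseudoinverse, symmetry of $\Tilde{\vec{L}}$ and the identity $\Tilde{\vec{L}}\,\Tilde{\vec{L}}^{+}\,\Tilde{\vec{L}}=\Tilde{\vec{L}}$ give $\tfrac{1}{\sqrt{d_k}}(\vec{Q}\vec{W}^Q)(\vec{Q}\vec{W}^K)^T=\Tilde{\vec{L}}$, so \Cref{lemma:laplacian_degree_norm} applies. An alternative route that avoids the pseudoinverse is to pick $\phi_i$ approximating $t\mapsto\sqrt{t}$ uniformly on $[0,2]$, so the channel approximates $\Tilde{\vec{L}}^{1/2}$, take $\vec{W}^Q=\sqrt{d_k}\vec{I}$, $\vec{W}^K=\vec{I}$ so that $\tfrac{1}{\sqrt{d_k}}(\vec{Q}\vec{W}^Q)(\vec{Q}\vec{W}^K)^T\approx\Tilde{\vec{L}}^{1/2}(\Tilde{\vec{L}}^{1/2})^T=\Tilde{\vec{L}}$, and push the error through the ``sufficiently''-identifying definitions via \Cref{lemma:adjacency_ident_approx}.

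The main obstacle I anticipate is the bookkeeping around the outer network $\rho$: $\mathsf{SPE}$'s architecture is rigidly $\rho\big(\big[\,\vec{V}\mathrm{diag}(\phi_i(\vec\lambda))\vec{V}^T\,\big]_i\big)$ with $\rho$ only permutation-equivariant, so I must argue carefully that $\rho$ can be chosen to expose one Laplacian-valued channel essentially verbatim (this is where the exact-versus-approximate choice and the dimension requirements $d\ge n$, $l=n$ enter). Everything after that---the pseudoinverse factorization and the degree rescaling inside the structural embedding---is routine given \Cref{lemma:laplacian_degree_norm,lemma:adjacency_ident_approx}.
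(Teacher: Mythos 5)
Your high-level plan is the right one — reduce to \Cref{lemma:laplacian_degree_norm} via a factorization of $\Tilde{\vec{L}}$ — and it matches the paper's plan. But the central step, ``instantiate $\rho$ so that the $\mathsf{SPE}$ output forwards one Laplacian-valued channel verbatim,'' does not go through, and you rightly sensed this was the obstacle. The problem is that the channel-forwarding map is \emph{not} permutation equivariant in the sense $\rho$ must be. Under a node permutation $\pi$, the tensor $\vec{Q}$ transforms by conjugating its two node axes, so the channel $\vec{Q}_{:,:,1}=\Tilde{\vec{L}}$ transforms as $\mathbf{P}_\pi\Tilde{\vec{L}}\mathbf{P}_\pi^T$. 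A node-level PE, however, must transform as $\mathbf{P}_\pi\,\rho(\vec{Q})$: its rows (nodes) permute, but its columns are embedding dimensions that must \emph{not} permute. In the paper's ``row-wise'' formulation the same issue appears in a cleaner form: $\rho$ acts on $\mathbf{M}_i\in\Rb^{n\times l}$ and must be permutation invariant to the $n$ rows of $\mathbf{M}_i$; the map ``output the first column of $\mathbf{M}_i$'' is obviously not row-permutation invariant. So there is no admissible $\rho$ with $\mathsf{SPE}(\vec{V},\vec\lambda)=\Tilde{\vec{L}}$ (nor $\approx\Tilde{\vec{L}}^{1/2}$), and the premise of \Cref{lemma:laplacian_degree_norm} is never reached. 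The same objection kills both your pseudoinverse and your $\sqrt{\cdot}$-approximation variants, since in both cases $\vec{Q}$ has node-indexed columns.

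The paper's proof is engineered precisely to avoid this. It defines $h_\ell$ to pick out $\lambda_\ell^{1/2}$ by sorted eigenvalue index (a quantity fixed under node relabeling), applies a \emph{set function} $f$ over the rows of each $\mathbf{M}_i$ to kill the row-permutation dependence, forms $\mathbf{P}_i=\mathbf{M}_i\odot\mathbf{f}_i$ and $\mathbf{Q}_i=\mathbf{M}_i\oslash\mathbf{f}_i$, and sums, $\mathbf{P}=\sum_i\mathbf{P}_i$, $\mathbf{Q}=\sum_i\mathbf{Q}_i$. The resulting columns of $\mathbf{P},\mathbf{Q}$ are indexed by the eigenvalue index $\ell$ — fixed under node permutation — not by nodes, so $g(\vec{Q})=\bigl[\mathbf{P}\ \ \mathbf{Q}\bigr]$ is equivariant in exactly the right sense, and one checks $\mathbf{P}\mathbf{Q}^T=\vec{L}$ (or $\Tilde{\vec{L}}$) and invokes \Cref{lemma:laplacian_factorization_pq} together with \Cref{lemma:laplacian_degree_norm}. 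If you want to save your approach, you would need to replace the raw channel by a symmetric factor whose column index is eigenvalue-based (e.g.\ something proportional to $\vec{V}\,\mathrm{diag}(\vec\lambda)^{1/2}$ up to a node-independent column transformation), and then show that a permutation-invariant aggregation over $\mathbf{M}_i$'s rows can reconstruct it — which is essentially what the paper's construction of $\mathbf{P},\mathbf{Q}$ does. As a minor aside, your exact variant sets $\vec{W}^Q=\sqrt{d_k}\,\Tilde{\vec{L}}^{+}$, which is graph-dependent; the definition of adjacency-identifying is per-graph so this is not formally disallowed, but it is stylistically at odds with the paper, which uses graph-independent $\vec{W}^Q,\vec{W}^K$ in all its factorization lemmas.
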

\begin{proof}
We begin by stating that the domain of SPE is compact, since eigenvectors are unit-norm and eigenvalues are bounded by twice the maximum node degree for graphs without self-loops \citep{AndersonMorley+1985}. Further, the domain of the structural embeddings is compact, since SPE is compact and the node degrees are finite and hence the domain of $\embed{deg}$ is compact.

Let $G$ be a graph with (normalized or non-normalized) graph Laplacian $\vec{L}$. Let $\vec{L} = \vec{V}\vec{\Sigma}\vec{V}^T$ be the eigendecomposition of $\vec{L}$, where the $i$-th column of $\vec{V}$ contains the $i$-th eigenvector, denoted $\vec{v}_i$, and the $i$-th diagonal entry of $\vec{\Sigma}$ contains the $i$-th eigenvalue, denoted $\lambda_i$.

Recall that structural embeddings with SPE as node-level PEs are defined as 
\begin{equation*}
    \vec{P}(v) = \textsf{FFN}(\embed{deg}(v) + \text{SPE}(v)),
\end{equation*}
for all $v \in V(G)$, where
\begin{equation*}
   \text{SPE}(v) = \rho\Big(\begin{bmatrix}
        \mathbf{V} \text{diag}(\phi_1(\mathbf{\lambda})) \mathbf{V}^T & \dots & \mathbf{V} \text{diag}(\phi_n(\mathbf{\lambda})) \mathbf{V}^T
    \end{bmatrix}\Big)(v). 
\end{equation*}

To show node \textbf{and} adjacency identifiability, we first replace the neural networks $\rho, \phi_1, \dots, \phi_n$ in SPE with functions permutation equivariant functions $g, h_1, \dots, h_n$ over the same domain and co-domain, respectively, and choose $g$ and $h_1, \dots, h_n$ such that the resulting encoding, which we call $f_\text{SPE}$, is sufficiently node- and adjacency-identifying. Then, we show that $\rho$ and $\phi_1, \dots, \phi_n$ can approximate $g$ and $h_1, \dots, h_n$ arbitrarily close, which gives the proof statement.

We first define, for all $\ell \in [n]$,
\begin{equation*}
    h_\ell(\lambda) = \begin{bmatrix}
        0 \\
        \vdots \\
        \lambda_\ell^\frac{1}{2} \\
             \vdots \\
0  
\end{bmatrix},
\end{equation*}
such that $\lambda_\ell$ is the $\ell$-th sorted eigenvalue where ties are broken arbitrarily\footnote{Note that despite eigenvalues with higher multiplicity, the output of $h_\ell$ is the same, no matter how ties are broken.} and $\lambda_\ell$ is the $\ell$-th entry of $h_\ell(\vec{\lambda})$. Note that due to the sorting, $h_\ell$ is equivariant to permutations of the nodes. 

We then define the matrix $\mathbf{M}_i$ as $i$-th input to $g$ in $f_\text{SPE}$, for all $i \in [n]$. We have
\begin{align*}
    \mathbf{M}_i &\coloneqq \begin{bmatrix}
        \mathbf{V} \text{diag}(h_1(\mathbf{\lambda})) \mathbf{V}_i^T & \dots & \mathbf{V} \text{diag}(h_n(\mathbf{\lambda})) \mathbf{V}_i^T
    \end{bmatrix} \\
    &= \begin{bmatrix}
      \sum_j v_{1j} \cdot h_1(\vec{\lambda})_j \cdot v_{ij} & \dots & \sum_j v_{1j} \cdot h_n(\vec{\lambda})_j \cdot v_{ij} \\
    & \ddots & \\
    \sum_j v_{nj} \cdot h_1(\vec{\lambda})_j \cdot v_{ij} & \dots & \sum_j v_{nj} \cdot h_n(\vec{\lambda})_j \cdot v_{ij}
    \end{bmatrix} \\
    &= \begin{bmatrix}
       v_{11} \cdot \lambda_1^\frac{1}{2} \cdot v_{i1} & \dots & v_{1n} \cdot \lambda_n^\frac{1}{2} \cdot v_{in} \\
    & \ddots & \\
    v_{n1} \cdot \lambda_1^\frac{1}{2} \cdot v_{i1} & \dots & v_{nn} \cdot \lambda_n^\frac{1}{2} \cdot v_{in}
    \end{bmatrix},
\end{align*}
where the last equality follows from the fact that $h_\ell(\vec{\lambda})_j \neq 0$ if and only if $\ell = j$.
Now, we choose a linear set function $f$, satisfying
\begin{equation*}
    f(\{ a \cdot x \mid x \in X \}) = a \cdot f(X),
\end{equation*}
for all $a \in \mathbb{R}$ and all $X \subset \mathbb{R}$, e.g., \textsf{sum} or \textsf{mean}. 
Since $f$ is a function over sets, $f$ is equivariant to the ordering of $X$. 
We now define the function $g$ step-by-step.
In $g$, we first apply $f$ to the columns of $\mathbf{M}_i$ for all $i \in [n]$. To this end, let $\mathbf{M}_{ij}$ denote the \textit{set} of entries in the $j$-column of matrix $\mathbf{M}_i$. Then, we define
\begin{align*}
    \mathbf{f}_{i} 
     &= \begin{bmatrix}
         f(\mathbf{M}_{i1}) & \dots & f(\mathbf{M}_{in}) 
    \end{bmatrix} \\
    &= \begin{bmatrix}
         \lambda_1^\frac{1}{2} \cdot v_{i1} \cdot f(\{v_{o1} \mid o \in [n] \})& \dots & \lambda_n^\frac{1}{2} \cdot v_{in} \cdot f(\{v_{on} \mid o \in [n] \}) 
    \end{bmatrix},
\end{align*}
We then multiply $\mathbf{f}_i$ element-wise to each row of $\mathbf{M}_i$ and obtain a matrix
\begin{align*}
    \mathbf{P}_i &= \mathbf{M}_i \cdot \mathbf{f}_i \\ &= \begin{bmatrix}
       v_{11} \cdot (\lambda_1^\frac{1}{2})^2 \cdot (v_{i1})^2 & \dots & v_{1n} \cdot (\lambda_1^\frac{1}{2})^2 \cdot (v_{in})^2 \\
    & \ddots & \\
    v_{n1} \cdot (\lambda_1^\frac{1}{2})^2 \cdot (v_{i1})^2 & \dots & v_{nn} \cdot (\lambda_n^\frac{1}{2})^2 \cdot (v_{in})^2
        \end{bmatrix} \\
&= \begin{bmatrix}
       v_{11} \cdot \lambda_1 \cdot (v_{i1})^2 & \dots & v_{1n} \cdot \lambda_n \cdot (v_{in})^2 \\
    & \ddots & \\
    v_{n1} \cdot \lambda_1 \cdot (v_{i1})^2 & \dots & v_{nn} \cdot \lambda_n \cdot (v_{in})^2
        \end{bmatrix}.
\end{align*}
Further, we divide each row of $\mathbf{M}_i$ element-wise by $\mathbf{f}_i$ and obtain a matrix
\begin{align*}
    \mathbf{Q}_i &= \mathbf{M}_i / \mathbf{f}_i \\ &= \begin{bmatrix}
       \dfrac{v_{11}}{f(\{v_{o1} \mid o \in [n] \})} & \dots & \dfrac{v_{1n}}{f(\{v_{on} \mid o \in [n] \})}\\
    & \ddots & \\
    \dfrac{v_{n1}}{f(\{v_{o1} \mid o \in [n] \})} & \dots & \dfrac{v_{nn}}{f(\{v_{on} \mid o \in [n] \})}
        \end{bmatrix}.
\end{align*}
Hence, we also need to ensure that
\begin{equation*}
    f(\{v_{oj} \mid o \in [n] \}) \neq 0,
\end{equation*}
for all $j \in [n]$. Now, we define
\begin{equation*}
    \mathbf{P} = \sum_i \mathbf{P}_i
\end{equation*}
and
\begin{equation*}
    \mathbf{Q} = \sum_i \mathbf{Q}_i.
\end{equation*}
and have that
\begin{align*}
    (\mathbf{P}\mathbf{Q}^T)_{k,l} &=
        \sum_\ell \Big( v_{k\ell} \cdot \lambda_\ell \cdot f(\{v_{o\ell} \mid o \in [n] \})  \cdot \sum_i v_{i\ell}^2 \Big) \cdot \Big( \dfrac{v_{l\ell}}{f(\{v_{o\ell} \mid o \in [n] \})} \Big) \\
&= \sum_\ell v_{k\ell} \cdot v_{l\ell} \cdot \lambda_\ell,
\end{align*}
where we recall that the eigenvectors are normalized and hence, $\sum_i v_{i\ell}^2 = 1$.

In the last step of $g$, we return the matrix
\begin{equation*}
    \begin{bmatrix}
        \vec{P} & \vec{Q}
    \end{bmatrix} \in \mathbb{R}^{n \times 2n}.
\end{equation*}
We will now show that this matrix is node- and adjacency-identifying.
To this end, note that
\begin{equation*}
    (\mathbf{P}\mathbf{Q}^T)_{k,l} = (\vec{V}\Sigma\vec{V}^T)_{k, l}
\end{equation*}
and hence,
\begin{equation*}
    \mathbf{P}\mathbf{Q}^T = \vec{V}\Sigma\vec{V}^T = \vec{L}.
\end{equation*}
We distinguish between two cases. If $\vec{L}$ is the non-normalized graph Laplacian, then according to \Cref{lemma:laplacian_factorization_pq}, the matrix
\begin{equation*}
    \begin{bmatrix}
        \vec{P} & \vec{Q}
    \end{bmatrix}
\end{equation*}
is node- and adjacency-identifying and hence, according to \Cref{lemma:node_and_adj_subspace}, $f_\text{SPE}$ is node- and adjacency-identifying.
Further, if $\vec{L}$ is the normalized graph Laplacian, according to \Cref{lemma:laplacian_degree_norm}, $f_\text{SPE}$ is sufficiently node- and adjacency-identifying.

To summarize, we have shown that there exist functions $g, h_1, \dots, h_n$ such that if, in SPE, we replace $\rho$ with $g$ and $\phi_\ell$ with $h_\ell$, then the resulting encoding $f_\text{SPE}$ is sufficiently node- and adjacency-identifying.
To complete the proof, we will now show that we can approximate $g, h_1, \dots, h_n$ with $\rho, \phi_1, \dots, \phi_n$ arbitrarily close. 
Since our domain is compact, we can use $\phi_\ell$ to approximate $h_\ell$ arbitrarily close, i.e., we have that
\begin{equation*}
    \big|\big| \phi_\ell - h_\ell \big|\big|_\text{F} < \epsilon,
\end{equation*}
for all $\epsilon > 0$.
Further, $g$ consists of a sequence of permutation equivariant steps and is thus, permutation equivariant. Since the domain of $g$ is compact and, by construction, $g$ and $\rho$ have the same domain and co-domain, and since $\rho$ can approximate any permutation equivariant function on its domain and co-domain arbitrarily close, we have that
$\rho$ can also approximate $g$ arbitrarily close. As a result, SPE can approximate a sufficiently node- and adjacency-identifying encoding arbitrarily close, and hence, by definition, we have that SPE is also sufficiently node- and adjacency-identifying. This completes the proof.
\end{proof}

\section{Learnable embeddings}
Here, we pay more attention to learnable embeddings, which play an important role throughout our work. Although our definition of learnable embeddings is fairly abstract, learnable embeddings are very commonly used in practice, e.g., in tokenizers of language models \citep{Vaswani2017}. Since many components of graphs, such as the node labels, edge types, or node degrees are discrete, learnable embeddings are useful to embed these discrete features into an embedding space. In our work, different embeddings are typically joined via sum, e.g., in \Cref{eq:1_wl_token_embeddings} or \Cref{eq:kwl_token_embeddings}. Summing embeddings is much more convenient in practice since every embedding has the same dimension $d$. In contrast, joining embeddings via concatenation can lead to very large $d$, unless the underlying embeddings are very low-dimensional.
However, in our theorems joining embeddings via concatenation is much more convenient. To bridge theory and practice in this regard, in what follows we establish under what conditions summed embeddings can express concatenated embeddings and vice versa. Specifically, we show that summed embeddings can still act as if concatenating lower-dimensional embeddings but with more flexibility in terms of joining different embeddings.

The core idea is that we show under which operations one may replace a learnable embedding $\embed{e}$ to $\mathbb{R}^d$ with a lower-dimensional learnable embedding $\embed{e}'$ to $\mathbb{R}^{d'}$ while preserving the injectivity of $\embed{e}$. This property is useful since it allows us to, for example, add two learnable embeddings without losing expressivity. As a result, we can design a tokenizer that is practically useful without sacrificing theoretical guarantees.

We begin with a projection of a concatenation of learnable embeddings.
\begin{lemma}[Projection of learnable embeddings]\label{lemma:proj_learnable_embed}
Let $\embed{e}'_1, \dots, \embed{e}'_k$ be learnable embeddings from $\mathbb{N}$ to $\mathbb{R}^p$ for some $p \geq 1$. If $d \geq k$ then, there exists learnable embeddings $\embed{e}_1, \dots, \embed{e}_k$ as well as a projection matrix $\vec{W} \in \mathbb{R}^{d \cdot k \times k \cdot p}$ such that for $v_1, \dots, v_k \in \mathbb{N}$
\begin{equation*}
\big[\embed{e}_i(v_i)\big]_{i=1}^k \vec{W} = \big[\embed{e}'_i(v_i)\big]_{i=1}^k \in \mathbb{R}^{k \cdot p},
\end{equation*}
where 
it holds for all $v, w \in \mathbb{N}$ and all $i \in \{1, \dots, k\}$ that
\begin{equation*}
    \embed{e}_i(v) = \embed{e}_i(w) \Longleftrightarrow \embed{e}'_i(v) = \embed{e}'_i(w).
\end{equation*}
\end{lemma}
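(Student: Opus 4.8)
The plan is to construct the lower-dimensional embeddings $\embed{e}_1, \dots, \embed{e}_k$ and the projection matrix $\vec{W}$ explicitly, so that $\vec{W}$ simply "reads off" the original embeddings $\embed{e}'_i$ from the concatenated input while discarding an auxiliary bookkeeping component. The key obstruction to overcome is the following: if we were to naively set $\embed{e}_i = \embed{e}'_i$, then the concatenation $\big[\embed{e}_i(v_i)\big]_{i=1}^k$ lives in $\mathbb{R}^{k \cdot p}$, and there is no $k\cdot p \times k \cdot p$ matrix that recovers $\big[\embed{e}'_i(v_i)\big]_{i=1}^k$ from itself other than the identity --- which is fine, but then $\vec{W}$ would have the wrong shape ($d \cdot k \times k \cdot p$ with $d \ge k$, not $k \cdot p \times k \cdot p$). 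So the actual content is a dimension-counting argument showing the target shape can be accommodated, together with verifying that the new embeddings can be taken to have exactly the same fibers (level sets) as the old ones.

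Concretely, I would proceed as follows. First, define each new embedding $\embed{e}_i \colon \mathbb{N} \to \mathbb{R}^d$ by $\embed{e}_i(v) \coloneqq \big[\embed{e}'_i(v) \ \ \vec{c}_i(v)\big]$, where $\vec{c}_i(v) \in \mathbb{R}^{d - p}$ is any padding; the simplest choice is $\vec{c}_i(v) \coloneqq \vec{0}$, giving $\embed{e}_i(v) = \big[\embed{e}'_i(v) \ \ \vec{0}\big] \in \mathbb{R}^d$. (Here we use $d \ge k \ge 1$ only to guarantee $d \ge 1$; if one wants $d \ge p$ one should note $p$ is a free parameter and in the intended application $p$ is small, but in fact the statement as written requires no relation between $d$ and $p$ beyond $d\ge k$, so padding with zeros when $d < p$ would truncate --- I would instead observe that we may assume $d \ge p$ after enlarging $p$ is not allowed, so the clean move is: realize each $\embed{e}'_i$ inside $\mathbb{R}^d$ by first noting learnable embeddings into $\mathbb{R}^p$ with finitely-supported-in-practice images can be replaced by embeddings into $\mathbb{R}^{\min(d,\cdot)}$; cleanest is simply to assume $d\ge p$ is not needed because $\vec{W}$ can also re-expand). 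To keep the plan honest: the intended reading is that $d$ is the model's embedding dimension and is "large enough", so I would carry out the argument under the reading that each $\embed{e}_i$ maps into $\mathbb{R}^d$ with its first $p$ coordinates equal to $\embed{e}'_i(v)$ and the rest zero, which requires $d \ge p$; I would flag that the paper's hypothesis should read $d \ge \max(k,p)$ or that $p \le d$ is implicit.

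Second, construct $\vec{W} \in \mathbb{R}^{d\cdot k \times k \cdot p}$ blockwise: write the input $\big[\embed{e}_i(v_i)\big]_{i=1}^k$ as $k$ consecutive blocks of length $d$, the $i$-th being $\big[\embed{e}'_i(v_i) \ \ \vec{0}\big]$. Let $\vec{W}$ be the block-diagonal-like matrix whose $i$-th block (rows $(i-1)d+1,\dots,id$; columns $(i-1)p+1,\dots,ip$) is $\big[\begin{smallmatrix}\vec{I}_p\\ \vec{0}\end{smallmatrix}\big] \in \mathbb{R}^{d \times p}$ and all other entries zero. Then a direct block computation gives $\big[\embed{e}_i(v_i)\big]_{i=1}^k \vec{W} = \big[\embed{e}'_i(v_i)\big]_{i=1}^k$, as required. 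Third, verify the fiber condition: since $\embed{e}_i(v) = \big[\embed{e}'_i(v)\ \ \vec{0}\big]$, we have $\embed{e}_i(v) = \embed{e}_i(w) \iff \embed{e}'_i(v) = \embed{e}'_i(w)$ trivially (the padding is constant). Finally, note each $\embed{e}_i$ is itself a learnable embedding: it is $\embed{e}'_i$ composed with the fixed linear (affine, zero-padding) map $\vec{x} \mapsto \big[\vec{x}\ \ \vec{0}\big]$, which can be absorbed into the neural network parameterizing $\embed{e}'_i$ (append a layer, or pad the output layer with zero weights), so $\embed{e}_i$ remains parameterized by a neural network. The main obstacle, as indicated, is purely the bookkeeping of matrix/block dimensions and making sure the claimed shape $d\cdot k \times k\cdot p$ with $d \ge k$ is consistent with $p$ --- there is no deep mathematical difficulty, only careful index management.
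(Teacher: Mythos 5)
Your approach is essentially the paper's: embed each $\embed{e}'_i$ into $\mathbb{R}^d$ by padding with a constant (zero) tail, and take $\vec{W}$ to be a block selection matrix that reads off the relevant coordinates of each consecutive $d$-block. The paper's write-up is terser and in effect treats only the case $p = 1$: it sets $\embed{e}_i(v)_1 = \embed{e}'_i(v)$ (a scalar assignment, which only type-checks when $p=1$), pads the remaining coordinates with zeros, and has $\vec{W}$ pick out one coordinate per $d$-block. In that regime no relation between $d$ and $p$ is needed, so the stated hypothesis $d \geq k$ does no real work there. You handle general $p$, which forces $d \geq p$ for the zero-padding step, and you are right to flag that the hypothesis $d \geq k$ as written does not deliver this; it reads like a misstatement of $d \geq p$. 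Since every invocation of this lemma in the paper's later arguments takes $p = 1$, the discrepancy is harmless in context, but your block-diagonal description of $\vec{W}$ (each $d \times p$ block being $\vec{I}_p$ stacked over a zero block) is the cleaner and more general formulation of what the paper's $\vec{W}$ is doing. Your closing observations, that zero-padding can be absorbed into the parameterizing network so $\embed{e}_i$ remains a learnable embedding, and that the fiber condition holds trivially because the padding is constant, spell out two steps the paper leaves implicit.
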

\begin{proof}
For a learnable embedding $\embed{e}$ and some $v \in \mathbb{N}$, we denote with $\embed{e}(v)_j$ the $j$-th element of the vector $\embed{e}(v)$.
Since learnable embeddings are from $\mathbb{N}$ to $\mathbb{R}^d$, for every $i$ we define $\embed{e}_i$ such that for every $v \in \mathbb{N}$, $\embed{e}_i(v)_1 = \embed{e}'_i(v)$ and $\embed{e}_i(v)_j = 0$ for $j > 0$.

Now, we define $\vec{W}$ as follows. For row $l$ and column $j$, we set $\vec{W}_{lj} = 1$ if $(l - 1)$ is a multiple of $d$, i.e., $l = 1, (d+1), 2(d+1), \dots$. Otherwise we set $\vec{W}_{lj} = 0$. Intuitively, applying $\vec{W}$ to $\big[\embed{e}_i(v_i)\big]_{i=1}^k$ will select the first, $d$-th, $2d$-th, $\dots$, $kd$-th, element of $\big[\embed{e}_i(v_i)\big]_{i=1}^k$, corresponding to $v_1, \dots, v_k$, respectively, according to our construction of $\embed{e}_i$.
Now, we have that %
\begin{equation*}
\big[\embed{e}_i(v_i)\big]_{i=1}^k \vec{W} = \big[\embed{e}'_1(v_1) \, \dots \, \embed{e}'_k(v_k)\big] = \big[\embed{e}'_i(v_i)\big]_{i=1}^k.
\end{equation*}
Further we have by construction that for all $v, w \in \mathbb{N}$ and all $i \in \{1, \dots, k\}$
\begin{equation*}
    \embed{e}_i(v) = \embed{e}_i(w) \Longleftrightarrow \embed{e}'_i(v) = \embed{e}'_i(w).
\end{equation*}
This shows the statement.
\end{proof}
Next, we turn to a composition of learnable embeddings, namely the structural embeddings $\vec{P}$, as defined in \Cref{sec:1wl_pure}. In particular, we show next that a projection of multiple structural embeddings preserves node- and adjacency identifiability of the individual structural embeddings in the new embedding space.

\begin{lemma}[Projection of structural embeddings]\label{lemma:proj_structural_embed}
Let $G$ be a graph with $n$ nodes and let $\vec{P} \in \mathbb{R}^{n \times d}$ be structural embeddings for $G$. Let $k > 1$. If $d \geq k \cdot (2n + 1)$ then, there exists a parameterization of $\vec{P}$ as well as a projection matrix $\vec{W} \in \mathbb{R}^{k \cdot d \cdot k \times (2n + 1)}$ such that for any $v_1, \dots, v_k \in V(G)$,
\begin{equation*}
\big[\vec{P}(v_i)\big]_{i=1}^k \vec{W} = \big[\vec{P}'(v_i)\big]_{i=1}^k \in \mathbb{R}^{n \times d},
\end{equation*}
where $\vec{P}' \in \mathbb{R}^{n \times d'}$ is also a structural embedding for $G$ and it holds that if $\vec{P}$ is sufficiently node- and adjacency-identifying, then so is $\vec{P}'$.
\end{lemma}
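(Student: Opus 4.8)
The plan is to use the freedom to choose the parameterization of $\vec{P}$, to plant a canonical sufficiently node- and adjacency-identifying structural embedding of width $2n+1$ inside it, and then to recover it with a $0/1$ selection matrix, exactly in the style of the packing argument of \Cref{lemma:proj_learnable_embed}. Concretely, since $2n+1 \le k(2n+1) \le d$, I would first apply \Cref{theorem:node_and_adj_ident_full} with embedding dimension $2n+1$ to obtain a parameterization of a structural embedding $\vec{P}' \in \mathbb{R}^{n \times (2n+1)}$ for $G$ -- say with $\embed{LPE}$ as node-level PE -- that is sufficiently node- and adjacency-identifying; set $d' \coloneqq 2n+1$. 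I would then fix the parameterization of the width-$d$ embedding $\vec{P}$ so that $\vec{P}(v)$ agrees with $\vec{P}'(v)$ in its first $d'$ coordinates and is $\vec{0}$ in the remaining $d-d'$: take the internal $\embed{deg}$ and $\embed{PE}$ of $\vec{P}$ to be the corresponding width-$d'$ embeddings of $\vec{P}'$ padded with zeros, and let the outer $\textsf{FFN}$ copy the first $d'$ coordinates of its (compact-domain) input while zeroing the tail, which it can approximate to arbitrary Frobenius accuracy -- enough, since the identifying notions are defined up to $\epsilon$-approximation. Then $\vec{P}$ is a bona fide structural embedding of width $d$ whose width-$d'$ sub-block equals $\vec{P}'$, so by \Cref{lemma:node_and_adj_subspace} it is sufficiently node- and adjacency-identifying, and this is a parameterization for which ``$\vec{P}$ identifying $\Rightarrow$ $\vec{P}'$ identifying'' holds (both statements being true, the latter by construction).

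For the projection, write $[\vec{P}(v_i)]_{i=1}^k$ for the width-$kd$ concatenation of the $k$ node embeddings, each block of width $d$ carrying its informative part in its first $d'$ coordinates. I would define $\vec{W}$ as the $0/1$ matrix that, for every $i \in [k]$, copies the first $d'$ entries of the $i$-th block (which is $\vec{P}'(v_i)$) into coordinates $(i-1)d'+1, \dots, i d'$ of the output and sends everything else to $\vec{0}$ -- the same kind of selection matrix used in \Cref{lemma:proj_learnable_embed}. Since $kd' = k(2n+1) \le d$, these $k$ blocks fit inside the target width, so $[\vec{P}(v_i)]_{i=1}^k \vec{W} = [\vec{P}'(v_i)]_{i=1}^k$ (padded by zeros if $kd' < d$), with $\vec{P}'$ a structural embedding for $G$ that is sufficiently node- and adjacency-identifying; this gives the claim.

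The main obstacle to guard against is ensuring that $\vec{P}'$ really is of the structural-embedding form $\textsf{FFN}(\embed{deg}(\cdot) + \embed{PE}(\cdot))$ and not merely an arbitrary linear slice of $\vec{P}$ -- which is why the argument \emph{builds} $\vec{P}$ out of a pre-chosen canonical $\vec{P}'$ from \Cref{theorem:node_and_adj_ident_full} rather than trying to extract an identifying sub-embedding from a given $\vec{P}$; recall that \Cref{lemma:node_and_adj_subspace} runs only in the direction sub-block $\Rightarrow$ whole, not the converse. Beyond that, everything is dimension bookkeeping, and the hypothesis $d \ge k(2n+1)$ is exactly what leaves room for $k$ independent width-$(2n+1)$ structural embeddings inside the width-$d$ output.
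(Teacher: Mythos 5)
Your proof is correct and matches the paper's: both plant the canonical width-$(2n+1)$ sufficiently node- and adjacency-identifying structural embedding $\vec{P}'$ from \Cref{theorem:node_and_adj_ident_full} into the leading coordinates of $\vec{P}$ with a zero tail, invoke \Cref{lemma:node_and_adj_subspace} to lift identifiability to $\vec{P}$, and use a $0/1$ block-selection matrix in the style of \Cref{lemma:proj_learnable_embed} to recover $\big[\vec{P}'(v_i)\big]_{i=1}^k$. One small imprecision to fix: the outer $\textsf{FFN}$ of $\vec{P}$ must \emph{apply} the inner FFN of $\vec{P}'$ to the first $d'$ coordinates rather than merely copy them, since otherwise the leading block of $\vec{P}(v)$ would equal $\embed{deg}'(v)+\embed{PE}'(v)$ rather than $\vec{P}'(v)$.
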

\begin{proof}
We know from \Cref{theorem:node_and_adj_ident_full} that there exists sufficiently node- and adjacency-identifying structural embeddings $\vec{P}'$ in $\mathbb{R}^{n \times (2d + 1)}$.
Since $d \geq k \cdot (2n + 1)$, we define 
\begin{equation*}
   \vec{P} = \begin{bmatrix}
       \vec{P}' & \vec{0}
   \end{bmatrix},
\end{equation*}
where $\vec{0} \in \mathbb{R}^{n \times (k-1) \cdot (2n + 1)}$ is an all-zero matrix. Since $\vec{P}$ has a sufficiently node- and adjacency-identifying subspace $\vec{P}'$, by \Cref{lemma:node_and_adj_subspace}, $\vec{P}$ is sufficiently node- and adjacency-identifying. Further, by setting
\begin{equation*}
   \vec{W} = \vec{I}_{n \times k \cdot (2n + 1)},
\end{equation*}
where $\vec{I}_{n \times k \cdot (2n + 1)}$ are the first $n$ rows of the $k \cdot (2n + 1)$-dimensional identity matrix and $\vec{0} \in \mathbb{R}^{n \times (k-1) \cdot (2n + 1)}$ is again an all-zero matrix.

Then, we have that for nodes $v_1, \dots, v_k \in V(G)$,
\begin{equation*}
\big[\vec{P}(v_i)\big]_{i=1}^k \vec{W} =
\big[\vec{P}'(v_i)\big]_{i=1}^k \in \mathbb{R}^{n \times d},
\end{equation*}
and $\vec{P}'$ is by definition sufficiently node- and adjacency-identifying, from which the statement follows.
\end{proof}

We continue by showing that our construction of token embeddings in \Cref{eq:1_wl_token_embeddings} can be equally well represented by another form that will be easier to use in proving \Cref{theorem:1_wl}.
\begin{lemma}\label{lemma:concatenated_1wl_token_embeddings}
Let $G$ be a graph with node features $\vec{F} \in \mathbb{R}^{n \times d}$. For every tokenization $\vec{X}^{(0,1)} \in \mathbb{R}^d$ according to \Cref{eq:1_wl_token_embeddings} with (sufficiently) adjacency reconstructing structural embeddings $\vec{P}(v)$, there exists a parameterization of $\vec{X}^{(0,1)}$ such that for node $v \in V(G)$ as
\begin{equation*}
  \vec{X}^{(0,1)}(v) = \begin{bmatrix}
     \vec{F}'(v) & \vec{0} & \embed{deg}'(v) & \vec{P}'(v)
  \end{bmatrix},
\end{equation*}
with
\begin{equation*}
   \vec{P}'(v) = \textsf{FFN}'(\embed{deg}'(v) + \embed{PE}(v)),
\end{equation*}
such that $\vec{F}'(v) \in \mathbb{R}^{p}$, $\vec{0} \in \mathbb{R}^{p}$, $\embed{deg}': \mathbb{N} \rightarrow \mathbb{R}^{r}$ and $\textsf{FFN}': \mathbb{R}^d \rightarrow \mathbb{R}^{s}$ for $d = 2p + r + s$ and where it holds for every $v,w \in V(G)$ that
\begin{equation*}
  \vec{F}(v) = \vec{F}(w) \Longleftrightarrow \vec{F}'(v) = \vec{F}'(w)
\end{equation*}
and
\begin{equation*}
  \embed{deg}(v) = \embed{deg}(w) \Longleftrightarrow \embed{deg}'(v) = \embed{deg}'(w)
\end{equation*}
and
\begin{equation*}
  \vec{P}(v) = \vec{P}(w) \Longleftrightarrow \vec{P}'(v) = \vec{P}'(w)
\end{equation*}
and $\vec{P}'(v)$ is (sufficiently) adjacency-identifying.
\end{lemma}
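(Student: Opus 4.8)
The plan is to construct, from the given tokenization $\vec{X}^{(0,1)}(v)=\vec{F}(v)+\vec{P}(v)$, a new parameterization that \emph{relocates} each ingredient into its own coordinate block, so that the sum in \Cref{eq:1_wl_token_embeddings} becomes the claimed block concatenation. First, since $\vec{F}$ is a learnable embedding consistent with $\ell$, by the padding argument of \Cref{lemma:proj_learnable_embed} we may re-parameterize it as $\vec{F}(v)=\begin{bmatrix}\vec{F}'(v)&\vec{0}\end{bmatrix}$ with $\vec{F}'\colon V(G)\to\Rb^{p}$ still consistent with $\ell$, so $\vec{F}(v)=\vec{F}(w)\iff\vec{F}'(v)=\vec{F}'(w)$. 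Next, I would choose the degree embedding to land in the block indexed by $\{2p+1,\dots,2p+r\}$, i.e.\ $\embed{deg}(v)=\begin{bmatrix}\vec{0}&\embed{deg}'(v)&\vec{0}\end{bmatrix}$ for an injective encoding $\embed{deg}'\colon\Nb\to\Rb^{r}$ of the (finitely many, as no node is isolated) node degrees; since both $\embed{deg}$ and $\embed{deg}'$ then separate exactly the degrees, $\embed{deg}(v)=\embed{deg}(w)\iff\embed{deg}'(v)=\embed{deg}'(w)$. Finally, I would arrange the node-level PE to land in the complementary block $\{2p+r+1,\dots,d\}$, which is possible because the output layer of the PE network (LPE or SPE) can be chosen to target exactly these coordinates and be zero elsewhere. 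Then $\embed{deg}(v)+\embed{PE}(v)=\begin{bmatrix}\vec{0}_{2p}&\embed{deg}'(v)&\embed{PE}(v)\end{bmatrix}$ has its two relevant summands on disjoint coordinates.

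Given this layout, I would choose $\textsf{FFN}$ to act block-wise: on input $\begin{bmatrix}\vec{0}_{2p}&a&b\end{bmatrix}$ it returns $\begin{bmatrix}\vec{0}_{2p}&a&\textsf{FFN}'(a+b)\end{bmatrix}$ for a fresh feed-forward network $\textsf{FFN}'$, where $a+b$ is the appropriately zero-padded sum. This is realizable by an MLP: the first $2p$ outputs are the constant $\vec{0}$, the middle block is the identity (implement $x\mapsto\relu(x)-\relu(-x)$ coordinate-wise), and the last block is $\textsf{FFN}'$ applied to $a+b$, all computed by disjoint hidden units in parallel. Consequently the new structural embedding is
\begin{equation*}
  \textsf{FFN}\big(\embed{deg}(v)+\embed{PE}(v)\big)=\begin{bmatrix}\vec{0}_{p}&\vec{0}_{p}&\embed{deg}'(v)&\vec{P}'(v)\end{bmatrix},\qquad \vec{P}'(v)\coloneqq\textsf{FFN}'\big(\embed{deg}'(v)+\embed{PE}(v)\big),
\end{equation*}
and adding $\vec{F}(v)$ yields exactly the claimed form of $\vec{X}^{(0,1)}(v)$ with $d=2p+r+s$. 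What remains is to pick $\textsf{FFN}'$ so that $\vec{P}'$ is sufficiently adjacency-identifying and has the same collision pattern as the given $\vec{P}$; everything else then holds by construction.

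The main obstacle is precisely this last point: producing a $\vec{P}'\in\Rb^{n\times s}$ that is simultaneously (i) of the reduced dimension $s<d$, (ii) \emph{(sufficiently) adjacency-identifying}, and (iii) equal to $\vec{P}$ exactly on its collision pattern, including the possible collisions of $\vec{P}$ at twin vertices. The idea is that the at most $n$ distinct values of $\vec{P}$ lie in an affine subspace of dimension $\le n-1$, so $\vec{P}=\vec{P}_{\text{low}}\vec{L}$ for some $\vec{P}_{\text{low}}$ of dimension $\le n$ and a full-row-rank $\vec{L}$; because the bilinear form $\vec{P}\vec{W}^{Q}(\vec{P}\vec{W}^{K})\trans$ of \Cref{adjacency_identifying} satisfies $\vec{P}\vec{W}^{Q}(\vec{P}\vec{W}^{K})\trans=\vec{P}_{\text{low}}\,\vec{M}\,\vec{P}_{\text{low}}\trans$ with $\vec{M}=\vec{L}\vec{W}^{Q}(\vec{L}\vec{W}^{K})\trans$, and any square matrix factors as a product of two square matrices, the embedding $\vec{P}_{\text{low}}$ is again (sufficiently) adjacency-identifying and keeps the collision pattern of $\vec{P}$ (as $\vec{L}$ has full row rank). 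Taking $s\ge n$ and letting $\vec{P}'$ be $\vec{P}_{\text{low}}$ zero-padded to $\Rb^{s}$ preserves sufficient adjacency-identifiability by \Cref{lemma:node_and_adj_subspace}, and, since $\vec{P}$ (hence $\vec{P}_{\text{low}}$) is a function of $\embed{deg}'(v)+\embed{PE}(v)$ — recall $\vec{P}$ has the form \Cref{eq:1_wl_structural_embeddings} with the same node-level PE — the fresh $\textsf{FFN}'$ can realize $\vec{P}'$ over the compact domain up to the $\epsilon$-slack that "sufficiently" permits (cf.\ \Cref{lemma:adjacency_ident_approx} and the proof of \Cref{theorem:node_and_adj_ident_full}, which handles exactly this approximation bookkeeping). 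The remaining steps — tracking $p,r,s$ under $d=2p+r+s$ and verifying the MLPs realize the stated block maps — are routine.
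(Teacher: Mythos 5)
Your proof and the paper's agree on the easy bookkeeping (relocating $\vec{F}$, $\embed{deg}$, and the PE into disjoint coordinate blocks so that the sum of \Cref{eq:1_wl_token_embeddings} reads as concatenation, and choosing the FFN to act block-wise), but the core step — manufacturing a lower-dimensional $\vec{P}'$ that remains (sufficiently) adjacency-identifying and has the right collision pattern — is handled by a genuinely different argument. The paper simply discards the given structural embeddings and builds a fresh $\vec{P}'$ of dimension $s=2n+1$ via the explicit LPE construction of \Cref{theorem:node_and_adj_ident_full}, then invokes \Cref{lemma:proj_structural_embed}. You instead keep the given $\vec{P}$ and compress it through a rank factorization $\vec{P}=\vec{P}_{\text{low}}\vec{L}$ with $\vec{L}$ of full row rank: the bilinear form $\vec{P}\vec{W}^Q(\vec{P}\vec{W}^K)^T$ equals $\vec{P}_{\text{low}}\vec{M}\vec{P}_{\text{low}}^T$ with $\vec{M}=\vec{L}\vec{W}^Q(\vec{L}\vec{W}^K)^T$, any such square $\vec{M}$ factors as $\vec{W}^{Q'}(\vec{W}^{K'})^T$, and full row rank of $\vec{L}$ makes $v\mapsto\vec{P}_{\text{low}}(v)$ injective wherever $v\mapsto\vec{P}(v)$ is. This is a clean and slightly stronger argument: it gives $s\geq n$ rather than $s=2n+1$, and it preserves the collision pattern of the \emph{given} $\vec{P}$ literally, whereas the paper's LPE-based $\vec{P}'$ is node-identifying (hence collision-free) and so matches the iff only under the tacit assumption that the original $\vec{P}$ was collision-free too.

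Where your argument is thinner is in the ``sufficiently'' case. The definition of sufficiently adjacency-identifying is an approximability statement: $\vec{Q}$ is close (for every $\epsilon>0$) to some \emph{exactly} adjacency-identifying $\hat{\vec{P}}$. Your rank factorization is applied to $\vec{P}$, but to conclude that $\vec{P}_{\text{low}}$ is \emph{sufficiently} adjacency-identifying you need to exhibit an \emph{exactly} adjacency-identifying low-dimensional target that $\vec{P}_{\text{low}}$ (equivalently, the output of $\textsf{FFN}'$) approximates. The natural fix is to run the rank argument on the exact target $\hat{\vec{P}}$ instead, obtaining an exactly adjacency-identifying $\hat{\vec{P}}_{\text{low}}$, and to argue (via universal approximation plus \Cref{lemma:adjacency_ident_approx}) that $\textsf{FFN}'$ can approximate $\hat{\vec{P}}_{\text{low}}$ arbitrarily well. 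The references you cite are the right tools, but as written you apply them to the perturbed matrix, not the target, so the triangle-inequality chain is not set up correctly. The paper sidesteps this entirely by discarding the given $\vec{P}$ and starting from the LPE construction, which is less faithful to the quantifier in the statement but avoids the approximation subtlety. Both routes lead to the result; yours needs the one sentence of care above, the paper's needs the tacit assumption that the new $\vec{P}'$ being node-identifying suffices for the collision iff.
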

\begin{proof}
We set $p = \sfrac{(d-(2n + 1))}{2} - 1$, $r = 2$ and $s = (2n + 1)$. Indeed, we have that
\begin{equation*}
    2p + r + s = 2(\sfrac{(d-(2n + 1))}{2} - 1) + 2 + (2n + 1) = d-(2n + 1) + (2n + 1) = d.
\end{equation*}
We continue by noting that the node features $\vec{F}$ are mapped from $\mathbb{N}$ to $\mathbb{R}^d$ and can be obtained from a learnable embedding. More importantly, we can define
\begin{equation*}
    \vec{F}'(v) = \begin{bmatrix}
        \ell(v) & 0 & \dots & 0
    \end{bmatrix},
\end{equation*}
where $\vec{F}'(v) \in \mathbb{R}^{d}$ is another learnable embedding for which it holds that
\begin{equation*}
   \vec{F}(v) = \vec{F}(w) \Longleftrightarrow \vec{F}'(v) = \vec{F}'(w), 
\end{equation*}
for all $v,w \in V(G)$.

Further, we can write
\begin{equation*}
    \embed{deg}(v) + \embed{PE}(v) = \begin{bmatrix}
        \embed{deg}'(v) & \embed{PE}(v)
    \end{bmatrix},
\end{equation*}
where $\embed{deg}'$ is another learnable embedding for the node degrees mapping to $\mathbb{R}^{r}$.
Moreover, we know from \Cref{theorem:node_and_adj_ident_full} that there exist (sufficiently) adjacency-identifying structural embeddings of dimension $(2n + 1)$. We choose the $\textsf{FFN}$ in \Cref{eq:1_wl_structural_embeddings} such that
\begin{equation*}
    \vec{P}(v) = \begin{bmatrix}
        \vec{0} & \embed{deg}'(v) & \vec{P}'
    \end{bmatrix},
\end{equation*}
where $\vec{0} \in \mathbb{R}^{2p}$ is an all-zero vector and
\begin{equation*}
    \vec{P}' = \textsf{FFN}'(\embed{deg}'(v) + \embed{PE}(v)),
\end{equation*}
with $\textsf{FFN}': \mathbb{R}^d \rightarrow \mathbb{R}^{s}$, another FFN inside of $\textsf{FFN}$. Note that such a $\textsf{FFN}$ exists without approximation. Further according to \Cref{lemma:proj_structural_embed}, $\vec{P}'$ is still (sufficiently) adjacency-identifying. Hence, we can write
\begin{equation*}
   \vec{X}^{(0,1)}(v) = \vec{F}'(v) + \vec{P}'(v) = \begin{bmatrix}
      \vec{F}'(v) & \vec{0} & \embed{deg}'(v) & \textsf{FFN}'(\embed{deg}'(v) + \embed{PE}(v))
    \end{bmatrix},
\end{equation*} 
where through the concatenation it is easy to verify that indeed $\vec{X}^{(0,1)}(v)$ has the desired properties, namely that
\begin{equation*}
  \vec{F}(v) = \vec{F}(w) \Longleftrightarrow \vec{F}'(v) = \vec{F}'(w)
\end{equation*}
and
\begin{equation*}
  \embed{deg}(v) = \embed{deg}(w) \Longleftrightarrow \embed{deg}'(v) = \embed{deg}'(w)
\end{equation*}
and
\begin{equation*}
  \vec{P}(v) = \vec{P}(w) \Longleftrightarrow \vec{P}'(v) = \vec{P}'(w)
\end{equation*}
and $\vec{P}'(v)$ is (sufficiently) adjacency-identifying. This shows the statement.
\end{proof}

Lastly, we show a similar result for \Cref{eq:kwl_token_embeddings}.
\begin{lemma}\label{lemma:concatenated_kwl_token_embeddings}
Let $G$ be a graph with node features $\vec{F} \in \mathbb{R}^{n \times d}$. For every tokenization $\vec{X}^{(0,k)} \in \mathbb{R}^d$ according to \Cref{eq:kwl_token_embeddings} with node- and adjacency reconstructing structural embeddings $\vec{P}(v)$, there exists a parameterization of $\vec{X}^{(0,k)}$ such that for $k$-tuple $\vec{v} = (v_1, \dots, v_k) \in V(G)^k$,
\begin{equation*}
  \vec{X}^{(0,k)}(v) = \begin{bmatrix}
     \vec{F}'(v_1) \,\,\,\, \dots \,\,\,\, \vec{F}'(v_k) & 
     \vec{0} &
     \embed{deg}'(v_1) & \dots & \embed{deg}'(v_k) & \vec{P}'(v_1) & \dots & \vec{P}'(v_k) & \embed{atp}'(\vec{v})
  \end{bmatrix},
\end{equation*}
with
\begin{equation*}
   \vec{P}'(v) = \textsf{FFN}'(\embed{deg}'(v) + \embed{PE}(v)),
\end{equation*}
such that $\vec{F}'(v) \in \mathbb{R}^{p}$, $\vec{0} \in \mathbb{R}^{k^2 \cdot p}$, $\embed{deg}': \mathbb{N} \rightarrow \mathbb{R}^{r}$, $\embed{atp}': \mathbb{N} \rightarrow \mathbb{R}^{o}$ and $\vec{P}' \in \mathbb{R}^{s}$ for $d = k^2 \cdot p + k \cdot p +  k\cdot r + k\cdot s + o$ and where it holds for every $v, w \in V(G)$,
\begin{equation*}
  \vec{F}'(v) = \vec{F}(w) \Longleftrightarrow \vec{F}'(v) = \vec{F}'(w)
\end{equation*}
and
\begin{equation*}
  \embed{deg}(v) = \embed{deg}(w) \Longleftrightarrow \embed{deg}'(v) = \embed{deg}'(w)
\end{equation*}
and
\begin{equation*}
  \vec{P}(v) = \vec{P}(w) \Longleftrightarrow \vec{P}'(v) = \vec{P}'(w)
\end{equation*}
and $\vec{P}'(v)$ is node and adjacency-identifying
and for every $\vec{v}, \vec{w} \in V(G)^k$
\begin{equation*}
  \embed{atp}(\vec{v}) = \embed{atp}(\vec{w}) \Longleftrightarrow \embed{atp}'(\vec{v}) = \embed{atp}'(\vec{w}).
\end{equation*}
\end{lemma}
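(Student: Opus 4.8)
The plan is to mirror the proof of \Cref{lemma:concatenated_1wl_token_embeddings}, lifting it through the tuple construction in \Cref{eq:kwl_token_embeddings}. First I would fix the block widths: choose $p \geq 1$, $r = 2$, $s = 2n+1$, and $o$ (the width of the atomic-type block) so that $d = k^2 p + k p + k r + k s + o$; the hypothesis $d \geq k(2n+1)$ (together with the freedom to make $\embed{atp}'$ low-dimensional) guarantees such a choice exists. Since $\vec{F}$, $\embed{deg}$, and $\embed{atp}$ are all obtained from learnable embeddings, I would replace each by an embedding $\vec{F}'$, $\embed{deg}'$, $\embed{atp}'$ that writes the relevant scalar (label, degree, atomic-type index) into its first coordinate and pads the rest with zeros; exactly as in \Cref{lemma:proj_learnable_embed}, this preserves the equivalence relations $\vec{F}(v) = \vec{F}(w) \Leftrightarrow \vec{F}'(v) = \vec{F}'(w)$, and similarly for $\embed{deg}$ and $\embed{atp}$.

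Next I would invoke \Cref{lemma:concatenated_1wl_token_embeddings} to parameterize each node-level token $\vec{X}^{(0,1)}(v_i)$ already in the concatenated form $[\vec{F}'(v_i)\ \vec{0}\ \embed{deg}'(v_i)\ \vec{P}'(v_i)]$; moreover, using \Cref{theorem:node_and_adj_ident_full} together with \Cref{lemma:node_and_adj_subspace} I would arrange $\vec{P}'$ to be (sufficiently) node \emph{and} adjacency-identifying in dimension $2n+1$, which is precisely what the $k$-tuple tokenization requires. Then in \Cref{eq:kwl_token_embeddings} the concatenation $[\vec{X}^{(0,1)}(v_i)]_{i=1}^k$ is a vector whose blocks are $\vec{F}'(v_i), \vec{0}, \embed{deg}'(v_i), \vec{P}'(v_i)$ interleaved over $i$, and I would choose the projection matrix $\mathbf{W}$ to be a $0/1$ selection-and-reordering matrix (a block analogue of the $\vec{W}$ in \Cref{lemma:proj_learnable_embed}) that regroups these blocks into the order $[\vec{F}'(v_1)\,\dots\,\vec{F}'(v_k)\ \vec{0}\ \embed{deg}'(v_1)\,\dots\,\embed{deg}'(v_k)\ \vec{P}'(v_1)\,\dots\,\vec{P}'(v_k)]$, consolidates the per-token padding into a single width-$k^2 p$ zero block, and reserves the final $o$ coordinates to receive $\embed{atp}'(\vec{v})$, which is added on rather than projected. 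Since permuting and zero-padding coordinates only pre- and post-composes with partial identities, the quadratic form $\tfrac{1}{\sqrt{d_k}}(\vec{P}'\vec{W}^Q)(\vec{P}'\vec{W}^K)^T$ witnessing identifiability is unchanged, so $\vec{P}'$ stays node- and adjacency-identifying after the rearrangement, exactly as in \Cref{lemma:proj_structural_embed}.

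Finally I would verify that the resulting vector has the stated form: the $\vec{F}'$, $\embed{deg}'$, $\vec{P}'$ blocks inherit their injectivity equivalences from the node-level construction and \Cref{lemma:concatenated_1wl_token_embeddings}, the $\embed{atp}'$ block from the learnable-embedding replacement, the all-zero block has the claimed width $k^2 p$ by the dimension accounting, and $\vec{P}'$ is node- and adjacency-identifying by \Cref{lemma:node_and_adj_subspace}. The main obstacle I anticipate is the bookkeeping of $\mathbf{W}$: one must simultaneously reorder the interleaved per-component blocks into grouped form, collapse the $k$ copies of the zero padding into a single block of the correct width while carving out room for the atomic-type block, and check that these purely combinatorial column operations leave the identifiability quadratic forms intact — routine, but requiring care of the same kind as in the proofs of \Cref{lemma:proj_learnable_embed} and \Cref{lemma:proj_structural_embed}.
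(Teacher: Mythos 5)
Your proposal is conceptually close to the paper's argument and relies on the same toolkit (\Cref{lemma:proj_learnable_embed}, \Cref{lemma:proj_structural_embed}, \Cref{theorem:node_and_adj_ident_full}, \Cref{lemma:node_and_adj_subspace}), but the route differs in one respect that creates the exact bookkeeping pitfall you flag at the end. The paper does \emph{not} go through \Cref{lemma:concatenated_1wl_token_embeddings}: it rewrites \Cref{eq:kwl_token_embeddings} as two separate terms $\big[\vec{F}(v_i)\big]_{i=1}^k\vec{W}^F + \big[\vec{P}(v_i)\big]_{i=1}^k\vec{W}^P + \embed{atp}(\vec{v})$ and applies \Cref{lemma:proj_learnable_embed} and \Cref{lemma:proj_structural_embed} directly to the raw $\vec{F}$ and $\vec{P}$, choosing $p=1$, $r=2$, $o=1$ and solving for $s$ from $d$. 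Your approach first fixes $s = 2n{+}1$ and invokes \Cref{lemma:concatenated_1wl_token_embeddings} for each node token, but that lemma imposes $d = 2p' + r' + s'$ with the zero block forced to width $p'$ (a constraint inherited from the role the zero block plays in the $1$-\textsf{WL} proof), whereas the tuple-level target needs $d = k^2 p + kp + kr + ks + o$; these are incompatible with $p=p', r=r', s=s'$ for any $k\geq 2$, so $\vec{W}$ must further \emph{shrink} the node-level $\vec{F}'$ block before regrouping. This is fixable (e.g., $\vec{F}'(v)=[\ell(v)\ 0\cdots0]$ concentrates the information in the first coordinate, so a selection matrix can drop the rest), but it means you are not using \Cref{lemma:concatenated_1wl_token_embeddings} verbatim—you are re-proving a variant with a differently-sized zero block while also strengthening the conclusion on $\vec{P}'$ from adjacency-identifying to node- \emph{and} adjacency-identifying. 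The paper avoids the whole reconciliation by never constraining the node-level token to a fixed $(p',p',r',s')$ decomposition in the first place. (As an aside, your single-$\vec{W}$ route is actually closer to the literal wording of \Cref{eq:kwl_token_embeddings} than the paper's two-matrix restatement; the two coincide only when $\vec{W}^F=\vec{W}^P$, which the paper's construction does not guarantee, so your instinct to work with the $\vec{X}^{(0,1)}$-level concatenation is not wrong—it just demands that you resolve the block-width mismatch explicitly rather than leaving it as ``routine.'')
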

\begin{proof}
We set $p = 1$, $r = 2$, $o = 1$ and $s = \frac{d - k^2 - 3k - 1}{k}$. Indeed, we have that
\begin{equation*}
     k^2 \cdot p + k \cdot p + k \cdot r + k \cdot s + o = k^2 + 3k + d - k^2 - 3k - 1 + 1 = d.
\end{equation*}

Recall \Cref{eq:kwl_token_embeddings} as
\begin{equation*}
    \vec{X}^{(0, k)}(\vec{v}) = \big[
        \vec{F}(v_i)
    \big]_{i=1}^k \mathbf{W}^F + \big[
        \vec{P}(v_i)
    \big]_{i=1}^k \mathbf{W}^P + \embed{atp}(\vec{v}),
\end{equation*}
where $\mathbf{W}^F \in \mathbb{R}^{d \cdot k \times d}$ and $\mathbf{W}^P \in \mathbb{R}^{d \cdot k \times d}$ are projection matrices, $\vec{P}$ are structural embeddings and $\vec{v} = (v_1, \dots, v_k)$ is a $k$-tuple. 

We continue by noting that the node features $\vec{F}$ are mapped from $\mathbb{N}$ to $\mathbb{R}^d$ and can be obtained from a learnable embedding. More importantly, we can define $\vec{F}'(v) = \ell(v)$ for which it clearly holds that
\begin{equation*}
   \vec{F}(v) = \vec{F}(w) \Longleftrightarrow \vec{F}'(v) = \vec{F}'(w), 
\end{equation*}
for all $v,w \in V(G)$.
Invoking \Cref{lemma:proj_learnable_embed}, there exists learnable embeddings $\vec{F}$ and a projection matrix $\mathbf{W}^{F,*} \in \mathbb{R}^{k \cdot d \times k \cdot p}$ such that
\begin{equation*}
    \big[
        \vec{F}(v_i)
    \big]_{i=1}^k \mathbf{W}^F = \begin{bmatrix}
       \vec{F}'(v_1) & \dots & \vec{F}'(v_k)
    \end{bmatrix} \in \mathbb{R}^{k \cdot p}.
\end{equation*}
Note that since we do not have an assumption on $d$, we can make $d$ arbitrarily large to ensure that $d \geq k$.
Hence, by defining
\begin{equation*}
    \mathbf{W}^{F} = \begin{bmatrix}
        \mathbf{W}^{F,*} & \vec{0}
    \end{bmatrix},
\end{equation*}
where $\vec{0} \in \mathbb{R}^{k \cdot d \times (d - k \cdot p)}$ is an all-zero matrix,
we can write
\begin{equation*}
    \big[
        \vec{F}(v_i)
    \big]_{i=1}^k \mathbf{W}^F = \begin{bmatrix}
       \vec{F}'(v_1) & \dots & \vec{F}'(v_k) & \vec{0}
    \end{bmatrix} \in \mathbb{R}^d,
\end{equation*}
where $\vec{0} \in \mathbb{R}^{d - k \cdot p}$ is an all-zero vector.

Moreover, we know from \Cref{theorem:node_and_adj_ident} that there exists adjacency-identifying structural embeddings of dimension $(2n + 1)$. We choose the $\textsf{FFN}$ in \Cref{eq:1_wl_structural_embeddings} such that
\begin{equation*}
    \vec{P}(v) = \begin{bmatrix}
        \vec{0} & \embed{deg}'(v) & \vec{P}'(v)
    \end{bmatrix},
\end{equation*}
where $\vec{0} \in \mathbb{R}^{k^2\cdot p + k\cdot p}$ is an all-zero vector and
\begin{equation*}
    \vec{P}'(v) = \textsf{FFN}'(\embed{deg}'(v) + \embed{PE}(v)),
\end{equation*}
with $\textsf{FFN}': \mathbb{R}^d \rightarrow \mathbb{R}^{s}$, another FFN inside of $\textsf{FFN}$. Note that such a $\textsf{FFN}$ exists without approximation. Further, according to \Cref{lemma:proj_structural_embed}, $\vec{P}'$ is still sufficiently adjacency-identifying, and there exists a projection matrix $\mathbf{W}^{P,*} \in \mathbb{R}^{k \cdot d \times d - o}$ such that
\begin{equation*}
    \big[
        \vec{P}(v_i)
    \big]_{i=1}^k \mathbf{W}^{P,*} = \begin{bmatrix}
       \vec{0} & \embed{deg}'(v_1) & \vec{P}'(v_1) & \dots & \vec{0} & \embed{deg}'(v_k) & \vec{P}'(v_k)
    \end{bmatrix} \in \mathbb{R}^{d-o}.
\end{equation*}
But then, there also exists a permutation matrix $\vec{Q} \in \{0, 1\}^{d - o \times d - o}$ such that
\begin{equation*}
    \big[
        \vec{P}(v_i)
    \big]_{i=1}^k \mathbf{W}^{P,*}\vec{Q} = \begin{bmatrix}
       \vec{0} & \embed{deg}'(v_1) & \dots & \embed{deg}'(v_k) & \vec{P}'(v_1) & \dots & \vec{P}'(v_k)
    \end{bmatrix} \in \mathbb{R}^{d -o},
\end{equation*}
where we simply re-order the entries of $\big[\vec{P}(v_i)\big]_{i=1}^k \mathbf{W}^{P,*}$. Note that now, $\vec{0} \in \mathbb{R}^{k^2\cdot p + k\cdot p}$, as we grouped the zero-vectors of $\big[\vec{P}(v_i)\big]_{i=1}^k \mathbf{W}^{P,*}$ into a single zero-vector of size $k^2\cdot p + k\cdot p$.

Note that since we do not have an assumption on $d$, we can make $d$ arbitrarily large to ensure that $d \geq k \cdot (2n + 1)$.
Hence, by defining
\begin{equation*}
    \mathbf{W}^{P} = \begin{bmatrix}
        \mathbf{W}^{P,*}\vec{Q} & \vec{0}
    \end{bmatrix},
\end{equation*}
where $\vec{0} \in \mathbb{R}^{o}$ is an all-zero vector,
we can write
\begin{equation*}
    \big[
        \vec{P}(v_i)
    \big]_{i=1}^k \mathbf{W}^P = \begin{bmatrix}
       \vec{0} & \embed{deg}'(v_1) & \dots & \embed{deg}'(v_k) & \vec{P}'(v_1) & \dots & \vec{P}'(v_k) & \vec{0}
    \end{bmatrix} \in \mathbb{R}^{d},
\end{equation*}
where $\vec{P}'(v_i) \in \mathbb{R}^s$. Further, since $\vec{P}$ is by assumption sufficiently node- and adjacency-identifying, \Cref{lemma:proj_structural_embed} guarantees that then so is $\vec{P}'$.

Finally, for tuple $\vec{v} \in V(G)^k$, we set
\begin{equation*}
    \embed{atp}(\vec{v}) = \begin{bmatrix}
        \vec{0} & \embed{atp}'(\vec{v})
    \end{bmatrix},
\end{equation*}
where $\embed{atp}'(\vec{v}) = \text{atp}(\vec{v})$ and we recall that $\text{atp}$ maps to $\mathbb{N}$ and $\vec{0} \in \mathbb{R}^{d - 1}$ is an all-zero vector.

Then, we can write
\begin{align*}
    \vec{X}^{(0, k)}(\vec{v}) &= \big[
        \vec{F}(v_i)
    \big]_{i=1}^k \mathbf{W}^F + \big[
        \vec{P}(v_i)
    \big]_{i=1}^k \mathbf{W}^P + \embed{atp}(\vec{v})\\
    &=\begin{bmatrix}
     \vec{F}'(v_1) \,\,\,\, \dots \,\,\,\, \vec{F}'(v_k) & 
     \vec{0} &
     \embed{deg}'(v_1) & \dots & \embed{deg}'(v_k) & \vec{P}'(v_1) & \dots & \vec{P}'(v_k) & \embed{atp}'(\vec{v})
  \end{bmatrix},
\end{align*}
where it holds for every $v, w \in V(G)$,
\begin{equation*}
  \vec{F}'(v) = \vec{F}(w) \Longleftrightarrow \vec{F}'(v) = \vec{F}'(w)
\end{equation*}
and
\begin{equation*}
  \embed{deg}(v) = \embed{deg}(w) \Longleftrightarrow \embed{deg}'(v) = \embed{deg}'(w)
\end{equation*}
and
\begin{equation*}
  \vec{P}(v) = \vec{P}(w) \Longleftrightarrow \vec{P}'(v) = \vec{P}'(w)
\end{equation*}
where $\vec{P}'(v)$ is node and adjacency-identifying
and for every $\vec{v}, \vec{w} \in V(G)^k$
\begin{equation*}
  \embed{atp}(\vec{v}) = \embed{atp}(\vec{w}) \Longleftrightarrow \embed{atp}'(\vec{v}) = \embed{atp}'(\vec{w}).
\end{equation*}
This shows the statement.
\end{proof}

\section{Expressivity of \texorpdfstring{\kgt{1}}{1-GT}}\label{app:proof_1gt}
Here, we prove \Cref{theorem:1_wl} in the main paper.
We first restate Theorem VIII.4 of \citep{Gro+2021}, showing that a simple GNN can simulate the \wlone.
\begin{lemma}[\citet{Gro+2021}, Theorem VIII.4]\label{lemma:grohe_1wl_equiv}
Let $G = (V(G), E(G), \ell)$ be a graph with $n$ nodes with adjacency matrix $\vec{A}(G)$ and node feature matrix $\vec{X}^{(0)} \coloneqq \vec{F} \in \mathbb{R}^{n \times d}$ consistent with $\ell$. Further, assume a GNN that for each layer, $t>0$, updates the vertex feature matrix 
\begin{equation*}
    \vec{X}^{(t)} \coloneqq \mathsf{FFN} \Big(  \vec{X}^{(t-1)} + 2 \vec{A}(G)\vec{X}^{(t-1)} \Big).
\end{equation*}
Then, for all $t \geq 0$, there exists a parameterization of $\mathsf{FFN}$ such that
\begin{equation*}
    C^1_t(v) = C^1_t(w) \Longleftrightarrow  \vec{X}^{(t)}(v) =  \vec{X}^{(t)}(w),
\end{equation*}
for all vertices $v, w \in V(G)$.
\end{lemma}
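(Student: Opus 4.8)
The plan is to prove the lemma by induction on the number of layers $t$, maintaining the following invariant: after layer $t$ the feature matrix is a \emph{canonical, additively separable encoding} of $C^1_t$, meaning there is an injection $\mathrm{enc}_t$ from the (at most $n$) colours that \wlone{} produces after $t$ rounds into $\mathbb{R}^d$, of the form $\mathrm{enc}_t(c)=B^{r_t(c)}\,\vec{e}_1$, where $r_t$ is any fixed ranking of those colours, $B>2n$, and $\vec{e}_1$ is the first standard basis vector, such that $\vec{X}^{(t)}(v)=\mathrm{enc}_t(C^1_t(v))$ for every $v$. For the base case $t=0$ I would take $\vec{F}$ to be the one-hot (equivalently, such a base-$B$) encoding of $\ell$, which is consistent with $\ell$ as in the Background; this establishes the invariant at $t=0$.

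For the inductive step, assume the invariant at $t-1$. Using $(\vec{A}(G)\vec{X}^{(t-1)})(v)=\sum_{u\in N(v)}\vec{X}^{(t-1)}(u)$, the pre-activation seen by layer $t$ at node $v$ is
\[
  \vec{p}_v \;=\; \vec{X}^{(t-1)}(v) + 2\!\!\sum_{u\in N(v)}\!\!\vec{X}^{(t-1)}(u)
          \;=\; \mathrm{enc}_{t-1}\!\big(C^1_{t-1}(v)\big) + 2\!\!\sum_{u\in N(v)}\!\!\mathrm{enc}_{t-1}\!\big(C^1_{t-1}(u)\big),
\]
which lies on the line $\mathbb{R}\vec{e}_1$, say $\vec{p}_v=\big(\sum_{j} a^{v}_j B^{j}\big)\vec{e}_1$. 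Here $a^{v}_j$ equals $2\cdot(\text{number of neighbours of }v\text{ with colour of rank }j)$, plus an extra $1$ exactly when $r_{t-1}(C^1_{t-1}(v))=j$; every such coefficient lies in $\{0,\dots,2(n-1)+1\}\subseteq\{0,\dots,B-1\}$, so $\vec{p}_v$ determines all the $a^{v}_j$ by uniqueness of base-$B$ expansions, hence determines both $C^1_{t-1}(v)$ (the unique rank with an odd coefficient) and the multiset $\oms C^1_{t-1}(u)\mid u\in N(v)\cms$ (halve the remaining coefficients). Consequently $\vec{p}_v=\vec{p}_w$ \emph{iff} $v$ and $w$ agree in their $(t-1)$-th colour and in their neighbour-colour multiset, which by injectivity of $\REL$ is exactly the condition $C^1_t(v)=C^1_t(w)$.

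It remains to restore the invariant and then dispatch both directions. Since the finitely many values $\vec{p}_v$ lie on $\mathbb{R}\vec{e}_1$, their first coordinates form a set of at most $n$ reals in bijection with the round-$t$ colours; a continuous piecewise-linear map sends each to the corresponding $\mathrm{enc}_t(C^1_t(v))$, and such a map is realised exactly by a feed-forward network of sufficient width (e.g.\ two ReLU layers; for other activations it is matched to arbitrary precision). Choosing this as the layer-$t$ $\mathsf{FFN}$ gives $\vec{X}^{(t)}(v)=\mathsf{FFN}(\vec{p}_v)=\mathrm{enc}_t(C^1_t(v))$, restoring the invariant and, together with the previous paragraph, the ``$\Leftarrow$'' direction of the lemma. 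The ``$\Rightarrow$'' direction is immediate and needs no special construction: if $C^1_t(v)=C^1_t(w)$ then $C^1_{t-1}(v)=C^1_{t-1}(w)$ and the neighbour-colour multisets coincide, so by the inductive hypothesis the neighbour-\emph{feature} multisets coincide as well, whence $\vec{p}_v=\vec{p}_w$ and $\vec{X}^{(t)}(v)=\vec{X}^{(t)}(w)$ for any $\mathsf{FFN}$. Since the construction of layer $t$ uses only layers $\le t$, this yields, for each $t$, the desired parameterization (and, as \wlone{} stabilises within $n-1$ rounds, the layers may be frozen afterwards to give one parameterization valid for all $t$).

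The step I expect to be the main obstacle is the additive-separability argument in the second paragraph: a plain sum of feature vectors generally destroys multiset information, so the induction would collapse for an arbitrary feature encoding — indeed, for a merely ``consistent'' $\vec{F}$ one can already lose colour classes at the first layer — and the point is that the base-$B$ (or one-hot) encoding together with the explicit factor $2$ in the update, which makes the self-contribution the unique odd digit, is precisely what renders the aggregated sum injective on the relevant (colour, neighbour-multiset) pairs. A secondary technical point is realising the per-layer re-encoding as a genuine feed-forward network; this is routine finite interpolation, but it is the reason the invariant must re-canonicalise the features at every layer rather than letting them drift.
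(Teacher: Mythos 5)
Your proposal is essentially correct, and it is in substance a reconstruction of the proof in the cited source rather than a new route: the paper itself does not reprove this lemma (it is quoted from \citet{Gro+2021}), and the closest in-paper argument is the \kwl{k} generalization in \Cref{lemma:multiset_m_ary} and \Cref{prop:kwl_gnn}, which rests on the same positional-number-system idea — there with digits $m^{-i}$ in $(0,1)$ and disjoint exponent ranges separating the $k$ neighbourhood multisets, whereas you use positive powers $B^{j}$ and the parity device (the explicit factor $2$ making the node's own colour the unique odd digit), which is exactly the mechanism behind Grohe's Theorem~VIII.4 for the $1$-WL case. Your invariant, the digit bound $2(n-1)+1<B$ guaranteeing uniqueness of the base-$B$ expansion, and the realisation of the per-layer re-canonicalisation by finite piecewise-linear interpolation with an $\mathsf{FFN}$ are all sound.

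One caveat concerns the quantification over $\vec{F}$. As restated here, the lemma takes an \emph{arbitrary} feature matrix consistent with $\ell$ and only lets you choose the $\mathsf{FFN}$s, whereas you choose $\vec{F}$ to be the one-hot/base-$B$ encoding. As you yourself observe, some such choice is unavoidable: the aggregation $\vec{X}^{(0)} + 2\vec{A}(G)\vec{X}^{(0)}$ acts before any $\mathsf{FFN}$ can intervene, and an adversarial consistent encoding can already merge distinct (colour, neighbour-multiset) pairs at the first layer — e.g.\ with scalar features $1$ and $3$ for the two labels, a label-$1$ node with a single label-$2$ neighbour and a label-$1$ node with three label-$1$ neighbours both receive the value $7$, so no pointwise $\mathsf{FFN}$ can restore the distinction. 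Hence the statement is only true in the reading where the initial encoding is part of the construction (Grohe's actual setting, and the way the lemma is deployed later in this paper); your proof matches that intended reading, but you should state explicitly that the choice of $\vec{F}$ (or equivalently a first injective re-encoding of the finitely many initial feature vectors before any aggregation) belongs to the parameterization rather than being a free assumption about the given input features.
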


We now give the proof for a slight generalization of \Cref{theorem:1_wl}. Specifically, we relax the adjacency-identifying condition to \textit{sufficiently} adjacency-identifying.

\begin{theorem}[Generalization of \Cref{theorem:1_wl} in main paper]
Let $G = (V(G), E(G), \ell)$ be a labeled graph with $n$ nodes and $\vec{F} \in \mathbb{R}^{n \times d}$ be a node feature matrix consistent with $\ell$. Let $C^1_t$ denote the coloring function of the \wlone{} at iteration $t$.
Then, for all iterations $t \geq 0$, there exists a parametrization of the \kgt{1} with sufficiently adjacency-identifying structural embeddings, such that
\begin{equation*}
    C^1_{t}(v) = C^1_{t}(w) \Longleftrightarrow \vec{X}^{(t, 1)}(v) = \vec{X}^{(t, 1)}(w),
\end{equation*}
for all nodes $v, w \in V(G)$.
\end{theorem}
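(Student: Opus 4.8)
The plan is to show that a single-head \kgt{1} layer can exactly simulate one layer of the Grohe GNN in \Cref{lemma:grohe_1wl_equiv}, and then invoke that lemma. First I would rewrite the initial token embeddings via \Cref{lemma:concatenated_1wl_token_embeddings}, so that for each node $v$ we have $\vec{X}^{(0,1)}(v) = [\vec{F}'(v) \,\, \vec{0} \,\, \embed{deg}'(v) \,\, \vec{P}'(v)]$ with $\vec{P}'$ still sufficiently adjacency-identifying and with the degree embedding $\embed{deg}'$ living in a separate coordinate block. The point of this normal form is that the ``working'' coordinates $\vec{F}'$ carry the current color, the $\vec{P}'$ block is only used to reconstruct adjacency in the attention logits, and the $\embed{deg}'$ block lets us de-normalize afterward; these blocks never collide.

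Next I would analyze one transformer layer. Using the query/key matrices $\vec{W}^Q,\vec{W}^K$ that witness the sufficiently-adjacency-identifying property (restricted to act on the $\vec{P}'$ block), \Cref{lemma:approx_normalized_adjacency} gives a scaling $b>0$ such that $\mathsf{softmax}(b\cdot\Tilde{\vec{X}}^{(0,1)})$ approximates the row-normalized adjacency $\vec{D}^{-1}\vec{A}(G)$ to within any $\epsilon>0$ in Frobenius norm. Set the value matrix $\vec{W}^V$ to copy out only the $\vec{F}'$ block (times $2$), so the attention output approximates $2\vec{D}^{-1}\vec{A}(G)\vec{F}'$. Then the post-attention $\mathsf{FFN}$ does two jobs: it reads the $\embed{deg}'$ coordinate of each row and multiplies the aggregated message by the node degree — turning $2\vec{D}^{-1}\vec{A}(G)\vec{F}'$ into $2\vec{A}(G)\vec{F}'$ — and it then applies the Grohe-GNN update $\mathsf{FFN}$ from \Cref{eq:grohe_gnn} to $\vec{F}' + 2\vec{A}(G)\vec{F}'$, while preserving the $\embed{deg}'$ and $\vec{P}'$ blocks unchanged for use in the next layer. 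Since the degree entries are a finite set and the Grohe $\mathsf{FFN}$ has bounded domain, a finite-width MLP realizes this composite map, and universal approximation on the compact domain absorbs the $\epsilon$-error: one chooses $\epsilon$ small enough (propagated through $t$ layers) that the MLP outputs are within the separation margin of the discrete colors.

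Finally, by induction on $t$, the $\vec{F}'$-block of $\vec{X}^{(t,1)}(v)$ equals (up to an arbitrarily small error that does not merge or split color classes) the Grohe GNN's $\vec{X}^{(t)}(v)$; the $\embed{deg}'$ and $\vec{P}'$ blocks are fixed across layers and are identical for nodes iff their degrees (resp.\ structural embeddings) agree, which is implied by agreement of the \wlone{} colors, so they never cause a spurious split. Hence $\vec{X}^{(t,1)}(v)=\vec{X}^{(t,1)}(w) \iff \vec{X}^{(t)}(v)=\vec{X}^{(t)}(w) \iff C^1_t(v)=C^1_t(w)$ by \Cref{lemma:grohe_1wl_equiv}.

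\textbf{Main obstacle.} The delicate part is the error control across layers: the softmax only approximates $\vec{D}^{-1}\vec{A}(G)$, and after multiplying by degrees and feeding through $t$ nonlinear MLPs the errors compound. The fix is to fix the target number of iterations $t$ first, work on a compact domain (eigenvectors are unit-norm, eigenvalues are bounded by $2\cdot\max\deg$, degrees and the finitely many possible colors are discrete), and then choose the approximation parameters backwards from layer $t$ to layer $1$ so that at every stage the running embeddings stay within a fixed margin of the exact discrete values — this is exactly what ``sufficiently adjacency-identifying'' plus \Cref{lemma:adjacency_ident_approx} and \Cref{lemma:sufficiently_indicator} are set up to deliver. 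A secondary subtlety is making the MLP simultaneously de-normalize (a multiplication gated by the degree coordinate) and apply Grohe's update; this is handled by noting the degree takes finitely many values, so the product map is piecewise-affine on a bounded set and hence MLP-approximable.
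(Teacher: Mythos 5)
Your plan matches the paper's proof: both rewrite the token embeddings into the block form of \Cref{lemma:concatenated_1wl_token_embeddings}, use a single attention head whose query/key act only on the $\vec{P}'$ block and invoke \Cref{lemma:approx_normalized_adjacency} to get $\vec{D}^{-1}\vec{A}(G)$, route the aggregated $\vec{F}'$ features through the value projection into the spare zero block, let the post-attention FFN de-normalize via $\embed{deg}'$ and then apply the update from \Cref{lemma:grohe_1wl_equiv}, and close by induction. The only superficial differences are where the factor $2$ is absorbed ($\vec{W}^V$ vs.\ the FFN) and that the paper spells out the FFN as the explicit composition $f_\text{FFN}\circ f_\text{add}\circ f_\text{deg}\circ f_{\times 2}$ rather than appealing to universal approximation directly.
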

\begin{proof}
According to \Cref{lemma:concatenated_1wl_token_embeddings}, there exists a parameterization of $\vec{X}^{(0, 1)}$ such that for each $v \in V(G)$,
\begin{equation*}
   \vec{X}^{(0,1)}(v) = \vec{F}'(v) + \vec{P}'(v) = \begin{bmatrix}
      \vec{F}'(v) & \vec{0} & \embed{deg}'(v) & \vec{P}'(v)
    \end{bmatrix},
\end{equation*}
where it holds that
\begin{equation}\label{eq:1wl_proof_F_consistency}
  \vec{F}(v) = \vec{F}(w) \Longleftrightarrow \vec{F}'(v) = \vec{F}'(w)
\end{equation}
and
\begin{equation*}
  \embed{deg}(v) = \embed{deg}(w) \Longleftrightarrow \embed{deg}'(v) = \embed{deg}'(w)
\end{equation*}
and
\begin{equation*}
  \vec{P}(v) = \vec{P}(w) \Longleftrightarrow \vec{P}'(v) = \vec{P}'(w)
\end{equation*}
and $\vec{P}'(v)$ is adjacency-identifying. Further, $\vec{F}'(v), \vec{0} \in \mathbb{R}^p$, $\embed{deg}'(v) \in \mathbb{R}^r$ and $\vec{P}'(v) \in \mathbb{R}^s$, for some choice of $p, r, s$ where $d = 2p + r + s$, specified in \Cref{lemma:concatenated_1wl_token_embeddings}.
We will use this parameterization for $\vec{X}^{(0,1)}$ throughout the rest of the proof.

We will now prove the statement by induction over $t$. Since by definition
\begin{equation*}
    C^1_{0}(v) = C^1_{0}(w) \Longleftrightarrow \vec{F}(v) = \vec{F}(w),
\end{equation*}
the base case follows from \Cref{eq:1wl_proof_F_consistency}. We let $\vec{F}^{(t)}(v)$ denote the representation of the color of node $v$ at iteration $t$. Initially, we set $\vec{F}^{(t)}(v) = \vec{F}'(v)$. Further, we define the matrix $\vec{D}_\text{emb} \in \mathbb{R}^{n \times r}$ such that for the $i$-th row of $\vec{D}_{\text{emb},i} = \embed{deg}'(v_i)$, where $v_i$ is the $i$-th node in a fixed but arbitrary node ordering.
Hence, we can write
\begin{equation*}
   \vec{X}^{(0,1)}(v) = \begin{bmatrix}
      \vec{F}^{(0)}(v) & \vec{0} & \embed{deg}'(v) & \vec{P}'(v)
    \end{bmatrix},
\end{equation*}
for all $v \in V(G)$ and in matrix form
\begin{equation*}
   \vec{X}^{(0,1)} = \begin{bmatrix}
      \vec{F}^{(0)} & \vec{0} & \vec{D}_{\text{emb}} & \vec{P}'
    \end{bmatrix}.
\end{equation*}

Now, assume that the statement holds up to iteration $t-1$. For the induction, we show
\begin{equation}\label{eq:1wl_proof_F}
    C^1_{t}(v) = C^1_{t}(w) \Longleftrightarrow \vec{F}^{(t)}(v) = \vec{F}^{(t)}(w).
\end{equation}
That is, we compute the \kwl{1}-equivalent aggregation using the node color representations $\vec{F}^{(t)}$ and use the remaining columns in $\vec{X}^{(0,1)}$ to keep track of degree and structural embeddings. Clearly, if \Cref{eq:1wl_proof_F} holds for all $t$, then the statement follows. Thereto, we aim to update the vertex representations such that
\begin{equation}\label{eq:1wl_proof_X_update}
    \begin{split}
       \vec{X}^{(t, 1)} &= \begin{bmatrix}
        \vec{F}^{(t)} &
        \vec{0} &
        \vec{D}_{\text{emb}} &
        \vec{P}'
    \end{bmatrix}
    \end{split},
\end{equation}
where, following~\Cref{lemma:grohe_1wl_equiv},
\begin{equation}\label{eq:1wl_f_update_mlp}
    \vec{F}^{(t)} \coloneqq \mathsf{FFN} \Big( \vec{F}^{(t-1)} + 2\vec{A}(G) \vec{F}^{(t-1)} \Big).
\end{equation}
Hence, it remains to show that our graph transformer layer can update vertex representations according to \Cref{eq:1wl_proof_X_update}. To this end, we will require only a single transformer head in each layer. Specifically, we want to compute
\begin{align}\label{eq:1wl_proof_head1}
    h_1(\vec{X}^{(t-1, 1)}) &= \begin{bmatrix}
        \vec{0} & \vec{D}^{-1}\vec{A}(G))\vec{F}^{(t-1)} & \vec{0} & \vec{0}
    \end{bmatrix},
\end{align}
where $\vec{D}^{-1}$ denotes the inverse of the degree matrix and $\vec{I}$ denotes the identity matrix with $n$ rows. Note that since we only have one head, the head dimension $d_v = d$. We begin by re-stating \Cref{eq:1wl_tf_head} with expanded sub-matrices and then derive the instances necessary to obtain the head \Cref{eq:1wl_proof_head1}.
We re-state projection weights $\vec{W}^Q$ and $\vec{W}^K$ with expanded sub-matrices as
\begin{equation*}
    \vec{W}^Q = \begin{bmatrix}
        \mathbf{W}^Q_1 \\
        \mathbf{W}^Q_2 \\
        \mathbf{W}^Q_3 \\
        \mathbf{W}^Q_4
    \end{bmatrix}
\end{equation*}
and
\begin{equation*}
    \vec{W}^K = \begin{bmatrix}
        \mathbf{W}^K_1 \\
        \mathbf{W}^K_2 \\
        \mathbf{W}^K_3 \\
        \mathbf{W}^K_4
    \end{bmatrix},
\end{equation*}
where $\mathbf{W}^Q_1, \mathbf{W}^K_1 \in \mathbb{R}^{p \times d}, \mathbf{W}^Q_2, \mathbf{W}^K_2 \in \mathbb{R}^{p \times d}, \mathbf{W}^Q_3, \mathbf{W}^K_3 \in \mathbb{R}^{r \times d}, \mathbf{W}^Q_4, \mathbf{W}^K_4 \in \mathbb{R}^{s \times d}$. We then define
\begin{equation*}
   \vec{Z}^{(t-1)} \coloneqq \frac{1}{\sqrt{d_k}}(\vec{X}^{(t-1, 1)} \mathbf{W}^Q)(\vec{X}^{(t-1, 1)} \mathbf{W}^K)^T =  \frac{1}{\sqrt{d_k}}(\vec{X}^{(t-1, 1)} \begin{bmatrix}
        \mathbf{W}^Q_1 \\
        \mathbf{W}^Q_2 \\
        \mathbf{W}^Q_3 \\
        \mathbf{W}^Q_4
    \end{bmatrix})(\vec{X}^{(t-1, 1)} \begin{bmatrix}
        \mathbf{W}^K_1 \\
        \mathbf{W}^K_2 \\
        \mathbf{W}^K_3 \\
        \mathbf{W}^K_4
    \end{bmatrix})^T,
\end{equation*}
where
\begin{equation*}
    \vec{X}^{(t-1, 1)} = \begin{bmatrix}
        \vec{F}^{(t-1)} & \vec{0} & \vec{D}_{\text{emb}} & \vec{P}'
    \end{bmatrix},
\end{equation*}
by the induction hypothesis.
By setting $\mathbf{W}^Q_1, \mathbf{W}^Q_2, \mathbf{W}^Q_3, \mathbf{W}^K_1, \mathbf{W}^K_2, \mathbf{W}^K_3$ to zero,
we have
\begin{equation*}
    \vec{Z}^{(t-1)} = \frac{1}{\sqrt{d_k}} (\vec{P}' \mathbf{W}^Q_4)(\vec{P}' \mathbf{W}^K_4)^T.
\end{equation*}
Now, let 
\begin{equation*}
    \Tilde{\vec{P}} \coloneqq \frac{1}{\sqrt{d_k}} (\vec{P}' \mathbf{W}^Q_P)(\vec{P}' \mathbf{W}^K_P)^T,
\end{equation*}
for some weight matrices $\mathbf{W}^Q_P$ and $\mathbf{W}^K_P$.
We know from \Cref{lemma:approx_normalized_adjacency} that, since $\vec{P}'$ is sufficiently adjacency reconstructing, there exists
$\mathbf{W}^Q_P$ and $\mathbf{W}^K_P$ such that by setting $\mathbf{W}^Q_4 = b \cdot \mathbf{W}^Q_P$ and $\mathbf{W}^K_4 = \mathbf{W}^K_P$,
we have
\begin{equation*}
    \vec{Z}^{(t-1)} = b \cdot \Tilde{\vec{P}},
\end{equation*}
where for all $\varepsilon > 0$, there exists a $b > 0$, such that
\begin{equation*}
    \Big\lVert \mathsf{softmax}\Big(\vec{Z}^{(t-1)} \Big) - \vec{D}^{-1}\vec{A}(G) \Big\rVert_F < \varepsilon.
\end{equation*}
Hence, by choosing a large enough $b$, we can approximate the matrix $\vec{D}^{-1}\vec{A}(G)$ arbitrarily close. In the following, for clarity of presentation, we assume $\mathsf{softmax}\Big(\vec{Z}^{(t)} \Big) = \vec{D}^{-1}\vec{A}(G)$ although we only approximate it arbitrarily close. However, by choosing $\varepsilon$ small enough, we can still approximate the matrix $\vec{X}^{(t, 1)}$, see below, arbitrarily close.

We now again expand sub-matrices of $\vec{W}^V$ and express \Cref{eq:1wl_tf_head} as
\begin{equation*}
h_i(\vec{X}^{(t-1, 1)}) = \mathsf{softmax} \Big( \vec{Z}^{(t-1)} \Big) 
    \vec{X}^{(t-1, 1)} \begin{bmatrix}
        \mathbf{W}^V_1 \\
        \mathbf{W}^V_2 \\
        \mathbf{W}^V_3 \\
        \mathbf{W}^V_4
    \end{bmatrix},
\end{equation*}
where $\mathbf{W}^V_1 \in \mathbb{R}^{p \times d}, \mathbf{W}^V_2 \in \mathbb{R}^{p \times d}, \mathbf{W}^V_3\in \mathbb{R}^{r \times d}, \mathbf{W}^V_4 \in \mathbb{R}^{s \times d}$.
By setting
\begin{align*}
    \mathbf{W}^V_1 &= \begin{bmatrix}
        \vec{0} & \vec{I} & \vec{0} & \vec{0}
    \end{bmatrix} \\
    \mathbf{W}^V_2 &= \mathbf{W}^V_3 = \mathbf{W}^V_4 = \vec{0},
\end{align*}
where $\vec{I} \in \mathbb{R}^{p \times p}$, we can approximate
\begin{equation*}
       h_1(\vec{X}^{(t-1, 1)}) = \begin{bmatrix}
        \vec{0} & \vec{D}^{-1}\vec{A}(G)\vec{F}^{(t-1)} & \vec{0} & \vec{0}
    \end{bmatrix}
\end{equation*}
arbitrarily close.
We now conclude our proof as follows.
Recall that the transformer computes the final representation $\vec{X}^{(t, 1)}$ as
\begin{equation*}
\begin{split}
    \vec{X}^{(t, 1)} &= \mathsf{FFN}_\text{final} \Bigg( \vec{X}^{(t-1, 1)} + h_1(\vec{X}^{(t-1, 1)}) \vec{W}^O \Bigg) \\
    &=\mathsf{FFN}_\text{final} \Bigg( \begin{bmatrix}
        \vec{F}^{(t-1)} & \vec{0} &
        \vec{D}_{\text{emb}} &
        \vec{U}
    \end{bmatrix} + \begin{bmatrix}
        \vec{0} & \vec{D}^{-1}\vec{A}(G)\vec{F}^{(t-1)} & \vec{0} & \vec{0}
    \end{bmatrix} \vec{W}^O \Bigg) \\
    &\underset{\vec{W}^O \coloneqq \vec{I}}{=}\mathsf{FFN}_\text{final} \Bigg( \begin{bmatrix}
        \vec{F}^{(t-1)} &
        \vec{D}^{-1}\vec{A}(G)\vec{F}^{(t-1)} &
        \vec{D}_{\text{emb}} &
        \vec{U}
    \end{bmatrix}  \Bigg),
\end{split}
\end{equation*}
for some $\mathsf{FFN}_\text{final}$.
We now show that there exists an $\mathsf{FFN}_\text{final}$ such that
\begin{equation*}
    \vec{X}^{(t, 1)} = \mathsf{FFN}_\text{final}\Bigg( \begin{bmatrix}
        \vec{F}^{(t-1)} &
        \vec{D}^{-1}\vec{A}(G)\vec{F}^{(t-1)} &
        \vec{D}_{\text{emb}} &
        \vec{U}
    \end{bmatrix}  \Bigg) = \begin{bmatrix}
       \vec{F}^{(t)} & \vec{0} &
       \vec{D}_{\text{emb}} &
        \vec{U}
    \end{bmatrix},
\end{equation*}
which then implies the proof statement. To this end, we show that there exists a composition of functions $f_\text{FFN} \circ f_\text{add} \circ f_\text{deg} \circ f_{\times 2}$ such that
\begin{equation*}
   f_\text{FFN} \circ f_\text{add} \circ f_\text{deg} \circ f_{\times 2} \Bigg(\begin{bmatrix}
        \vec{F}^{(t-1)} &
        \vec{D}^{-1}\vec{A}(G)\vec{F}^{(t-1)} &
        \vec{D}_{\text{emb}} &
        \vec{U}
    \end{bmatrix} \Bigg) = \begin{bmatrix}
       \vec{F}^{(t)} & \vec{0} &
       \vec{D}_{\text{emb}} &
        \vec{U}
    \end{bmatrix},
\end{equation*}
where the functions are applied independently to each row. We denote
\begin{equation*}
    \vec{X}^{(t-1)}_\text{upd} \coloneqq \begin{bmatrix}
        \vec{F}^{(t-1)} &
        \vec{D}^{-1}\vec{A}(G)\vec{F}^{(t-1)} &
        \vec{D}_{\text{emb}} &
        \vec{U}
    \end{bmatrix} \in \mathbb{R}^{n \times d}
\end{equation*}
and obtain for node $v \in V(G)$,
\begin{equation*}
    \vec{X}^{(t-1)}_{\text{upd}}(v) = \begin{bmatrix}
        \vec{F}^{(t-1)}(v) &
        \sum_{w \in N(v)} \frac{1}{|N(v)|} \cdot \vec{F}^{(t-1)}(w) &
        \embed{deg}'(v) &
        \vec{P}'(v)
    \end{bmatrix} \in \mathbb{R}^d.
\end{equation*}
We can now define
\begin{equation*}
   \embed{deg}'(v) = \begin{bmatrix}
       |N(v)| & 0
   \end{bmatrix} \in \mathbb{R}^2.
\end{equation*}
Since our domain is compact, there exist choices of $f_\text{FFN}, f_\text{add}, f_\text{deg}, f_{\times 2}$ such that $f_\text{FFN} \circ f_\text{add} \circ f_\text{deg} \circ f_{\times 2}: \mathbb{R}^d \rightarrow \mathbb{R}^d$ is continuous. As a result,
$\mathsf{FFN}_\text{final}$ can approximate  $f_\text{FFN} \circ f_\text{add} \circ f_\text{deg} \circ f_{\times 2}$ arbitrarily close.

Concretely, we define
\begin{align*}
     f_{\times 2}\Bigg( & \begin{bmatrix}
        \vec{F}^{(t-1)}(v) &
        \sum_{w \in N(v)} \frac{1}{|N(v)|} \cdot \vec{F}^{(t-1)}(w) &
        |N(v)| &
        0 &
        \vec{P}'(v)
    \end{bmatrix}  \Bigg)  \\ &= \begin{bmatrix}
        \vec{F}^{(t-1)}(v) &
        2\sum_{w \in N(v)} \frac{1}{|N(v)|} \cdot \vec{F}^{(t-1)}(w) &
        |N(v)| &
        0 &
        \vec{P}'(v)
    \end{bmatrix},
\end{align*}
where $f_{\times 2}$ multiplies the second column with $2$.

Next, we define
\begin{align*}
     f_\text{deg}\Bigg( & \begin{bmatrix}
        \vec{F}^{(t-1)}(v) &
        2\sum_{w \in N(v)} \frac{1}{|N(v)|} \cdot \vec{F}^{(t-1)}(w) &
        |N(v)| &
        0 &
        \vec{P}'(v)        
    \end{bmatrix}  \Bigg)  \\ &= \begin{bmatrix}
        \vec{F}^{(t-1)}(v) &
        2\sum_{w \in N(v)} \frac{1}{|N(v)|} \cdot \vec{F}^{(t-1)}_j \cdot |N(v)| &
        |N(v)| &
        \vec{P}'(v)
    \end{bmatrix} \\ &= \begin{bmatrix}
        \vec{F}^{(t-1)}(v) &
        2\sum_{w \in N(v)} \vec{F}^{(t-1)}(w) &
        |N(v)| &
        0 &
        \vec{P}'(v) 
    \end{bmatrix}
\end{align*}
where $f_\text{deg}$ multiplies the second column with the degree of node $v$ in the third column.

Next, we define
\begin{align*}
     f_\text{add}\Bigg( & \begin{bmatrix}
        \vec{F}^{(t-1)}(v) &
        2\sum_{w \in N(v)} \vec{F}^{(t-1)}(w) &
        |N(v)| &
        0 &
        \vec{P}'(v) 
    \end{bmatrix}  \Bigg)  \\ &= \begin{bmatrix}
        \vec{F}^{(t-1)}(v) + 2\sum_{w \in N(v)} \vec{F}^{(t-1)}(w) &
         \vec{0} &
        |N(v)| &
        0 &
        \vec{P}'(v) 
    \end{bmatrix},
\end{align*}
where we add the second column to the first column and set the elements of the second column to zero.

Finally, we define
\begin{align*}
     f_\text{FFN}\Bigg( & \begin{bmatrix}
        \vec{F}^{(t-1)}(v) + 2\sum_{w \in N(v)} \vec{F}^{(t-1)}(w) &
         \vec{0} &
        |N(v)| &
        0 &
        \vec{P}'(v) 
    \end{bmatrix}  \Bigg)  \\ &=
    \begin{bmatrix}
        \mathsf{FFN} \Big(\vec{F}^{(t-1)}(v) + 2\sum_{w \in N(v)} \vec{F}^{(t-1)}(w) \Big) &
         \vec{0} &
        |N(v)| &
        0 &
        \vec{P}'(v) 
    \end{bmatrix}   \\ &=
    \begin{bmatrix}
        \vec{F}^{(t)}(v) &
         \vec{0} &
        \embed{deg}'(v) &
        \vec{P}'(v) 
    \end{bmatrix},
\end{align*}
where $\mathsf{FFN}$ denotes the FFN in \Cref{eq:1wl_f_update_mlp}, from which the last equality follows.
Clearly, applying $f_\text{FFN} \circ f_\text{add} \circ f_\text{deg} \circ f_{\times 2}$ to each row $i$ of $\vec{X}^{(t-1)}_\text{upd}$ results in
\begin{equation*}
   f_\text{FFN} \circ f_\text{add} \circ f_\text{deg} \circ f_{\times 2} \Big(\vec{X}^{(t-1)}_\text{upd} \Big) = \begin{bmatrix}
       \vec{F}^{(t)} & \vec{0} &
       \vec{D}_{\text{emb}} &
        \vec{P}'(v)
    \end{bmatrix},
\end{equation*}
from which our proof follows.
\end{proof}

\section{Expressivity of higher-order transformers}\label{app:proof_kgt}
Here, we prove \Cref{theorem:k_wl}, \Cref{theorem:delta_k_wl}, and \Cref{theorem:k_s_wl} in \Cref{sec:kwl_pure} in the main paper. To this end, we first construct new higher-order GNNs aligned with the \kwl{k}, $\delta$-\kwl{k}, \localkwl{}, and \kwl{(k,s)}, respectively. We will use these higher-order GNNs as an intermediate step to show the expressivity results for the \kgt{k} and \kgt{(k,s)}.

\subsection{Higher-order GNNs}
Here, we generalize the GNN from \citet{Gro+2021} to higher orders $k$. Higher-order GNNs with the same expressivity have been proposed in prior works by \citet{Azi+2020} and \citet{Mor+2019}. However, our GNNs have a special form that can be computed by a transformer.
To this end, we extend Theorem VIII.4 in \citet{Gro+2021} to the \kwl{k}. Specifically, for each $k > 1$ we devise neural architectures with \kwl{k} expressivity. Afterwards, we show that transformers can simulate these architectures.

Formally, let $S \subset \mathbb{N}$ be a finite subset.
First, we show that multisets over $S$ can be injectively mapped to a value in the closed interval $(0,1)$, which is a variant of Lemma VIII.5 in~\citet{Gro+2021}. Here, we outline a streamlined version of its proof, highlighting the key intuition behind representing multisets as $m$-ary numbers. 
Let $M \subseteq S$ be a multiset with multiplicities $a_1, \dots, a_k$ and distinct $k$ values. We define the \textit{order} of the multiset as $\sum_{i=1}^k a_i$.
We can write such a multiset as
a sequence $x^{(1)}, \dots, x^{(l)}$ where $l$ is the order of the multiset.
Note that the order of the sequence is arbitrary and that for $i \neq j$ it is possible to have $x^{(i)} = x^{(j)}$. We call such a sequence an $M$-sequence of length $l$.
We now prove the following, a slight variation of \citet{Gro+2021}.

\begin{lemma}\label{lemma:multiset_m_ary} 
For a finite $m \in \mathbb{N}$, let $M \subseteq S$ be a multiset of order $m - 1$ and let $x_i \in S$ denote the $i$-th number in a fixed but arbitrary ordering of $S$.
Given a mapping $g \colon S \rightarrow (0,1)$ where
\begin{equation*}
    g(x_i) \coloneqq m^{-i},
\end{equation*}
and an $M$-sequence of length $l$ given by $x^{(1)}, \dots, x^{(l)}$ with positions $i^{(1)}, \dots, i^{(l)}$ in $S$, the sum
\begin{equation*}
 \sum_{j \in [l]} g(x^{(j)}) = \sum_{j \in [l]} m^{-i^{(j)}}
\end{equation*}
is unique for every unique $M$.
\end{lemma}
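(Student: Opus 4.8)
The plan is to reduce the statement to the uniqueness of base-$m$ digit expansions with bounded digits, exploiting that $S$ is finite. First I would observe that the sum in question does not actually depend on the chosen $M$-sequence: grouping equal terms, if $a_i \in \{0,1,\dots\}$ denotes the multiplicity with which the $i$-th element $x_i$ of $S$ occurs in $M$, then
\[
 \sum_{j \in [l]} g\big(x^{(j)}\big) \;=\; \sum_{j \in [l]} m^{-i^{(j)}} \;=\; \sum_{i} a_i\, m^{-i},
\]
a finite sum since $S$ is finite, say $|S| = N$. Hence the quantity is well defined on multisets, and it suffices to show that the map sending a multiset $M$ of order $m-1$ over $S$ to the real number $\Phi(M) \coloneqq \sum_{i=1}^{N} a_i\, m^{-i}$ is injective. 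Since $M$ has order $\sum_{i=1}^N a_i = m-1$, every multiplicity satisfies $0 \le a_i \le m-1$.

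Next I would establish injectivity of $\Phi$ by contradiction. Suppose $M \neq M'$ are multisets of order $m-1$ over $S$ with multiplicity vectors $(a_i)$ and $(b_i)$ and $\Phi(M) = \Phi(M')$. Put $c_i \coloneqq a_i - b_i$; then $c_i \neq 0$ for some $i$, each $c_i$ is an integer with $|c_i| \le m-1$, and $\sum_{i=1}^{N} c_i\, m^{-i} = 0$. Let $i_0$ be the least index with $c_{i_0} \neq 0$. Multiplying the identity by $m^{i_0}$ and isolating the leading term yields $c_{i_0} = - \sum_{i = i_0+1}^{N} c_i\, m^{i_0 - i}$. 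Bounding the right-hand side by a geometric series,
\[
 \Big| \sum_{i = i_0+1}^{N} c_i\, m^{i_0 - i} \Big| \;\le\; (m-1)\sum_{j=1}^{N - i_0} m^{-j} \;<\; (m-1)\sum_{j=1}^{\infty} m^{-j} \;=\; (m-1)\cdot\frac{1}{m-1} \;=\; 1,
\]
where the strict inequality uses $N - i_0 < \infty$ and $m \ge 2$. Hence $|c_{i_0}| < 1$, contradicting that $c_{i_0}$ is a nonzero integer. Therefore $\Phi$ is injective, so the sum $\sum_{j \in [l]} m^{-i^{(j)}}$ is unique for every unique $M$.

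The only delicate point — and the precise reason the hypothesis that $S$ is finite is needed — is the strict inequality in the geometric bound: if $S$ were infinite the tail $\sum_{j \ge 1} m^{-j}$ could be attained in the limit, and the minimal-index argument would fail to yield a contradiction. Everything else is the routine ``no carrying'' uniqueness of $m$-ary expansions, so I do not anticipate any further obstacle.
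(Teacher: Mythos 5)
Your proof is correct and takes essentially the same approach as the paper: interpret the sum $\sum_i a_i m^{-i}$ as a base-$m$ expansion whose digits $a_i$ are bounded by $m-1$ (because the multiset has order $m-1$), which forces uniqueness. The only difference is one of rigor: you spell out the ``no carrying'' uniqueness argument via the leading-nonzero-index contradiction and the geometric-series bound, whereas the paper invokes uniqueness of bounded-digit $m$-ary expansions as a known fact.
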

\begin{proof}
By assumption, let $M \subseteq S$ denote a multiset of order $m - 1$. Further, let $x^{(1)}, \dots, x^{(l)} \in M$ be an $M$-sequence with $i^{(1)}, \dots, i^{(l)}$ in $S$. Given our fixed ordering of the numbers in $S$ we can equivalently write $M = ( (a_1, x_1), \dots, (a_n, x_n) )$, where $a_i$ denotes the multiplicity of $i$-th number in $M$ with position $i$ from our ordering over $S$. Note that for a number $m^{-i}$ there exists a corresponding $m$-ary number written as
\begin{equation*}
    0.0 \ldots \underbrace{1}_{i} \ldots
\end{equation*}
Then the sum,
\begin{align*}
    \sum_{j \in [l]} g(x^{(j)}) &= \sum_{j \in [l]} m^{-i^{(j)}} \\
    &= \sum_{i \in S} a_im^{-i} \in (0,1)
\end{align*}
and in $m$-ary representation
\begin{align*}
   0.a_1 \ldots a_n. 
\end{align*}
Note that $a_i = 0$ if and only if there exists no $j$ such that $i^{(j)} = i$. Since the order of $M$ is $m - 1$, it holds that $a_i < m$. Hence, it follows that the above sum is unique for each unique multiset $M$, implying the result.
\end{proof}

Recall that $S \subseteq \mathbb{N}$ and that we fixed an arbitrary ordering over $S$. Intuitively, we use the finiteness of $S$ to map each number therein to a fixed digit of the numbers in $(0,1)$. The finite $m$ ensures that at each digit, we have sufficient ``bandwidth'' to encode each $a_i$. Now that we have seen how to encode multisets over $S$ as numbers in $(0,1)$, we review some fundamental operations about the $m$-ary numbers defined above. We will refer to decimal numbers $m^{-i}$ as \textit{corresponding} to an $m$-ary number
\begin{equation*}
    0.0 \ldots \underbrace{1}_{i} \ldots,
\end{equation*}
where the $i$-th digit after the decimal point is $1$ and all other digits are $0$, and vice versa.

To begin with, addition between decimal numbers implements \textit{counting} in $m$-ary notation, i.e., 
\begin{equation*}
    m^{-i} + m^{-j} \text{ corresponds to } 0.0\ldots \underbrace{1}_{i} \ldots \underbrace{1}_{j} \ldots,
\end{equation*}
for digit positions $i \neq j$ and
\begin{equation*}
    m^{-i} + m^{-j} \text{ corresponds to } 0.0\ldots \underbrace{2}_{i=j}\ldots,
\end{equation*}
otherwise.
We used counting in the proof of the previous result to represent the multiplicities of a multiset. Next, multiplication between decimal numbers implements \textit{shifting} in $m$-ary notation, i.e.,
\begin{equation*}
    m^{-i} \cdot m^{-j} \text{ corresponds to } 0.0\ldots \underbrace{1}_{i+j}\ldots.
\end{equation*}
Shifting further applies to general decimal numbers in $(0,1)$. Let $x \in (0,1)$ correspond to an $m$-ary number with $l$ digits,
\begin{equation*}
   0.a_1 \ldots a_l.
\end{equation*}
Then,
\begin{equation*}
   m^{-i} \cdot x \text{ corresponds to } 0.0\ldots 0\underbrace{a_1 \ldots a_l}_{i+1, \dots, i+l}.
\end{equation*}

We continue by deriving a neural architecture simulating the \kwl{k}.
\begin{proposition}\label{prop:kwl_gnn}
Let  $G = (V(G), E(G), \ell)$ be a $n$-order (node-)labeled graph. Then for all $t \geq 1$, there exists a function $g^{(t)}$ and scalars $\beta_1, \dots, \beta_k$ with
\begin{equation*}
    g^{(t)}(\vec{u}) \coloneqq \mathsf{FFN} \Big( g^{(t-1)}(\vec{u}) + \sum_{j \in [k]} \beta_j \cdot \sum_{w \in V(G)} g^{(t-1)}(\phi_j(\vec{u}, w)) \Big),
\end{equation*}
where $g^{(0)}(\vec{u})$ is initialized consistent $\ell$ consistent with the atomic type,
such that for all $\vec{u}, \vec{v} \in V(G)^k$
\begin{equation}\label{eq:kwl_color_equiv_g}
    C_t^k(\vec{u}) = C_t^k(\vec{v}) \Longleftrightarrow g^{(t)}(\vec{u})= g^{(t)}(\vec{v}).
\end{equation}
\end{proposition}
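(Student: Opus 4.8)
# Proof Proposal for Proposition~\ref{prop:kwl_gnn}

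The plan is to prove~\eqref{eq:kwl_color_equiv_g} by induction on $t$, following the blueprint of Theorem~VIII.4 in~\citet{Gro+2021} but lifting it to $k$-tuples. The base case $t=0$ is immediate: $g^{(0)}$ is, by construction, injective on atomic types and consistent with $\ell$, so $g^{(0)}(\vec{u}) = g^{(0)}(\vec{v})$ if and only if $C^k_0(\vec{u}) = C^k_0(\vec{v}) = \text{atp}(\vec{u})$. For the inductive step, assume~\eqref{eq:kwl_color_equiv_g} holds at iteration $t-1$. We must show that the update
\begin{equation*}
    g^{(t)}(\vec{u}) = \mathsf{FFN}\Big( g^{(t-1)}(\vec{u}) + \sum_{j\in[k]} \beta_j \cdot \sum_{w\in V(G)} g^{(t-1)}(\phi_j(\vec{u},w)) \Big)
\end{equation*}
refines colors exactly as the \kwl{k} update $C^k_t(\vec{u}) = \REL(C^k_{t-1}(\vec{u}), M_t(\vec{u}))$ does, where $M_t(\vec{u})$ is the $k$-tuple of multisets from~\eqref{mi}. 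The ``$\Longleftarrow$'' direction (equal embeddings imply equal colors) is routine since $g^{(t)}$ is a function of $g^{(t-1)}$-values, which by hypothesis refine $C^k_{t-1}$. The ``$\Longrightarrow$'' direction is where the work lies: we need the summations to injectively encode the multiset information.

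The key step is to instantiate the abstract $g^{(t-1)}$-values and the scalars $\beta_j$ so that the inner sum $\sum_{w} g^{(t-1)}(\phi_j(\vec{u},w))$ faithfully records the multiset $\{\!\!\{ C^k_{t-1}(\phi_j(\vec{u},w)) \mid w \in V(G)\}\!\!\}$, and then the factor $\beta_j$ ``shifts'' the $j$-th such encoding into disjoint digit blocks so that the $k$ multisets do not interfere. Concretely: by the inductive hypothesis we may assume $g^{(t-1)}$ takes values in $\mathbb{N}$ (or a finite subset $S \subset \mathbb{N}$ after composing with an injection), since color classes are finite. There are at most $n^k$ color classes and each multiset $\phi_j(\vec{u},\cdot)$ has order exactly $n$, so choosing $m = n+1$ in Lemma~\ref{lemma:multiset_m_ary} gives a map $g \colon S \to (0,1)$ with $g(x_i) = m^{-i}$ under which $\sum_{w} g(g^{(t-1)}(\phi_j(\vec{u},w)))$ is unique for each distinct multiset. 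Then I set $\beta_j = m^{-j\cdot(n^k+1)}$ (or any power of $m$ large enough to separate blocks), using the ``shifting'' property of multiplication in $m$-ary notation spelled out in the excerpt: multiplying by $\beta_j$ moves the digits of the $j$-th multiset encoding into a block of positions disjoint from those used by the other $j' \neq j$. The full sum $\sum_j \beta_j \sum_w g(\cdots)$ is therefore an $m$-ary number whose digit blocks injectively encode $M_t(\vec{u})$. Finally, the additional summand $g^{(t-1)}(\vec{u})$ (after a further shift to its own block, absorbed into the definition of $g$) records the current color, so the total pre-$\mathsf{FFN}$ vector is injective in $(C^k_{t-1}(\vec{u}), M_t(\vec{u}))$. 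Since this vector ranges over a finite set, a sufficiently expressive $\mathsf{FFN}$ can realize any injection from that finite set into $\mathbb{N}$ (standard universal-approximation / interpolation argument on a compact finite domain), so we can choose $\mathsf{FFN}$ so that $g^{(t)}$ equals precisely the composition of $\REL$ with this injective encoding. This gives $g^{(t)}(\vec{u}) = g^{(t)}(\vec{v}) \iff (C^k_{t-1}(\vec{u}), M_t(\vec{u})) = (C^k_{t-1}(\vec{v}), M_t(\vec{v})) \iff C^k_t(\vec{u}) = C^k_t(\vec{v})$, completing the induction.

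The main obstacle is the bookkeeping in the ``$\Longrightarrow$'' direction: one has to carefully track digit-block allocations so that (i) the current-color block, (ii) the $k$ neighborhood-multiset blocks are pairwise disjoint, and (iii) within each multiset block no carrying occurs — which is exactly why the order bound ($\le n$ per multiset) and the choice $m > n$ matter. A subtle point to handle cleanly is that $g^{(t-1)}$ is only defined up to injective relabeling, so I should phrase the induction hypothesis as ``there exists a choice of $g^{(t-1)}$ with values in a finite $S\subset\mathbb{N}$ satisfying~\eqref{eq:kwl_color_equiv_g}'' and then explicitly construct the $g^{(t)}$ that witnesses the next step, rather than treating $g^{(t-1)}$ as fixed in advance. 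Everything else — the existence of the $\mathsf{FFN}$, the compactness of the domain — is standard and will be cited rather than proved in detail. I would also remark that this $g^{(t)}$ has precisely the additive/linear-aggregation form needed so that a transformer layer can simulate it, which is the purpose of the proposition within the larger argument for Theorems~\ref{theorem:k_wl}–\ref{theorem:k_s_wl}.
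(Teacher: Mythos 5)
Your proposal follows essentially the same strategy as the paper's proof — induction on $t$, Lemma~\ref{lemma:multiset_m_ary} to encode multisets as $m$-ary reciprocals with $m = n+1$, powers of $m$ as the $\beta_j$'s to shift each $j$-neighborhood multiset into a disjoint digit block, and the $\mathsf{FFN}$ to realize the final relabeling on the resulting finite domain. The choice of shift magnitude and the observation that each multiset has order $n$ match the paper's construction closely.

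There is, however, one concrete slip that means your write-up does not prove the stated recurrence as written. You introduce a separate encoding $g \colon S \to (0,1)$, $g(x_i) = m^{-i}$, and then analyze $\sum_{w} g\big(g^{(t-1)}(\phi_j(\vec{u},w))\big)$. But the proposition's update sums the raw values $g^{(t-1)}(\phi_j(\vec{u},w))$; there is no room in the formula to insert an extra map between $g^{(t-1)}$ and the sum. Your later parenthetical that the shift is ``absorbed into the definition of $g$'' does not resolve this, because $g$ acts on the summands, not on $g^{(t-1)}(\vec{u})$, and in any case $g$ is not part of the architecture. The correct way out — and what the paper does — is to strengthen the induction hypothesis itself: maintain, throughout the induction, that $g^{(t-1)}(\vec{u})$ already approximates $m^{-i}$ (where $i$ is the position of $C^k_{t-1}(\vec{u})$ in a fixed ordering of the at most $n^k$ colors) to within an $\epsilon$ that you carry along. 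Then the raw sum $\sum_w g^{(t-1)}(\phi_j(\vec{u},w))$ is directly (a small perturbation of) the $m$-ary multiset encoding, and no auxiliary $g$ is needed. You half-gesture at this in your remark about not treating $g^{(t-1)}$ as fixed, but you stop at ``values in a finite $S \subset \mathbb{N}$'', which is the wrong invariant — integer values would let digit blocks collide under the raw sum. Relatedly, the claim that a ``sufficiently expressive $\mathsf{FFN}$ can realize any injection \ldots into $\mathbb{N}$'' is stronger than what universal approximation gives you; the paper instead has the $\mathsf{FFN}$ \emph{approximate} the map to $m^{-i}$ arbitrarily closely and checks that the approximation error can be kept small enough at each layer to preserve distinctness, which is what you actually need to close the induction.
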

\begin{proof}
Before we start, let us recall the relabeling computed by the \kwl{k} for a $k$-tuple $\vec{u}$ as
\begin{equation*}
 \REL \Big( C_t^k(\vec{u}), M_1^{(t)}(\vec{u}), \dots, M_k^{(t)}(\vec{u}) \Big),
\end{equation*}
with
\begin{equation*}
    M_j^{(t)}(\vec{u}) \coloneqq \{\!\! \{ C_{t-1}^k(\phi_j(\vec{u}, w)) \mid w \in V(G) \} \!\!\}.
\end{equation*}
To show our result, we show that there exist scalars $\beta_1, \dots, \beta_k$ such that the $m$-ary representations computed for $C_t^k(\vec{u})$ and $M_1^{(t)},\dots,M_k^{(t)}$ are pairwise unique. To this end, we show that a weighted sum can represent multiset counting in different exponent ranges of $m$-ary numbers in $(0,1)$. We then simply invoke \Cref{lemma:multiset_m_ary} to show that we can map each unique tuple $(C_t^k(\vec{u}), M_1^{(t)}(\vec{u}), \dots, M_k^{(t)}(\vec{u}))$ to a unique number in $(0,1)$. Finally, the $\mathsf{FFN}$ will be responsible for the relabeling.

Each possible color in $C^{k}_{t}$ is a unique number in $[n^k]$ as the maximum possible number of unique colors in $C^{k}_{t}$ is $n^k$. We then fix an arbitrary ordering over the $[n^k]$.

We show the statement via induction over $t$.
Let $m > 0$ such that $m - 1$ is the order of the multiset $M_j^{(0)}(\vec{u})$. Note that this is the same $m$ for each $\vec{u}$.
For a $k$-tuple $\vec{u}$ with initial color $C^k_0(\vec{u})$ at position $i$, we choose $g^{(0)}$ such as to approximate $m^{-i}$ arbitarily close, i.e.,
\begin{equation*}
\big|\big|g^{(0)}(\vec{u}) - m^{-i}\big|\big|_\text{F} < \epsilon, 
\end{equation*}
for an arbitrarily small $\epsilon > 0$. By choosing $\epsilon$ small enough, we have that $g^{(0)}(\vec{u})$ is unique for every unique position $i$ and hence \Cref{eq:kwl_color_equiv_g} holds for all pairs of $k$-tuples for $t=0$. Note that by construction, $i \leq n^k$ and hence, for tuple $\vec{u}$ at position $i$, the $m$-ary number corresponding to $m^{-i}$ is non-zero in at most the first $n^k$ digits.

For the inductive case, we assume that
\begin{equation*}
    C^{k}_{t-1}(\vec{u}) = C^{k}_{t-1}(\vec{v}) \Longleftrightarrow g^{(t-1)}(\vec{u})= g^{(t-1)}(\vec{v}).
\end{equation*}
Further, we assume that for tuple $\vec{u}$ at position $i$,
\begin{equation*}
\big|\big|g^{(t-1)}(\vec{u}) - m^{-i}\big|\big|_\text{F} < \epsilon, 
\end{equation*}
for an arbitrarily small $\epsilon > 0$.
Let now $\vec{u} \in V(G)^k$. We say that the $j$-neighbors of $\vec{u}$ w.r.t. $w$ have indices $l^{(1)}(w), \dots, l^{(k)}(w)$ in our ordering. Then, $\sum_{w \in V(G)} m^{-l^{(j)}(w)}$ is unique for each unique
\begin{equation}
   M_j^{(t)}(\vec{u}) = \{\!\! \{ C_t^k(\phi_j(\vec{u}, w)) \mid w \in V(G) \} \!\!\},
\end{equation}
according to \Cref{lemma:multiset_m_ary}.

We now obtain a unique value for the tuple
\begin{equation*}
\Big(M_1^{(t)}(\vec{u}), \dots, M_k^{(t)}(\vec{u})\Big),
\end{equation*}
by setting 
\begin{equation*}
    \beta_j \coloneqq m^{-(n^k) \cdot j},
\end{equation*}
for $j \in [k]$.
Specifically, let $a_1, \dots, a_{n^k}$ denote the multiplicities of multiset $M_j^{(t)}(\vec{u})$ and let
\begin{equation*}
    0.a_1 \ldots a_{n^k},
\end{equation*}
denote the $m$-ary number that corresponds to the sum
\begin{equation*}
\sum_{w \in V(G)} m^{-l^{(j)}(w)}.
\end{equation*}
Note that since each $m^{-l^{(j)}(w)}$ corresponds to a color under \kwl{k} at iteration $t-1$, all digits after the $n^k$-th digit are zero.
Then, multiplying the above sum with $\beta_j$ results in a shift in $m$-ary notation and, hence, for the $m$-ary number that corresponds to the term
\begin{equation}\label{eq:kwl_shifted_nk_j}
\beta_j \cdot \sum_{w \in V(G)} m^{-l^{(j)}(w)},
\end{equation}
we can write
\begin{equation*}
    0.0\ldots \underbrace{0}_{n^k \cdot j} \, \, \underbrace{a_1 \ldots a_{n^k}}_{n^k \cdot j + 1, \dots, n^k \cdot j + n^k} \, \, \underbrace{0}_{n^k \cdot (j+1) + 1} \ldots 0.
\end{equation*}
As a result, for all $l \neq j$, the non-zero digits of the $m$-ary number that corresponds to
\begin{equation*}
\beta_l \cdot \sum_{w \in V(G)} m^{-l^{(j)}(w)}
\end{equation*}
do not collide with the non-zero digits of the output of \Cref{eq:kwl_shifted_nk_j} and hence, the sum
\begin{equation}\label{eq:kwl_g_just_sum}
\sum_{j \in [k]} \beta_j \cdot \sum_{w \in V(G)} m^{-l^{(j)}(w)},
\end{equation}
is unique for each unique tuple
\begin{equation*}
    \Big(M_1^{(t)}(\vec{u}), \dots, M_k^{(t)}(\vec{u})\Big).
\end{equation*}

Let $\vec{u}_i$ be the $i$-th tuple in our fixed but arbitrary ordering. Then, $g^{(t-1)}(\vec{u})$ approximates the number $m^{-i}$ arbitrarily close.
Note that by the induction hypothesis, the $m$-ary number that corresponds to $m^{-i}$ is non-zero in at most the first $n^k$ digits, as $n^k$ is the maximum possible number of colors attainable under \kwl{k}.
Since the smallest shift in \Cref{eq:kwl_g_just_sum} is by $n^k$ (for $j = 1$) and since the $m$-ary number that corresponds to $m^{-i}$ is non-zero in at most the first $n^k$ digits, the sum of $m^{-i}$ and \Cref{eq:kwl_g_just_sum} have no intersecting non-zero digits. 
As a result,
\begin{equation} \label{eq:kwl_g_without_hash}
 m^{-i} + \sum_{j \in [k]} \beta_j \cdot \sum_{w \in V(G)} m^{-l^{(j)}(w)} 
\end{equation}
is unique for each  tuple
\begin{equation*}
    \Big(C_t^k(\vec{u}), M_1^{(t)}(\vec{u}), \dots, M_k^{(t)}(\vec{u})\Big).
\end{equation*}
Now, by the induction hypothesis, we can approximate each $m^{-i}$ with $g^{(t-1)}(\vec{u}_i)$ arbitrarily close. Further, we can approximate each $m^{-l^{(j)}(w)}$ with $g^{(t-1)}(\phi_j(\vec{u}, w))$ arbitrarily close. As a result, we can approximate \Cref{eq:kwl_g_without_hash} with
\begin{equation*}
  g^{(t-1)}(\vec{u}) + \sum_{j \in [k]} \beta_j \cdot \sum_{w \in V(G)} g^{(t-1)}(\phi_j(\vec{u}, w))  
\end{equation*}
arbitrarily close, i.e.,
\begin{equation*}
  \big|\big|\Big( g^{(t-1)}(\vec{u}) + \sum_{j \in [k]} \beta_j \cdot \sum_{w \in V(G)} g^{(t-1)}(\phi_j(\vec{u}, w))\Big)  - \Big( m^{-i} + \sum_{j \in [k]} \beta_j \cdot \sum_{w \in V(G)} m^{-l^{(j)}(w)  } \Big)\big|\big|_\text{F} < \epsilon,
\end{equation*}
for an arbitrarily small $\epsilon$. Hence, we by choosing $\epsilon$ small enough,
\begin{equation*}
    g^{(t-1)}(\vec{u}) + \sum_{j \in [k]} \beta_j \cdot \sum_{w \in V(G)} g^{(t-1)}(\phi_j(\vec{u}, w))
\end{equation*}
is also unique for each unique tuple
\begin{equation*}
    \Big(C_t^k(\vec{u}), M_1^{(t)}(\vec{u}), \dots, M_k^{(t)}(\vec{u})\Big).
\end{equation*}
Finally, since for finite graphs of order $n$ there exists only a finite number of such tuples, there exists a continuous function mapping the output of \Cref{eq:kwl_g_without_hash} to $m^{-i}$ where $i$ is the position of the color $C^{k}_{t}(\vec{u})$ in $[n^k]$, for all $\vec{u} \in V(G)^k$.
We approximate this function with $\mathsf{FFN}$ arbitrarily close and obtain
\begin{equation*}
||g^{(t)}(\vec{u}) - m^{-i}|| < \epsilon,
\end{equation*}
for an arbitrarily small $\epsilon > 0$.
This concludes the proof.
\end{proof}
We finish with a few Corollaries regarding variants of the \kwl{k}.
\begin{corollary}
Let  $G = (V(G), E(G), \ell)$ be a $n$-order (node-)labeled graph. Then for each $k$-tuple $\vec{u} \coloneqq (u_1, \dots u_k)$ and each $t > 0$, we can equivalently express the coloring of the \dkwl{k} as $C^{\delta, k}_t(\vec{u}) \coloneqq$
\begin{align*}
    \REL\Big( C^{\delta, k}_{t-1}(\vec{u}), &\{\!\! \{ C^{\delta, k}_{t-1}(\phi_1(\vec{u}, w)) \mid w \in {N}(u_1) \} \!\!\},\{\!\! \{ C^{\delta, k}_{t-1}(\phi_1(\vec{u}, w)) \mid w \in V(G) \setminus {N}(u_1) \} \!\!\}, \dots, \\
    &\{\!\! \{ C^{\delta, k}_{t-1}(\phi_k(\vec{u}, w)) \mid w \in {N}(u_k) \} \!\!\}, \{\!\! \{ C^{\delta, k}_{t-1}(\phi_k(\vec{u}, w)) \mid w \in V(G) \setminus {N}(u_k) \} \!\!\} \Big).
\end{align*}
\end{corollary}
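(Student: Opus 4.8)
The plan is to prove, by induction on $t$, that the coloring $C^{\delta,k}_t$ defined by the recurrence in the statement induces the same partition of $V(G)^k$ as the coloring of the \dkwl{k} from \Cref{app:preliminaries}, whose update multiset $M^{\text{adj}}_t(\vec{u})$ pairs each $j$-neighbor $\phi_j(\vec{u},w)$ with the adjacency bit $\text{adj}(u_j,w)$. Since in both formulations $\REL$ merely assigns, injectively, a fresh natural number to its argument, it suffices to exhibit, at every iteration and for all $k$-tuples simultaneously, a fixed bijection between the arguments fed to $\REL$ in the two recurrences; composing with such a bijection then shows the two colorings refine to the same equivalence relation.

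For the base case $t = 0$, both colorings equal the atomic type and hence coincide up to renaming. For the inductive step, suppose $C^{\delta,k}_{t-1}$ and the original coloring $C^{k,\text{adj}}_{t-1}$ induce the same partition; fixing a bijection between their values, identify them and write $C_{t-1}$ for the common coloring. The key observation is that, because $G$ has no self-loops, $\text{adj}(u_j,w) = 1$ exactly when $w \in N(u_j)$ and $\text{adj}(u_j,w) = 0$ exactly when $w \in V(G) \setminus N(u_j)$. Therefore, splitting the multiset $\oms (C_{t-1}(\phi_j(\vec{u},w)), \text{adj}(u_j,w)) \mid w \in V(G) \cms$ according to the value of its second coordinate yields precisely the ordered pair $\big( \oms C_{t-1}(\phi_j(\vec{u},w)) \mid w \in N(u_j) \cms,\, \oms C_{t-1}(\phi_j(\vec{u},w)) \mid w \in V(G) \setminus N(u_j) \cms \big)$; conversely, re-tagging the first component of this pair with $1$ and the second with $0$ recovers the original multiset. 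This gives a bijection for the $j$-th block of the update, and taking the product over $j \in [k]$ together with the common term $C_{t-1}(\vec{u})$ produces the desired bijection between the full $\REL$-arguments in the two recurrences.

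Applying an injective relabeling on each side then yields iteration-$t$ colorings inducing the same partition of $V(G)^k$, closing the induction and establishing the claimed equivalence. I do not anticipate a substantive obstacle here; the only delicate points are making precise what ``equivalently express'' means (agreement of induced partitions, not literal equality of color names, consistent with the comparison of \kwl{k} variants in \Cref{app:preliminaries}) and the self-loop case $w = u_j$, which, since $\text{adj}(u_j,u_j) = 0$ and $\phi_j(\vec{u},u_j) = \vec{u}$, contributes $C_{t-1}(\vec{u})$ to the non-neighbor multiset in both formulations alike.
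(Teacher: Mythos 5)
Your proof is correct and spells out precisely what the paper leaves implicit: the corollary is stated without proof because it follows directly from the observation that, for a graph without self-loops, the multiset $\oms (C^{k,\text{adj}}_{t-1}(\phi_j(\vec{u},w)),\,\text{adj}(u_j,w)) \mid w \in V(G) \cms$ and the ordered pair of multisets indexed by $w \in N(u_j)$ and $w \in V(G)\setminus N(u_j)$ determine one another bijectively. Your inductive bijection argument, including the careful treatment of the case $w = u_j$ landing in the non-neighbor multiset, is exactly the intended justification.
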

With the above statement, we can directly derive a $\delta$-variant of \Cref{prop:kwl_gnn}.
To this end, let $\Delta_j(\vec{u})$ denote the set of vertices adjacent to the $j$-th node in $\vec{u}$.
\begin{corollary}\label{cor:unified_k_gnn}
Let  $G = (V(G), E(G), \ell)$ be a $n$-order (vertex-)labeled graph and assume a vertex feature matrix $\vec{F} \in \mathbb{R}^{n \times d}$ that is consistent with $\ell$. Then for all $t \geq 1$, there exists a function $g^{(t)}$ and scalars $\beta_1, \dots, \beta_{2k}$ with
\begin{equation*}
    g^{(t)}(\vec{u}) \coloneqq \mathsf{FFN} \Big( g^{(t-1)}(\vec{u})  + \sum_{j \in [k]} \Big( \alpha_j \cdot \sum_{w \in \Delta_j(\vec{u})} g^{(t-1)}(\phi_j(\vec{u}, w)) \, + \, \beta_j \cdot \sum_{w \in V(G) \setminus \Delta_j(\vec{u})} g^{(t-1)}(\phi_j(\vec{u}, w) \Big) \Big),
\end{equation*}
such that for all $\vec{u}, \vec{v} \in V(G)^k$
\begin{equation}%
   C^{k,\text{M}}_{t}(\vec{u}) = C^{k,\text{M}}_{t}(\vec{v}) \Longleftrightarrow g^{(t)}(\vec{u})= g^{(t)}(\vec{v}).
\end{equation}
\end{corollary}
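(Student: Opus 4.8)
The plan is to follow the proof of \Cref{prop:kwl_gnn} essentially line for line, the only structural change being that each $k$-tuple now contributes $2k$ colour multisets instead of $k$. Using the reformulation of the \dkwl{k} recalled immediately above, $C^{k,\text{M}}_t(\vec{u})$ is determined by $C^{k,\text{M}}_{t-1}(\vec{u})$ together with, for every $j\in[k]$, the multiset $\{\!\!\{C^{k,\text{M}}_{t-1}(\phi_j(\vec{u},w))\mid w\in\Delta_j(\vec{u})\}\!\!\}$ of $j$-neighbour colours via adjacent vertices and the multiset $\{\!\!\{C^{k,\text{M}}_{t-1}(\phi_j(\vec{u},w))\mid w\in V(G)\setminus\Delta_j(\vec{u})\}\!\!\}$ via non-adjacent vertices. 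Each of these has order at most $n$, so the same constant $m$ (with $m-1$ bounding the multiset order) and the same bound $n^k$ on the number of attainable colours as in \Cref{prop:kwl_gnn} still apply. I would then run the induction on $t$ exactly as before: initialize $g^{(0)}(\vec{u})$ so that it approximates $m^{-i}$ arbitrarily closely, where $i$ indexes the atomic type of $\vec{u}$ in a fixed ordering of the at most $n^k$ colours, giving a value whose $m$-ary expansion is supported on its first $n^k$ digits and is injective up to $C^{k,\text{M}}_0$-equivalence.

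For the inductive step, assume $g^{(t-1)}(\vec{u})$ approximates $m^{-i}$ for $i$ the index of $C^{k,\text{M}}_{t-1}(\vec{u})$. By \Cref{lemma:multiset_m_ary}, $\sum_{w\in\Delta_j(\vec{u})} m^{-l^{(j)}(w)}$ (where $l^{(j)}(w)$ is the index of the colour $C^{k,\text{M}}_{t-1}(\phi_j(\vec{u},w))$) is a number in $(0,1)$ whose $m$-ary digits, supported on the first $n^k$ positions, uniquely encode the adjacent $j$-neighbour multiset, and similarly for the non-adjacent one. I would then choose the $2k$ scalars $\alpha_1,\dots,\alpha_k,\beta_1,\dots,\beta_k$ so that their corresponding weighted sums occupy $2k$ pairwise disjoint digit-blocks of width $n^k$, leaving the first block for $g^{(t-1)}(\vec{u})$ itself; concretely $\alpha_j\coloneqq m^{-n^k(2j-1)}$ and $\beta_j\coloneqq m^{-n^k\cdot 2j}$, so that multiplication by $\alpha_j$ or $\beta_j$ is the shift operation described after \Cref{lemma:multiset_m_ary}. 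With these choices the non-zero digits of the $2k+1$ summands never collide, hence
\begin{equation*}
    m^{-i} + \sum_{j\in[k]}\Big(\alpha_j\sum_{w\in\Delta_j(\vec{u})} m^{-l^{(j)}(w)} + \beta_j\sum_{w\in V(G)\setminus\Delta_j(\vec{u})} m^{-l^{(j)}(w)}\Big)
\end{equation*}
is injective in the tuple consisting of $C^{k,\text{M}}_{t-1}(\vec{u})$ and all $2k$ neighbour multisets, i.e.\ it separates exactly the \dkwl{k} classes at iteration $t$. Replacing each $m^{-i}$ by $g^{(t-1)}(\vec{u})$ and each $m^{-l^{(j)}(w)}$ by $g^{(t-1)}(\phi_j(\vec{u},w))$ introduces only a controllable error, so choosing the approximation tolerances small enough keeps the expression injective; finiteness of $G$ then yields a continuous map sending this value to $m^{-i'}$ with $i'$ the index of the new colour, which $\mathsf{FFN}$ approximates arbitrarily closely, closing the induction and establishing the claimed equivalence for $C^{k,\text{M}}$.

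The main obstacle is purely bookkeeping rather than a genuine difficulty: one has to track the $2k+1$ disjoint digit ranges and the propagation of approximation error through the two nested summations (over $w$ and over $j$) and then through $\mathsf{FFN}$, picking a sufficiently small $\epsilon$ at each stage so that distinct target colours stay separated — all handled exactly as in \Cref{prop:kwl_gnn}. I would also note, as a consistency check, that since $\Delta_j(\vec{u})=N(u_j)$ and its complement partition $V(G)$, the two sums over $w$ jointly recover the full $j$-neighbour multiset used in \Cref{prop:kwl_gnn}, so this construction is at least as expressive as the plain \kwl{k} one, while the exact correspondence with the \dkwl{k} comes directly from the reformulation of $C^{k,\text{M}}_t$ recalled above.
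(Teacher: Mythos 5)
Your proposal is correct and takes essentially the same approach as the paper, which simply states that the proof mirrors that of \Cref{prop:kwl_gnn} with the $k$ multisets replaced by $2k$ multisets (one adjacent, one non-adjacent per position $j$) and the observation that the special cases $\alpha_j=\beta_j$ and $\beta_j=0$ recover \kwl{k} and \dklwl{k}. Your explicit choice of shifts $\alpha_j=m^{-n^k(2j-1)}$, $\beta_j=m^{-n^k\cdot 2j}$ correctly yields $2k+1$ pairwise-disjoint digit blocks of width $n^k$, and the rest of the argument — same $m$, same $n^k$ colour bound, approximation bookkeeping, final $\mathsf{FFN}$ relabeling — is a faithful expansion of the paper's terse "the proof is the same."
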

The proof is the same as for \Cref{prop:kwl_gnn}. 
Further, we can also recover the \dklwl{k} variant of \Cref{prop:kwl_gnn} by setting $\beta_j = 0$. Lastly, we again recover \Cref{prop:kwl_gnn} by setting $\alpha_j = \beta_j$.

\subsection{Higher-order pure transformers}
Here, we use the results from the previous section to show the expressivity results for \Cref{theorem:k_wl}, \Cref{theorem:delta_k_wl} and \Cref{theorem:k_s_wl} in the main paper.
In fact, we prove a slightly stronger result than \Cref{theorem:k_wl} from which then also \Cref{theorem:delta_k_wl} and \Cref{theorem:k_s_wl} will follow directly. %
\begin{theorem}[Generalization of \Cref{theorem:k_wl} in main paper]
Let $G=(V(G),E(G),\ell)$ be a labeled graph with $n$ nodes and $k \geq 2$ and $\vec{F} \in \mathbb{R}^{n \times d}$ be a node feature matrix consistent with $\ell$. Let $C^k_t$ denote the coloring function of the \kwl{k}, \dkwl{k}, or \dklwl{k} at iteration $t$.
Then for all iterations $t \geq 0$, there exists a parametrization of the \kgt{k} such that
\begin{equation*}
    C^{k}_t(\vec{v}) = C^{k}_t(\vec{w}) \Longleftrightarrow \vec{X}^{(t, k)}(\vec{v}) = \vec{X}^{(t, k)}(\vec{w})
\end{equation*}
for all $k$-tuples $\vec{v}$ and $\vec{w} \in V(G)^k$. If $C^k_t$ is the coloring function of the \kwl{k}, it suffices that the structural embeddings of \kgt{k} are sufficiently node-identifying. Otherwise, we require the structural embeddings of \kgt{k} to be both sufficiently node and adjacency-identifying.
\end{theorem}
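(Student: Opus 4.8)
The plan is to reduce the statement to the existence of the higher-order message-passing recursion of \Cref{cor:unified_k_gnn} and then show that a single layer of the \kgt{k} simulates one step of that recursion, mirroring the structure of the proof of \Cref{theorem:1_wl}. First I would invoke \Cref{lemma:concatenated_kwl_token_embeddings} to assume the initial tokens have the concatenated form $\vec{X}^{(0,k)}(\vec{v})=\bigl[\vec{F}'(v_1)\ \cdots\ \vec{F}'(v_k)\ \ \vec{0}\ \ \embed{deg}'(v_1)\ \cdots\ \embed{deg}'(v_k)\ \ \vec{P}'(v_1)\ \cdots\ \vec{P}'(v_k)\ \ \embed{atp}'(\vec{v})\bigr]$, where the $\vec{F}'$-blocks injectively encode the labels $\ell(v_i)$, the $\embed{deg}'$-blocks are chosen so that $\deg(v_i)$ is linearly readable, $\vec{P}'$ is (sufficiently) node- and adjacency-identifying, $\embed{atp}'$ injectively encodes $\text{atp}(\vec{v})$, and the $\vec{0}$-block is scratch space. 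I designate the concatenation of the $\vec{F}'$-blocks with the $\embed{atp}'$-block as the \emph{color sub-space}, with content $g^{(0)}(\vec{v})$; since the initial \kwl{k} (resp.\ \dkwl{k}, \dklwl{k}) color is exactly the atomic type together with the component labels, this gives the base case $g^{(0)}(\vec{v})=g^{(0)}(\vec{w})\Longleftrightarrow C^k_0(\vec{v})=C^k_0(\vec{w})$. The inductive hypothesis is that after $t-1$ layers the token of $\vec{u}_i$ carries the value $g^{(t-1)}(\vec{u}_i)$ produced by the recursion of \Cref{cor:unified_k_gnn} in the color sub-space, zeros in the scratch block, and the unchanged degree/structural/atomic-type blocks.

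For the inductive step I would use $2k$ attention heads, one pair $(j,+1),(j,-1)$ per coordinate $j\in[k]$. For head $(j,\gamma)$ I apply \Cref{lemma:approx_generalized_adjacency} to $\vec{P}'$: choosing $\vec{W}^Q,\vec{W}^K$ so that they annihilate the $j$-th component's node-matching term, keep the node-matching terms for all $o\neq j$ (scaled by a large $b$), and add the adjacency-matching term of the $j$-th component with sign $\gamma$, the row-wise softmax of the attention matrix approximates arbitrarily closely the weighted indicator matrix $\Tilde{\vec{A}}^{(k,j,\gamma)}$ of the generalized adjacency matrix of \Cref{eq:generalized_adjacency_matrix}. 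Setting $\vec{W}^V$ to extract the color sub-space, head $(j,\gamma)$ outputs (up to arbitrarily small error) the matrix $\Tilde{\vec{A}}^{(k,j,\gamma)}\vec{G}^{(t-1)}$, where $\vec{G}^{(t-1)}$ stacks the rows $g^{(t-1)}(\vec{u}_i)$, so its $i$-th row equals $c_{i,j,\gamma}^{-1}\sum_{w} g^{(t-1)}(\phi_j(\vec{u}_i,w))$ with $w$ ranging over the adjacent ($\gamma=+1$) or non-adjacent ($\gamma=-1$) $j$-neighbors and $c_{i,j,+1}=\deg(\vec{u}_{ij})$, $c_{i,j,-1}=n-\deg(\vec{u}_{ij})$, both nonzero since $G$ has no isolated nodes and no self-loops. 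The output projection $\vec{W}^O$ places the $2k$ head outputs into $2k$ disjoint scratch sub-spaces while the residual connection preserves the rest. For the pure \kwl{k} case only \emph{node}-identifying $\vec{P}'$ is needed: one head per $j$ matching on all coordinates $o\neq j$ recovers the weighted indicator of the full $j$-neighborhood, since every tuple has exactly $n$ $j$-neighbors.

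It then remains to realize the \Cref{cor:unified_k_gnn} update with the layer's position-wise $\mathsf{FFN}$. Its input row lies in a compact set (eigenvalue, degree, and color ranges are all bounded), so it can approximate any continuous map there; I choose it to read $\deg(\vec{u}_{ij})$ off the degree block, rescale the $(j,\gamma)$ scratch block by $c_{i,j,\gamma}$ (the constant $n$ entering the $\gamma=-1$ case), thereby recovering $\sum_{w\in\Delta_j(\vec{u}_i)}g^{(t-1)}(\phi_j(\vec{u}_i,w))$ and $\sum_{w\notin\Delta_j(\vec{u}_i)}g^{(t-1)}(\phi_j(\vec{u}_i,w))$, form $g^{(t-1)}(\vec{u}_i)+\sum_j\bigl(\alpha_j\cdot(\text{adjacent sum})+\beta_j\cdot(\text{non-adjacent sum})\bigr)$, apply the $\mathsf{FFN}$ of \Cref{cor:unified_k_gnn}, write the result into the color sub-space, and zero the scratch block. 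By \Cref{cor:unified_k_gnn} this reproduces $g^{(t)}(\vec{u}_i)$ for the chosen variant, so $g^{(t)}(\vec{u})=g^{(t)}(\vec{w})\Longleftrightarrow C^k_t(\vec{u})=C^k_t(\vec{w})$; since there are finitely many tuples and colors and the whole construction is a fixed-length composition of continuous maps, choosing every approximation tolerance small enough (to absorb the degree-amplification factor $\le n$) preserves this bijection at every layer. Equality of the full token $\vec{X}^{(t,k)}$ implies equality of the color sub-space, hence of $C^k_t$, which is the lower bound; the converse follows as in the proof of \Cref{theorem:1_wl}, using that for $t\geq 1$ and $k\geq 2$ equality of $C^k_t$ also forces equality of the tracked degree and atomic-type blocks.

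The main obstacle is the plumbing in the inductive step: simultaneously wiring $\vec{W}^Q,\vec{W}^K$ of the $2k$ heads so that \Cref{lemma:approx_generalized_adjacency} delivers the two \emph{separately} row-normalized aggregations $\Tilde{\vec{A}}^{(k,j,+1)}\vec{G}^{(t-1)}$ and $\Tilde{\vec{A}}^{(k,j,-1)}\vec{G}^{(t-1)}$, using the degree information stored in the tokens inside the $\mathsf{FFN}$ to undo these normalizations before combining them with the scalars $\alpha_j,\beta_j$ of \Cref{cor:unified_k_gnn}, and controlling the error, which the de-normalization amplifies, over the (fixed) number of layers. A secondary wrinkle absent in the $k=1$ case is that the initial color is a joint function of the component labels and the atomic type, so the base case must bundle both into the designated color sub-space rather than reusing the label sub-space directly.
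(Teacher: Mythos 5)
Your proposal follows essentially the same route as the paper's proof: reformulate the initial tokens in concatenated block form via \Cref{lemma:concatenated_kwl_token_embeddings}, allocate $2k$ attention heads $(j,\gamma)$ whose softmaxed attention matrices approximate $\Tilde{\vec{A}}^{(k,j,\gamma)}$ via \Cref{lemma:approx_generalized_adjacency}, let $\vec{W}^V$ and $\vec{W}^O$ route the aggregated color subspace into disjoint scratch blocks, and use the position-wise $\mathsf{FFN}$ to read off the stored degrees, undo the softmax normalization, form the weighted sum with the $\alpha_j,\beta_j$ of \Cref{cor:unified_k_gnn}, and apply its $\mathsf{FFN}$ to reproduce $g^{(t)}$. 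Your explicit remark that for the pure \kwl{k} case a single node-identifying head per coordinate $j$ suffices (matching on all $o\neq j$ and exploiting the fixed count of $n$ $j$-neighbors) addresses a claim of the theorem that the paper's proof, which routes all three variants through the $\gamma\in\{-1,1\}$ heads and hence through adjacency identification, leaves implicit.
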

\begin{proof}
We begin by recalling \Cref{eq:kwl_token_embeddings} as
\begin{equation*}
    \vec{X}^{(0, k)}(\vec{v}) \coloneqq \big[
        \vec{F}(v_i)
    \big]_{i=1}^k \mathbf{W}^F + \big[
        \vec{P}(v_i)
    \big]_{i=1}^k \mathbf{W}^P + \embed{atp}(\vec{v}),
\end{equation*}
where $\vec{P}$ are structural embeddings and $\embed{atp}$ is a learnable embedding of the atomic type.
We further know from \Cref{lemma:concatenated_kwl_token_embeddings} that we can write
\begin{equation}\label{eq:concatenated_kwl_token_embeddings}
    \vec{X}^{(0, k)}(\vec{v}) = \begin{bmatrix}
     \vec{F}'(v_1) \,\,\,\, \dots \,\,\,\, \vec{F}'(v_k) & 
     \vec{0} &
     \embed{deg}'(v_1) & \dots & \embed{deg}'(v_k) & \vec{P}'(v_1) & \dots & \vec{P}'(v_k) & \embed{atp}'(\vec{v})
  \end{bmatrix},
\end{equation}
where it holds for every $v, w \in V(G)$,
\begin{equation*}
  \vec{F}(v) = \vec{F}(w) \Longleftrightarrow \vec{F}'(v) = \vec{F}'(w)
\end{equation*}
and
\begin{equation*}
  \embed{deg}(v) = \embed{deg}(w) \Longleftrightarrow \embed{deg}'(v) = \embed{deg}'(w)
\end{equation*}
and
\begin{equation*}
  \vec{P}(v) = \vec{P}(w) \Longleftrightarrow \vec{P}'(v) = \vec{P}'(w)
\end{equation*}
where $\vec{P}'(v)$ is sufficiently node and adjacency-identifying
and for every $\vec{v}, \vec{w} \in V(G)^k$
\begin{equation*}
  \embed{atp}(\vec{v}) = \embed{atp}(\vec{w}) \Longleftrightarrow \embed{atp}'(\vec{v}) = \embed{atp}'(\vec{w}).
\end{equation*}
Further, $\vec{F}'(v) \in \mathbb{R}^{p}$, $\vec{0} \in \mathbb{R}^{k^2 \cdot p}$, $\embed{deg}' \colon \mathbb{N} \rightarrow \mathbb{R}^{r}$, $\embed{atp}'\colon \mathbb{N} \rightarrow \mathbb{R}^{o}$ and $\vec{P}' \in \mathbb{R}^{s}$, for some choice of $p, r, s, o$ where $d = k^2 \cdot p + k \cdot p +  k\cdot r + k\cdot s + o$, as specified in \Cref{lemma:concatenated_kwl_token_embeddings}.
We will use this parameterization for $\vec{X}^{(0,k)}$ throughout the rest of the proof.

We prove our statement by induction over iteration $t$.
For the base case, notice that the initial color of a tuple $\vec{v}$ depends on the atomic type and the node labeling. In \Cref{eq:concatenated_kwl_token_embeddings}, we encode the atomic type with $\embed{atp}'(\vec{v})$ and the node labels by concatenating the features $\vec{F}'(v_1), \dots, \vec{F}'(v_k)$ of the $k$ nodes $v_1, \dots, v_k$ in $\vec{v}$. The concatenation of both node labels and atomic type is clearly injective, and so 
\begin{equation*}
    C^{k}_0(\vec{v}) = C^{k}_0(\vec{w}) \Longleftrightarrow \vec{X}^{(0, k)}(\vec{v}) = \vec{X}^{(0, k)}(\vec{w})
\end{equation*}
for any two $\vec{v}, \vec{w} \in V(G)^k$.

Before continuing with the induction, we define some additional notation. Throughout the induction, we will denote the color representation of a tuple $\vec{v}$ at iteration $t$ as $\vec{F}^{(t)}(\vec{v})$. Further, we initially define $\vec{F}^{(0)}(\vec{v}) = \big[ \vec{F}'(v_i) \big]_{i=1}^k$, $\embed{deg}'(\vec{v}) = \big[ \embed{deg}'(v_i) \big]_{i=1}^k$ and $\vec{P}'(\vec{v}) = \big[ \vec{P}'(v_i) \big]_{i=1}^k$. Hence, we can rewrite
\begin{equation*}
    \vec{X}^{(0, k)}(\vec{v}) = \begin{bmatrix}
        \vec{F}^{(0)}(\vec{v}) & 
        \vec{0} &
        \embed{deg}'(\vec{v}) & \vec{P}'(\vec{v}) & \embed{atp}'(\vec{v})
    \end{bmatrix}.
\end{equation*}

For the induction step,
we show
\begin{equation}\label{eq:kwl_proof_inductive_case}
    C^k_t(\vec{v}) = C^k_t(\vec{w}) \Longleftrightarrow \vec{X}^{(t, k)}(\vec{v}) = \vec{X}^{(t, k)}(\vec{w}),
\end{equation}
that is we compute the \kwl{k}-, \dklwl{k}, or \dkwl{k}-equivalent aggregation.
Clearly, if \Cref{eq:kwl_proof_inductive_case} holds for all $t$, then the proof statement follows.
Thereto, we show that the standard transformer updates the tuple representation of tuple $\vec{u}_i = (u_1, \dots, u_k)$, following \Cref{cor:unified_k_gnn}, as
\begin{equation}\label{eq:kwl_proof_X_update}
    \vec{X}^{(t, k)}(\vec{u}_i) = \begin{bmatrix}
    \mathsf{FFN}\Big(\vec{F}^{(t-1)}(\vec{u}_i) + H(\vec{X}^{(t-1, k)}(\vec{u}_i))\Big) & 
    \vec{0} &
    \embed{deg}'(\vec{u}_i) &
    \vec{P}'(\vec{u}_i) & \embed{atp}'(\vec{u}_i)  
    \end{bmatrix},
\end{equation}
where
\begin{equation*}
   H(\vec{X}^{(t-1, k)}(\vec{u}_i)) \coloneqq \sum_{j \in [k]} \Big( \alpha_j \cdot \sum_{w \in \Delta_j(\vec{u}_i)}  \vec{F}^{(t-1)}(\phi_j(\vec{u}_i, w)) + \beta_j \cdot \sum_{w \in V(G) \setminus\Delta_j(\vec{u}_i)}  \vec{F}^{(t-1)}(\phi_j(\vec{u}_i, w)) \Big),
\end{equation*}
for $\alpha_1, \dots, \alpha_k, \beta_1, \dots, \beta_k \in \mathbb{R}$, for $j\in[k]$ that we pick such that $H(\vec{X}^{(t-1, k)}(\vec{u}_i))$ is equivalent to the neural architecture in \Cref{cor:unified_k_gnn}.
Note that we obtain the \kwl{k} for $\alpha_j = \beta_j$ for all $j \in [k]$. Then,
\begin{equation*}
   H(\vec{X}^{(t-1, k)}(\vec{u}_i)) = \sum_{j \in [k]} \Big( \beta_j \cdot \sum_{w \in V(G)}  \vec{F}^{(t-1)}(\phi_j(\vec{u}_i, w)) \Big).
\end{equation*}
Further, we obtain \dklwl{k} for $\beta_j = 0$ for all $j \in [k]$.
Then, 
\begin{equation*}
   H(\vec{X}^{(t-1, k)}(\vec{u}_i)) = \sum_{j \in [k]} \Big( \alpha_j \cdot \sum_{w \in \Delta_j(\vec{u}_i)}  \vec{F}^{(t-1)}(\phi_j(\vec{u}_i, w)) \Big).
\end{equation*}
Hence, it remains to show that the standard transformer layer can update tuple representations according to \Cref{eq:kwl_proof_X_update}. 
To this end, we will require $2k$ transformer heads $h^1_1, \dots h^1_k, h^2_1, \dots h^2_k$ in each layer. 
Specifically, in the first $k$ heads, we want to compute
\begin{equation}
    h^1_j(\vec{X}^{(t-1, k)}(\vec{u}_i)) = 
        \alpha_j \cdot \sum_{w \in \Delta_j(\vec{u}_i)}  \vec{F}^{(t-1)}(\phi_j(\vec{u}_i, w)) \label{eq:kwl_proof_head_1_j}.
\end{equation}
In the remaining $k$ heads, we want to compute
\begin{equation}
    h^2_j(\vec{X}^{(t-1, k)}(\vec{u}_i)) = 
        \beta_j \cdot \sum_{w \in V(G) \setminus\Delta_j(\vec{u}_i)}  \vec{F}^{(t-1)}(\phi_j(\vec{u}_i, w))  \label{eq:kwl_proof_head_2_j}.
\end{equation}

In both of the above cases, for a head $h^\gamma_j$, $\gamma \in \{-1, 1\}$ denotes the type of head, and $j \in [k]$ denotes the $j$-neighbors the head aggregates over. For the head dimension, we set $d_v = d$.

For each $j$, recall the definition of the standard transformer head at tuple $\vec{u}_i$ at position $i$ as
\begin{equation*}
h^\gamma_j(\vec{X}^{(t-1, k)}(\vec{u}_i)) = \mathsf{softmax} \Big( \begin{bmatrix}
    \vec{Z}_{i1}^{(t-1, \gamma)} & \dots & \vec{Z}_{in^k}^{(t-1, \gamma)}
\end{bmatrix} \Big) 
    \begin{bmatrix}
       \vec{X}^{t-1}(\vec{u}_1) & \dots & \vec{X}^{t-1}(\vec{u}_{n^k}) 
    \end{bmatrix}^T \mathbf{W}^V,
\end{equation*}
where
\begin{equation}\label{eq:kwl_proof_Zstar}
    \vec{Z}_{il}^{(t-1, \gamma)} \coloneqq \frac{1}{\sqrt{d_k}} (\vec{X}^{(t-1, k)} (\vec{u}_i)\vec{W}^{Q,\gamma})(\vec{X}^{(t-1, k)} (\vec{u}_l)\vec{W}^{K,\gamma})^T \in \mathbb{R}
\end{equation}
is the unnormalized attention score between tuples $\vec{u}_i$ and $\vec{u}_l$, with
\begin{equation*}
    \vec{X}^{(t-1, k)}(\vec{u}_i) = \begin{bmatrix}
       \vec{F}^{(t-1)}(\vec{u}_i) &
       \vec{0} & 
       \embed{deg}'(\vec{u}_i) & 
       \vec{P}'(\vec{u}_i) & \embed{atp}'(\vec{u}_i)
    \end{bmatrix}
\end{equation*}
for all $i \in [n^k]$ and
\begin{align*}
   \vec{W}^{Q, \gamma} &= \begin{bmatrix}
      \vec{W}^Q_F \\
      \vec{W}^Q_Z \\
      \vec{W}^Q_D \\
      \vec{W}^{Q, \gamma}_P \\
      \vec{W}^Q_A
   \end{bmatrix} \\
   \vec{W}^{K, \gamma} &= \begin{bmatrix}
      \vec{W}^K_F \\
      \vec{W}^K_Z \\
      \vec{W}^K_D \\
      \vec{W}^{K, \gamma}_P \\
      \vec{W}^K_A \\
   \end{bmatrix} \\
   \vec{W}^V &= \begin{bmatrix}
      \vec{W}^V_F \\
      \vec{W}^V_Z \\
      \vec{W}^V_D \\
      \vec{W}^V_P \\
      \vec{W}^V_A \\
   \end{bmatrix},
\end{align*}
where 
\begin{align*}
   \vec{F}^{(t-1)}(\vec{u}_i) &\text{ is projected by }  \vec{W}^Q_F, \vec{W}^K_F, \vec{W}^V_F, \\
   \vec{0} &\text{ is projected by }  \vec{W}^Q_Z, \vec{W}^K_Z, \vec{W}^V_Z, \\
   \embed{deg}'(\vec{u}_i) &\text{ is projected by }  \vec{W}^Q_D, \vec{W}^K_D, \vec{W}^V_D, \\
   \vec{P}'(\vec{u}_i) &\text{ is projected by }  \vec{W}^{Q,\gamma}_P, \vec{W}^{K,\gamma}_P, \vec{W}^V_P, \\
   \embed{atp}'(\vec{u}_i) &\text{ is projected by }  \vec{W}^Q_A, \vec{W}^K_A, \vec{W}^V_A. \\
\end{align*}
Note that only sub-matrices $\vec{W}^{Q, \gamma}_P$ and $\vec{W}^{K, \gamma}_P$ are different for different $\gamma$.
We now specify projection matrices $\vec{W}^{Q,\gamma}$, $\vec{W}^{K,\gamma}$ and $\vec{W}^V$ in a way that allows the attention head $h^\gamma_j$ to dynamically recover the $j$-neighborhood adjacency as well as the adjacency between $j$-neighbors in the attention matrix. To this end, in heads $h^1_j$ and $h^2_j$ we set
\begin{align*}
\vec{W}^Q_F = \vec{W}^K_F &= \vec{0} \\
\vec{W}^Q_Z = \vec{W}^K_Z = \vec{W}^V_Z &= \vec{0} \\
\vec{W}^V_P &= \vec{0} \\
\vec{W}^Q_D = \vec{W}^K_D = \vec{W}^V_D &= \vec{0} \\
\vec{W}^Q_A = \vec{W}^K_A = \vec{W}^V_A &= \vec{0}.
\end{align*}
The remaining non-zero sub-matrices are $\vec{W}^V_F, \vec{W}^{Q,\gamma}_P, \vec{W}^{K,\gamma}_P$, which we will define next.

We begin by defining $\vec{W}^{Q,\gamma}_P$ and  $\vec{W}^{K,\gamma}_P$. Specifically, we want to choose $\vec{W}^{Q,\gamma}_P$ and $\vec{W}^{K,\gamma}_P$ such that the attention matrix in head $h^\gamma_j$ can approximate the generalized adjacency matrix $\vec{A}^{(k,j,\gamma)}$ in \Cref{eq:generalized_adjacency_matrix}.
To this end, we simply invoke \Cref{lemma:approx_generalized_adjacency}, guaranteeing that there exists $\vec{W}^{Q,\gamma}_P$ and $\vec{W}^{K,\gamma}_P$ such that
that for each $\epsilon > 0$ and each $\gamma \in \{-1, 1\}$,
\begin{equation*}
    \Big\lVert \mathsf{softmax} \Big( \begin{bmatrix}
    \vec{Z}_{i1}^{(t-1, \gamma)} & \dots & \vec{Z}_{in^k}^{(t-1, \gamma)}
\end{bmatrix} \Big) -  \Tilde{\vec{A}}^{(k,j,\gamma)}_i \Big\rVert_F < \varepsilon,
\end{equation*}
i.e., we can approximate the generalized adjacency matrix arbitrarily close for each $k > 1, j \in [k]$ and $\gamma \in \{-1, 1\}$. In the following, for clarity of presentation, we assume
\begin{equation*}
\mathsf{softmax} \Big( \begin{bmatrix}
    \vec{Z}_{i1}^{(t-1, \gamma)} & \dots & \vec{Z}_{in^k}^{(t-1,\gamma)}
\end{bmatrix} \Big) = \Tilde{\vec{A}}^{(k,j,\gamma)}_i
\end{equation*}
although we only approximate it arbitrarily close. However, by choosing $\varepsilon$ small enough, we can still approximate the matrix $\vec{X}^{(t, k)}$, see below, arbitrarily close.
We now set
\begin{equation*}
    \vec{W}^V = \begin{bmatrix}
        \vec{I}_{n \times kp} & \vec{0}
    \end{bmatrix},
\end{equation*} 
where $\vec{I}_{n \times kp}$ denotes the first $n$ rows of the $kp$-by-$kp$ identity matrix if $kp > n$ or else the first $kp$ columns of the $n$-by-$n$ identity matrix and  $\vec{0} \in \mathbb{R}^{n \times d - kp}$ is an all-zero matrix.
The above yields
\begin{equation*}
\begin{split}
    h^\gamma_j(\vec{X}^{(t-1, k)}(\vec{u}_i)) &=
    \frac{1}{d^\gamma_{ij}} \begin{bmatrix}
       \Tilde{\vec{A}}^{(k,j,\gamma)}_{i1} & \dots & \Tilde{\vec{A}}^{(k,j,\gamma)}_{in^k}
   \end{bmatrix} \cdot \begin{bmatrix}
       \vec{F}^{(t-1)}(\vec{u}_1) \\
       \vdots \\
       \vec{F}^{(t-1)}(\vec{u}_{n^k})
   \end{bmatrix} \\ \\
    &= \frac{1}{d^\gamma_{ij}} \cdot \begin{cases}
         \sum_{w \in \Delta_j(\vec{u}_i)} \vec{F}^{(t-1)}(\phi_j(\vec{u}_i, w)) & \gamma = 1 \\ \\
        \sum_{w \in V(G) \setminus \Delta_j(\vec{u}_i)} \vec{F}^{(t-1)}(\phi_j(\vec{u}_i, w)) & \gamma = -1 \\
    \end{cases},
\end{split}
\end{equation*}
where
\begin{equation*}
   d^\gamma_{ij} = \begin{cases}
        d_{ij} & \gamma = 1 \\
        n - d_{ij} & \gamma = -1
    \end{cases}
\end{equation*}
with $d_{ij}$ the degree of node $u_j$ in $k$-tuple $\vec{u}_i = (u_1, \dots, u_k)$.

We now conclude our proof as follows. Recall that the standard transformer layer computes the final representation $\vec{X}^{(t, k)}(\vec{u}_i)$ as
\begin{equation*}
\vec{X}^{(t, k)}(\vec{u}_i) = \mathsf{FFN} \Bigg( \vec{X}^{(t-1, k)}(\vec{u}_i) + \begin{bmatrix}
  h^1_1(\vec{X}^{(t-1, k)})(\vec{u}_i) \\ \vdots \\ h^1_k(\vec{X}^{(t-1, k)})(\vec{u}_i) \\ h^2_1(\vec{X}^{(t-1, k)})(\vec{u}_i)\\ \vdots \\ h^2_k(\vec{X}^{(t-1, k)})(\vec{u}_i) 
\end{bmatrix}^T \vec{W}^O \Bigg).
\end{equation*}
We can now satisfy the requirements in \Cref{eq:kwl_proof_head_1_j} and \Cref{eq:kwl_proof_head_2_j} as follows.

Recall that for the first $k$ heads we set $\gamma = 1$, in the remaining $k$ heads we set $\gamma = -1$. Further, since we have $2k$ heads, $\vec{W}^O \in \mathbb{R}^{2k \cdot d_v \times d}$.
We set
\begin{equation*}
    \vec{W}^O_{ij} = \begin{cases}
        \alpha_j & kp < i = j \leq k+kp \\
        \beta_j & k+kp < i = j \leq 2k+kp \\
        0 & \text{ else},
    \end{cases}
\end{equation*}
i.e., we leave the first $kp$ diagonal elements zero and then fill the next $2k$ diagonal elements of $\vec{W}^O$ with the $\alpha_j$ and $\beta_j$, respectively. All other elements are $0$.
We now obtain
\begin{align*}
   \begin{bmatrix}
  h^1_1(\vec{X}^{(t-1, k)})(\vec{u}_i) \\ \vdots \\ h^1_k(\vec{X}^{(t-1, k)})(\vec{u}_i) \\ h^2_1(\vec{X}^{(t-1, k)})(\vec{u}_i)\\ \vdots \\ h^2_k(\vec{X}^{(t-1, k)})(\vec{u}_i) 
\end{bmatrix}^T \vec{W}^O &=
   \begin{bmatrix}
  \frac{\alpha_1}{d_{i1}}\sum_{w \in \Delta_1(\vec{u}_i)} \vec{F}^{(t-1)}(\phi_1(\vec{u}_i, w))  \\ \vdots \\ \frac{\alpha_k}{d_{ik}}\sum_{w \in \Delta_k(\vec{u}_i)} \vec{F}^{(t-1)}(\phi_k(\vec{u}_i, w))  \\ \frac{\beta_1}{n - d_{i1}}\sum_{w \in V(G) \setminus \Delta_1(\vec{u}_i)} \vec{F}^{(t-1)}(\phi_1(\vec{u}_i, w))\\ \vdots \\ \frac{\beta_k}{n - d_{ik}}\sum_{w \in V(G) \setminus \Delta_k(\vec{u}_i)} \vec{F}^{(t-1)}(\phi_k(\vec{u}_i, w))
\end{bmatrix}^T \vec{W}^O \\ \\
&= \begin{bmatrix}
  \vec{0} \\
  \frac{\alpha_1}{d_{i1}}\sum_{w \in \Delta_1(\vec{u}_i)} \vec{F}^{(t-1)}(\phi_1(\vec{u}_i, w))  \\ \vdots \\ \frac{\alpha_k}{d_{ik}}\sum_{w \in \Delta_k(\vec{u}_i)} \vec{F}^{(t-1)}(\phi_k(\vec{u}_i, w))  \\ \frac{\beta_1}{n - d_{i1}}\sum_{w \in V(G) \setminus \Delta_1(\vec{u}_i)} \vec{F}^{(t-1)}(\phi_1(\vec{u}_i, w))\\ \vdots \\ \frac{\beta_k}{n - d_{ik}}\sum_{w \in V(G) \setminus \Delta_k(\vec{u}_i)} \vec{F}^{(t-1)}(\phi_k(\vec{u}_i, w)) \\
  \vec{0}
\end{bmatrix}^T,
\end{align*}
where the first zero vector is in $\mathbb{R}^{kp}$ and the second zero vector is in $\mathbb{R}^{d - k(r + s) + o}$.
We now define the vector
\begin{equation*}
\begin{split}
   \Tilde{\vec{X}}(\vec{u}_i) &=\begin{bmatrix}
        \vec{F}^{(t-1)}(\vec{u}_i) \\ 
        \vec{0} \\
        \embed{deg}'(\vec{u}_i) \\
        \vec{P}'(\vec{u}_i) \\
        \embed{atp}'(\vec{u}_i)
    \end{bmatrix}^T + \begin{bmatrix}
  \vec{0} \\
  \frac{\alpha_1}{d_{i1}}\sum_{w \in \Delta_1(\vec{u}_i)} \vec{F}^{(t-1)}(\phi_1(\vec{u}_i, w))  \\ \vdots \\ \frac{\alpha_k}{d_{ik}}\sum_{w \in \Delta_k(\vec{u}_i)} \vec{F}^{(t-1)}(\phi_k(\vec{u}_i, w))  \\ \frac{\beta_1}{n - d_{i1}}\sum_{w \in V(G) \setminus \Delta_1(\vec{u}_i)} \vec{F}^{(t-1)}(\phi_1(\vec{u}_i, w))\\ \vdots \\ \frac{\beta_k}{n - d_{ik}}\sum_{w \in V(G) \setminus \Delta_k(\vec{u}_i)} \vec{F}^{(t-1)}(\phi_k(\vec{u}_i, w)) \\
  \vec{0}
\end{bmatrix}^T \\
&= \begin{bmatrix}
  \vec{F}^{(t-1)}(\vec{u}_i) \\
  \frac{\alpha_1}{d_{i1}}\sum_{w \in \Delta_1(\vec{u}_i)} \vec{F}^{(t-1)}(\phi_1(\vec{u}_i, w))  \\ \vdots \\ \frac{\alpha_k}{d_{ik}}\sum_{w \in \Delta_k(\vec{u}_i)} \vec{F}^{(t-1)}(\phi_k(\vec{u}_i, w))  \\ \frac{\beta_1}{n - d_{i1}}\sum_{w \in V(G) \setminus \Delta_1(\vec{u}_i)} \vec{F}^{(t-1)}(\phi_1(\vec{u}_i, w))\\ \vdots \\ \frac{\beta_k}{n - d_{ik}}\sum_{w \in V(G) \setminus \Delta_k(\vec{u}_i)} \vec{F}^{(t-1)}(\phi_k(\vec{u}_i, w)) \\
  \embed{deg}'(\vec{u}_i) \\
        \vec{P}'(\vec{u}_i) \\
        \embed{atp}'(\vec{u}_i)
\end{bmatrix}^T \in \mathbb{R}^d.
\end{split}
\end{equation*}
 To simplify terms, we additionally define
\begin{equation*}
    \Tilde{\vec{F}}^\alpha(\vec{u}_i) \coloneqq \begin{bmatrix}
        \frac{\alpha_1}{d_{i1}}\sum_{w \in \Delta_1(\vec{u}_i)} \vec{F}^{(t-1)}(\phi_1(\vec{u}_i, w))  \\ \vdots \\ \frac{\alpha_k}{d_{ik}}\sum_{w \in \Delta_k(\vec{u}_i)} \vec{F}^{(t-1)}(\phi_k(\vec{u}_i, w))
    \end{bmatrix}^T
\end{equation*}
and
\begin{equation*}
    \Tilde{\vec{F}}^\beta(\vec{u}_i) \coloneqq \begin{bmatrix}
        \frac{\beta_1}{n - d_{i1}}\sum_{w \in V(G) \setminus \Delta_1(\vec{u}_i)} \vec{F}^{(t-1)}(\phi_1(\vec{u}_i, w))\\ \vdots \\ \frac{\beta_k}{n - d_{ik}}\sum_{w \in V(G) \setminus \Delta_k(\vec{u}_i)} \vec{F}^{(t-1)}(\phi_k(\vec{u}_i, w))
    \end{bmatrix}^T
\end{equation*}
and consequently, we can write
\begin{equation*}
    \Tilde{\vec{X}}(\vec{u}_i) = \begin{bmatrix}
  \vec{F}^{(t-1)}(\vec{u}_i) \\
   \Tilde{\vec{F}}^\alpha(\vec{u}_i) \\ 
   \Tilde{\vec{F}}^\beta(\vec{u}_i)\\
  \embed{deg}'(\vec{u}_i) \\
        \vec{P}'(\vec{u}_i) \\
        \embed{atp}'(\vec{u}_i)
\end{bmatrix}^T
\end{equation*}
We additionally define
\begin{equation*}
   \embed{deg}'(u_j) = \begin{bmatrix}
       d_{ij} & n - d_{ij}
   \end{bmatrix} \in \mathbb{R}^2,
\end{equation*}
where $d_{ij}$ is again the degree of node $u_j$ in $k$-tuple $\vec{u}_i = (u_1, \dots, u_k)$.
Note that $\Tilde{\vec{X}}(\vec{u}_i)$ represents all the information we feed into the final $\mathsf{FFN}$.
Specifically, we obtain the updated representation of tuple $\vec{u}_i$ as
\begin{equation*}
   \vec{X}^{(t, k)}(\vec{u}_i) = \mathsf{FFN}\Big(\Tilde{\vec{X}}(\vec{u}_i)\Big).
\end{equation*}
We now show that there exists an $\mathsf{FFN}$ such that
\begin{align*}
    \vec{X}^{(t, k)} &= \mathsf{FFN}\Big(\Tilde{\vec{X}}(\vec{u}_i)\Big) \\
&= \begin{bmatrix}
       \vec{F}^{(t)}(\vec{u}_i) & \vec{0} &
       \embed{deg}'(\vec{u}_i) &
        \vec{P}'(\vec{u}_i) &
        \embed{atp}'(\vec{u}_i)
    \end{bmatrix},
\end{align*}
which then implies the proof statement. It is worth to pause at this point and remind ourselves what each element in $\Tilde{\vec{X}}(\vec{u}_i)$ represents. To this end, we use \texttt{PyTorch}-like array slicing, i.e., for a vector $\vec{x}$, $\vec{x}[a:b]$ corresponds to the sub-vector of length $b - a$ for $b > a$ containing the $(a+1)$-st to $b$-th element of $\vec{x}$. E.g., for a vector $\vec{x} = (x_1, x_2, x_3, x_4, x_5)^T$, $\vec{x}[1:4] = (x_2, x_3, x_4)^T$. Now, we interpret $\Tilde{\vec{X}}(\vec{u}_i)$ by its sub-vectors. Concretely, 
\begin{enumerate}
    \item $\Tilde{\vec{X}}(\vec{u}_i)[0:kp] \in \mathbb{R}^{kp}$ corresponds to the previous color representation $\vec{F}^{(t-1)}(\vec{u}_i)$ of tuple $\vec{u}_i$.
    \item $\Tilde{\vec{F}}^{\alpha}(\vec{u}_i)[(j-1)kp:jkp] \in \mathbb{R}^{kp}$ corresponds to the degree-normalized sum over adjacent $j$-neighbors $\frac{\alpha_j}{d_j}\sum_{w \in \Delta_j(\vec{u}_i)} \vec{F}^{(t-1)}(\phi_j(\vec{u}_i, w))$.
    \item $\Tilde{\vec{F}}^{\beta}(\vec{u}_i)[(j-1)kp:jkp] \in \mathbb{R}^{kp}$ corresponds to the degree-normalized sum over non-adjacent $j$-neighbors $\frac{\beta_j}{n - d_j}\sum_{w \in \Delta_j(\vec{u}_i)} \vec{F}^{(t-1)}(\phi_j(\vec{u}_i, w))$.
    \item $\embed{deg}'(\vec{u}_i)[2(j-1)] = d_{ij}$, where $d_{ij}$ is the degree of node $u_j$ in the $k$-tuple $\vec{u}_i = (u_1, \dots, u_k)$.
    \item $\embed{deg}'(\vec{u}_i)[2(j-1) + 1] = n - d_{ij}$, where $d_{ij}$ is the degree of node $u_j$ in the $k$-tuple $\vec{u}_i = (u_1, \dots, u_k)$.
\end{enumerate}

To this end, we show that there exists a sequence of functions $f_\text{FFN} \circ f_\text{add} \circ f_\text{deg}$ such that
\begin{align*}
    \vec{X}^{(t, k)} &= f_\text{FFN} \circ f_\text{add} \circ f_\text{deg}\Big(\Tilde{\vec{X}}(\vec{u}_i) \Big) \\
&= \begin{bmatrix}
       \vec{F}^{(t)}(\vec{u}_i) & \vec{0} &
       \embed{deg}'(\vec{u}_i) &
        \vec{P}'(\vec{u}_i) &
        \embed{atp}'(\vec{u}_i)
    \end{bmatrix},
\end{align*}
where the functions are applied independently to each row.

Since our domain is compact, there exist choices of $f_\text{FFN}, f_\text{add}, f_\text{deg}$ such that $f_\text{FFN} \circ f_\text{add} \circ f_\text{deg}: \mathbb{R}^d \rightarrow \mathbb{R}^d$, in which case $f_\text{FFN} \circ f_\text{add} \circ f_\text{deg}$ is continuous. As a result,
$\mathsf{FFN}$ can approximate  $f_\text{FFN} \circ f_\text{add} \circ f_\text{deg}$ arbitrarily close.

Concretely, we define
\begin{equation*}
     f_\text{deg}\Big( \Tilde{\vec{X}}(\vec{u}_i) \Big) = \begin{bmatrix}
  \vec{F}^{(t-1)}(\vec{u}_i) \\
  d_{i1} \cdot \frac{\alpha_1}{d_{i1}}\sum_{w \in \Delta_1(\vec{u}_i)} \vec{F}^{(t-1)}(\phi_1(\vec{u}_i, w))  \\ \vdots \\ d_{ik} \cdot \frac{\alpha_k}{d_{ik}}\sum_{w \in \Delta_k(\vec{u}_i)} \vec{F}^{(t-1)}(\phi_k(\vec{u}_i, w))  \\ (n - d_{i1}) \cdot  \frac{\beta_1}{n - d_{i1}}\sum_{w \in V(G) \setminus \Delta_1(\vec{u}_i)} \vec{F}^{(t-1)}(\phi_1(\vec{u}_i, w))\\ \vdots \\ (n - d_{ik}) \cdot \frac{\beta_k}{n - d_{ik}}\sum_{w \in V(G) \setminus \Delta_k(\vec{u}_i)} \vec{F}^{(t-1)}(\phi_k(\vec{u}_i, w)) \\
  \embed{deg}'(\vec{u}_i) \\
        \vec{P}'(\vec{u}_i) \\
        \embed{atp}'(\vec{u}_i)
\end{bmatrix}^T,
\end{equation*}
where for each $j \in [k]$, $f_\text{deg}$ multiplies 
\begin{enumerate}
    \item $\Tilde{\vec{F}}^{\alpha}(\vec{u}_i)[(j-1)kp:jkp]$ with $\embed{deg}'(\vec{u}_i)[2(j-1)]$.
    \item $\Tilde{\vec{F}}^{\beta}(\vec{u}_i)[(j-1)kp:jkp]$ with $\embed{deg}'(\vec{u}_i)[2(j-1) + 1]$.
\end{enumerate}
We then define
\begin{equation*}
    \vec{F}^\alpha(\vec{u}_i) \coloneqq \begin{bmatrix}
        \alpha_1 \cdot \sum_{w \in \Delta_1(\vec{u}_i)} \vec{F}^{(t-1)}(\phi_1(\vec{u}_i, w))  \\ \vdots \\ \alpha_k \cdot \sum_{w \in \Delta_k(\vec{u}_i)} \vec{F}^{(t-1)}(\phi_k(\vec{u}_i, w))
    \end{bmatrix}^T
\end{equation*}
and
\begin{equation*}
    \vec{F}^\beta(\vec{u}_i) \coloneqq \begin{bmatrix}
        \beta_1 \cdot \sum_{w \in V(G) \setminus \Delta_1(\vec{u}_i)} \vec{F}^{(t-1)}(\phi_1(\vec{u}_i, w))\\ \vdots \\ \beta_k \cdot \sum_{w \in V(G) \setminus \Delta_k(\vec{u}_i)} \vec{F}^{(t-1)}(\phi_k(\vec{u}_i, w))
    \end{bmatrix}^T
\end{equation*}
and consequently, we can write
\begin{equation*}
     f_\text{deg}\Big( \Tilde{\vec{X}}(\vec{u}_i) \Big) = \begin{bmatrix}
  \vec{F}^{(t-1)}(\vec{u}_i) \\
  \vec{F}^\alpha(\vec{u}_i) \\
  \vec{F}^\beta(\vec{u}_i)  \\
  \embed{deg}'(\vec{u}_i) \\
        \vec{P}'(\vec{u}_i) \\
        \embed{atp}'(\vec{u}_i)
\end{bmatrix}^T.
\end{equation*}

Next, we define
\begin{align*}
     f_\text{add}\Big( f_\text{deg}\Big( \Tilde{\vec{X}}(\vec{u}_i) \Big) \Big)  &= \begin{bmatrix}
  \vec{F}^{(t-1)}(\vec{u}_i) + \sum_{j \in [k]} ( \vec{F}^{\alpha}(\vec{u}_i)[(j-1)kp:jkp] + \vec{F}^{\beta}(\vec{u}_i)[(j-1)kp:jkp]) \\
  \vec{0} \\
  \embed{deg}'(\vec{u}_i) \\
        \vec{P}'(\vec{u}_i) \\
        \embed{atp}'(\vec{u}_i)
\end{bmatrix}^T,
\end{align*}
where $f_\text{add}$ sums up $\vec{F}^{(t-1)}(\vec{u}_i)$ with $\vec{F}^{\alpha}(\vec{u}_i)[(j-1)kp:jkp]$ and $\vec{F}^{\beta}(\vec{u}_i)[(j-1)kp:jkp]$ for each $j$. Afterwards, $f_\text{add}$ sets 
\begin{equation*}
    f_\text{deg}\Big( \Tilde{\vec{X}}(\vec{u}_i) \Big)[kp:2k^2p + kp] = \vec{0} \in \mathbb{R}^{2k^2p}.
\end{equation*}
Now, finally, note from the definition of $\vec{F}^{\alpha}(\vec{u}_i)$ and $\vec{F}^{\beta}(\vec{u}_i)$ that
\begin{equation*}
    \sum_{j \in [k]} ( \vec{F}^{\alpha}(\vec{u}_i)[(j-1)kp:jkp] + \vec{F}^{\beta}(\vec{u}_i)[(j-1)kp:jkp]) = H(\vec{X}^{(t-1, k)}(\vec{u}_i)),
\end{equation*}
where we recall that we defined
\begin{equation*}
   H(\vec{X}^{(t-1, k)}(\vec{u}_i)) = \frac{1}{n} \cdot \sum_{j \in [k]} \Big( \alpha_j \cdot \sum_{w \in \Delta_j(\vec{u}_i)}  \vec{F}^{(t-1)}(\phi_j(\vec{u}_i, w)) + \beta_j \cdot \sum_{w \in V(G) \setminus\Delta_j(\vec{u}_i)}  \vec{F}^{(t-1)}(\phi_j(\vec{u}_i, w)) \Big).
\end{equation*}
Hence, we have that
\begin{equation*}
    f_\text{add}\Big( f_\text{deg}\Big( \Tilde{\vec{X}}(\vec{u}_i) \Big) = \begin{bmatrix}
  \vec{F}^{(t-1)}(\vec{u}_i) +  H(\vec{X}^{(t-1, k)}(\vec{u}_i)) \\
  \vec{0} \\
  \embed{deg}'(\vec{u}_i) \\
        \vec{P}'(\vec{u}_i) \\
        \embed{atp}'(\vec{u}_i)
\end{bmatrix}^T.
\end{equation*}
Finally, we define
\begin{align*}
     f_\text{FFN}\Big( f_\text{add}\Big( f_\text{deg}\Big( \Tilde{\vec{X}}(\vec{u}_i) \Big)  \Big) &=
    \begin{bmatrix}
        \mathsf{FFN} \Big(\vec{F}^{(t-1)}(\vec{u}_i) + H(\vec{X}^{(t-1, k)}(\vec{u}_i)) \Big) \\
    \vec{0} \\
    \embed{deg}'(\vec{u}_i) \\
        \vec{P}'(\vec{u}_i) \\
        \embed{atp}'(\vec{u}_i)
    \end{bmatrix}^T   \\ &=
    \begin{bmatrix}
        \vec{F}^{(t)}(v) &
         \vec{0} &
        \embed{deg}'(v) &
        \vec{P}'(v) 
    \end{bmatrix},
\end{align*}
where $\mathsf{FFN}$ denotes the FFN in \Cref{eq:kwl_proof_X_update}, from which the last equality follows. Thank you for sticking around until the end. This completes the proof.
\end{proof}

The above proof shows that the \kgt{k} can simulate the $\delta$-\kwl{k}. Since the $\delta$-\kwl{k} is strictly more expressive than the \kwl{k} \citep{Morris2020b}, \Cref{theorem:delta_k_wl} follows directly from \Cref{theorem:k_wl}.

Further, the above proof shows that the \kgt{k} can simulate the \localkwl{}. If we simulate the \localkwl{} with the \kgt{k} while only using token embeddings for tuples in $V(G)^k_s$, \Cref{theorem:k_s_wl} directly follows.

\end{document}